\pgfplotsset{compat=1.16}
\theoremstyle{plain}
\newtheorem{theorem}{Theorem}[section]
\newtheorem{proposition}[theorem]{Proposition}
\newtheorem{lemma}[theorem]{Lemma}
\newtheorem{corollary}[theorem]{Corollary}
\theoremstyle{definition}
\newtheorem{definition}[theorem]{Definition}
\newtheoremstyle{solution}%
  {3pt}
  {3pt}
  {}
  {}
  {\bfseries\color{black}}
  {\textcolor{black}{.}}
  {.5em}
  {}
\theoremstyle{solution}
\newtheorem{remark}[theorem]{Remark}
\newtheorem{example}{\bf \textcolor{black}{Example}}[section]
\definecolor{unired}{HTML}{A6192E}  
\definecolor{uniblue}{HTML}{002D72} 
\definecolor{unigreen}{HTML}{115740}
\definecolor{aerospace}{HTML}{FF4F00}
\definecolor{blue0}{HTML}{207AB6}
\definecolor{orange1}{HTML}{FF7F0B}
\definecolor{darkaqua}{HTML}{0A8293}
\definecolor{wine}{HTML}{920923}
\definecolor{bured}{rgb}{0.8, 0.0, 0.0}
\definecolor{bargreena}{RGB}{45, 106, 96}
\definecolor{bargreenb}{RGB}{179, 201, 195}
\definecolor{bargraya}{RGB}{115, 115, 115}
\definecolor{bargrayb}{RGB}{230, 230, 230}
\definecolor{bargrayc}{RGB}{181, 181, 181}
\newcommand\DoToC{%
  \startcontents
  \printcontents{}{1}{{\large \textbf{Appendix (Supplementary Material) -- Overview}}\vskip3pt\hrule\vskip5pt}
  \vskip3pt\hrule\vskip5pt
}
\newcommand{\ie}{i.e.}
\newcommand{\eg}{e.g.}
\newcommand{\wrt}{w.r.t.~}
\DeclareMathOperator{\Lip}{Lip}
\DeclareMathOperator{\Id}{Id}
\DeclareMathOperator{\Comp}{Comp}
\DeclareMathOperator{\Sum}{Sum}
\DeclareMathOperator{\Cat}{Cat}
\DeclareMathOperator{\card}{card}
\DeclareMathOperator*{\argmax}{arg\,max}
\DeclareMathOperator*{\argmin}{arg\,min}
\newcommand{\layer}{\operatorname{layer}}
\newcommand{\compto}[1]{\prescript{\to}{}{#1}}
\newcommand{\rade}[2]{\hat {\mathfrak{R}}_{#1} ({#2})}
\newcommand{\set}[1]{\left\{ #1\right\}} 
\newcommand{\tuple}[1]{\left( #1\right)} 
\newcommand{\norm}[1]{\left\lVert#1\right\rVert}
\newcommand{\ceil}[1]{\left\lceil#1\right\rceil}
\newcommand{\floor}[1]{\left\lfloor#1\right\rfloor}
\newsavebox\Tbox
\title{On Measuring Excess Capacity in Neural Networks}
\author{%
  Florian Graf \\
  University of Salzburg \\
  \texttt{\href{mailto:florian.graf@plus.ac.at}{florian.graf@plus.ac.at}} \\
  \And
  Sebastian Zeng \\
  University of Salzburg \\
  \texttt{\href{mailto:sebastian.zeng@plus.ac.at}{sebastian.zeng@plus.ac.at}} \\
  \And
  Bastian Rieck \\
  Institute for AI and Health\\
  Helmholtz Munich\\
  \texttt{\href{bastian@rieck.met}{bastian@rieck.me}} \\
  \And
  Marc Niethammer \\
  UNC Chapel Hill\\
  \texttt{\href{mailto:mn@cs.unc.edu}{mn@cs.unc.edu}} \\
  \And
  Roland Kwitt \\
  University of Salzburg \\
  \texttt{\href{mailto:roland.kwitt@plus.ac.at}{roland.kwitt@plus.ac.at}} \\
}
\begin{document}

\maketitle

\begin{abstract}
We study the \emph{excess capacity} of deep networks in the context
of supervised classification. That is, given a capacity measure of
the underlying hypothesis class -- in our case, empirical Rademacher
complexity -- 
to what extent can we (a priori) constrain this class
while retaining an empirical error on a par with the unconstrained
regime? To assess excess capacity in modern architectures (such as
\emph{residual networks}), we extend and unify prior Rademacher
complexity bounds to accommodate function \emph{composition} and
\emph{addition}, as well as the structure of convolutions. The
capacity-driving terms in our bounds are the Lipschitz constants of
the layers and an ($2,1$) group norm distance to the initializations of
the convolution weights. Experiments on benchmark datasets of
varying task difficulty indicate that (1) there is a substantial
amount of excess capacity per task, and (2) capacity can be kept at
a surprisingly similar level across tasks. Overall, this suggests
a notion of \emph{compressibility} with respect to weight norms,
complementary to classic compression via weight pruning.  
Source code is available at \url{https://github.com/rkwitt/excess_capacity}.
\end{abstract}

\vspace{0.15cm}
\section{Introduction}
\label{section:introduction}
Understanding the generalization behavior of deep networks in supervised
classification is still a largely open problem, despite a long history
of theoretical advances. The observation that (overparametrized) models
can easily fit---\ie, reach zero training error---to randomly permuted training labels~\cite{Zhang16a, Zhang21a} but, when trained on
unpermuted labels, yield good generalization performance, has fueled
much of the progress in this area. Recent works range from relating
generalization to weight norms~\cite{Neyshabur15a,Neyshabur17a,Bartlett17a,Golowich18a,Ledent21a},
measures of the distance to initializations~\cite{Nagarajan17a}, implicit
regularization induced by the optimization algorithm \cite{Soudry18a,
Cao19a}, or model compression \cite{Arora18a,Baykal19a,Suzuki20a}. Other
works study connections to optimal transport \cite{Chuang21a}, or
generalization in the neural tangent kernel setting
\cite{Jacot18a,Arora19a}.

When seeking to establish generalization guarantees within the classic
uniform convergence regime, bounding a capacity measure, such as the
Rademacher complexity~\cite{Bartlett02a}, of the hypothesis class is the
crucial step. While the resultant generalization bounds are typically
vacuous and can exhibit concerning behavior \cite{Nagarajan19a}, the
\emph{capacity bounds} themselves offer  invaluable insights through the
behavior of the bound-driving quantities, such as various types of
weight norms or Lipschitz constants. 

Particularly relevant to our work is the observation that the
bound-driving quantities tend to increase with \emph{task difficulty}.
Fig.~\ref{fig:intro} illustrates this behavior 
in terms of Lipschitz constants per layer and the distance
of each layer's weight to its initialization (measured via a group norm
we develop in \cref{subsection:preliminaries}).
\begin{wrapfigure}{r}{0.45\textwidth}
  \includegraphics[width=0.45\textwidth]{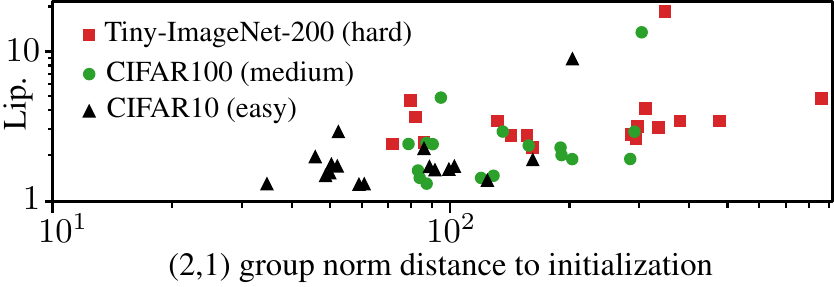}
  \vspace{-0.4cm}
  \caption{Layerwise Lipschitz constants and distance to initialization of a ResNet18 model (see \cref{section:empirical_evaluation}), trained on different datasets.\label{fig:intro}
 }
  \vskip-0.35cm
\end{wrapfigure}
This raises two immediate questions: First (\hypertarget{Q1}{\textbf{Q1}}), can a network maintain empirical testing performance at a substantially lower capacity?
Second (\hypertarget{Q2}{\textbf{Q2}}), 
is the level of this lowered capacity inevitably tied to task difficulty,
i.e., would a reduced-capacity model for an easy task fail on a difficult task? Both of these questions aim at the amount
of ``unneeded'' capacity, which we refer to as \emph{excess capacity}
in the remainder of this work.

We will address questions (\textbf{Q1}) and 
(\textbf{Q2}) by means of controlling the empirical Rademacher
complexity of a neural network. To this end, we consolidate and extend
prior results from the literature on Rademacher complexity bounds to
accommodate a broad range of network components in a unified way,
including convolutions and skip connections, two ubiquitous elements in
state-of-the-art models.

Our \textbf{contributions} can be summarized as follows:
\vspace{-.15cm}
\begin{enumerate}[leftmargin=0.5cm]
\itemsep0.2em 
\item We establish two bounds (in
  \cref{paper:theorem:rademacher_complexity}) for the empirical
  Rademacher complexity of neural networks that use \emph{convolutions} and
  implement functions built from \emph{composition} and \emph{addition}.
  Specifically, we introduce two novel, convolution-specific,
  single-layer covering number bounds in
  Section~\ref{subsection:single_layer} and contrast them to prior art,
  then modularize the single-layer to multi-layer covering approach of
  Bartlett et al.~\cite{Bartlett17a} in
  Section~\ref{subsection:whole_network}, and eventually present one
  incarnation of our framework for convolutional residual networks in Sections
  \ref{subsection:resnets_cover} and \ref{subsection:resnets_rade}. 
\item We present an extensive set of experiments (in
  \cref{section:empirical_evaluation}) with a ResNet18 model across
  benchmark datasets of varying task difficulty, demonstrating that
  model capacity, when measured via our weight norm based bound, (1) can
  be kept surprisingly small per task, and (2) can be kept at
  \emph{roughly the same level} regardless of task difficulty. Both
  observations suggest \emph{compressibility} of neural networks with
  respect to weight norms, complementary to the well-known compressibility property of neural networks
  with respect to the number of parameters~\cite{Arora18a,Suzuki20a}.
\end{enumerate}

\section{Related Work}
\label{section:related_work}
Many prior works establish uniform-convergence type generalization
bounds for neural networks through Rademacher complexity analysis. We
review such approaches, highlighting challenges that arise with
modern network architectures and the peculiarities of convolutional
layers.

One direct approach to bound the empirical Rademacher complexity is via
a \emph{layer-peeling} strategy~\cite{Neyshabur15a, Golowich18a,Yun19a}
where the Rademacher complexity of $L$-layer networks is expressed by
a factor times the Rademacher complexity of $(L-1)$-layer networks; in
other words, the last layer is peeled off. This factor is typically
a matrix ($p,q$) group norm, and thus the bounds usually scale with the product of the latter. Notably, the nonlinearities need to be
\emph{element-wise} operations, and some approaches only work for
specific nonlinearities, such as ReLUs or asymmetric
activations. 
A second strategy is to bound the empirical Rademacher complexity via
a \emph{covering numbers} approach~\cite{Zhang02a,
Bartlett17a,Lin19a,Ledent21a}, typically achieved via Dudley's entropy
integral \cite{Dudley67a}. This strategy is particularly flexible as it
allows for arbitrary (but fixed) nonlinearities and various paths to
bounding covering numbers of network parts, \eg, via Maurey's
sparsification lemma or via parameter counting. The corresponding
whole-network bounds typically scale with the product of each layer's
Lipschitz constant or local empirical estimates thereof \cite{Wei19a}.
Irrespective of the particular proof strategy, most formal arguments only hold for neural networks constructed from function composition, \ie, maps of the form
\begin{equation}
    x \mapsto \sigma_L(A_L\sigma_{L-1}(A_{L-1} \dots \sigma_1(A_1 x)\dots))
    \enspace,
    \label{eq:simple_architecture}
\end{equation}
where $\sigma_i: \mathbb R^{d_{i-1}} \to \mathbb R^{d_i}$ are
nonlinearities and $A_i$ are weight matrices specifying the $i$-th linear map. However, modern architectures often rely on operations specifically
tailored to the data, such as convolutions, and typically incorporate
skip connections as in residual networks \cite{He16a}, rendering
many results inapplicable or suboptimal for such models. In this work, we  handle convolutions and skip connections, thus increasing the applicability and utility of such bounds.

For example, while residual networks have been studied extensively, theory mostly focuses on expressivity or optimization aspects \cite{Hardt17a,Lin18a,Bartlett17a,Yun19a}. Yun et al.\ \cite{Yun19a} provide a Rademacher complexity bound via layer-peeling for fully-connected layers and element-wise activations. He 
et al.\ \cite{HeF20a} establish a generalization bound for residual
networks via covering number arguments, resting upon earlier work by
Bartlett et al.\ \cite{Bartlett17a} for linear maps. However, when
directly applied to convolutions, both bounds scale unfavorably \wrt the spatial input size (see Section~\ref{subsection:single_layer}). Other works provide generalization guarantees \emph{specifically} tailored to convolutional networks, cf. \cite{Lin19a,Long20a,Ledent21a,Gouk21a}, and, although such bounds scale benignly with input size, they only apply for models as in \cref{eq:simple_architecture}. 

\vskip -.25cm
\paragraph*{An initial numerical comparison.}
Bounds on the empirical Rademacher complexity differ in their dependence on various quantities, such
as matrix ($p,q$) group norms, Lipschitz constants, or the number of parameters. 
Thus, a precise formal comparison is challenging and, depending on the setting, different bounds may be preferable. To provide some intuition about magnitude differences, we evaluated several existing bounds (including ours from \cref{subsection:resnets_rade}) on two convolutional (ReLU) networks with 6 and 11 layers, see Fig.~\ref{fig:simple_comparison} and \cref{appendix:subsection:comparison_table} for details.

\begin{figure}
    \begin{tikzpicture}
    \tikzset{
    font={\scriptsize}
    }

    \tikzmath{
        \width = 0.09; 
        \sep = .265; 
        \xscale = .24; 
        \nbars = 16; 
        \ticksize = 0.1; 
        \bracepos = 22; 
    } 

    \newcommand\barplot[4]{%
    \foreach \coun/\lab/\val in #1{%
        \path[fill, #2]%
        (0,{\coun*\sep+\width})
        -- ({\val*\xscale},{\coun*\sep+\width})
        -- ({\val*\xscale}, {\coun*\sep-\width})
        -- (0,{(\coun*\sep)-\width})
        ;
        \ifthenelse{#3=1}{
            \draw (0, {\coun*\sep}) -- ({-1*\ticksize}, {\coun*\sep}) node[text width=#4, align=right, left] {\lab};
        }{}
    }
}

    \def\normsconva{
        6/{$\norm{\cdot}_{1,\infty}$ \hfill Neyshabur et al.\ {\cite{Neyshabur15a}}}/10.07,
        7/{$\norm{\cdot}_{1,\infty}$\hfill Golowich et al.\ \cite{Golowich18a}}/7.23,
        8/{$\norm{\cdot}_{1,\infty}$ \hfill Gouk et al.\ \cite{Gouk21a}}/11.54,
        9/{$\norm{\cdot}_{2}$ \hfill Neyshabur et al.\ \cite{Neyshabur15a}}/10.91,
        10/{ $\norm{\cdot}_{2}$\hfill Golowich et al.\ \cite{Golowich18a}}/10.15,
        11/{ $\norm{\cdot}_{2}$ \hfill Gouk et al.\ \cite{Gouk21a}}/20.34
        }
    \def\quantsconva{ 
        12/{Number of parameters}/6.31,
        13/{Product of Lipschitz constants}/4.17,
        14/{Product of $\norm{\cdot}_{2}$ norms}/9.16,
        15/{Product of $\norm{\cdot}_{1,\infty}$ norms}/8.81
    }
    \def\restconva{ 
        0/{\textbf{Ours}, \cref{paper:theorem:rademacher_complexity} {\eqref{theorem:main_residual_network_covering_rademacher:eq_norms}}
        }/9.31,
        1/{Bartlett et al. \cite{Bartlett17a}}/10.29,
        2/{Ledent et al.\! (\emph{main result}) \cite{Ledent21a}}/9.77,
        3/{Ledent et~al.\! (\emph{fixed constraints})}/11.57,
        4/{\textbf{Ours}, \cref{paper:theorem:rademacher_complexity} \small{\eqref{theorem:main_residual_network_covering_rademacher:eq_params}
        }}/2.86,
        5/{Lin et al.\ \cite{Lin19a}}/3.91
    }

    \def\normsconvb{
        6/{$\norm{\cdot}_{1,\infty}$ \hfill Neyshabur et al.\ \cite{Neyshabur15a}}/18.13,
        7/{$\norm{\cdot}_{1,\infty}$\hfill Golowich et al.\ \cite{Golowich18a}}/13.84,
        8/{$\norm{\cdot}_{1,\infty}$ \hfill Gouk et al.\ \cite{Gouk21a}}/19.86,
        9/{$\norm{\cdot}_{2}$ \hfill Neyshabur et al.\ \cite{Neyshabur15a}}/19.85,
        10/{ $\norm{\cdot}_{2}$\hfill Golowich et al.\ \cite{Golowich18a}}/17.68,
        11/{ $\norm{\cdot}_{2}$ \hfill Gouk et al.\ \cite{Gouk21a}}/38.28
    }
    \def\quantsconvb{ 
        12/{Number of parameters}/{6.62},
        13/{Product of Lipschitz constants}/6.91,
        14/{Product of $\norm{\cdot}_{2}$ norms}/16.59,
        15/{Product of $\norm{\cdot}_{1,\infty}$ norms}/15.37
    }
    \def\restconvb{ 
        0/{\textbf{Ours-norms}
        }/12.33,
        1/{Bartlett et al. \cite{Bartlett17a}}/13.58,
        2/{Ledent et al.\! (\emph{main result}) \cite{Ledent21a}}/12.55,
        3/{Ledent et~al.\! (\emph{fixed constraints})}/14.64,
        4/{\textbf{Ours-params}
        }/3.07,
        5/{Lin et al.\ \cite{Lin19a}}/4.78
    }   


    \begin{scope}[yshift=0.3 cm]    
        \barplot{\normsconvb}{bargrayb}{0}{3.3cm}
        \barplot{\restconvb}{bargrayb}{0}{3.5cm}
             
        \barplot{\normsconva}{bargraya}{1}{3.4cm}
        \barplot{\restconva}{bargraya}{1}{3.5cm}

        \begin{scope}[yshift=0.1cm]
            \barplot{\quantsconvb}{bargreenb}{0}{3.5cm} 
            \barplot{\quantsconva}{bargreena}{1}{3.5cm}
        \end{scope}    

    \draw [line width = 1.pt, decorate,
    decoration = {calligraphic brace, mirror}] (\bracepos*\xscale,0*\sep-\width) --  (\bracepos*\xscale,5*\sep+\width)
    node[pos=.5, right] {via covering numbers};
    \draw [line width = 1.pt, decorate,
    decoration = {calligraphic brace, mirror}] (\bracepos*\xscale,6*\sep-\width) --  (\bracepos*\xscale,11*\sep+\width)
    node[pos=.5, right] {via layer-peeling};        
    \end{scope}

    \draw (0,0.12) -- ({40*\xscale},0.12) -- ({40*\xscale},{(\nbars*\sep)+0.3}) -- (0,{(\nbars*\sep)+0.3}) -- cycle;
    \foreach \i in {0, 10, 20, 30, 40}{
        \draw[color=bargrayc] ({\i*\xscale},0.12) node[below] {\textcolor{black}{$10^{\i}$}} -- ({\i*\xscale},{(\nbars*\sep)+0.3});
    }
\end{tikzpicture}
    \vspace{-0.1cm}
    \caption{Empirical Rademacher complexity bounds (grouped by proof strategy; lower is better), for a 6- (\textcolor{black!50}{$\blacksquare$}\textcolor{unigreen!75}{$\blacksquare$}) and an 11-layer (\textcolor{black!10}{$\blacksquare$}\textcolor{unigreen!50}{$\blacksquare$}) convolutional network, trained on CIFAR10. Bounds are listed in \cref{appendix:subsection:comparison_table} and quantities that typically appear in these bounds are highlighted in \textcolor{unigreen}{\textbf{green}} (top part of figure) for reference. \label{fig:simple_comparison}}
    \vspace{-0.25cm}
\end{figure}

\section{Rademacher Complexity Analysis}
\label{section:generalization_bounds}
\vskip -0.15cm
To derive bounds on the empirical Rademacher complexity, we follow the
margin-based multiclass learning formalism and the flexible proof
strategy of Bartlett et al.~\cite{Bartlett17a}.
Section~\ref{subsection:single_layer} introduces novel single-layer
covering number bounds for convolutions.
Section~\ref{subsection:whole_network} modularizes and extends the
single- to multi-layer covering step to account for architectures such
as residual networks (\cref{subsection:resnets_cover}). Last, 
\cref{subsection:resnets_rade} presents and discusses our Rademacher complexity bounds.
\vspace{-0.1cm}
\subsection{Preliminaries}
\label{subsection:preliminaries}
\vspace{-0.1cm}
In a $\kappa$-class classification task, we are given $n$ instance/label
pairs $S=((x_1,y_1),\ldots,(x_n,y_n))$, drawn iid from a probability
measure $\mathcal{D}$ on $\mathbb{R}^{d}{\times}\{1,\ldots,\kappa\}$. For
a neural network $f$ in a hypothesis class $\mathcal{F}\ {\subset}\
\{f\colon \mathbb{R}^d\ {\to}\ \mathbb{R}^\kappa\}$, a class label for input $x$ is obtained by taking the argmax over the components of $f(x)\ {\in}\ \mathbb{R}^\kappa$. The \emph{margin operator}
$\mathcal{M}\colon \mathbb{R}^\kappa{\times}\{1,\ldots,\kappa\}\ {\to}\ \mathbb{R},
(v,y) \mapsto v_y - \max_{i\neq y} v_i$ leads, with margin $\gamma > 0$, to the \emph{ramp loss}
$\ell_\gamma$ and the \emph{empirical ramp
risk} $\hat{R}_{\gamma}$, defined as
\begin{align}
    \label{eqn:def:ramploss}
    \ell_{\gamma}: \mathbb{R}\to\mathbb{R}^+, r \mapsto (1+\nicefrac{r}{\gamma})\mathbbm{1}_{r \in [-\gamma,0]} + \mathbbm{1}_{r>0} \quad \text{and}\quad \hat{R}_{\gamma}(f) = \frac{1}{n}\sum_{i=1}^n \ell_{\gamma}(-\mathcal{M}(f(x_i),y_i) ) \enspace.
\end{align}
To derive generalization bounds via classical Rademacher complexity
analysis \cite{Mohri18a}, without having to resort to vector-contraction
inequalities \cite{Maurer16a}, we consider the hypothesis class 
\begin{equation}
    \label{eqn:def:Fgamma}
    \mathcal F_\gamma = \set{(x,y) \mapsto \ell_\gamma(-\mathcal M (f(x),y)) : f\in \mathcal F}\enspace.
\end{equation}
Then, defining the \emph{empirical Rademacher complexity} of any class  $\mathcal{H}$ of real-valued functions as
\begin{equation}
    \label{eqn:def:empRade}
    \rade{S} {\mathcal H}
    = \underset{\sigma}{\mathbb E} 
    \left[ 
      \sup\limits_{h \in \mathcal H} \frac{1}{n}\sum_{i=1}^n\sigma_i h(x_i,y_i)\right]\enspace,
\end{equation}
with iid Rademacher variables $\sigma = (\sigma_1,\ldots,\sigma_n)$ from
a uniform distribution on $\{\pm 1\}$, facilitates to study $\mathcal{F}_\gamma$ via \cref{eqn:def:empRade}.
The following lemma \cite[Lemma 3.1]{Bartlett17a} establishes the link 
to a margin-based multiclass generalization bound for any $f \in \mathcal{F}$.
\vskip1ex
\begin{lemma}
    Given a hypothesis class $\mathcal{F}$ of functions $f\colon \mathbb R^d \to \mathbb R^\kappa$ and a margin parameter $\gamma>0$, then, with probability of at least $1-\delta$ over the choice of $S \sim \mathcal{D}^n$, for any $f \in \mathcal F$, it holds that
      \begin{equation}
           \mathbb{P}
           [\argmax_{
            i\in \set{1,\dots,\kappa}} f(x)_i \neq y]\,
          {\le}\,
           \hat{R}_\gamma(f)
          \,{+}\, 2 \rade{S}{\mathcal{F}_\gamma}\,{+}\,3 \sqrt{\frac{\log\left(\tfrac{2}{\delta}\right)}{2n}}
      \enspace.
      \end{equation}
      \label{thm:Bartlett17a_Lemma31}
      \vspace{-0.25cm}
\end{lemma}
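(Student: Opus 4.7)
The plan is to combine two standard ingredients: (i) a pointwise domination of the $0/1$ misclassification indicator by the ramp loss at the negative margin, and (ii) a symmetrization-plus-concentration argument that translates uniform convergence into a statement involving the empirical Rademacher complexity of $\mathcal F_\gamma$ from \cref{eqn:def:Fgamma}.

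First, I would establish the pointwise inequality
$$\mathbbm{1}\!\left[\argmax_{i\in\{1,\dots,\kappa\}} f(x)_i \neq y\right] \leq \ell_\gamma(-\mathcal M(f(x),y))$$
for every $f \in \mathcal F$ and every $(x,y)$. When the argmax differs from $y$, $\mathcal M(f(x),y) \leq 0$ forces $\ell_\gamma$ to equal $1$; otherwise the right-hand side is nonnegative. Taking expectation under $\mathcal D$ reduces the task to a uniform upper bound on the population ramp risk $\mathbb E[\ell_\gamma(-\mathcal M(f(x),y))]$ over $f \in \mathcal F$.

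For that, I would invoke the standard Rademacher-based uniform convergence bound on the class $\mathcal F_\gamma$, using that each $g \in \mathcal F_\gamma$ takes values in $[0,1]$. The argument splits into two concentration steps. First, McDiarmid's inequality applied to $S \mapsto \sup_{g \in \mathcal F_\gamma}(\mathbb E g - \tfrac{1}{n}\sum_i g(x_i,y_i))$ (which has bounded-differences constant $1/n$), together with the classical symmetrization bound $\mathbb E_S \sup_g(\mathbb E g - \tfrac{1}{n}\sum_i g(x_i,y_i)) \leq 2\, \mathbb E_S\, \rade{S}{\mathcal F_\gamma}$, yields with probability at least $1 - \delta/2$,
$$\sup_{g \in \mathcal F_\gamma}\Bigl(\mathbb E g - \tfrac{1}{n}\sum_i g(x_i,y_i)\Bigr) \leq 2\, \mathbb E_S\, \rade{S}{\mathcal F_\gamma} + \sqrt{\tfrac{\log(2/\delta)}{2n}}.$$
A second McDiarmid application to $S \mapsto \rade{S}{\mathcal F_\gamma}$, again with bounded-differences constant $1/n$, controls $\mathbb E_S\, \rade{S}{\mathcal F_\gamma}$ by $\rade{S}{\mathcal F_\gamma} + \sqrt{\log(2/\delta)/(2n)}$ with probability at least $1-\delta/2$. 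A union bound over these two events combines them and turns the leading $1$ into the coefficient $3 = 1 + 2$ multiplying $\sqrt{\log(2/\delta)/(2n)}$ in the final statement.

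Specializing the uniform bound to $g(x,y) = \ell_\gamma(-\mathcal M(f(x),y))$, for which $\tfrac{1}{n}\sum_i g(x_i,y_i) = \hat R_\gamma(f)$, and chaining with step (i) delivers the claim. The argument presents no substantive obstacle; the main care needed is to track constants through the two McDiarmid steps and to confirm that the $[0,1]$-boundedness of $\ell_\gamma$ is what supplies the $1/n$ bounded-differences constant used at each of them.
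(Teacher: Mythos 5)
Your proposal is correct and follows essentially the same route as the source: the paper does not prove this lemma itself but cites it as \cite[Lemma 3.1]{Bartlett17a}, where the argument is exactly the one you give — pointwise domination of the misclassification indicator by the ramp loss at $-\mathcal M(f(x),y)$, followed by the standard uniform convergence bound for a $[0,1]$-valued class via symmetrization and two applications of McDiarmid's inequality, with the union bound producing the constant $3$. No gaps.
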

To obtain a computable expression for the right-hand side of
\cref{thm:Bartlett17a_Lemma31}, we seek a bound on
$\rade{S}{\mathcal{F}_\gamma}$ tied to some measurable quantities of the
network realizing  $\mathcal{F}$. 
For our purposes, the relationship of $\rade{S}{\mathcal{F}_\gamma}$ and the 
covering number of $\mathcal{F}_\gamma$ turns out to be a flexible approach.
In general, given a normed space $(\mathcal G, \norm{\cdot}_{\mathcal G})$,
the \emph{covering number} $\mathcal N(\mathcal G, \epsilon,
\norm{\cdot}_{\mathcal G})$ is the cardinality of the smallest $\epsilon$-cover of
$\mathcal G$, i.e., of the smallest subset $\mathcal U \subset \mathcal G$
such that, for any $g\in \mathcal G$, there exists $u \in \mathcal U$
with $\norm{g-u}_{\mathcal G} \le \epsilon$.
In our setting, $\mathcal G$ is a class of functions $g: (\mathcal X, \norm{\cdot}_{\mathcal X}) \to (\mathcal{Y}, \norm{\cdot}_{\mathcal Y})$ between normed spaces $\mathcal{X}$ and $\mathcal{Y}$.
Given data $X = (x_1,\ldots,x_n) \in \mathcal{X}^n$, we define a \emph{data-dependent norm} on $\mathcal{G}|_{X}$ as 
\begin{equation}
    \norm{g}_X = \sqrt{\sum_{i} \norm{g(x_i)}_{\mathcal Y}^2}\enspace.
    \label{eq:data_l2_norm}
\end{equation}
In other words, \cref{eq:data_l2_norm} is the $l_2$ norm on the restriction of $\mathcal{G}$ to $X$.
Specifically, we seek to bound $\log \mathcal N(\mathcal{F}_\gamma, \epsilon, \norm{\cdot}_S)$, 
as this facilitates to control the empirical 
Rademacher complexity of $\mathcal{F}_\gamma$ by means of Dudley's entropy integral.
Typically, such covering number bounds depend on the norm of the data itself, i.e., $\norm{X} = \sqrt{\|x_1\|_{\mathcal X}^2 + \cdots \|x_n\|_{\mathcal X}^2}$.

\subsection{Covering number bounds for convolutions}
\label{subsection:single_layer}
We consider 2D convolutions, acting on images with $c_\textit{in}$ channels of
width $w$ and height $h$, \ie, $x \in \mathbb R^{c_\textit{in}\times h\times
w}$. For readability only, we discuss convolutions of stride 1 and
input-size preserving padding; this is not an assumption required for \cref{paper:theorem:single_layer}.
Formally, a convolutional layer is a linear map
$\phi_K\colon \mathbb R^{c_\textit{in} \times h\times w} \to \mathbb R^{c_\textit{out} \times
h\times w}$ (as we omit bias parameters), where $c_\textit{in}$ and $c_\textit{out}$ denote the number of input and output
channels. The map is parametrized by a tensor $K \in \mathbb R^{c_\textit{out}\times c_\textit{in} \times k_h \times k_w}$ of spatial extension/kernel size $(k_h,k_w)$.
Since convolutions are linear maps, they can be specified by matrices which act on the (reshaped) inputs and one could invoke existing covering
number bounds.
However, this is suboptimal, as any
structure specific to convolutions is ignored. In particular, norm-based
generalization bounds agnostic to this structure incur unfavorable
scaling behavior \wrt the dimensionality of the input. To be more
specific, the weight tensor $K$ of a convolutional layer does not
directly specify the corresponding matrix; instead, it parametrizes
$c_\textit{out}$ filters, \ie, local linear maps, which are applied to the $(c_\textit{in}\!\times\!k_h\!\times\!k_w)$-sized pixel neighborhoods of the input. Hence, the matrix $M_K$
corresponding to the \emph{global} linear map
consists of many copies of the elements of this tensor, one for each of
the $hw$ patches the filters are
applied to. Thus, the $l_p$ norm of the matrix $M_K$ is $\norm{M_K}_p = (hw)^{1/p}\norm{K}_p$ (see
\cref{subsection:norm_of_M_k}). We mitigate this scaling issue by tying
the covering number of a convolutional layer to a variant of the (2,1)
group norm on the \emph{tensor} $K$ itself. We define this norm
as the sum over the $l_2$ norms taken along the \emph{input channels} of
$K$, i.e., 
\begin{equation}
    \norm{K}_{2,1} 
    = \sum_{ikl} \norm{K_{i\cdot kl}}_2
    = \sum_{ikl}
    \sqrt{\sum_{j} K_{ijkl}^2}
    \enspace.
    \label{eqn:def_21_norm}
\end{equation}
For the special case of inputs of size $(h,w)=(1,1)$ and kernel size $(k_h,k_w)= (1,1)$, convolution is just matrix multiplication along the channels. In this case, $M_K = K_{\cdot \cdot 1 1}$ and our norm from \cref{eqn:def_21_norm} agrees with the standard ($2,1$) group norm on $M_K^\top$, i.e., $\norm{K}_{2,1} = \norm{M_K^\top}_{2,1}$. \cref{paper:theorem:single_layer} establishes two covering number bounds for convolutions. 
\vspace{0.1cm}
\begin{theorem}
    \label{paper:theorem:single_layer}
    Let $b > 0$ and $\mathcal{F} = \{\phi_K|~ K \in \mathbb R^{c_\textit{out}\times c_\textit{in} \times k_h \times k_w}, \norm{K}_{2,1}\le b\}$ denote the class of 2D convolutions with $c_\textit{in}$ input channels, $c_\textit{out}$ output channels and kernel size $k_h{\times}k_w$, parametrized by tensors $K$ with $W = c_{\textit{out}}c_{\textit{in}}k_h k_w$ parameters.
    Then, for any $X \in \mathbb R^{n \times c_\textit{in}\times
    h \times w}$ and covering radius $\epsilon>0$,
    \begin{subnumcases}
        {\log\mathcal N(\mathcal F, \epsilon, \norm{\cdot}_X ) \leq}
        \ceil{\frac{\norm{X}^2 b^2}{\epsilon^2}} \log(2W) 
        \label{paper:theorem:single_layer_simple:eq_norms}
        \\
        2 W \log\left(1+
        \ceil{
            \frac{\norm{X}^2 b^2}{\epsilon^2}
        }
        \right)\label{paper:theorem:single_layer_simple:eq_params}\enspace.
     \end{subnumcases}
\end{theorem}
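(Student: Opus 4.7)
The plan is to reduce the convolutional covering problem to the familiar matrix-covering setting of Bartlett et al.\ \cite{Bartlett17a} and then to apply their Maurey-sparsification argument, taking care to exploit a convolution-specific row-norm estimate that avoids a spurious $k_h k_w$ factor in the resulting bound.

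First I would reshape the kernel $K$ as a matrix $\tilde K \in \mathbb R^{c_{\textit{out}} \times c_{\textit{in}} k_h k_w}$ and represent the convolution as an ordinary matrix product $\phi_K(x) = \tilde K\, P(x)$, where $P(x) \in \mathbb R^{c_{\textit{in}} k_h k_w \times hw}$ is the patch matrix whose $hw$ columns are the flattened receptive-field patches of $x$ (cf.\ the discussion preceding \eqref{eqn:def_21_norm}). Stacking across samples into $\bar X = [P(x_1) \mid \dots \mid P(x_n)]$, two identifications hold by construction: the tensor group norm $\|K\|_{2,1}$ equals the ordinary matrix $(2,1)$ norm of $\tilde K$ (sum of column $\ell_2$ norms), and the data-dependent norm satisfies $\|\phi_K\|_X = \|\tilde K \bar X\|_F$. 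Hence it suffices to cover $\{\tilde K \bar X : \|\tilde K\|_{2,1} \le b\}$ in Frobenius norm, which is exactly the linear-map setting treated in \cite[Lemma~3.2]{Bartlett17a}.

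Second, I would apply a Maurey-style sparsification, which rests on an atomic decomposition of $\tilde K \bar X$ along the columns of $\tilde K$. The version of the argument matched to the $(2,1)$ column-norm constraint yields a finite atom collection of cardinality $2\,c_{\textit{out}} c_{\textit{in}} k_h k_w = 2W$ (one signed atom per output channel / input-channel / kernel-position triple), with each atom of Frobenius norm at most $\max_c \|\bar X_{c,\cdot}\|_2$. The critical convolution-specific observation is that this maximum row norm is already bounded by $\|X\|$: for fixed input channel $i$ and kernel offset $(k,l)$, the row $\bar X_{(i,k,l),\cdot}$ collects the values $x_{m, i, a+k, b+l}$ across samples $m$ and output positions $(a,b)$, and each input-tensor entry appears at most once among these values under input-preserving padding. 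This row-wise estimate replaces the naive $\|\bar X\|_F = \sqrt{k_h k_w}\,\|X\|$ that agnostic applications of the Bartlett bound to $\bar X$ would incur, and is exactly what removes the $k_h k_w$ factor from the final bound.

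With these ingredients in place, Maurey's lemma implies that $k = \lceil \|X\|^2 b^2/\epsilon^2 \rceil$ atoms suffice to approximate any admissible $\tilde K \bar X$ to Frobenius error at most $\epsilon$. The two bounds of the theorem then correspond to two combinatorial counts of the resulting set of approximators. For \eqref{paper:theorem:single_layer_simple:eq_norms} I would count ordered $k$-tuples drawn from the $2W$-element atom collection, giving at most $(2W)^k$ distinct averages and log-covering number $k \log(2W)$. For \eqref{paper:theorem:single_layer_simple:eq_params}, noting that the Maurey average is invariant under permutations of its $k$ arguments, I would instead count size-$k$ multisets: the stars-and-bars estimate $\binom{2W + k - 1}{k} \le (1 + k)^{2W}$ yields log-covering number $2W \log(1+k)$. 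The main technical hurdle lies in matching the row-norm estimate for $\bar X$ with the $(2,1)$ column-norm atomic decomposition inside the Maurey accounting; once this matching is executed faithfully, the two stated bounds follow from elementary counting.
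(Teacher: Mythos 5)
Your high-level route is the same as the paper's: Maurey sparsification (\cref{appendix:maurey_lemma}) with $2W$ signed atoms indexed by (output channel, input channel, kernel offset), atom norms controlled by the fact that a fixed kernel offset touches each entry of the input at most once, and then two counts of the resulting multisets of size $m=\ceil{\norm{X}^2b^2/\epsilon^2}$ (your $(2W)^m$ and stars-and-bars estimates match \cref{theorem:binomial_coefficient}). The counting part is fine.

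However, there is a genuine gap at the step you yourself flag as ``the main technical hurdle'', and the setup you give for it is incorrect. The claimed identification of $\norm{K}_{2,1}$ with the ordinary matrix $(2,1)$ norm of $\tilde K\in\mathbb R^{c_\textit{out}\times c_\textit{in}k_hk_w}$ is false: the sum of column $\ell_2$ norms of $\tilde K$ puts the $\ell_2$ over \emph{output} channels, and the Bartlett-oriented norm $\norm{\tilde K^\top}_{2,1}$ puts it jointly over input channels \emph{and} kernel offsets, whereas \eqref{eqn:def_21_norm} takes the $\ell_2$ over input channels only, at fixed output channel and offset. This grouping is not cosmetic, because the Maurey coefficient bound comes from a Cauchy--Schwarz over precisely the index group carrying the $\ell_2$: to avoid both the $\sqrt{k_hk_w}$ factor (from $\norm{\bar X}_F$) and a $\sqrt{c_\textit{in}}$ factor, one must apply Cauchy--Schwarz over input channels only at fixed offset, using $\sum_j\norm{\bar X_{(j,k,l),\cdot}}^2\le\sum_j\norm{X_{\cdot j\cdot\cdot}}^2=\norm{X}^2$, and rescale each atom channel-wise by $\norm{X}/\norm{X_{\cdot j\cdot\cdot}}$ (this is exactly what the paper's proof of \cref{theorem:single_layer} does). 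Your per-row estimate $\max_c\norm{\bar X_{c,\cdot}}\le\norm{X}$ alone does not close the accounting: if atoms are normalized only by the maximal row norm, the $\ell_1$ bound on the Maurey coefficients requires $\norm{K}_1\le b$ rather than $\norm{K}_{2,1}\le b$; if instead you invoke \cite[Lemma 3.2]{Bartlett17a} on the reshaped matrix with its own $(2,1)$ norm, the $k_hk_w$ factor you set out to remove reappears through $\norm{\bar X}_F$. The fact you state does yield the needed channel-wise bound $\norm{\bar X_{(j,k,l),\cdot}}\le\norm{X_{\cdot j\cdot\cdot}}$, but the matching of this estimate with the correct group structure of \eqref{eqn:def_21_norm} inside the Maurey decomposition is the actual content of \cref{paper:theorem:single_layer}, and it is asserted rather than carried out in your proposal.
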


\cref{paper:theorem:single_layer_simple:eq_norms} is analogous to the single-layer bound of Bartlett et al.\ \cite[Lemma 3.2]{Bartlett17a} for fully-connected layers, 
but replaces the ($2,1$) group norm constraint on matrices $M^\top$ with a constraint on tensors $K$. This is tighter than invoking \cite[Lemma 3.2]{Bartlett17a} directly on $M_K^\top\in \mathbb R^{c_\textit{in} h w\,\times\, c_\textit{out} h w}$, as $K$ has only 
$c_\textit{out}c_\textit{in} k_h k_w$ parameters and $\norm{M_K^\top}_{2,1} \ge \nicefrac{hw}{\sqrt{k_h k_w}} \norm{K}_{2,1}$, see \cref{subsection:norm_of_M_k}.
A thorough comparison between the two bounds in \cref{paper:theorem:single_layer} is nuanced, though, 
as preferring one over the other depends on the ratio between the number of parameters $W$ and $\norm{X}^2 b^2/\epsilon^2$. 
The latter, in turn, requires to consider all covering radii $\epsilon$. Hence, we defer this discussion to \cref{subsection:resnets_rade}, 
where differences manifest more clearly in the overall empirical Rademacher complexity bounds.

\paragraph*{Proof sketch.} The statement of \cref{paper:theorem:single_layer} follows from an application of Maurey's sparsification lemma, which guarantees the existence of an $\epsilon$-cover of $\mathcal F$ (of known cardinality) if there is a finite subset $\set{V_1,\dots, V_d} \subset \mathcal F$ s.t.\ every $f\in \mathcal F$ is a convex combination of the $V_i$. We show that one can find such a finite subset of cardinality $d=2 c_\textit{in} c_\textit{out} k_h k_w = 2W$. The cardinality of the cover is then determined by a \emph{combinatorial quantity} which additionally depends on $\norm{X}$ and the norm constraint $b$. Bounding this quantity, i.e., a binomial coefficient, in two different ways, establishes the bounds. 

\paragraph*{Relation to prior work.} Closely related is recent work by Ledent
et al.\ \cite{Ledent21a} who derive $l_\infty$ covering number bounds
for convolutional layers based on a classic result by Zhang
\cite{Zhang02a}. Similar to
Eq.~\eqref{paper:theorem:single_layer_simple:eq_norms}, their bound depends
on the square of a weight norm directly on the tensor $K$, the square of
a data norm, as well as a logarithmic term. The data norm is the maximal
$l_2$ norm of a single patch. Compared to our result, this implicitly removes a factor of the
spatial dimension $hw$. However, when transitioning to multi-layer bounds, this factor reenters in the form of the spatial dimension of the output (after subsequent pooling) via the Lipschitz constant.
Overall, the quadratic terms
across both results scale similarly (with our data norm being less sensitive to
outliers), but we improve on Ledent et al.~\cite{Ledent21a} in the
logarithmic term. By contrast, the use of $l_\infty$ covers in \cite{Ledent21a}
yields whole-network bounds with improved dependency on the number of classes; see \cref{appendix:comparison_ledent} for an in-depth comparison.
In other related work, Lin et al.\ \cite{Lin19a} derive an
$l_2$ covering number bound for convolutional layers similar to
Eq.~\eqref{paper:theorem:single_layer_simple:eq_params}, which depends
linearly on the number of parameters and logarithmically on norms.
In their proof, Lin et al. \cite{Lin19a} show that every cover of a convolutional layer's weight space (a subset of a Euclidean space) induces a cover of the corresponding function space \wrt the data dependent norm defined in \cref{eq:data_l2_norm}. However, their approach incurs an additional factor inside the logarithm that corresponds to the number of how often each filter is applied, \ie, the spatial dimension of the output.
Importantly, non-convolution specific approaches can equally mitigate undesirable scaling issues, e.g., by utilizing ($1,\infty$) group norms on the matrices representing the linear map \cite{Neyshabur15a,Golowich18a,Gouk21a}; as differences primarily manifest in the resulting bounds on the Rademacher complexity, we refer to our discussion in Section~\ref{subsection:resnets_rade}.

\subsection{Covering number bounds for composition \& addition}
\label{subsection:whole_network}
As many neural networks are built from composition and summation of layers, we study covering numbers under these operations. The key
building blocks are the following, easy to verify, inequalities.
\vskip1ex
\begin{lemma}
    \label{theorem:sum_and_comp}
    Let $\mathcal F_1, \mathcal F_2$ be classes of functions on normed
    spaces $(\mathcal X, \norm{\cdot}_{\mathcal X})\ {\to}\ (\mathcal Y,
    \norm{\cdot}_{\mathcal Y})$ and let $\mathcal G$ be a class of
    $c$-Lipschitz functions $(\mathcal Y, \norm{\cdot}_{\mathcal Y}) {\to}
    (\mathcal Z, \norm{\cdot}_{\mathcal Z})$. Then, for any $X \in
    \mathcal X^n$ and $\epsilon_{ \mathcal F_1}$, $\epsilon_{
    \mathcal F_2}, \epsilon_{ \mathcal G} >0$, it holds that
    \begin{equation}
        \label{theorem:sum_and_comp:eq_sum}
        \mathcal N (
            \set{\textcolor{black}{f_1 + f_2}~|~ f_1 \in \mathcal F_1, f_2 \in \mathcal F_2},
            \epsilon_{ \mathcal F_1} + \epsilon_{ \mathcal F_2},
            \norm{\cdot}_X
        )
        \le
        \mathcal N (
            \mathcal F_1,
            \epsilon_{ \mathcal F_1},
            \norm{\cdot}_X
        )
        \mathcal N (
            \mathcal F_2,
            \epsilon_{ \mathcal F_2},
            \norm{\cdot}_X
        )
        \end{equation}
        \vspace{-0.05cm}
        and
        \begin{equation}
        \label{theorem:sum_and_comp:eq_comp}
        \mathcal N (
            \set{\textcolor{black}{g\circ f}|~ g \in \mathcal G, f \in \mathcal F_2},
            \epsilon_{ \mathcal G} + c \epsilon_{ \mathcal F_2},
            \norm{\cdot}_X
        )
        \le
        \mathcal N (
            \mathcal F_2,
            \epsilon_{ \mathcal F_2},
            \norm{\cdot}_X
        )
        \sup_{f \in \mathcal F_2}
        \mathcal N (
            \mathcal G,
            \epsilon_{ \mathcal G},
            \norm{\cdot}_{f(X)}
        )
        \!\enspace.
        \end{equation}
\vspace{-0.3cm}
\end{lemma}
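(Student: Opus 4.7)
The plan is a textbook covering-number argument: in both cases I would exhibit an explicit cover of the claimed cardinality by pairing together covers of the constituent classes, and verify the approximation quality using the triangle inequality together with the Lipschitz hypothesis where needed. The whole argument is essentially two applications of the triangle inequality in the data-dependent norm $\|\cdot\|_X$ from \cref{eq:data_l2_norm}; I would keep the exposition short and stress that the lemma serves as a drop-in modular building block for the multi-layer covers in \cref{subsection:whole_network}.

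For the sum inequality \cref{theorem:sum_and_comp:eq_sum} I would begin by fixing minimal $\epsilon_{\mathcal F_i}$-covers $\mathcal U_i \subset \mathcal F_i$ of $\mathcal F_i$ with respect to $\|\cdot\|_X$. Given any $f_1+f_2$ with $f_i \in \mathcal F_i$, I pick $u_i \in \mathcal U_i$ with $\|f_i - u_i\|_X \le \epsilon_{\mathcal F_i}$ and note that the map $g \mapsto \|g\|_X$ is a seminorm, so
\[
\|(f_1+f_2) - (u_1+u_2)\|_X \;\le\; \|f_1 - u_1\|_X + \|f_2 - u_2\|_X \;\le\; \epsilon_{\mathcal F_1} + \epsilon_{\mathcal F_2}.
\]
Hence $\{u_1+u_2 : u_i \in \mathcal U_i\}$ is an $(\epsilon_{\mathcal F_1}+\epsilon_{\mathcal F_2})$-cover of the Minkowski sum, which has cardinality at most $|\mathcal U_1|\cdot|\mathcal U_2|$.

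For the composition inequality \cref{theorem:sum_and_comp:eq_comp}, I would fix a minimal $\epsilon_{\mathcal F_2}$-cover $\mathcal U \subset \mathcal F_2$ of $\mathcal F_2$ with respect to $\|\cdot\|_X$ and, for each $u \in \mathcal U$, a minimal $\epsilon_{\mathcal G}$-cover $\mathcal V_u \subset \mathcal G$ of $\mathcal G$ with respect to the data-dependent norm $\|\cdot\|_{u(X)}$; the point of taking the supremum over $f \in \mathcal F_2$ on the right-hand side is precisely to bound $|\mathcal V_u|$ uniformly in $u$. For $g \circ f$ I then pick $u \in \mathcal U$ close to $f$ and $v \in \mathcal V_u$ close to $g$ on $u(X)$, and split
\[
\|g\circ f - v\circ u\|_X \;\le\; \|g\circ f - g\circ u\|_X + \|g\circ u - v\circ u\|_X.
\]
For the second term, the definition of $\|\cdot\|_{u(X)}$ gives $\|g\circ u - v\circ u\|_X = \|g - v\|_{u(X)} \le \epsilon_{\mathcal G}$. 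For the first term, applying the $c$-Lipschitz hypothesis pointwise and then summing under the square root yields
\[
\|g\circ f - g\circ u\|_X \;=\; \sqrt{\sum_i \|g(f(x_i)) - g(u(x_i))\|_{\mathcal Z}^2} \;\le\; c\sqrt{\sum_i \|f(x_i)-u(x_i)\|_{\mathcal Y}^2} \;=\; c\|f-u\|_X \;\le\; c\,\epsilon_{\mathcal F_2}.
\]
Combining the two estimates shows $\{v \circ u : u \in \mathcal U, v \in \mathcal V_u\}$ is a $(c\,\epsilon_{\mathcal F_2} + \epsilon_{\mathcal G})$-cover, with total cardinality bounded by $|\mathcal U|\cdot\sup_{f\in\mathcal F_2}|\mathcal V_{f}|$.

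The only subtle step, and the one I would highlight explicitly in the write-up, is that the cover $\mathcal V_u$ of $\mathcal G$ depends on the particular $u \in \mathcal U$ rather than on the original $f$; this is what forces the supremum over $f \in \mathcal F_2$ on the right-hand side of \cref{theorem:sum_and_comp:eq_comp} and what makes the use of the data-dependent norm $\|\cdot\|_{f(X)}$ (as opposed to a uniform norm) essential for interfacing cleanly with the single-layer bound of \cref{paper:theorem:single_layer} in the subsequent multi-layer construction. Everything else reduces to the triangle inequality and the definition of $\|\cdot\|_X$, so I expect no real technical obstacle.
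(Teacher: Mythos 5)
Your proposal is correct and follows essentially the same argument as the paper's proofs (Lemmas \ref{theorem:compositions} and \ref{theorem:summations} in the appendix): pair minimal covers, apply the triangle inequality, and use the identities $\norm{g\circ u - v\circ u}_X = \norm{g-v}_{u(X)}$ and $\norm{g\circ f - g\circ u}_X \le c\norm{f-u}_X$ for the composition case. Your remark that the inner cover must be internal (so that $u \in \mathcal F_2$ and the supremum over $f \in \mathcal F_2$ dominates $|\mathcal V_u|$) is precisely the point the paper flags at step $(\star)$ of its proof.
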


To establish these inequalities, one chooses minimal covers of the
original function spaces and links their elements via the considered
operation, \ie, addition or composition. The resulting functions
correspond to tuples of elements of the original covers.
Hence, the right-hand side of the inequalities is a product of covering numbers.
The crucial step is to determine a preferably small radius $\epsilon$
such that these functions form an $\epsilon$-cover. In
\cref{theorem:sum_and_comp}, this is achieved via standard properties of
norms. Notably, iterative application of \cref{theorem:sum_and_comp}
allows bounding the covering numbers of any function class built from
compositions and additions of simpler classes.

In \cref{appendix:subsection:whole_network_general}, we apply
\cref{theorem:sum_and_comp} on two examples, \ie, (1) $f \in \mathcal
F = \set{f_L \circ \dots \circ f_1}$ and (2) $h  \in \mathcal H =\set{
g + h_L \circ \dots \circ h_1}$. Instantiating the first example for $f_i
= \sigma_i \circ \phi_i$, with $\sigma_i$ fixed and $\phi_i$ from
a family of linear maps, yields covering number bounds for networks as
in \cref{eq:simple_architecture}. As the second example corresponds to
residual blocks (with $g$ possibly the identity map), the combination of
(1) and (2) yields covering number bounds for residual networks; see
\cref{theorem:residual_networks}.

Overall, this strategy not only allows to derive covering number bounds
for a broad range of architectures, but also facilitates integrating
linkings between function spaces in a modular way. For
instance, \cref{appendix:lemma:concatenation} provides a variant of
\cref{theorem:sum_and_comp} for \emph{concatenation}, used in DenseNets
\citep{Huang17a}.

\paragraph*{Relation to prior work.}
He et al.\ \cite{HeF20a} investigate
covering number bounds for function spaces as considered above. They
present covering number bounds for residual networks and show that the
covering number $\mathcal N(\mathcal F, \epsilon, \norm{\cdot}_X)$ of
such models with layers $\mathcal F_\alpha$ is bounded by the product
$\prod_{\alpha} \sup_{\phi \in \mathcal G_\alpha} \mathcal N(\mathcal
F_\alpha, \epsilon_{\mathcal F_{\alpha}}, \norm{\cdot}_{\phi(X)})$ for
appropriately defined function spaces $\mathcal G_\alpha$. Yet, the
dependency of the whole-network covering radius $\epsilon$ on the
single-layer covering radii $\epsilon_{\mathcal F_\alpha}$ is
\emph{only} derived for a very specific residual network. Our  addition
to the theory is a more modular and structured way of approaching the
problem, which we believe to be valuable on its own.

\subsection{Covering number bounds for residual networks}
\label{subsection:resnets_cover}
We next state our whole-network covering number bounds for residual
networks and then present the corresponding bounds on the empirical Rademacher
complexity in \cref{subsection:resnets_rade}. Accompanying generalization guarantees (obtained via
\cref{thm:Bartlett17a_Lemma31}) are given in
\cref{appendix:subsection:generalization_bounds}. 
The results of this section hold for a hypothesis class~$\mathcal{F}$ of
networks implementing functions of the form 
\begin{equation}f =  \sigma_L \circ f_L
\circ \dots \circ\, \sigma_1 \circ f_1\quad \text{with}\quad  
f_i(x) = g_i(x) + (\sigma_{iL_i}\circ h_{iL_i}\circ\dots\circ \sigma_{i1} \circ h_{i1})(x)\enspace,
\label{eq:setting}
\end{equation}
\ie, a composition of $L$ residual blocks.
Here, the nonlinearities $\sigma_i$ and $\sigma_{ij}$ are fixed and $\rho_i$-, resp., $\rho_{ij}$-Lipschitz continuous with $\sigma_i(0)=0$ and $\sigma_{ij}(0)=0$. We further fix the shortcuts to maps with $g_i(0)=0$.
The map $h_{ij}$ identifies the $j$-th layer in the $i$-th residual block with Lipschitz constraints $s_{ij}$ and distance constraints $b_{ij}$ (\wrt reference weights $M_{ij}$). Specifically, if $h_{ij}$ is \emph{fully-connected}, then
\begin{equation}
h_{ij} \in \set{\phi:x \mapsto A_{ij}x~{\Bigl\vert}~ \Lip({\phi})\,{\le}\,s_{ij},\, \norm{A_{ij}^\top - M_{ij}^\top}_{2,1}\,{\le}\,b_{ij}}\enspace,
\end{equation}
and, in case $h_{ij}$ is \emph{convolutional}, then
\begin{equation}
    h_{ij} \in \set{\phi_{K_{ij}}~\Bigl\vert~ \Lip(\phi_{K_{ij}})\,{\le}\,s_{ij},~ \norm{K_{ij} - M_{ij}}_{2,1}\le b_{ij}}\,.
\end{equation}
In terms of notation, $s_i = \Lip(g_i) + \prod_{j=1}^{L_i} \rho_{ij}
s_{ij}$ further denotes the upper bound on the Lipschitz constant of the
$i$-th residual block $f_i$. The Lipschitz constants are \wrt
Euclidean norms; for a fully-connected layer this coincides with
the spectral norm of the weight matrix.

The covering number bounds in Theorem~\labelcref{paper:theorem:residual_network_covering} below depend on
three types of quantities: (1) the total number of layers $\bar L = \sum_{i} L_i$, (2)
the numbers $W_{ij}$ of parameters of the $j$-th layer in the $i$-th
residual block, their maximum $W = \max_{ij} W_{ij}$, and (3) terms $C_{ij}$
that quantify the part of a layer's capacity attributed to weight and
data norms. With respect to the latter, we define
\begin{equation}
    C_{ij}(X)=
    2\, \frac{\norm{X}}{\sqrt{n}} 
    \left(
        \prod_{\substack{l=1}}^{L}
            s_l \rho_l
    \right)
    {\frac
    {\prod_{\substack{k=1}}^{L_i} s_{ik}\rho_{ik}}     
    {s_i} 
    \frac{b_{ij}}{s_{ij}}}
\end{equation}
and write $C_{ij} = C_{ij}(X)$ for brevity. Importantly, $\norm{X} \le
\sqrt{n} \max_i \norm{x_i}$ and so the $C_{ij}$ can be bounded
independently of the sample size.
Overall, this yields the following covering number bounds for residual networks.
\vskip0.5ex
\begin{theorem}
    \label{paper:theorem:residual_network_covering}
    The covering number of the class of residual networks $\mathcal F$ as specified above, satisfies
    \begin{subnumcases}{\log \mathcal N(
        \mathcal F,
        \epsilon,
        \norm{\cdot}_{X}
    ) \leq}
    \log(2W)
    \Biggl(
        \sum_{i=1}^{{L}}
        \sum_{j=1}^{{L_i}}
        \ceil{C_{ij}^{{2}/{3}}}
    \Biggr)^{\!3}
        \ceil{\frac{n}{\epsilon^2}}
        \\
        \sum_{i=1}^{{L}}
        \sum_{j=1}^{{L_i}}
            2 W_{ij}
            \log\left(
                1 +
                \ceil{\bar L^2
                C_{ij}^2}
                \ceil{\frac{n}{\epsilon^2}}
            \right)\enspace.
     \end{subnumcases}
\end{theorem}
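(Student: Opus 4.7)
The plan is to derive both bounds by iteratively applying \cref{theorem:sum_and_comp} to reduce the whole-network covering number to a product of single-layer covering numbers, and then to invoke \cref{paper:theorem:single_layer} on each $h_{ij}$. Since the fixed components $\sigma_i$, $\sigma_{ij}$, and $g_i$ each send $0$ to $0$ and are Lipschitz, both the data norms and the per-layer covering radii propagate cleanly through the composition-and-addition structure of $\mathcal{F}$.

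First, I would propagate data norms. Writing $X_{ij}$ for the data entering layer $h_{ij}$, the assumptions $\sigma_{ij}(0)=\sigma_i(0)=g_i(0)=0$ together with the respective Lipschitz bounds yield $\norm{X_{ij}} \le \beta_{ij}\norm{X}$, where $\beta_{ij} = \prod_{l<i} s_l\rho_l \cdot \prod_{k<j} s_{ik}\rho_{ik}$. Next, I would propagate covering radii. If each $h_{ij}$ is covered at resolution $\epsilon_{ij}$, then iterated application of \cref{theorem:sum_and_comp:eq_comp} along the remaining variable layers and the fixed nonlinearities of block $i$ yields an inner-chain cover of radius $\delta_i = \sum_{j} \epsilon_{ij}\rho_{ij}\prod_{k>j} s_{ik}\rho_{ik}$. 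The sum inequality \cref{theorem:sum_and_comp:eq_sum} transfers $\delta_i$ unchanged to $f_i$, since $g_i$ is fixed, and a further iterated application of \cref{theorem:sum_and_comp:eq_comp} composes the blocks to produce a global covering radius $\epsilon = \sum_{ij}\alpha_{ij}\epsilon_{ij}$ with $\alpha_{ij} = \rho_i\rho_{ij}\prod_{k>j} s_{ik}\rho_{ik}\prod_{l>i} s_l\rho_l$. Multiplying the single-layer cover sizes across all $(i,j)$ bounds $\log\mathcal{N}(\mathcal{F}, \epsilon, \norm{\cdot}_X)$ by $\sum_{ij}\log\mathcal{N}(h_{ij}, \epsilon_{ij}, \norm{\cdot}_{X_{ij}})$, and a brief algebraic rearrangement verifies the key identity $\alpha_{ij}\beta_{ij}b_{ij}\norm{X}/\sqrt{n} = C_{ij}/2$, so every per-layer contribution can be expressed purely in terms of $C_{ij}$ and the rescaled variables $u_{ij} := \alpha_{ij}\epsilon_{ij}$, which must satisfy $\sum_{ij} u_{ij} \le \epsilon$.

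The two bounds diverge at the choice of the $\epsilon_{ij}$. For the norm-based bound, I invoke \cref{paper:theorem:single_layer_simple:eq_norms}, which reduces the task to minimizing $\sum_{ij}(C_{ij}/u_{ij})^2$ subject to $\sum_{ij} u_{ij} \le \epsilon$. A Lagrange multiplier argument (equivalently, a weighted Hölder inequality) delivers the minimizer $u_{ij}\propto C_{ij}^{2/3}$; taking $u_{ij} = \epsilon\lceil C_{ij}^{2/3}\rceil/T'$ with $T' = \sum_{ij}\lceil C_{ij}^{2/3}\rceil$ and applying $\lceil ab\rceil \le \lceil a\rceil\lceil b\rceil$ then absorbs the stray factor of $4$ coming from $C_{ij}^2/(4u_{ij}^2)$ into $\lceil n/\epsilon^2\rceil$, yielding the stated $\log(2W)\cdot T'^3\cdot \lceil n/\epsilon^2\rceil$ form. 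For the parameter-based bound, \cref{paper:theorem:single_layer_simple:eq_params} depends on $\norm{X_{ij}}b_{ij}/\epsilon_{ij}$ only logarithmically, so the balancing is much more forgiving: the uniform allocation $u_{ij} = \epsilon/\bar L$ (i.e., $\epsilon_{ij} = \epsilon/(\bar L\alpha_{ij})$) directly produces $\lceil \bar L^2 C_{ij}^2\rceil\lceil n/\epsilon^2\rceil$ inside the logarithm. The main technical obstacle is the careful bookkeeping of Lipschitz factors across both the inner chains and the residual shortcuts, together with the algebraic identification of $C_{ij}$ as exactly the correct per-layer ``capacity contribution'' that makes the optimization over the $u_{ij}$ collapse into the compact stated form.
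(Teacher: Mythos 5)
Your overall strategy is the same as the paper's: iterate the composition/addition lemmas to reduce to a product of per-layer covering numbers, propagate data norms via the Lipschitz constraints and the zero-preserving property of $\sigma_i,\sigma_{ij},g_i$, apply the single-layer bounds, and then allocate the radii ($u_{ij}\propto C_{ij}^{2/3}$ for the norm bound, uniform for the parameter bound). The radius-allocation and ceiling bookkeeping you sketch is fine. However, there is one genuine gap in the step where you ``invoke \cref{paper:theorem:single_layer} on each $h_{ij}$.''

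\cref{paper:theorem:single_layer} bounds the covering number of the class constrained \emph{only} by $\norm{K-M_{ij}}_{2,1}\le b_{ij}$ (after a translation-invariance step you also omit, though that part is easy), whereas the layer classes appearing in the whole-network argument carry \emph{both} the distance constraint and the Lipschitz constraint $\Lip\le s_{ij}$, i.e.\ they are proper subsets. The covers produced by Maurey's sparsification lemma are internal to the relaxed class but need not satisfy the Lipschitz constraint, so they are not internal covers of the constrained class; and internal covering numbers are \emph{not} monotone under passing to subsets. You cannot simply plug the relaxed-class bound in, nor can you switch to external covers globally, because the composition lemma (step $(\star)$ in its proof) needs the cover elements to lie in the layer class -- its external-cover analogue takes a supremum over all functions and is unusable. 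The paper resolves this locally via the chain internal $\to$ external (radius halved) $\to$ subset monotonicity for external covers $\to$ internal again (\cref{eq:covers_int_ext}, \cref{eq:covers:subset}), which costs a factor $2$ in each $\epsilon_{ij}$. That factor is exactly the origin of the $2$ in the definition of $C_{ij}$: your ``key identity'' $\alpha_{ij}\beta_{ij}b_{ij}\norm{X}/\sqrt{n}=C_{ij}/2$ and the claim that the resulting factor $4$ in $C_{ij}^2/(4u_{ij}^2)$ is stray slack are symptoms of the missing step -- once the internal/external transition is done correctly, that factor is consumed and you land precisely on the stated bound, with no slack to spare.
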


\subsection{Rademacher complexity bounds}
\label{subsection:resnets_rade}
In combination with Dudley's entropy integral, \cref{paper:theorem:residual_network_covering}
implies the empirical Rademacher complexity bounds in \cref{paper:theorem:rademacher_complexity}. These bounds equally hold for
non-residual networks as in \cref{eq:simple_architecture}, \ie, the special case of setting the
shortcuts $g_i$ to the zero map (with $L=1$ block). 

\begin{theorem}
    \label{paper:theorem:rademacher_complexity}
    Let $\gamma>0$ and define ${\tilde C}_{ij}= 2C_{ij}/\gamma$. Further, let $H_{n-1}= \sum_{m=1}^{n-1} \nicefrac{1}{m} = \mathcal O(\log(n))$ denote the $(n-1)$-th harmonic number. Then, the empirical Rademacher complexity of $\mathcal F_\gamma$ satisfies 
    \begin{equation}
        \label{theorem:main_residual_network_covering_rademacher:eq_norms}
        \textcolor{uniblue}
        {
            \rade{S}{\mathcal F_\gamma}
            \le
            \frac{4}{n}
            +
            \frac{12 H_{n-1}}{\sqrt{n}}
            \sqrt{\log(2W)}
            \left(
                \sum_{i=1}^L
                \sum_{i=j}^{L_i}
                \ceil{\tilde C_{ij}^{\nicefrac{2}{3}}}
            \right)^{\!\nicefrac{3}{2}}
        }
        \tag{\textcolor{uniblue}{$\clubsuit$}}
    \end{equation}
    \vspace{-0.1cm}
    and
    \begin{equation}
        \label{theorem:main_residual_network_covering_rademacher:eq_params}
        \textcolor{unired}
        {
            \rade{S}{\mathcal F_\gamma}
            \le
            \frac{12}{\sqrt n}
            \sqrt{
                \sum_{i=1}^L
                \sum_{i=j}^{L_i}
                2 W_{ij}
                \bigg(
                \log\left(1+\ceil{\bar L^2 \tilde C_{ij}^2}\right) + 
                \psi\left(\ceil{\bar L^2 \tilde C_{ij}^2}\right)
                \bigg)
            }
        }
        \enspace, 
        \tag{\textcolor{unired}{$\spadesuit$}}
    \end{equation}
    where $\psi$ is a monotonically increasing function, satisfying $\psi(0)=0$ and $\forall x: \psi(x)<2.7$.
\end{theorem}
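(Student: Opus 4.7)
The plan is to bootstrap from the covering number estimates of \cref{paper:theorem:residual_network_covering} to bounds on $\rade{S}{\mathcal F_\gamma}$ via Dudley's entropy integral. There are two preliminary reductions: (a) passing from $\mathcal F$ to the loss class $\mathcal F_\gamma$, and (b) identifying the right truncation of the integral. For (a), note that the margin operator is $2$-Lipschitz (since $|\mathcal M(v,y)-\mathcal M(w,y)|\le |v_y-w_y|+\max_{i\ne y}|v_i-w_i|\le 2\|v-w\|_2$) and the ramp loss is $(1/\gamma)$-Lipschitz, so the map $f\mapsto (x,y)\mapsto \ell_\gamma(-\mathcal M(f(x),y))$ is $(2/\gamma)$-Lipschitz in $\|\cdot\|_S$. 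Hence $\mathcal N(\mathcal F_\gamma,\epsilon,\|\cdot\|_S)\le \mathcal N(\mathcal F,\gamma\epsilon/2,\|\cdot\|_S)$. Tracing the $\gamma\epsilon/2$ through the (layerwise-split) derivation of \cref{paper:theorem:residual_network_covering} absorbs the factor $2/\gamma$ into the $C_{ij}$, replacing each $C_{ij}$ by $\tilde C_{ij}=2C_{ij}/\gamma$. For (b), since $\ell_\gamma\in[0,1]$, we have $\|h\|_S\le\sqrt n$ for all $h\in\mathcal F_\gamma$, so Dudley's integral may be truncated at $\sqrt n$.

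For bound $(\clubsuit)$, I would apply Dudley's inequality with the choice $\alpha=1/\sqrt n$, yielding the $4/n$ term, and use covering bound (i) which gives $\sqrt{\log\mathcal N(\mathcal F_\gamma,\epsilon,\|\cdot\|_S)}\le\sqrt{\log(2W)\,\Lambda}\cdot\sqrt{\lceil n/\epsilon^2\rceil}$ with $\Lambda=\bigl(\sum_{i,j}\lceil \tilde C_{ij}^{2/3}\rceil\bigr)^{3}$. The key computation is
\[
\int_{1/\sqrt n}^{\sqrt n}\!\sqrt{\bigl\lceil n/\epsilon^2\bigr\rceil}\,d\epsilon
\;\le\;\int_{1/\sqrt n}^{\sqrt n}\!\bigl\lceil \sqrt n/\epsilon\bigr\rceil\,d\epsilon
\;=\;\sqrt n\sum_{m=1}^{n-1}\frac{1}{m}\;=\;\sqrt n\,H_{n-1},
\]
obtained by partitioning the $\epsilon$-range into intervals $[\sqrt n/(m+1),\sqrt n/m)$ on which $\lceil\sqrt n/\epsilon\rceil=m+1$. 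Multiplying by $12/n$ and combining gives $(\clubsuit)$.

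For bound $(\spadesuit)$, I would take $\alpha=0$ (the integrand is integrable near $0$) and first bring the sum/log inside the square root via Cauchy--Schwarz, $\int_0^{\sqrt n}\sqrt{f}\,d\epsilon\le\sqrt{\sqrt n\,\int_0^{\sqrt n}f\,d\epsilon}$. Using covering bound (ii), with $A_{ij}=\lceil\bar L^{2}\tilde C_{ij}^{2}\rceil$, the substitution $u=\sqrt n/\epsilon$ yields
\[
\int_0^{\sqrt n}\log\bigl(1+A\lceil n/\epsilon^2\rceil\bigr)\,d\epsilon
\;=\;\sqrt n\int_1^{\infty}\frac{\log(1+A\lceil u^2\rceil)}{u^2}\,du,
\]
and, writing $\log(1+A\lceil u^2\rceil)=\log(1+A)+\log\bigl((1+A\lceil u^2\rceil)/(1+A)\bigr)$, the factor $\int_1^{\infty}u^{-2}du=1$ isolates the $\log(1+A)$ contribution and defines $\psi(A):=\int_1^{\infty}u^{-2}\log\bigl((1+A\lceil u^2\rceil)/(1+A)\bigr)du$. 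Substituting back into Cauchy--Schwarz and dividing by $n$ produces $(\spadesuit)$ up to the factor of $12$.

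Finally, the stated properties of $\psi$ are checked directly: $\psi(0)=0$ because the integrand vanishes; monotonicity follows from $\frac{d}{dA}\log\bigl((1+A\lceil u^2\rceil)/(1+A)\bigr)=(\lceil u^2\rceil-1)/\bigl((1+A\lceil u^2\rceil)(1+A)\bigr)\ge 0$; and the bound $\psi(A)<2.7$ follows by passing to the monotone limit $A\to\infty$, where the integrand tends to $\log\lceil u^2\rceil/u^2$, and estimating $\int_1^{\infty}\log\lceil u^2\rceil/u^2\,du\le\int_1^{\infty}\log(2u^2)/u^2\,du=\log 2+2\int_1^{\infty}u^{-2}\log u\,du=\log 2+2<2.7$ (the last integral equals $1$ by integration by parts). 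The main obstacle is the bookkeeping in step~(a): tracking how the $2/\gamma$ Lipschitz factor interacts with the ceilings and the per-layer radius split that produced the exponents $2/3$ and $2$ on $C_{ij}$ in \cref{paper:theorem:residual_network_covering}, so that one arrives cleanly at $\tilde C_{ij}$ rather than a mix of $C_{ij}$ and $\gamma$'s.
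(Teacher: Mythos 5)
Your proposal is correct and follows essentially the same strategy as the paper's proof: reduce to $\mathcal F_\gamma$ by composing with the fixed $(2/\gamma)$-Lipschitz map $\ell_\gamma(-\mathcal M(\cdot,\cdot))$, which (when absorbed into the product of Lipschitz constants \emph{before} the ceilings and the per-layer radius split, exactly as you indicate, rather than by rescaling $\epsilon$ in the final stated bound, which would distort the $\lceil n/\epsilon^2\rceil$ factor) turns $C_{ij}$ into $\tilde C_{ij}$; then apply Dudley's entropy integral with $t=1/\sqrt n$ and the exact evaluation $\int_{1/\sqrt n}^{\sqrt n}\lceil \sqrt n/\epsilon\rceil\,d\epsilon=\sqrt n\,H_{n-1}$ for \eqref{theorem:main_residual_network_covering_rademacher:eq_norms}, and with $t\to 0$ plus Jensen/Cauchy--Schwarz for \eqref{theorem:main_residual_network_covering_rademacher:eq_params}. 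The one genuine divergence is the remainder term in the second bound: the paper's \cref{theorem:integral_params} evaluates $\int_0^1\log\bigl(1+\alpha\lceil 1/s^2\rceil\bigr)ds$ through Abel summation, the Mercator series, an alternating-series truncation and H\"older, producing $\psi(x)=\zeta(3/2,1)^{1/3}\zeta(3/2,1+1/x)^{2/3}<\zeta(3/2)\approx 2.61$, whereas you define $\psi$ as the exact remainder $\int_1^\infty u^{-2}\log\bigl((1+x\lceil u^2\rceil)/(1+x)\bigr)du$ and check $\psi(0)=0$, monotonicity, and $\psi<\log 2+2\approx 2.69$ by elementary calculus. Since the theorem only asserts existence of a monotone $\psi$ with $\psi(0)=0$ and $\psi<2.7$, your construction is valid, arguably more elementary, and even exact at the level of the integral identity, at the cost of a slightly larger cap on the constant ($2.69$ versus $2.61$); all other steps, constants, and the harmonic-number computation match the paper's argument.
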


The theorem considers the function class $\mathcal F_\gamma$ as defined in \cref{eqn:def:Fgamma}. As a consequence, the bounds depend on the quotients \smash{$\tilde C_{ij} = 2 C_{ij} /\gamma$}, which measure a layer's capacity (with respect to weight and data norms) relative to a classification margin parameter $\gamma$.
As we will see in the experiments, constraining the layers' Lipschitz constants and weight norms, allows to substantially reduce the quantities $C_{ij}$ while the margin parameter $\gamma$ decreases only moderately.

\cref{paper:theorem:rademacher_complexity} also immediately implies generalization bounds for $\mathcal F_\gamma$ via \cref{thm:Bartlett17a_Lemma31}. In a subsequent step one can gradually decrease the constraint strengths and invoke a union bound argument over the corresponding generalization bounds, as for example done in \cite[Lemma A.9]{Bartlett17a}. This yields a generalization bound which does not depend on a priori defined constraint strengths, but on the actual Lipschitz constants and group norms computed from a neural network's weights.

\vspace{-.25em}
\paragraph*{Interpretation.}
    To facilitate a clean comparison between the bounds in \cref{paper:theorem:rademacher_complexity}, we disregard the ceiling function and apply Jensen's inequality 
    to the first bound \eqref{theorem:main_residual_network_covering_rademacher:eq_norms}, yielding
    \begin{equation}
        \rade{S}{\mathcal F_\gamma} 
        \le \frac 4 n + 
        \frac{12 H_{n-1}}{\sqrt{n}}
        \sqrt{
            \log(2W) 
            \sum\nolimits_{ij}
            \bar L^2
            \tilde{C}_{ij}^2
        }
        \enspace.
        \label{eqn:boundinterpretation}
    \end{equation}
    Denoting $\tilde C = \max_{ij}\tilde C_{ij}$, \cref{eqn:boundinterpretation} reveals that the bounds essentially differ only in that \eqref{theorem:main_residual_network_covering_rademacher:eq_norms} depends on $(\log(2W)\bar L^2 \tilde C^2)^{\nicefrac{1}{2}}$ and \eqref{theorem:main_residual_network_covering_rademacher:eq_params} depends on $(2W \log(1+\bar L^2 \tilde C^2))^{\nicefrac 1 2}$. Thus, the question of \emph{which one is tighter}, hinges on the ratio of $2W$ and $\bar L^2 \tilde C^2$, i.e., a tradeoff between the number of parameters and the weight norms.
    As we see in \cref{fig:simple_comparison}, for simple, unconstrained networks, our second bound \eqref{theorem:main_residual_network_covering_rademacher:eq_params} is much tighter.
        However, due to the logarithmic dependency on \smash{$\tilde{C}$}, it is less affected by constraining the distances $b$ to initialization  and the Lipschitz constants $s$. In \cref{section:empirical_evaluation}, we show that this effect causes \eqref{theorem:main_residual_network_covering_rademacher:eq_norms} to be a more faithful measure of excess capacity.
        As \smash{$\tilde C$} depends exponentially on the network depth via the product of Lipschitz constants, another perspective on the bounds is that \cref{theorem:main_residual_network_covering_rademacher:eq_norms} favors shallow architectures whereas \cref{theorem:main_residual_network_covering_rademacher:eq_params} favors narrow architectures.
        Notably, replacing the function class $\mathcal F_\gamma$ with a class of networks composed with a Lipschitz augmented loss function \cite{Wei19a} facilitates deriving Rademacher complexity- and generalization bounds, which do not suffer from the exponential depth dependency via the product of Lipschitz constants.
        Instead, such bounds depend on data dependent empirical estimates thereof, which are typically much smaller.

    \paragraph*{Relation to prior work.}
    Prior works \cite{Neyshabur15a, Golowich18a,Gouk21a} that tie generalization to
    ($1,\infty$) group norms of matrices of fully-connected layers are
    equally applicable to convolutional networks without unfavorable
    scaling \wrt input size. In particular, for ($1,\infty$) group norms
    of $M_K^\top$, we have $\norm{M_K^\top}_{1,\infty} = \max_{o}
    \norm{K_{o\cdot\cdot\cdot}}_1$, i.e., the maximum $l_1$ norm over each
    (input channel, width, height) slice of $K$. Yet, due to the
    layer-peeling strategy common to these works, the bounds scale with
    the product of matrix group norms \emph{vs.}\ the product of Lipschitz constants (as in the \smash{$\tilde{C}_{ij}$} in \cref{paper:theorem:rademacher_complexity}) for covering number based strategies. While one can construct settings where the product of ($1,\infty$) group norms is smaller than the product of Lipschitz constants, this is typically not observed empirically, cf. Fig.~\ref{fig:simple_comparison}. Alternatively, Long \& Sedghi\ \cite{Long20a} derive a generalization bound which does not depend on $l_p$ norms or group norms, but only on the distance to initialization with respect to the spectral norm.
    Notably, an intermediate result in this reference yields a generalization bound of similar form as \eqref{theorem:main_residual_network_covering_rademacher:eq_params}, scaling with the logarithm of the product of Lipschitz constants and with the square root of the number of parameters, see \cref{appendix:comparison_long}.
    The distance to initialization then enters the main result \cite[Theorem 3.1]{Long20a} at the cost of a Lipschitz constraint on the initialization. We argue that \eqref{theorem:main_residual_network_covering_rademacher:eq_params} incorporates the distance to initialization more naturally, as it comes without constraints on the initialization itself. Further, it holds for any sample size $n$ and the numerical constants are explicit.
    Last, in the special case of fully-connected layers and no skip connections, 
    \eqref{theorem:main_residual_network_covering_rademacher:eq_norms} reduces to the Rademacher complexity bound from \cite{Bartlett17a}.
    Yet, there are three differences to this result: (i) a different numerical constant, (ii) the logarithm is replaced with a harmonic number, and (iii) there are no ceiling functions. 
    From our understanding, these modifications are equally necessary when proving the special case directly. 
    Nevertheless, these differences are only of minor importance, as they do not affect the asymptotic behavior of the bound. For more details, see \cref{appendix:comparison_bartlett}.

\section{Empirical Evaluation}
\label{section:empirical_evaluation}
\vspace{-0.2cm}
To assess the excess capacity of a neural network trained via a standard protocol on some dataset, we seek a hypothesis class that contains a network of the same architecture with comparable testing error but \emph{smaller} capacity.
Controlling capacity via the bounds in \cref{paper:theorem:rademacher_complexity} requires \emph{simultaneously} constraining the Lipschitz constants per layer and the ($2,1$) group norm distance of each layer's weight to its initialization.
We first discuss how to enforce the constraints. Then, we fix a residual network architecture and train on datasets of increasing difficulty while varying the constraint strengths.

\paragraph*{Capacity reduction.} 
Controlling hypothesis class capacity necessitates ensuring that optimization yields a network
parametrization that satisfies the desired constraints. To this end, we implement a variant of
\emph{projected} stochastic gradient descent (SGD) where, after a certain number of update steps, we
project onto the intersection of the corresponding constraint sets $\mathcal{C}_1$ and
$\mathcal{C}_2$. For convolutional layers, parametrized by tensors $K$, these are the convex sets
$\mathcal{C}_1\ {=}\ \{K: \lVert K-K^0 \rVert_{2,1} \leq b\}$ and $\mathcal{C}_2 = \{K: \Lip(\phi_K)
\leq s\}$. Hence, jointly satisfying the constraints is a convex feasibility
problem of finding a point in $\mathcal{C} = \mathcal{C}_1 \cap \mathcal C_2$. To ensure
$\mathcal{C} \neq \emptyset$, we initially (prior to optimization) scale each layer's weight
\smash{$K^0$} so that $\Lip(\phi_{K^0}) = s$. This starting point (per layer) resides in
$\mathcal{C}$ by construction.

To project onto $\mathcal C$, we rely on alternating \emph{orthogonal projections} which map $K$ to a tensor in $\mathcal C_1$, resp. $\mathcal C_2$ with minimal distance to $K$. Repeated application of these projections  converges to a point in the intersection $\mathcal C$ \cite{Bauschke96a}.
To implement the orthogonal projections onto $\mathcal C_1$ and $\mathcal C_2$, we rely on work by Liu et al.\ \cite{Liu09a} and Sedghi et al.\ \cite{Sedghi19a}, respectively. The latter requires certain architectural prerequisites, and in consequence, we need to use convolutions of  stride 1 (though our bounds equally hold for strides $>\!\!1$) and to reduce spatial dimensionality only via max-pooling.
Further, we use circular padding and kernel sizes not larger than the input dimensionalities.
For details on the projection algorithm and a comparison to alternating \emph{radial} projections, see \cref{appendix:subsection:projection_method}.

\paragraph*{Architecture.}
We use a slightly modified (pre-activation) ResNet18  \cite{He16b}. Modifications include: (1) the \emph{removal of batch normalization} and biases; (2) skip connections for residual blocks where the number of channels doubles and spatial dimensionality is halved are implemented via a fixed map. Each half of the resulting channels is obtained via $2 \times 2$ spatial max-pooling (shifted by one pixel). This map has Lipschitz constant \smash{$\sqrt{2}$} and is similar to the shortcut variant (A) of \cite{He16a}; finally, (3) 
we \emph{fix} the weight vectors of the last (classification) layer at the vertices of
a $\kappa-1$ unit simplex. Fixing the classifier is motivated by \cite{Hardt17a} and the simplex
configuration is inspired by \cite{Graf21a,Zhu21a} who show that this configuration corresponds to
the geometric weight arrangement one would obtain at minimal cross-entropy loss. By construction,
this classifier has Lipschitz constant \smash{$\sqrt{\nicefrac{\kappa}{\kappa-1}}.$}
Notably, modifications (2) and (3) \emph{do not} harm performance, with empirical testing errors on
a par with a standard ResNet18 without batch normalization. Modification (1), \ie, the omission of
normalization layers, was done to ensure that the experiments are in the setting of
\cref{eq:setting} and therefore that
(\ref{theorem:main_residual_network_covering_rademacher:eq_norms},
\ref{theorem:main_residual_network_covering_rademacher:eq_params}) are faithful capacity measures.
However, it is accompanied by a noticeable increase in testing error. In principle, our theory could
handle batch normalization, as, during evaluation, the latter is just an affine map parametrized by the running mean and variance learned during training. However, including normalization in our empirical evaluation is problematic, as normalizing batches of small variance
requires the normalization layers to have a large Lipschitz constant.
Consequently, considering normalization layers as affine maps and enforcing Lipschitz constraints on them could prevent proper normalization of the data. Another strategy would be to consider normalization layers as fixed nonlinearities which normalize each batch to zero mean and unit variance. However, this map is not Lipschitz continuous, and again, modifications could hinder normalization (which defeats the very purpose of these layers). Hence, we decided to remove normalization layers in our empirical evaluation. Presumably, however, there is a middle ground where capacity is reduced, and normalization is still possible. If so, excess capacity could be assessed for very deep architectures, which are difficult to train without normalization layers.

\paragraph*{Datasets \& Training.}
We test on three benchmark datasets: CIFAR10/100 \cite{Krizhevsky09a}, and Tiny-ImageNet-200 \cite{TinyImagnet}, listed in order of increasing task difficulty. We minimize the cross-entropy loss using SGD with momentum (0.9) and small weight decay (1e-4) for 200
epochs with batch size 256 and follow a CIFAR-typical stepwise learning
rate schedule, decaying the initial learning rate (of 3e-3) by a factor
of $5\times$ at epochs 60, 120 \& 160~\cite{DeVries17a}. \emph{No
data augmentation is used}. When projecting onto the constraint sets, we
found one alternating projection step every 15th SGD update to be
sufficient to remain close to $\mathcal{C}$. To ensure that a trained model is within the capacity-constrained class, we perform 15 \emph{additional} alternating projection steps after the final SGD update. For consistency, all experiments are run with the same hyperparameters. Consequently, hyperparameters are chosen so that training converges for the strongest constraints we assess. In particular, we train for 200 epochs even though unconstrained and weakly constrained models can be trained much faster. Importantly, this affects the assessment of excess capacity only marginally, as we observe that the testing error does not deteriorate in case of more update steps. Similarly, the Lipschitz constant and the distance to initialization stay almost constant once the close-to-zero training error regime is reached, which may happen way before 200 epochs are completed.

\subsection{Results}
\label{subsection:results}

First, we assess the capacity-driving quantities in our bounds for
models trained \emph{without} constraints\footnote{At evaluation time, Lipschitz constants are computed via a power iteration for convolutional layers \mbox{\cite{Gouk21b, Li19a}}.}.
\cref{table:numericalcapacity}~(top) lists a comparison across datasets,
along with the capacity measures
(\ref{theorem:main_residual_network_covering_rademacher:eq_norms},
\ref{theorem:main_residual_network_covering_rademacher:eq_params}), the
training/testing error, and the empirical generalization gap (\ie, the difference between testing and training error).  In
accordance with our motivating figure from \cref{section:introduction} (\cref{fig:intro}), we
observe an overall \emph{increase} in both capacity-driving quantities
as a function of task difficulty.

To assess excess capacity in the context of questions
(\hyperlink{Q1}{\textcolor{black}{\textbf{Q1}}}) and
(\hyperlink{Q2}{\textcolor{black}{\textbf{Q2}}}), we first identify, per
dataset, the \emph{most restrictive} constraint combination where the
testing error\footnote{
    We are primarily interested in \emph{what is feasible} in terms of tolerable capacity reduction. Hence, leveraging the testing split of each dataset for this purpose is legitimate from this exploratory perspective.
} is \emph{as close as possible} to the unconstrained regime. We refer
to this setting as the \emph{operating point} for the constraints, characterizing the function class $\mathcal F$ that serves as a reference to measure excess capacity. The operating points per dataset, as well as the corresponding results are listed in Table~\ref{table:numericalcapacity}~(bottom).

\begin{figure*}[t!]
    \begin{center}
    \includegraphics[width=\textwidth]{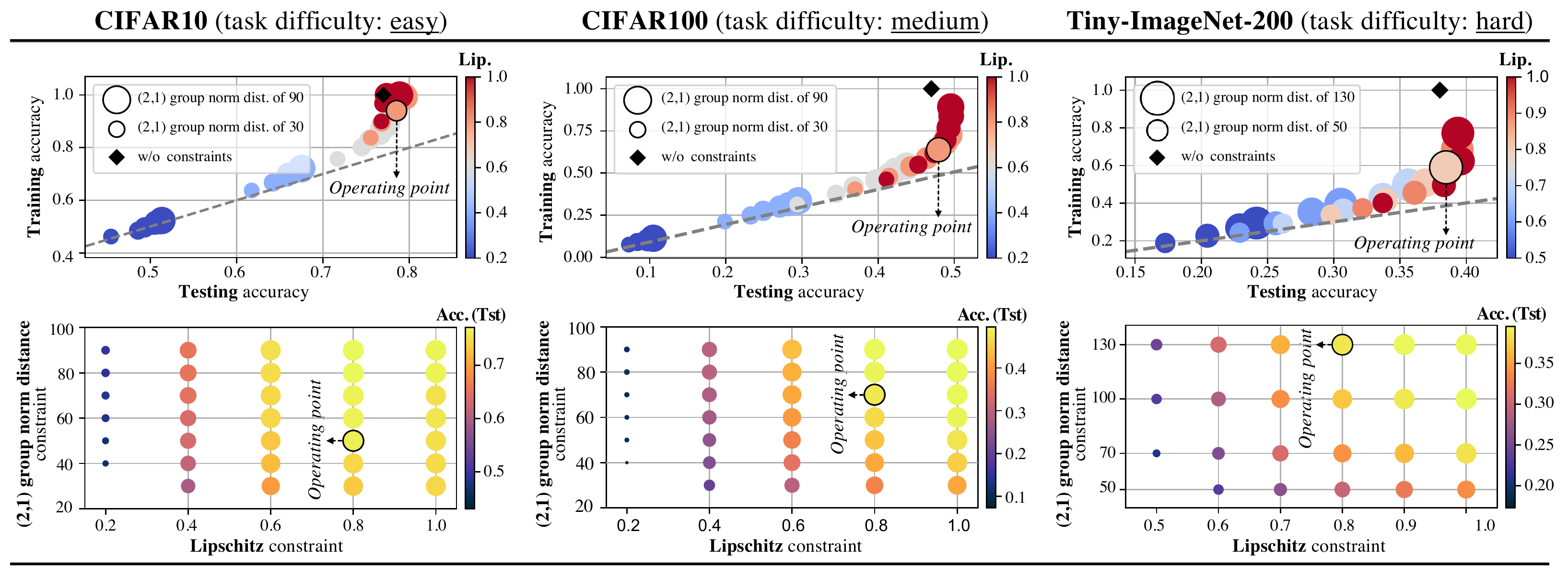}
    \end{center}
    \vspace{-0.1cm}
    \caption{Fine-grained analysis of training/testing accuracy in relation to the Lipschitz
    constraint and the ($2,1$) group norm distance to initialization constraint. We see that testing
    accuracy can be retained (relative to \raisebox{-.48ex}{\FilledDiamondshape}) for a range of
    fairly restrictive constraints (\emph{top row}), compared to the unconstrained regime (cf.
    \textbf{Lip./Dist.} columns in the top part of \cref{table:numericalcapacity}). However, this
    range noticeably narrows with increasing task difficulty (\emph{bottom row})
    \label{fig:detailedresults}.}
    \vspace{-0.2cm}
\end{figure*}
With respect to (\hyperlink{Q1}{\textcolor{black}{\textbf{Q1}}}) we find
that networks can indeed maintain, or even improve, testing error at
substantially lowered capacity (see performance comparison in \cref{fig:detailedresults} relative to \raisebox{-.48ex}{\FilledDiamondshape}\!).
Furthermore, the observation that the capacity of the constrained
models (surprisingly) remains in the \emph{same order} of magnitude across tasks of varying difficulty, suggests a negative answer to question (\hyperlink{Q2}{\textcolor{black}{\textbf{Q2}}}). A reduced-capacity model from an easy task can perform well on a difficult task. In consequence, when comparing the top \emph{vs.} bottom part of \cref{table:numericalcapacity} with respect to column (\ref{theorem:main_residual_network_covering_rademacher:eq_norms}), we do see that task difficulty primarily manifests as \emph{excess} capacity. Another manifestation of task difficulty is evident from the more
detailed analysis in \cref{fig:detailedresults} (bottom), where we see
that tightening both constraints beyond the identified operating point
leads to a more rapid deterioration of the testing error as the task
difficulty increases.
Interestingly, at the operating point, the constrained models do not only share similar capacity across datasets, but also similar empirical \emph{generalization gaps}, primarily due to leaving the ubiquitous zero-training-error regime. The latter is particularly noteworthy, as strong regularization (\eg, via weight decay) can equally enforce this behavior, but typically at the cost of a large
increase in testing error (which we do not observe). Finally, the parameter-counting variant of our bound (see \cref{table:numericalcapacity}, column \ref{theorem:main_residual_network_covering_rademacher:eq_params}) is, by construction, much less affected by the constraints and apparently fails to capture the observations above. This highlights the relevance of tying capacity to weight norms and underscores their utility in our context. 

\vspace{-0.2cm}
\begin{table}[t!]
    \caption{Assessment of the capacity-driving quantities. We list the \emph{median} over the Lipschitz constants (\textbf{Lip.}) and the ($2,1$) group norm distances (\textbf{Dist.}) across all layers. \textbf{Err.} denotes the training/testing error, \textbf{Capacity} denotes the measures (\ref{theorem:main_residual_network_covering_rademacher:eq_norms}, \ref{theorem:main_residual_network_covering_rademacher:eq_params})
    from \cref{paper:theorem:rademacher_complexity} and \textbf{Gap} the empirical generalization gap. The top part lists results in the unconstrained regime (see \raisebox{-.48ex}{\FilledDiamondshape}  in \cref{fig:detailedresults}), the bottom part lists results at the 
    operating point of the \underline{most restrictive} constraint
  combination where the testing error is not worse than in the unconstrained case. \textbf{Mar.} denotes the margin parameter $\gamma$ used for computing the capacity measures. As the constrained models do not fit the training data anymore, they do not have a positive classification margin. Thus, we choose $\gamma$ such that the unconstrained and constrained models have the same ramp loss value.
  \label{table:numericalcapacity}}
    \begin{small}
    \begin{center}
        \setlength{\tabcolsep}{8pt}
    \begin{tabular}{rccccccc}
        \toprule
                & \textbf{Lip.} 
                & \textbf{Dist.} 
                & \textbf{Mar.}
                & \textbf{Err. (Tst)} 
                & \textbf{Err. (Trn)}
                & \textbf{Capacity} (\ref{theorem:main_residual_network_covering_rademacher:eq_norms}, \ref{theorem:main_residual_network_covering_rademacher:eq_params})
                & \textbf{Gap} \\
        \midrule
        CIFAR10         
            & $1.63$ 
            & $60.3$  
            & $11.2$
            & $0.24$
            & $0.00$
            & \textcolor{uniblue}{1.0$\cdot$10$^{10}$} / \textcolor{unired}{8.8$\cdot$10$^2$}  & $0.24$ 
            \\ 
        CIFAR100 
            & $2.17$ 
            & $129.1$ 
            & $23.4$
            & $0.54$
            & $0.00$
            & \textcolor{uniblue}{1.7$\cdot$10$^{11}$} / \textcolor{unired}{9.3$\cdot$10$^2$} & $0.54$
            \\ 
        Tiny ImageNet
            & $3.05$ 
            & $287.3$ 
            & $24.7$
            & $0.62$ 
            & $0.00$
            & \textcolor{uniblue}{4.5$\cdot$10$^{13}$} / \textcolor{unired}{1.1$\cdot$10$^3$} & $0.62$
            \\ 
        \midrule
        \midrule
        CIFAR10 
            & $\underline{0.80}$ 
            & $\underline{50.0}$  
            & $1.00$
            & $0.21$ 
            & $0.06$
            & \textcolor{uniblue}{1.8$\cdot$10$^8$} / \textcolor{unired}{7.8$\cdot$10$^2$} & $0.15$
            \\
        CIFAR100 
            & $\underline{0.80}$ 
            & $\underline{70.0}$  
            & $1.00$
            & $0.52$
            & $0.36$
            & \textcolor{uniblue}{2.6$\cdot$10$^8$} / \textcolor{unired}{7.9$\cdot$10$^2$}  
            & $0.16$ 
            \\
        Tiny ImageNet  
            & $\underline{0.80}$ 
            & $\underline{130.0}$ 
            & $1.00$
            & $0.62$ 
            & $0.41$ 
            &  \textcolor{uniblue}{\,8.9$\cdot$10$^8$} / \textcolor{unired}{8.9$\cdot$10$^2$} & $0.21$
            \\ 
        \bottomrule
    \end{tabular}
    \end{center}
\end{small}
\vspace{-0.4cm}
\end{table}

\vspace{-0.05cm}
\section{Discussion}
\label{section:discussion}
\vspace{-0.1cm}Studying the capacity of neural networks hinges crucially on the measure
that is used to quantify it. In our case, capacity rests upon two bounds on the
empirical Rademacher complexity, both depending on weight norms and the
number of parameters, but to different extents. Hence, exerting control
over the \emph{weight norms} manifests in different ways: in case of the
more weight norm dependent capacity measure, our results show 
substantial task-dependent \emph{excess capacity}, while, when relying
more on parameter counting, this effect is less pronounced. Although the
latter measure yields tighter bounds, its utility in terms of explaining
the observed empirical behavior is limited: in fact, capacity tied to
weight norms not only better correlates with observed generalization
gaps~(both with and without constraints), but the amount of tolerable
capacity reduction also reflects the smaller generalization gaps in the
constrained regime. Note that our results rest upon carefully implementing
constraint enforcement during optimization. Hence, numerical
schemes to better account for this setting might potentially reveal an even
more pronounced excess capacity effect. 

In summary, our experiments, guided by the theoretical bounds, strongly suggest a notion of \emph{compressibility} of networks with respect to weight norms. This compressibility only moderately reduces with task difficulty. We believe these observations to be particularly relevant and we foresee them sparking future work along this
direction.
\subsubsection*{Acknowledgments}
This work was supported by the Austrian Science Fund (FWF) under project FWF P31799-N38 and the Land Salzburg under projects 0102-F1901166-KZP and 
20204-WISS/225/197-2019.

\clearpage
\listofchanges

\clearpage
\bibliography{references.bib, self.bib}
\bibliographystyle{plain}
\clearpage

\section*{Checklist}


\begin{enumerate}

\item For all authors...
\begin{enumerate}
  \item Do the main claims made in the abstract and introduction accurately reflect the paper's contributions and scope? \answerYes{}
  \item Did you describe the limitations of your work?
    \answerYes{\emph{\textcolor{black!50}{For all presented theoretical results, we discuss connections, as well as limitations or advantages/disadvantages with respect to prior work. Furthermore, our supplementary material (Appendix) contains substantial extensions of these discussions. Regarding our experimental results, we discuss potential limitations in \cref{section:empirical_evaluation} and \cref{section:discussion}.}}}
  \item Did you discuss any potential negative societal impacts of your work?
    \answerNA{\emph{\textcolor{black!50}{All experiments throughout this work are conducted on well-established, publicly available benchmark datasets for image classification. We primarily address theoretical aspects of existing neural network models for which we do not see a potential negative societal impact.}}}
  \item Have you read the ethics review guidelines and ensured that your paper conforms to them?
    \answerYes{}
\end{enumerate}

\item If you are including theoretical results...
\begin{enumerate}
  \item Did you state the full set of assumptions of all theoretical results?
    \answerYes{\emph{\textcolor{black!50}{All key theoretical results (theorems, lemmas, etc.), including assumptions and definitions are listed in the main part of the paper.}}}
        \item Did you include complete proofs of all theoretical results?
    \answerYes{\emph{\textcolor{black!50}{Proofs for all theoretical results listed in the main part of the paper (theorems, lemmas, etc.) can be found in the supplementary material (Appendix), including any omitted technical details.}}}
\end{enumerate}

\item If you ran experiments...
\begin{enumerate}
  \item Did you include the code, data, and instructions needed to reproduce the main experimental results (either in the supplemental material or as a URL)?
    \answerYes{\emph{\textcolor{black!50}{Code to re-run experiments and reproduce results is included in the supplementary as part of this submission.}}}
  \item Did you specify all the training details (e.g., data splits, hyperparameters, how they were chosen)?
    \answerYes{\emph{\textcolor{black!50}{We include training/architectural details in \cref{section:empirical_evaluation} of the main part of the paper. Details for our initial bound comparison in \cref{fig:simple_comparison} can be found in the supplementary material (Appendix). Throughout all experiments, we adhere to the common training/testing splits of the three datasets we used, i.e., CIFAR10/100 and Tiny-ImageNet-200.}}}
        \item Did you report error bars (e.g., with respect to the random seed after running experiments multiple times)?
    \answerNA{\emph{\textcolor{black!50}{We did run experiments multiple times (using the same network, dataset and training regime), but under varying constraint strengths (rather than different seeds). \cref{fig:detailedresults} visualizes \underline{all} these runs.}}}
        \item Did you include the total amount of compute and the type of resources used (e.g., type of GPUs, internal cluster, or cloud provider)?
    \answerYes{\emph{\textcolor{black!50}{\cref{appendix:subsection:hardwareresources} lists all hardware resources used in our experiments.}}}
\end{enumerate}

\item If you are using existing assets (e.g., code, data, models) or curating/releasing new assets...
\begin{enumerate}
  \item If your work uses existing assets, did you cite the creators?
    \answerYes{}
  \item Did you mention the license of the assets?
    \answerNo{\emph{\textcolor{black!50}{The datasets we used are publicly available and well-established benchmark datasets commonly used in computer vision research.}}}
  \item Did you include any new assets either in the supplemental material or as a URL?
    \answerNo{}
  \item Did you discuss whether and how consent was obtained from people whose data you're using/curating?
    \answerNA{\emph{\textcolor{black!50}{All datasets in our experiments are used as is.}}}
  \item Did you discuss whether the data you are using/curating contains personally identifiable information or offensive content?
    \answerNA{}
\end{enumerate}

\item If you used crowdsourcing or conducted research with human subjects...
\begin{enumerate}
  \item Did you include the full text of instructions given to participants and screenshots, if applicable?
    \answerNA{}
  \item Did you describe any potential participant risks, with links to Institutional Review Board (IRB) approvals, if applicable?
    \answerNA{}
  \item Did you include the estimated hourly wage paid to participants and the total amount spent on participant compensation?
    \answerNA{}
\end{enumerate}

\end{enumerate}

\newpage
\begin{appendices}
\onecolumn

\DoToC
\vskip2ex
In the appendix, we present (1) a more detailed comparison to prior work
(\cref{appendix:section:detailed_comparison}), (2) additional experiments
(\cref{appendix:section:additionalexperiments}) and (3) list all proofs which are left-out in the
main manuscript (\cref{appendix:section:proofs}). In particular, in
\cref{appendix:subsection:singlelayerbounds}, we derive our \emph{single-layer covering number
bounds} from \cref{paper:theorem:single_layer}; \cref{appendix:subsection:whole_network_general}
presents the modularized strategy from \cref{subsection:whole_network} to obtain
\emph{whole-network} covering number bounds. This includes several examples (e.g., residual
networks) and an extension to accommodate \emph{concatenation}.
\cref{appendix:subsection:multi_layer_specific} then tailors our empirical Rademacher complexity
bounds to networks with fully-connected and convolutional layers, and
\cref{appendix:subsection:generalization_bounds} finally lists the accompanying generalization
bounds.
\clearpage

\section{Comparison with prior work on Rademacher complexity bounds}
\label{appendix:section:detailed_comparison}

\subsection{Analysis of matrices corresponding to convolutions}
\label{subsection:norm_of_M_k}

We compare the norms of the matrices $M_K$ corresponding to the linear map realized by a convolutional layer with the norm of the corresponding weight tensor $K$. This facilitates studying the Rademacher complexity of convolutional layers via norm-based bounds for fully-connected layers.

In accordance with the definition of the ($2,1$) group norm in \cref{eqn:def_21_norm}, we define the ($p,q$) group norm of a weight tensor $K$ as
\begin{equation}
    \norm{K}_{p,q} = \left(
        \sum_{a,b=1}^k \sum_{o=1}^{c_\textit{out}}
        \norm{K_{o\cdot a b}}_p^q
    \right)^{1/q}
    \enspace.
\end{equation}
For simplicity, we
\begin{inparaenum}[(i)]
  \item consider only circular, input-size preserving paddings,
  \item assume that the spatial input dimension $h=w=d$ is a multiple of the convolution stride $t$, and
  \item assume that the kernel size $k_h=k_w= k\le d$.
\end{inparaenum}
In this setting, convolution corresponds to the application of a local map $\mathbb R^{c_\textit{in} \times k\times k} \to \mathbb R^{c_\textit{out}}$, specified by the weight tensor $K\in \mathbb R^{c_\textit{out}\times c_\textit{in}\times k \times k}$, to all $(d/t)^2$ patches. 
Each row $[M_K]_{a\cdot}$ of $M_K\in \mathbb R^{c_\textit{out} (d/t)^2 \times c_\textit{in} d^2}$ has as non-zero elements only the entries of the tensor $K_{o_a\cdot\cdot\cdot}$ for some $o_a=1,\dots, c_\textit{out}$. Thus, the ($p,q$) group norm of $M_K^\top$ is
\begin{equation}
    \norm{M_K^\top}_{p,q} 
    \!= 
    \left(
        \sum_{a=1}^{c_\textit{out} (d/t)^2}
        \!\!\!\norm{[M_K]_{a\cdot}}_p^q
    \right)^{\!\!1/q}
    \!\!\!\!{=} 
    \left(
        \sum_{a=1}^{c_\textit{out} (d/t)^2}\!\!\!
        \norm{K_{o_a\cdot\cdot\cdot}}_p^q
    \right)^{\!\!1/q}
    \!\!\!\!{=} 
    \left(\frac{d}{t}\right)^{\!\!2/q}\!
    \left(
        \sum_{o=1}^{c_\textit{out}}
        \norm{K_{o\cdot\cdot\cdot}}_p^q
    \right)^{\!\!1/q}
    \,.
\end{equation}
In particular,
\begin{align}
    \norm{M_K^\top}_{2,1} 
    &=
    \left(\frac{d}{t}\right)^{2}
        \sum_{o=1}^{c_\textit{out}}
        \norm{K_{o\cdot\cdot\cdot}}_2
    \enspace,
    \\
    \norm{M_K^\top}_{2} 
    &=
    \frac{d}{t} \norm{K}_2
    \enspace,
    \\
    \norm{M_K^\top}_{1,\infty} 
    &= 
    \max_{o} \norm{K_{o\cdot\cdot\cdot}}_1
    \enspace.
\end{align}
Note the benefit of the ($1,\infty$) group norm, which does not scale with the input dimension $d$.

We point out that for $p> q$ (Hölder inequality for $p/q$)
\begin{equation}
    \norm{K_{o\cdot\cdot\cdot}}_p^q
    =
    \left(
        \sum_{a,b=1}^{k}
        \norm{K_{o\cdot a b}}_p^p
    \right)^{q/p} 
    \ge
    \frac{1}{k^{2(1-q/p)}} 
    \sum_{a,b=1}^{k}
    \norm{K_{o\cdot a b}}_p^q
\end{equation}
and so 
\begin{equation}
    \begin{split}
    \norm{M_K^\top}_{p,q} 
    & = 
    \left(
        \left(\frac{d}{t}\right)^{2}
        \sum_{o=1}^{c_\textit{out}}
        \norm{K_{o\cdot\cdot\cdot}}_p^q
    \right)^{1/q} \\
    & \ge 
    \left(
        \frac{d^2}{t^2 k^{2(1-q/p)}} 
        \sum_{o=1}^{c_\textit{out}}
        \sum_{a,b=1}^{k}
        \norm{K_{o\cdot a b}}_p^q
    \right)^{1/q} \\
    & =
    \left(\frac{d}{tk}\right)^{2/q} {k^{2/p}}
    \norm{K}_{p,q}
    \enspace.
    \end{split}
\end{equation}
\emph{This inequality quantifies the disadvantage of applying generalization bounds for fully-connected layers directly on the matrices that parametrize the linear maps corresponding to convolutions.}

\subsection{Comparison of bounds for convolutional networks}
\label{appendix:subsection:comparison_table}
\cref{table:bound_comparison} lists various upper bounds on the empirical Rademacher complexity of convolutional networks as specified in the paragraph below, in a common notation (see Table~\ref{table:notation}).
As in \cref{subsection:norm_of_M_k}, we consider input-size preserving circular padding, convolutions with stride of $t$, and assume that (i) the spatial dimensions $h=w=d$ are a multiple of $t$ and (ii) the kernel size is $k_h=k_w=k\le d$.
\begin{table}[t!]
    \begin{small}
    \caption{\label{table:notation}Notation}
    \begin{center}
    \begin{tabular}{llll}
        \toprule
        $n$   & Number of samples & $\kappa$ & Number of classes \\
        $L$   & Number of layers  & $c_i$ & Number of input channels to layer $i$\\
        $k_i$ & Kernel size at layer $i$ &  $t_i$ & Stride at layer $i$\\
    $K_i$ & Weight of layer $i$, $K_i \in \mathbb R^{c_{i+1} \times c_i \times k_i \times k_i}$ & $K_i^{(0)}$ & Initialization of layer $i$\\
        $d_i$ & Input spatial width at layer $i$ & $s_i$ & Lipschitz constant of layer $i$\\
        $\gamma$ & Margin at output & $W_i$ & Number of parameters of layer $i$ \\
        \bottomrule
    \end{tabular}
    \end{center}
    \end{small}
\end{table}
\vskip2ex

\begin{table}[t!]
\begin{footnotesize}
\caption{\label{table:bound_comparison} Comparison of empirical Rademacher complexity bounds (in our notation). When referring to sections or theorems 
from references, we \underline{underline} the corresponding results.
}
\begin{center}
\resizebox{\textwidth}{!}{
    \begin{tabular}{ll}
    \toprule
    \makecell[l]{\textbf{Ours} \\
        \textcolor{black!50}{(see Thm. \ref{paper:theorem:rademacher_complexity} \eqref{theorem:main_residual_network_covering_rademacher:eq_norms})}}
    &
    $\frac{4}{n}
    +
    \frac{12 H_{n-1}}{\sqrt{n}}
    \sqrt{\log(2W)}
    \left(
        \sum_{i=1}^L
        \ceil{
            \left(                
                \frac 4 \gamma \, \frac{\norm{X}}{\sqrt{n}} 
                \left(
                    \prod_{\substack{l=1}}^{L}
                        s_l
                \right)
                \frac{
                    \norm{K_i - K_i^{(0)}}_{2,1}
                    }{s_{i}}
            \right)^{\nicefrac{2}{3}}
            }
    \right)^{\!\nicefrac{3}{2}}$   \\
    \midrule
    \makecell[l]{Bartlett et al. \tablefootnote{The numerical constant (48) differs from the one (36) in \cite{Bartlett17a} as discussed in \cref{appendix:comparison_bartlett}.}\ \cite{Bartlett17a}\\
    \textcolor{black!50}{(\underline{see Lem. A.8})}} & 
    $\frac{4}{n}
    +
    \frac{48}{\gamma} \frac{\norm{X}}{\sqrt n}
    \left( \prod_{i=1}^L s_i\right)
    \left(
        \sum_{i=1}^L
        \left(
            \log\left(\frac{2 W_i d_i^2}{t_i^2 k_i^2}\right) 
            \frac{d_i^4}{t_i^4}
            \frac{
                \left( 
                    \sum_{o=1}^{c_{i+1}}\norm{[K_i-K^{(0)}_i]_{o\cdot\cdot\cdot}}_2
                \right)^2
                }{s_i^2}
        \right)^{1/3}
    \right)^{3/2}
    {\frac{\log(n)}{\sqrt{n}}}$  \\
    \midrule    
    \multirow{5}{*}{\makecell[l]{Ledent et al.\ \cite{Ledent21a} \\ 
        \textcolor{black!50}{(main result, see \underline{Thm. 11})}}} & 
    $
    {768} R \sqrt{\log_2(32 \Gamma n^2 + 7 \bar W n)} \, \frac{\log(n)}{\sqrt{n}}
    $
    \\
    &
    $R = \left( \sum_{i=1}^L r_i^{2/3}\right)^{3/2}\enspace,$
    \qquad
    $\Gamma = \max_i \left( r_i d_{i+1}^2 c_{i+1}\right)\enspace,$
    \qquad $\bar W = \max_i 
    {d_i^2} c_{i}
    $
    \\
    & 
    $r_i = a_i B_{i-1}(X) \rho_{i+}$
    \\
    &$a_{i} = \sum_o \norm{[K_i-K_i^{(0)}]_{o\cdot\cdot\cdot}}_2\enspace,$
    \qquad $a_L = \norm{K_L - K_L^{(0)}}_2$\\
    & $\rho_{i+}
    = d_{i+1} \max_{U\le L}  \frac{\prod_{u=l+1}^U s_u}{B_U(X)}\enspace,$
    \qquad $\rho_{L+} = \frac{1}{\gamma}$\\
    &
    $B_{i-1}(X) = $ maximal $l_2$ norm of a convolutional patch of the inputs to the $i$-th layer
    \\    
    \midrule    
    \multirow{5}{*}{\makecell[l]{Ledent et al.\ \cite{Ledent21a} \\
        \textcolor{black!50}{(simpler result, see \underline{Sec. E})}}} & 
    $
        \frac{4}{n}
        +
        {768} R \sqrt{\log_2(32 \Gamma n^2 + 7 \bar W n)} \, \frac{\log(n)}{\sqrt{n}}$
        \\
        & $R = \left( \sum_{i=1}^L r_i^{2/3}\right)^{3/2}\enspace,$
        \qquad
        $\Gamma = \max_i \left( r_i d_{i+1}^2 c_{i+1}\right)\enspace,$
        \qquad
        $\bar W = \max_i d_{i}^2 c_{i}$
        \\
        & 
        $r_i = \frac{|X|_0}{\gamma}
            \max_o \norm{[K_L]_{o\cdot\cdot\cdot}}_2
            \left(\prod_{j=1}^{L-1} s_j\right)
            d_{i+1} 
            \frac{\sum_o \norm{
                    [K_i - K_i^{(0)}]_{o\cdot\cdot\cdot}
                    }_2
                }{s_i}
        $
        \\
        & $|X|_0 = B_0(X) $ maximal $l_2$ norm of a convolutional patch on $X$
    \\
    \midrule  
    \multirow{3}{*}{\makecell[l]{\textbf{Ours} \\\textcolor{black!50}{(see Thm. \ref{paper:theorem:rademacher_complexity} \eqref{theorem:main_residual_network_covering_rademacher:eq_params})}}} &
    $12
            \sqrt{
                \sum_{i=1}^L
                2 W_{i}
                \bigg(
                \log\left(1+\ceil{L^2 \tilde C_{i}^2}\right) + 
                \psi\left(\ceil{L^2 \tilde C_{i}^2}\right)
                \bigg)
            }
        \frac{1}{\sqrt{n}}
    $
    \\
    &
    $\tilde C_i 
    =
    \frac{4}{\gamma}
    \frac{\norm{X}}{\sqrt{n}} 
    \left(
        \prod_{\substack{i=l}}^{L}
            s_l
    \right)
    \frac{\norm{K_i - K_i^{(0)}}_{2,1}}{s_{i}}
    $\enspace, \qquad
    $\psi(x) = \zeta\left(\frac 3 2,1\right)^{1/3}
        \zeta\left(
            \frac 3 2,
            1+ {1}/{x}
        \right)^{2/3} < 2.7
$\\
    \midrule
    \makecell[l]{Lin et al.\ \cite{Lin19a}\\ \textcolor{black!50}{(see \underline{Lem. 18})}}
    &
    $ 16 
    \left(
        \frac 2 \gamma \frac{\norm{X}_2}{\sqrt{n}} L^2
        \left( \prod_{i=1}^L s_i \right) 
        \left(
            \sum_{i=1}^L  W_i^2 \frac{d_i}{t_i} \frac{\norm{K_i}_2}{s_i} 
        \right)
    \right)^{1/4}    
    \frac{1}{\sqrt{n}}
    $\\
    \midrule
    \makecell[l]{Neyshabur et al.\ \cite{Neyshabur15a}\\ 
    \textcolor{black!50}{(see \underline{Cor. 2}, with $l_{1,\infty}$)}}
    &
    $
    2^L \kappa
    \left(\prod_{i=1}^L \max_{o} \norm{[K_i]_{o\cdot\cdot\cdot}}_1\right)
    \log(2c_1 d_1^2) \max_{k} \norm{x_k}_{\infty}
    \frac{1}{\sqrt{n}}
    $
    \\
    \midrule
    \makecell[l]{Golowich et al.\ \cite{Golowich18a} \\ 
        \textcolor{black!50}{(see \underline{Thm. 2}, $l_{1,\infty}$)}} 
    &
    $2 \kappa \sqrt{L +1 + \log(c_1 d_1^2)} 
    \left(\prod_{i=1}^L \max_{o} \norm{[K_i]_{o\cdot\cdot\cdot}}_1\right)
    \sqrt{
        \frac{\max_{abc} \sum_{k=1}^n [x_k]_{abc}^2}{n}
    }
    \frac{1}{\sqrt n}$ \\
    \midrule
    \makecell[l]{Gouk et al.\ \cite{Gouk21a}\\ 
        \textcolor{black!50}{(see \underline{Thm. 1}, with $l_{1,\infty}$)}}
    &
    $
    \splitfrac{2^{L+1} \kappa
    \sqrt{\log(2c_1 d_1^2)}
    \left(\prod_{i=1}^L \max_{o} \norm{[K_i]_{o\cdot\cdot\cdot}}_1\right)}{
    \left(
        \sum_{i=1}^L 
        \frac{
            \max_o
            \norm{
                [K_i - K_i^{(0)}]_{o\cdot\cdot\cdot}
            }_1
        }
        {\max_o  \norm{[K_i]_{o\cdot\cdot\cdot}}_1}
    \right)
    \max_k \norm{x_k}_{\infty}
    \frac{1}{\sqrt{n}}}
    $ \\
    \midrule
    \makecell[l]{Neyshabur et al.\ \cite{Neyshabur15a} \\
        \textcolor{black!50}{(see \underline{Cor. 2}, with $l_{2}$)}}
    &
    $
    2^{L-1}{\kappa}
    \frac{\norm{X}_2}{\sqrt{n}}
    \left(\prod_{i=1}^L \frac{d_i}{t_i} \norm{K_i}_2 \right)
    \frac{1}{\sqrt{n}}
    $
    \\
    \midrule
    \makecell[l]{Golowich et al.\ \cite{Golowich18a} \\
        \textcolor{black!50}{(see \underline{Thm. 1}, with $l_{2}$)}}
    &
    $
    \kappa
    \frac{\norm{X}_2}{\sqrt{n}}
    \left(\prod_{i=1}^L \frac{d_i}{t_i} \norm{K_i}_2 \right)
    (\sqrt{2 \log(2) L} +1 ) \frac{1}{\sqrt n}
    $
    \\
    \midrule
    \makecell[l]{Gouk et al.\ \cite{Gouk21a} \\
        \textcolor{black!50}{(see \underline{Thm. 2}, with $l_{2}$)}}
    &
    $
    2^{L} \sqrt{2}
    \kappa   
    \frac{\norm{X}_2}{\sqrt{n}}
    \left(
        \prod_{i=1}^L 
        \frac{d_i^2}{t_i} \sqrt{c_{i}}\norm{K_i}_2 
    \right)
    \left(
        \sum_{i=1}^L
        \frac{
            \norm{K_i-K_i^{(0)}}_2
        }
        {
            \norm{K_i}_2
            \prod_{j=1}^i
            d_j \sqrt{c_{j}}
        }
    \right)
    \frac{1}{\sqrt n}
    $\\
    \bottomrule
    \end{tabular}
}
\end{center}
\end{footnotesize}
\end{table}

The empirical Rademacher complexity bounds in \cref{table:bound_comparison} are formulated in dependence of norms of the weights $K_i$. 
This is not entirely accurate, as the bounds typically refer to neural networks whose weights satisfy a priori defined norm \emph{constraints}. We choose this abuse of notation so that we do not need to introduce additional variables for each norm constraint. 

The listed bounds are used in the numerical comparison for unconstrained networks in Fig.~\ref{fig:simple_comparison}. More specifically, we consider a hypothesis class $\mathcal F$ represented by a neural network of the form $f = \sigma_L \circ f_L \circ \dots \circ \sigma_1 \circ f_1$, where $\sigma_i: x\mapsto \max(x,0)$ denotes the ReLU activation function and $f_i$ identifies a convolutional layer. Note that fully-connected layers, e.g., a linear classifier at the last layer, can be handled by setting the spatial input dimension $d_i$, the kernel size $k_i$ and the stride $t_i$ all equal to $1$.

Bounds designed for fully-connected networks are applied to the matrices $M_K$ that correspond to the weight tensor $K$ which parametrizes the convolution. To handle the multiclass regime, the covering number based bounds will be applied to $\mathcal F_\gamma$, see \cref{eqn:def:Fgamma}. Layer-peeling based bounds, originally presented for binary classification, are multiplied by the number of classes $\kappa$ (according to \cite{Maurer16a}) as done in \cite{Gouk21a}.

\subsection{Details for the numerical comparison in \texorpdfstring{\protect\cref{fig:simple_comparison}}{Fig. \ref{fig:simple_comparison}}}
\label{appendix:subsection:numerical_comparison_details}
In Fig.~\ref{fig:simple_comparison}, we evaluated several existing upper bounds (see~\cref{table:bound_comparison}) on the empirical Rademacher complexity of convolutional networks  for two specific architectures. 

The \emph{first architecture}~(a 6-layer network) consists of 5 convolutional layers with stride 2, kernel size 3, padding of 1 and 256 filters / output channels, so that a $(3\times 32 \times 32)$-dimensional input image is mapped to a $(256\times1\times1)$-dimensional representation. The subsequent linear classifier is a convolutional layer with kernel size 1 and no padding. Its number of filters equals the number of classes of the classification problem. This classification layer is equivalent to applying a fully-connected layer to the flattened representations.
The \emph{second architecture}~(an 11-layer network) only differs in that each convolutional layer with stride 2 is followed by an additional convolutional layer with stride 1 (kernel size 3, padding of 1 and 256 filters). All activation functions are ReLUs.

We trained both networks on the CIFAR10 dataset, minimizing the cross-entropy loss using stochastic gradient descent (SGD) with batch size 256, weight decay (1e-4), and momentum (0.9). During the 100 training epochs, the learning rate is gradually reduced following a cosine annealing schedule, starting with an initial learning rate of 1e-4. \emph{Notably, we do not use any data augmentation}.
Both networks fit the training data, achieving an accuracy of 72.5\% (6-layer), resp. 77.9\% (11-layer), on the test data.

To assess the different empirical Rademacher complexity bounds, we measured weight norms of the networks' layers and inserted them into the bounds from \cref{table:bound_comparison}. \cref{fig:appendix:simple_comparison} illustrates the results. Note that, following our discussion in \cref{appendix:subsection:comparison_table}, Rademacher complexity bounds are typically formulated for networks with a priori specified weight norm constraints, whereas here, we train \emph{unconstrained} networks and merely measure the weight norms at the end of training.

\cref{fig:appendix:simple_comparison} highlights several aspects of the studied bounds. First of
all, \textbf{all bounds are vacuous}, as they are larger than $1$. Somewhat surprisingly, the bounds
mainly driven by the number of parameters are clearly the smallest, \ie, the ones from Lin et al. \cite{Lin19a} and
\cref{paper:theorem:rademacher_complexity}
\eqref{theorem:main_residual_network_covering_rademacher:eq_params}.
As expected, our bound from \cref{paper:theorem:rademacher_complexity}
\eqref{theorem:main_residual_network_covering_rademacher:eq_norms} is smaller than 
\cite{Bartlett17a}, as it accounts for the structure of convolutions. Furthermore, we see the benign
scaling of the product of Lipschitz constants with the network depth compared to the product of
$l_2$ norms, resp., $l_{1,\infty}$ norms. This is mirrored in the benign scaling of the covering
number based bounds compared to the layer-peeling based ones.

\begin{figure}[t!]
    \begin{center}
        \begin{tikzpicture}
    \tikzset{
    font={\scriptsize}
    }

    \tikzmath{
        \width = 0.09; 
        \sep = .265; 
        \xscale = .24; 
        \nbars = 16; 
        \ticksize = 0.1; 
        \bracepos = 22; 
    } 

    \newcommand\barplot[4]{%
    \foreach \coun/\lab/\val in #1{%
        \path[fill, #2]%
        (0,{\coun*\sep+\width})
        -- ({\val*\xscale},{\coun*\sep+\width})
        -- ({\val*\xscale}, {\coun*\sep-\width})
        -- (0,{(\coun*\sep)-\width})
        ;
        \ifthenelse{#3=1}{
            \draw (0, {\coun*\sep}) -- ({-1*\ticksize}, {\coun*\sep}) node[text width=#4, align=right, left] {\lab};
        }{}
    }
}

    \def\normsconva{
        6/{$\norm{\cdot}_{1,\infty}$ \hfill Neyshabur et al.\ {\cite{Neyshabur15a}}}/10.07,
        7/{$\norm{\cdot}_{1,\infty}$\hfill Golowich et al.\ \cite{Golowich18a}}/7.23,
        8/{$\norm{\cdot}_{1,\infty}$ \hfill Gouk et al.\ \cite{Gouk21a}}/11.54,
        9/{$\norm{\cdot}_{2}$ \hfill Neyshabur et al.\ \cite{Neyshabur15a}}/10.91,
        10/{ $\norm{\cdot}_{2}$\hfill Golowich et al.\ \cite{Golowich18a}}/10.15,
        11/{ $\norm{\cdot}_{2}$ \hfill Gouk et al.\ \cite{Gouk21a}}/20.34
        }
    \def\quantsconva{ 
        12/{Number of parameters}/6.31,
        13/{Product of Lipschitz constants}/4.17,
        14/{Product of $\norm{\cdot}_{2}$ norms}/9.16,
        15/{Product of $\norm{\cdot}_{1,\infty}$ norms}/8.81
    }
    \def\restconva{ 
        0/{\textbf{Ours}, \cref{paper:theorem:rademacher_complexity} {\eqref{theorem:main_residual_network_covering_rademacher:eq_norms}}
        }/9.31,
        1/{Bartlett et al. \cite{Bartlett17a}}/10.29,
        2/{Ledent et al.\! (\emph{main result}) \cite{Ledent21a}}/9.77,
        3/{Ledent et~al.\! (\emph{fixed constraints})}/11.57,
        4/{\textbf{Ours}, \cref{paper:theorem:rademacher_complexity} \small{\eqref{theorem:main_residual_network_covering_rademacher:eq_params}
        }}/2.86,
        5/{Lin et al.\ \cite{Lin19a}}/3.91
    }

    \def\normsconvb{
        6/{$\norm{\cdot}_{1,\infty}$ \hfill Neyshabur et al.\ \cite{Neyshabur15a}}/18.13,
        7/{$\norm{\cdot}_{1,\infty}$\hfill Golowich et al.\ \cite{Golowich18a}}/13.84,
        8/{$\norm{\cdot}_{1,\infty}$ \hfill Gouk et al.\ \cite{Gouk21a}}/19.86,
        9/{$\norm{\cdot}_{2}$ \hfill Neyshabur et al.\ \cite{Neyshabur15a}}/19.85,
        10/{ $\norm{\cdot}_{2}$\hfill Golowich et al.\ \cite{Golowich18a}}/17.68,
        11/{ $\norm{\cdot}_{2}$ \hfill Gouk et al.\ \cite{Gouk21a}}/38.28
    }
    \def\quantsconvb{ 
        12/{Number of parameters}/{6.62},
        13/{Product of Lipschitz constants}/6.91,
        14/{Product of $\norm{\cdot}_{2}$ norms}/16.59,
        15/{Product of $\norm{\cdot}_{1,\infty}$ norms}/15.37
    }
    \def\restconvb{ 
        0/{\textbf{Ours-norms}
        }/12.33,
        1/{Bartlett et al. \cite{Bartlett17a}}/13.58,
        2/{Ledent et al.\! (\emph{main result}) \cite{Ledent21a}}/12.55,
        3/{Ledent et~al.\! (\emph{fixed constraints})}/14.64,
        4/{\textbf{Ours-params}
        }/3.07,
        5/{Lin et al.\ \cite{Lin19a}}/4.78
    }   


    \begin{scope}[yshift=0.3 cm]    
        \barplot{\normsconvb}{bargrayb}{0}{3.3cm}
        \barplot{\restconvb}{bargrayb}{0}{3.5cm}
             
        \barplot{\normsconva}{bargraya}{1}{3.4cm}
        \barplot{\restconva}{bargraya}{1}{3.5cm}

        \begin{scope}[yshift=0.1cm]
            \barplot{\quantsconvb}{bargreenb}{0}{3.5cm} 
            \barplot{\quantsconva}{bargreena}{1}{3.5cm}
        \end{scope}    

    \draw [line width = 1.pt, decorate,
    decoration = {calligraphic brace, mirror}] (\bracepos*\xscale,0*\sep-\width) --  (\bracepos*\xscale,5*\sep+\width)
    node[pos=.5, right] {via covering numbers};
    \draw [line width = 1.pt, decorate,
    decoration = {calligraphic brace, mirror}] (\bracepos*\xscale,6*\sep-\width) --  (\bracepos*\xscale,11*\sep+\width)
    node[pos=.5, right] {via layer-peeling};        
    \end{scope}

    \draw (0,0.12) -- ({40*\xscale},0.12) -- ({40*\xscale},{(\nbars*\sep)+0.3}) -- (0,{(\nbars*\sep)+0.3}) -- cycle;
    \foreach \i in {0, 10, 20, 30, 40}{
        \draw[color=bargrayc] ({\i*\xscale},0.12) node[below] {\textcolor{black}{$10^{\i}$}} -- ({\i*\xscale},{(\nbars*\sep)+0.3});
    }
\end{tikzpicture}
    \end{center}
    \caption{Empirical Rademacher complexity bounds (grouped by proof strategy), for a 6- (\textcolor{black!45}{$\blacksquare$}\textcolor{unigreen!90}{$\blacksquare$}) and an 11-layer (\textcolor{black!30}{$\blacksquare$}\textcolor{unigreen!30}{$\blacksquare$}) convolutional network, trained on CIFAR10. Quantities that typically appear in these bounds are shown in \textcolor{unigreen}{\textbf{green}} (top part of figure) for reference. \label{fig:appendix:simple_comparison}}
\end{figure}

\subsection{Comparison with Bartlett et al. \texorpdfstring{\cite{Bartlett17a}}{(NeurIPS, 2017)}}
\label{appendix:comparison_bartlett}

Our Rademacher complexity bounds are based on the proof strategy of Bartlett et al. \cite{Bartlett17a}. That is, we first derive single layer covering number bounds for \emph{convolutional} layers. In a second step, we derive covering numbers for entire \emph{residual} networks. Last, a combination of Dudley's entropy integral and \cite[Lemma 3.1]{Bartlett17a} implies the Rademacher complexity bounds \eqref{theorem:main_residual_network_covering_rademacher:eq_norms} and \eqref{theorem:main_residual_network_covering_rademacher:eq_params}. 
As already discussed in \cref{subsection:single_layer}, our single-layer covering number bound for convolutional layers includes the single-layer covering number bound for fully-connected layers from \cite[Lemma 3.2]{Bartlett17a}. Consequently, in the special case of fully-connected layers and no skip connections, our main result \eqref{theorem:main_residual_network_covering_rademacher:eq_norms} reduces to the Rademacher complexity bound from \cite{Bartlett17a}.
To be more specific, we show that 
\begin{equation}
    \label{eq:bartlett_comparison_ours}
    \rade{S}{\mathcal F_\gamma}
    \le
    \frac{4}{n}
    +
    \frac{12 H_{n-1}}{\sqrt{n}}
    \sqrt{\log(2W)}
    \left(
        \sum_{i=1}^L
        \ceil{\tilde C_{i}^{\nicefrac{2}{3}}}
    \right)^{\!\nicefrac{3}{2}}
\end{equation}
and Bartlett et al. \cite{Bartlett17a} prove
\begin{equation}
    \label{eq:bartlett_comparison_theirs}
    {
        \rade{S}{\mathcal F_\gamma}
        \le
        \frac{4}{n}
        +
        \frac{9 \log(n)}{\sqrt{n}}
        \sqrt{\log(2W)}
        \left(
            \sum_{i=1}^L
            \tilde C_{i}^{\nicefrac{2}{3}}
        \right)^{\!\nicefrac{3}{2}}
    }
    \enspace.
\end{equation}
Here, $L$ denotes the depth of the network and $\tilde C_i$ is the part of the capacity of the $i$-th layer due to weight and data norms, \ie,
\begin{equation}
    \tilde C_{i}(X)=
    \frac 4 \gamma \, \frac{\norm{X}}{\sqrt{n}} 
    \left(
        \prod_{\substack{l=1}}^{L}
            s_l \rho_l
    \right) 
    \frac{b_{i}}{s_{i}}
\enspace,
\end{equation}
with $s_i$ the layers' Lipschitz constraints, $b_i$ the layers' (2,1) norm constraints and $\rho_i$ the Lipschitz constants of the nonlinearities.

A closer look reveals that there are three differences between the results:
\begin{inparaenum}[(i)]
\item a different numerical constant,
\item the logarithm is replaced with a
harmonic number, and 
\item \cref{eq:bartlett_comparison_theirs} contains no ceiling functions. 
\end{inparaenum}
Of course, the differences do not affect the asymptotic behavior of the bound and are thus only of
minor importance.
From our understanding, the differences are rooted in a lapse in the inequality chains of \cite[Eq. (A.3)]{Bartlett17a}.

\begin{itemize}
    \item The difference in the numerical constant appears, because proving the entire network covering number bound requires transitioning to \emph{external} covering numbers, which manifests as an additional factor of 2 in the final result.
    This is because the single layer covering number bounds from \cref{paper:theorem:single_layer}, resp. \cite[Lemma 3.2]{Bartlett17a}, hold for layers with only a ($2,1$) group norm constraint, which form a \emph{superset} of the layers with a ($2,1$) group norm constraint \emph{and} a Lipschitz constraint as considered in \cref{paper:theorem:rademacher_complexity}, resp \cite[Theorem 3.3]{Bartlett17a}.
    On the other hand, the parameter $\alpha$ in the proof of \cite[Lemma A.8]{Bartlett17a} can be chosen as $\alpha=1/\sqrt{n}$, which improves the bound by a factor of $\nicefrac 3 2$ (see proof of \cref{theorem:residual_network_covering_rademacher}).    
    Overall, both effects lead to a factor of $\nicefrac 4 3$, which is precisely the quotient of the numerical constants in \cref{eq:bartlett_comparison_ours} and \cref{eq:bartlett_comparison_theirs}.
    \item Our result in \cref{eq:bartlett_comparison_ours} contains ceiling functions and a harmonic number, which is a direct consequence of the ceiling function appearing in the single layer covering number bound of \cref{paper:theorem:single_layer}, resp. \cite[Lemma 3.2]{Bartlett17a}.
    In the chain of inequalities \cite[Eq. (A.3)]{Bartlett17a} in the proof of \cite[Theorem 3.3]{Bartlett17a}, the single layer bound is inserted \emph{without} the ceiling function.
\end{itemize}

\subsection{Comparison with Long \& Sedghi \texorpdfstring{\protect\cite{Long20a}}{(ICLR, 2020)}}
\label{appendix:comparison_long}
Long \& Sedghi \cite{Long20a} study generalization bounds for the class $\mathcal{F}$ of convolutional networks that realize functions of the form 
$$f = \sigma_L \circ \phi_{K_L} \circ \dots \circ \sigma_1 \circ  \phi_{K_1}: \mathcal X \to \mathbb R$$ 
with Lipschitz/spectral-norm constraints, \ie, they assume that the initializations $\phi_{K_i^{(0)}}$ per layer are $(1+\nu)$-Lipschitz and that the distances 
$$\beta_i=\Lip(\phi_{K_i} - \phi_{K_i^{(0)}})$$ 
to initialization satisfy $\sum_i {\beta_i}\le \beta$ for some given constant $\beta>0$.
They show \cite[Theorem 3.1]{Long20a} that for $\lambda$-Lipschitz loss functions $\ell$, the generalization gap is (with probability $1-\delta$) uniformly bounded over the class $\mathcal F_\ell = \set{(x,y)\mapsto \ell(f(x),y)~|~ f\in \mathcal F}$ by
\begin{equation}
    C M \sqrt{
        \frac{
            \bar W(\beta + \nu L + \log\left(\lambda \beta \chi\right)) + \log\left(\frac 1 \delta\right)
        }{n}
    }
    \enspace,
    \label{eq:sedghi_original}
\end{equation}
assuming that $\lambda \beta \chi(1+\nu +\beta/L)^L\ge 5$ and $n$ large enough. Here, $C$ denotes an unspecified constant and $M$ the maximum of the loss function $\ell$. Further, $\bar W = \sum_i W_i$ is the total number of network parameters and $\norm{x}_2\le \chi$ is an upper bound on the Euclidean norm of the data.
As can be seen, this bound depends on the square root of parameters and the distance $\beta$ to initialization. 
In contrast to other results (e.g., \cite[\eqref{theorem:main_residual_network_covering_rademacher:eq_norms}, \eqref{theorem:main_residual_network_covering_rademacher:eq_params}]{Bartlett17a,Ledent21a}), it also depends on a Lipschitz constraint $(1+\nu)$ \emph{directly} on the initialization.

\cref{eq:sedghi_original} is based on \cite[Lemma 2.3]{Long20a}, which requires the class $\mathcal F_\ell$ to be $(B,d)$-\emph{Lipschitz parametrized}, \ie, that there exists $d\in \mathbb N$ and a norm $\norm{\cdot}$ on $\mathbb R^d$, together with a $B$-Lipschitz continuous and surjective map $\phi: \mathcal B(1,\norm{\cdot}) \to \mathcal F_\ell$ from the $\norm{\cdot} $-unit ball in $\mathbb R^d$ onto $\mathcal F_\ell$, which is $B$-Lipschitz. The latter means that for every $\theta, \theta' \in \mathcal B(1,\norm{\cdot})$, it holds that $\|\phi(\theta) - \phi(\theta')\|_\infty \le B \norm{\theta-\theta'}$. 
In this situation, the generalization gap is bounded by 
\begin{equation}
    C M \sqrt{\frac{d \log B + \log\left(\frac 1 \delta\right)}{n} } \enspace.
    \label{eq:sedghi_lemma2.3}
\end{equation}

In a series of lemmas \cite[Lemma 3.2--3.4]{Long20a}, the authors show that $\mathcal F_\ell$ is indeed $(B,d)$-Lipschitz parametrized with $d= \bar W$ and $B = \lambda \chi \beta (1+\nu+\beta/L)^L $. We will repeat the argument and show that it implies an intermediate result which scales similarly to our result \eqref{theorem:main_residual_network_covering_rademacher:eq_params} from \cref{paper:theorem:rademacher_complexity}, \ie, with the square root of 
$${\bar W \log\left(\prod_j s_j\right)}\enspace,$$
where $s_i$ denote Lipschitz constraints on the layers $\phi_{K_i}$.

Let $\mathbf{K} = (K_1, \dots, K_L)$ and $\mathbf{\tilde K} = (\tilde K_1, \dots, \tilde K_L)$ be tuples of weight tensors and denote the corresponding networks by $f_\mathbf{K}$, resp. $f_\mathbf{\tilde K}$. If $\mathbf{K}$ and $\mathbf{\tilde K}$ differ in only one layer, say $K_j \neq \tilde K_j$, then for all $x \in \mathcal X$ (see proof of \cite[Lemma 3.2]{Long20a}), 
\begin{equation}
    |f_{\mathbf K}(x) - f_{\mathbf{\tilde K}} (x)| 
    \le \norm{x}_2 \left( \prod_{i\neq j} \Lip(\phi_{K_i})\right) \Lip(\phi_{K_j}{-}\phi_{\tilde K_j})
    \le 
    \chi  \left( \prod_{i=1}^L s_i \right) \Lip(\phi_{K_j}{-}\phi_{\tilde K_j})
    \enspace.
    \label{eq:sedgi_intermediate}
\end{equation}
Consequently, if $\mathbf{K}$ and $\mathbf{\tilde K}$ differ in \emph{all} layers, it holds that
\begin{equation}
    |f_{\mathbf K}(x) - f_{\mathbf{\tilde K}} (x)|
    \le 
    \chi \left( \prod_{i=1}^L s_i \right) \sum_{j=1}^L \Lip(\phi_{K_j} - \phi_{\tilde K_j})
    \enspace.
\end{equation}
As $\sum_{j=1}^L \Lip(\phi_{K_j})$ defines a norm $\norm{\cdot}$ on $\mathbb R^{\bar W}$ (in \cite{Long20a} this norm is denoted $\norm{\cdot}_\sigma$), the inequality above implies that the surjective map
\begin{align*}
    \mathcal B(1,\norm{\cdot})\to \mathcal F
    \enspace, 
    \quad
    \frac{\mathbf{K}}{\sum_i s_i} \mapsto f_{\mathbf K}
\end{align*}
is $\left(\chi  \left( \prod_{i=1}^L s_i \right)\sum_i s_i\right)$-Lipschitz, \ie, the class $\mathcal F_\ell = \set{(x,y)\mapsto \ell(f(x),y)|~ f\in \mathcal F}$ is 
$\left(\lambda \chi  \left( \prod_{i=1}^L s_i \right) \sum_i s_i, \bar W\right)$-Lipschitz parametrized. Thus, \cref{eq:sedghi_lemma2.3} implies a generalization bound of the form 
\begin{equation}
    C M \sqrt{\frac{\bar W \log\left(\lambda
        \chi \left( \prod_{i=1}^L s_i \right) \sum_i s_i
    \right)
    + \log\left(\frac 1 \delta \right)}{n}}
    \enspace.
    \label{eq:sedghi_asours}
\end{equation}
Similarly to our result \eqref{theorem:main_residual_network_covering_rademacher:eq_params} from \cref{paper:theorem:rademacher_complexity}, this bound scales with the square root of the number of parameters and with the logarithm of the product of Lipschitz constants.
However, as \cref{eq:sedghi_asours} and \cref{eq:sedghi_original} are proven via an asymptotic bound from Gin\'{e} and Guillou \cite{Gine01}, the constant $C$ and the minimal sample size $n$ required for \cref{eq:sedghi_asours} and \cref{eq:sedghi_original} to hold are not readily available. This makes further comparisons difficult.

\cref{eq:sedghi_asours} differs from the main result in \cite{Long20a}, \ie, \cref{eq:sedghi_original}, as, instead of constraints on the layers' Lipschitz constants $\Lip(\phi_{K_i})\le s_i$, Long \& Sedghi consider constraints on the Lipschitz constants of the initialization $\Lip(\phi_{K_i^{(0)}})\le 1+ \nu$ and on the distance to initialization $\Lip(\phi_{K_i} - \phi_{K_i^{(0)}})\le \beta_i$ with $\sum_i \beta_i = \beta$. Starting from \cref{eq:sedgi_intermediate}, these constraints enter via the triangle inequality, \ie,  
\begin{equation*}\Lip(\phi_{K_i}) \le \Lip(\phi_{K_i^{(0)}}) +  \Lip(\phi_{K_i} - \phi_{K_i^{(0)}}) \le 1 +\nu +\beta_i \enspace.\end{equation*}
Maximizing $\prod_{i} (1+ \nu + \beta_i)$ subject to $\sum_i \beta_i = \beta$, yields 
\begin{align}
    |f_{\mathbf K}(x) - f_{\mathbf{\tilde K}} (x)|
    &\le 
    \chi\left(1 + \nu +\beta/L\right)^L 
    \Lip(\phi_{K_j} - \phi_{\tilde K_j})
    \\ &\le
    \chi \exp\left(\nu L + \beta\right)
    \Lip(\phi_{K_j} - \phi_{\tilde K_j})
    \enspace.
\end{align}
\cite[Lemma 3.3 \& 3.4]{Long20a} then imply that $\mathcal F_\ell$ is $(B,d)$-Lipschitz parametrized with $d = \bar W$ and 
$B = \lambda \chi \beta \exp\left(\nu L + \beta\right)$, which in turn implies \cref{eq:sedghi_original}.

Obviously, every bound that depends on weight norms can be transferred to a bound that depends on the norm of the initialization and the distance to it, simply by application of the triangle inequality. We argue, that utilizing the translation invariance of covering numbers, as done in, e.g., \eqref{theorem:main_residual_network_covering_rademacher:eq_norms}, \eqref{theorem:main_residual_network_covering_rademacher:eq_params}, as well as in \cite{Bartlett17a, Ledent21a}, is a more natural way of incorporating the distance to initialization, as it allows for bounds which do not depend on norm constraints on the initialization.

\subsection{Comparison with Ledent et al.\ \texorpdfstring{\protect\cite{Ledent21a}}{(AAAI, 2021)}}
\label{appendix:comparison_ledent}
In \cite{Ledent21a}, Ledent et al. derive generalization/Rademacher complexity bounds via $l_\infty$ coverings of convolutional networks.
These bounds incorporate weight sharing and thus directly depend on the norms of the weight tensors, instead of depending on the norms of the matrix that parametrizes the linear (convolutional) map. This results in an improved scaling with respect to the spatial input width.

In general, the bounds in \cite{Ledent21a} scale similarly to our bound (\ref{theorem:main_residual_network_covering_rademacher:eq_norms}) from \cref{paper:theorem:rademacher_complexity} in that they depend on the square root of the product of Lipschitz constants (or empirical estimates thereof). In particular, just as our result (\ref{theorem:main_residual_network_covering_rademacher:eq_norms}), \cite[Theorem 16]{Ledent21a} is based on Rademacher complexity bounds for function classes $\mathcal F_\gamma$, \ie, the composition of Lipschitz- and distance-constrained convolutional networks with the ramp loss at margin $\gamma>0$.
The main result \cite[Theorem 3]{Ledent21a}, as well as \cite[Theorem 20]{Ledent21a}, adapts techniques from \cite{Wei19a} and \cite{Nagarajan19b} to replace the product of Lipschitz constants with empirical equivalents, which  are typically much smaller. To this end, they study the composition of convolutional networks with an \emph{augmented} loss function, see for example \cite[Eq. (26)]{Ledent21a}.

In this part of the appendix, we compare our norm-driven bound (\ref{theorem:main_residual_network_covering_rademacher:eq_norms}) with the main results in \cite{Ledent21a}. As mentioned in \cref{subsection:single_layer}, we find that both results exhibit similar scaling behavior, but \emph{we improve in the logarithmic term and in that our dependency on data norms is less sensitive to outliers}. On the other hand, the main bounds in \cite{Ledent21a} exhibit an improved dependency on the number of classes. The latter pays off, e.g., for shallow networks or in extreme multiclass problems with a large number of classes. All three effects are due to the use of $l_2$ \emph{vs.} $l_\infty$ covering numbers.

Central to all Rademacher complexity bounds \cite{Ledent21a} is the single-layer $l_\infty$ covering number bound restated in the proposition below.

\begin{proposition}[{\cite[Proposition 6]{Ledent21a}}]
    Let positive reals $(a,b,\epsilon)$ and positive integer $m$ be given. Let the tensor $X \in \mathbb R^{n \times U \times d'}$ be given with 
    $\forall i \in \set{1,\dots,n}$, $\forall u \in \set{1,\dots,U}$, $\norm{X_{iu\cdot}}_2 \le b$. For any fixed $M$:
    \begin{equation}
        \log \mathcal N\left(\set{\!XA\colon A \in \mathbb R^{d'\times m}, \norm{A-M}_{2,1}\le a}\!, \epsilon, \norm{\cdot}_*\!\right)
        \le 
        \frac{64 a^2 b^2}{\epsilon^2} 
        \log_2\!\left[ 
            \left(\frac{8ab}{\epsilon}+7 \right) mnU
        \right]
        \label{eq:ledent_singlelayer}
    \end{equation}
    with the norm $\norm{\cdot}_*$ over the space $\mathbb R^{n\times U \times m}$ defined by $\norm{Y}_* = \max_{i\le n} \max_{j\le U} (\sum_{k=1}^m Y_{ijk}^2)^{1/2}$.
\end{proposition}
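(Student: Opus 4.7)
The plan is to prove Proposition 6 by adapting Zhang's probabilistic construction for $l_\infty$-type covering numbers to the structured target norm $\|\cdot\|_*$, which is $l_\infty$ over the probe index $(i,u)\in[n]\times[U]$ and $l_2$ over the output coordinate $k\in[m]$. The argument combines three ingredients: translation invariance, Maurey-style empirical sparsification, and a vector-valued Bernstein/Hoeffding union bound over the $nU$ probes.

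First, by translation invariance of covering numbers in the image space, assume $M=0$, so the target class is $\mathcal G=\{XA:\|A\|_{2,1}\le a\}$. Next, set up a probabilistic representation of $A$: writing $\alpha_k=\|A_{\cdot k}\|_2$ and $\hat A_{\cdot k}=A_{\cdot k}/\alpha_k$, we have $\sum_k\alpha_k\le a$, so $K$ with $\Pr(K=k)=\alpha_k/a$ is a valid distribution (absorbing any slack into a null atom). The random rank-one tensor $V=\hat A_{\cdot K}\otimes e_K$ then satisfies $a\,\mathbb E V=A$, and for each fixed probe $(i,u)$ the image $(XV)_{iu\cdot}\in\mathbb R^m$ has $l_2$-norm bounded by $\|X_{iu\cdot}\|_2\le b$. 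Drawing $V_1,\dots,V_N$ iid and forming $\tilde A=\tfrac{a}{N}\sum_t V_t$, for every fixed $(i,u)$ the vector $X\tilde A_{iu\cdot}$ is an $N$-average of iid bounded $\mathbb R^m$-valued random vectors, so a Hilbert-space Bernstein inequality gives, for any $\epsilon>0$,
\[
\Pr\bigl(\|X\tilde A_{iu\cdot}-XA_{iu\cdot}\|_2>\epsilon\bigr)\le 2\exp\bigl(-c N\epsilon^2/(a^2 b^2)\bigr),
\]
with $c$ the universal constant corresponding to the factor $64$ in the target bound.

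The main obstacle is the discretization step, which is also what produces the finite cover. A naive $\delta$-net on the unit sphere in $\mathbb R^{d'}$ would introduce a $d'$ inside the logarithm — ruinous, since the target bound is $d'$-free. The idea, due to Zhang and used by Bartlett et al., is to discretize directly in the \emph{image space} $\{XV\}$ rather than in the atom space: since each probe satisfies $\|(XV)_{iu\cdot}\|_2\le b$, a grid of resolution $O(\epsilon/a)$ in the image of a single atom has cardinality polynomial in $ab/\epsilon$ and in $mnU$, not in $d'$. Tracking constants through the Bernstein step and the grid spacing produces both the constant $64$ and the factor $\bigl(\tfrac{8ab}{\epsilon}+7\bigr)mnU$ inside the logarithm.

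Finally, choose $N=\lceil 64 a^2 b^2/\epsilon^2\rceil$ and union-bound the concentration statement over the $nU$ probes together with the $N$-fold atom discretization; this keeps the failure probability below $1$, so some deterministic $\tilde A$ in the discrete family satisfies $\|X\tilde A-XA\|_*\le\epsilon$. Since $\tilde A$ is specified by $N$ atoms, each drawn from a set of size at most $\bigl(\tfrac{8ab}{\epsilon}+7\bigr)mnU$, the total cover has size at most this quantity raised to the $N$-th power, and taking logs yields exactly the stated bound. The asymmetry between the $a^2 b^2/\epsilon^2$ prefactor and the polynomial-in-$ab/\epsilon$ dependence inside the log is the hallmark of the Maurey-plus-Bernstein strategy; the only delicate (but purely mechanical) part is bookkeeping all constants so that the final numerics match $64$ and $\tfrac{8ab}{\epsilon}+7$ exactly.
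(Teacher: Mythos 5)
First, a point of context: the paper never proves this statement. It is Ledent et al.'s Proposition~6, restated verbatim in the appendix purely for comparison with the paper's own single-layer bound; its proof in the cited reference rests on Zhang's $\ell_\infty$ covering-number theorem for $\ell_1$-constrained linear classes (Zhang 2002, Theorem~4), applied per output coordinate and aggregated in $\ell_2$ over the $m$ outputs (which is where the factor $m$ inside the logarithm comes from). That argument is essentially combinatorial in the step that matters, and it is not the direct Maurey-plus-concentration construction you sketch.

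Your sketch has a genuine gap exactly at the step you label as "delicate but purely mechanical": the discretization. The atoms $V=\hat A_{\cdot K}\otimes e_K$ depend on $A$ through the continuous directions $\hat A_{\cdot k}\in S^{d'-1}$, so to obtain a finite cover you must replace them by atoms from a fixed finite family. Discretizing the directions in $\mathbb R^{d'}$ reintroduces $d'$, which the target bound must avoid; discretizing "in image space," as you propose, does not have cardinality polynomial in $ab/\epsilon$ and $mnU$: the set $\{(\langle X_{iu\cdot},\hat v\rangle)_{i,u}:\ \|\hat v\|_2\le 1\}$ is a convex body of dimension up to $\min(d',nU)$, and an $O(\epsilon/a)$-net of it in the $\|\cdot\|_*$ sense has size exponential in that rank, giving a per-atom log contribution of order $\min(d',nU)\log(ab/\epsilon)$ rather than $\log\bigl[(8ab/\epsilon+7)mnU\bigr]$. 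The structure of the stated bound --- a log-free sparsity level $\approx 64a^2b^2/\epsilon^2$ multiplying a single logarithm that contains the probe count $nU$ --- is precisely what the naive "Maurey + Hilbert-space Hoeffding + union bound over probes" route does not deliver: for the union bound over the $nU$ probes to leave positive success probability you need $N\gtrsim (a^2b^2/\epsilon^2)\log(nU)$, so with your choice $N=\lceil 64a^2b^2/\epsilon^2\rceil$ the existence argument fails, and with the corrected $N$ you obtain a product of two logarithms (one from $N$, one from the per-atom count), which is a genuinely weaker and differently shaped bound. Consequently the claim that careful bookkeeping would reproduce the constants $64$ and $8ab/\epsilon+7$ is unsupported; those constants are inherited from Zhang's theorem, whose proof mechanism (moving the probe count into the cover counting rather than into the sparsity level) is the missing ingredient in your argument.
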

Some remarks regarding the notation. Here, $X$ does \emph{not} denote the input data $(x_1,\dots, x_n)$, but the $nU$-tuple of all $d'$-sized convolutional patches of the input data. Thus, $d'=k^2 c_{in}$ is the square of the kernel size times the number of input channels and $U = \ceil{d/t}^2$ is the number of patches per image, which is computed as the square of the spatial width divided by the stride. The matrix $A\in \mathbb R^{k^2 c_\mathit{in} \times c_\mathit{out}}$ then is the local linear map acting on the convolutional patches, \ie, $A$ is a reshaping of the weight tensor $K$ and $XA$ is the output of the convolutional layer, \ie, $n$ images with $c_\mathit{out}$ channels with $U$ pixels each. Further, $\norm{A}_{2,1}$ is the \emph{standard} matrix ($2,1$) group norm which differs from $\norm{K}_{2,1}$ defined in \cref{eqn:def_21_norm}. 

As the single-layer bound in \cref{eq:ledent_singlelayer} and our single-layer bound in \cref{theorem:single_layer:eq_norms} are the fundamental building blocks of all inferred results (and we did not study augmented loss functions), we will focus on them for the comparison. For ease of reference, we restate the relevant part of \cref{paper:theorem:single_layer}.

\vskip2ex
\begin{theorem}
    \label{led_comp:theorem:single_layer}
    Let $b > 0$ and $\mathcal{F} = \{\phi_K|~ K \in \mathbb R^{c_\textit{out}\times c_\textit{in} \times k \times k}, \norm{K}_{2,1}\le b\}$ denote the class of 2D convolutions with $c_\textit{in}$ input channels, $c_\textit{out}$ output channels and kernel size $k{\times}k$, parametrized by tensors $K$ with $W$ parameters. Then, 
    \begin{equation}
        \log\mathcal N(\mathcal F, \epsilon, \norm{\cdot}_X ) \le
        \ceil{\frac{\norm{X}^2 b^2}{\epsilon^2}} \log(2W)  \label{led_comp:theorem:single_layer_simple:eq_norms}
       \enspace.
     \end{equation}
\end{theorem}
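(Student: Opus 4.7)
The plan is to apply Maurey's sparsification lemma in the Hilbert space $\mathcal H := (\mathbb R^{c_\textit{out}\times h\times w})^n$ equipped with $\|\cdot\|_X$, and reduce the covering bound to a single binomial coefficient controlled by a budget of $b\|X\|$ over $2W$ elementary images. Because $K\mapsto \phi_K(X)$ is linear, I would start from the decomposition $\phi_K(X) = \sum_{o,j,a,b} K_{ojab}\,\phi_{e_{(o,j,a,b)}}(X)$, where $e_{(o,j,a,b)}$ ranges over the $W$ canonical basis tensors of the weight space; attaching signs produces a symmetric collection of $2W$ candidate vectors in $\mathcal H$ that I eventually normalize and scale by $b\|X\|$.

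The crux is converting the $(2,1)$ constraint on $K$ into an $\ell_1$-type budget on the coefficients against these normalized images. The convolution-specific observation is that the data-norm of an elementary image depends only on the input channel $j$: letting $\chi_j := \|X[\cdot,j,\cdot,\cdot]\|_2$, one has $\|\phi_{e_{(o,j,a,b)}}(X)\|_X \le \chi_j$ (independent of $o,a,b$, with equality for size-preserving circular padding) together with $\sum_j \chi_j^2 = \|X\|^2$. Cauchy--Schwarz applied within each group $(o,a,b)$ then yields
\begin{equation*}
\sum_{o,j,a,b}|K_{ojab}|\,\chi_j \le \sum_{o,a,b}\|K_{o\cdot ab}\|_2 \Big(\sum_j \chi_j^2\Big)^{1/2} = \|K\|_{2,1}\,\|X\| \le b\|X\|,
\end{equation*}
so $\phi_K(X)$ lies in the symmetric convex hull of the $2W$ vectors $\pm b\|X\|\,\phi_{e_{(o,j,a,b)}}(X)/\chi_j$, each of $\|\cdot\|_X$-norm at most $b\|X\|$.

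Maurey's lemma then supplies, for any integer $m\ge 1$, an $m$-term average (with repetitions) of these vectors approximating $\phi_K(X)$ within $b\|X\|/\sqrt{m}$ in $\|\cdot\|_X$. Choosing $m = \lceil \|X\|^2 b^2/\epsilon^2\rceil$ forces the approximation error to be at most $\epsilon$, and the cardinality of the resulting $\epsilon$-cover is bounded by the number of size-$m$ multisets from $2W$ elements, namely $\binom{m+2W-1}{m}$. Bounding $\binom{m+2W-1}{m}\le (2W)^m$, since each multiset is realized by at least one map $\{1,\dots,m\}\to\{1,\dots,2W\}$, and taking logarithms yields the stated inequality. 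The companion bound \eqref{paper:theorem:single_layer_simple:eq_params} from the paper would follow from the same binomial coefficient via the alternative estimate $\binom{m+2W-1}{m}\le(1+m)^{2W}$.

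The main obstacle is the construction of the $2W$ convex-hull generators in the first place: since the $(2,1)$ unit ball strictly contains the $\ell_1$ unit ball, $\phi_K(X)$ does not a priori lie in the convex hull of the signed canonical images $\pm b\,\phi_{e_{(o,j,a,b)}}(X)$. The channel-uniform identity $\|\phi_{e_{(o,j,a,b)}}(X)\|_X \le \chi_j$, which crucially does not depend on the output channel or the kernel position, is precisely what enables the in-group Cauchy--Schwarz to give the tight budget $b\|X\|$ and avoids an $\sqrt{c_\textit{in}}$ overhead that would arise from the naive inequality $\|K\|_1 \le \sqrt{c_\textit{in}}\,\|K\|_{2,1}$.
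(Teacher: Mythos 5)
Your proposal is correct and follows essentially the same route as the paper's proof: decompose $\phi_K$ over signed elementary (single-entry) convolutions rescaled by the per-input-channel data norms $\chi_j$, use the in-group Cauchy--Schwarz step to turn the $(2,1)$ constraint into a Maurey budget of $b\norm{X}$ over $2W$ generators, then count size-$m$ multisets and bound the binomial coefficient by $(2W)^m$ with $m=\ceil{\norm{X}^2b^2/\epsilon^2}$. The channel-uniform bound $\norm{\phi_{e_{(o,j,a,b)}}(X)}_X\le\chi_j$ that you single out as the crux is exactly the observation the paper exploits, so no further comparison is needed.
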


There is a clear similarity between \cref{eq:ledent_singlelayer} and \cref{led_comp:theorem:single_layer_simple:eq_norms}. Both depend quadratically on weight and data norms divided by the covering radius $\epsilon$, as well as on a logarithmic term. 
Consequently, differences between both bounds are nuanced and, ignoring the constant in \cref{eq:ledent_singlelayer}, it is a priori not clear which bound is preferable. We will discuss these nuances theoretically and provide a empirical comparison in \cref{fig:comparison_ledent_norms}.

\begin{enumerate}[label=(Diff-\arabic*), leftmargin=*]
    \item \label{enum:datanorms}\textbf{Data norms.} Our work assumes a bound on the $l_2$ norm of the \emph{whole input} $x$ (a $c_\textit{in}d^2$-tuple), whereas \cite{Ledent21a} only assumes a bound on the $l_2$ norm of every \emph{single patch} $p$ ($c_\textit{in}k^2$-tuples).
    This potentially improves \cref{eq:ledent_singlelayer} over
    \cref{led_comp:theorem:single_layer_simple:eq_norms} by a factor of $(k/d)^2$, as
    \begin{equation}
        \max_{p \in \mathrm{patches}} \norm{p}
        \le \norm{x}    
        \lesssim d/k \max_{p \in \mathrm{patches}} \norm{p}
        \enspace.
    \end{equation}
    The left inequality is obvious. The right inequality follows from considering the sum of all patch norms. 
    As every pixel $x_{ijk}$ appears in at least $\floor{k/s}^2$ patches and there are at most $\ceil{d/s}^2$ patches in total, it holds that
    \begin{align*}
        \floor{k/s}^2 \norm{x}^2
        &=
        \sum_{i=1}^c \sum_{j,l=1}^d |x_{ijl}|^2 \floor{k/s} \\
        & \le 
        \sum_{i=1}^c \sum_{j,l=1}^d |x_{ijl}|^2  \operatorname{card}(\set{p \in \mathrm{patches}~|~ x_{ijl} \in p}) \\
        & =
        \sum_{p \in \mathrm{patches}} \norm{p}^2
        \le 
        \ceil{d/s}^2 \max_{p \in \mathrm{patches}} \norm{p}^2 
        \enspace,
    \end{align*}
    and $\nicefrac{\floor{{k/s}}}{\ceil{d/s}} \approx k/d$.
    Notably, the maximum in \cref{eq:ledent_singlelayer} is over the patches on \emph{all} of the input data, which is quite \emph{sensitive to outliers}. Hence, the improvement over \cref{led_comp:theorem:single_layer_simple:eq_norms} is typically smaller than $d/k$, especially at hidden layers, see top row of \cref{fig:comparison_ledent_norms}.
    \item \label{enum:weightnorms} \textbf{Weight norms.} The ($2,1$) group norms on the weights are applied differently, \ie, we compute a $(2,1)$ norm via \cref{eqn:def_21_norm}, whereas \cite{Ledent21a} computes the ($2,1$) group norm of the matrix corresponding to the local linear map, which is applied to each patch, \ie, $\sum_i \norm{K_{i\cdot\cdot\cdot}}_2$. As 
    \begin{equation}
        \sum_i \norm{K_{i\cdot\cdot\cdot}}_2 \le 
        \norm{K}_{2,1} \le k \sum_i \norm{K_{i\cdot\cdot\cdot}}_2
        \enspace,
    \end{equation}
    this potentially improves \cref{eq:ledent_singlelayer} over
    \cref{led_comp:theorem:single_layer_simple:eq_norms} by a factor of $1/k^2$. Empirically, we observe that $\norm{K}_{2,1} \approx k \sum_i \norm{K_{i\cdot\cdot\cdot}}_2$, see bottom row of \cref{fig:comparison_ledent_norms}.
\end{enumerate}
Thus, considering norm constraints only, \ie, \ref{enum:datanorms} and \ref{enum:weightnorms}, we find that \cref{eq:ledent_singlelayer} is potentially better by a factor $(k/d)^2 \cdot(1/k)^2 = 1/d^2$, \ie, the reciprocal of the squared height/width of the input images.
However, the comparison is more intricate, as the coverings are with respect to different ($l_2$ vs.\ $l_\infty$) norms and, more importantly, the considered function classes differ. As, ultimately, we want to get Rademacher complexity bounds for whole networks, we need to consider effects that appear when transitioning to whole-network bounds.

\begin{enumerate}[label=(Diff-\arabic*), start=3, leftmargin=*]
    \item \label{enum:lipschitz} \textbf{Lipschitz constants.} In whole-network bounds, contributions of all layers are summed. These contributions are the (logarithmic) single-layer bounds, scaled by a factor corresponding to the Lipschitz constant of the remainder of a network after the layer. Typically,
    the Lipschitz constant of the part before a layer additionally enters as an estimate of the norm of the layer's input. Notably, in \cite{Ledent21a}, the Lipschitz constant of the network's remaining layers incurs an additional factor $\ceil{d/t}^2$, i.e., the spatial dimensionality of the output (denoted by $w_l$ in the reference).
    This counterbalances the improvements by \ref{enum:datanorms} and \ref{enum:weightnorms}.
    
    Specifically, in \cite{Ledent21a}, the Lipschitz constants are with respect to the norms $\norm{\cdot}_{\infty, r}$ on the domain and $|\cdot|_{s}$ on the codomain, see, \eg, the definition of \smash{$\rho^{\mathcal A}_{l_1 \to l_2}$} in the statement of \cite[Proposition 10]{Ledent21a}. There, 
    \begin{equation*}\norm{x}_{\infty, r} = \max_{j\le d} \max_{k\le d}\sqrt{ \sum_{i=1}^c x_{ijk}^2}\end{equation*}
    is the maximum $l_2$ norm of a slice of the image $x$ along the channels, \ie, at fixed spatial position. The norm $|y|_{s} = \max_{p \in \textrm{patches}} \norm{p}$ is the maximal norm of a convolutional patch on $y$. Transitioning to Lipschitz constants with respect to $l_2$ norms, \ie, spectral norms, as done for the main result in \cite[Theorem 3]{Ledent21a}, incurs an additional factor $d$ (the spatial dimension of $x$), since 
    \begin{equation*}
        \frac{|f(x)|_{s}}{\norm{x}_{\infty, r}}
        =
        \underbrace{\frac{|f(x)|_{s}}{\norm{f(x)}_2}}_{\le 1}\,
        \frac{\norm{f(x)}_2}{\norm{x}_2}\,
        \underbrace{\frac{\norm{x}_2}{\norm{x}_{\infty, r}}}_{\le d}
        \le 
        d \frac{\norm{f(x)}_2}{\norm{x}_2} \enspace.
    \end{equation*}
    In this inequality, $x$ denotes the output of the considered layer and so $d$ is its spatial width. Notably, in \cite{Ledent21a}, $d$ can actually be reduced to the output's spatial width after a subsequent max-pooling operation.
    In our whole-network bound, the Lipschitz constant is already with respect to $l_2$ norms and thus no additional factors appear.

    \item \label{item:class_dependency}\textbf{Dependency on number of classes.} The use of $l_\infty$ covering numbers in \cite{Ledent21a} improves the dependency on the number of classes for whole-network bounds. This is because the weights of the classification layer do not enter via a ($2,1$) group norm constraint, but a Frobenius norm constraint. This implicitly improves the log covering number of this layer by a factor of the number of classes. Since, for whole-network bounds, the contribution of all layers are summed, we expect this effect to be significant if the contribution of the classification is a substantial fraction of the whole-network bound. This would be the case, \eg, for shallow networks or in extreme multiclass settings.
\end{enumerate}
Finally, we discuss the logarithmic terms and constants.
\begin{enumerate}[label=(Diff-\arabic*), start=5, leftmargin=*]
    \item \textbf{Logarithmic terms.} Our bound in \cref{led_comp:theorem:single_layer_simple:eq_norms} depends logarithmically on the number of parameters, denoted by~$W$. By contrast, \cref{eq:ledent_singlelayer} depends on $\log_2\left[ 
        \left(\frac{8ab}{\epsilon}+7 \right) mnU
    \right]$. When transitioning to Rademacher complexity bounds via Dudley's entropy integral (cf. \cite[Eq. (29)]{Ledent21a}), the covering radius $\epsilon$ in the $\log_2$ term is replaced by $\frac 1 n$. So, considering the definitions of $U$ and $m$, we need to compare $8abmUn^2 = 8 ab c_\textit{out} d^2 n^2 / t^2$ (Ledent et al.) with $2W = 2c_\textit{in} c_\textit{out} k^2$ (Ours).
    As, typically, $k\le d/t$ and $c_{\textit{in}}<n^2$, we improve over \cite{Ledent21a} in the logarithmic term (recall that $a$ and $b$ denote the weight and data norm constraints, respectively).
    \clearpage
    \item \textbf{Multiplicative constants.}
    The single-layer bound by Ledent et al.\ \cref{eq:ledent_singlelayer} has a rather large multiplicative constant $64$ (compared to $1$ in \cref{led_comp:theorem:single_layer_simple:eq_norms}). 
    This constant enters mainly via a previous theorem by Zhang \cite[Theorem 4]{Zhang02a}. Notably, a remark in \cite{Zhang02a} highlights that the constants in this theorem are not optimized. Thus, improving \cref{eq:ledent_singlelayer} in this regard might be possible, and the difference in numerical constants might be less pronounced than it appears at first sight. Yet, from our understanding, some constants are unavoidable, \eg, the factor $2$ which enters the proof sketch of \cite[Proposition 6]{Ledent21a}.
\end{enumerate}
Overall, \ref{enum:datanorms} -- \ref{enum:lipschitz} lead to several effects, which can potentially compensate each other, especially if, for each layer, the coordinates of its input and of the weights have roughly equal norm. Notably, in this situation, our single-layer bound can be improved by a factor of $1/t^2$, with $t$ denoting the stride of the convolution, see \cref{rem:strides}. Thus, in the absence of pooling (\eg, when downsampling is handled directly by the stride of the convolutional layer), the scaling is precisely the same.
As \cref{eq:ledent_singlelayer} depends on the maximum norm of a patch over \emph{all of the input data} (\ie, a quantity which is sensitive to outliers), we do not expect \ref{enum:datanorms} -- \ref{enum:lipschitz} to fully compensate each other, but rather expect an advantage of our bound from \cref{led_comp:theorem:single_layer}.
A detailed investigation is shown in \cref{fig:comparison_ledent_norms}, which highlights weight and data norms for layers of a trained network (an 11-layer convolutional network as used for \cref{fig:appendix:simple_comparison}).
To incorporate the effects \ref{enum:datanorms} -- \ref{enum:lipschitz} and to allow for a cleaner comparison, we multiply the $l_2$ norms of the patches by the square root of spatial dimensionality of the output and shift a factor of the kernel size from our weight norms to our data norms.
We see that the (rescaled) weight norms across all layers are essentially the same, whereas, due to the maximum being sensitive to outliers, our data norms are substantially smaller at the hidden layers.

\begin{wrapfigure}{r}{0.46\textwidth}
    \vspace{-0.42cm}
    \includegraphics[width=0.46\textwidth]{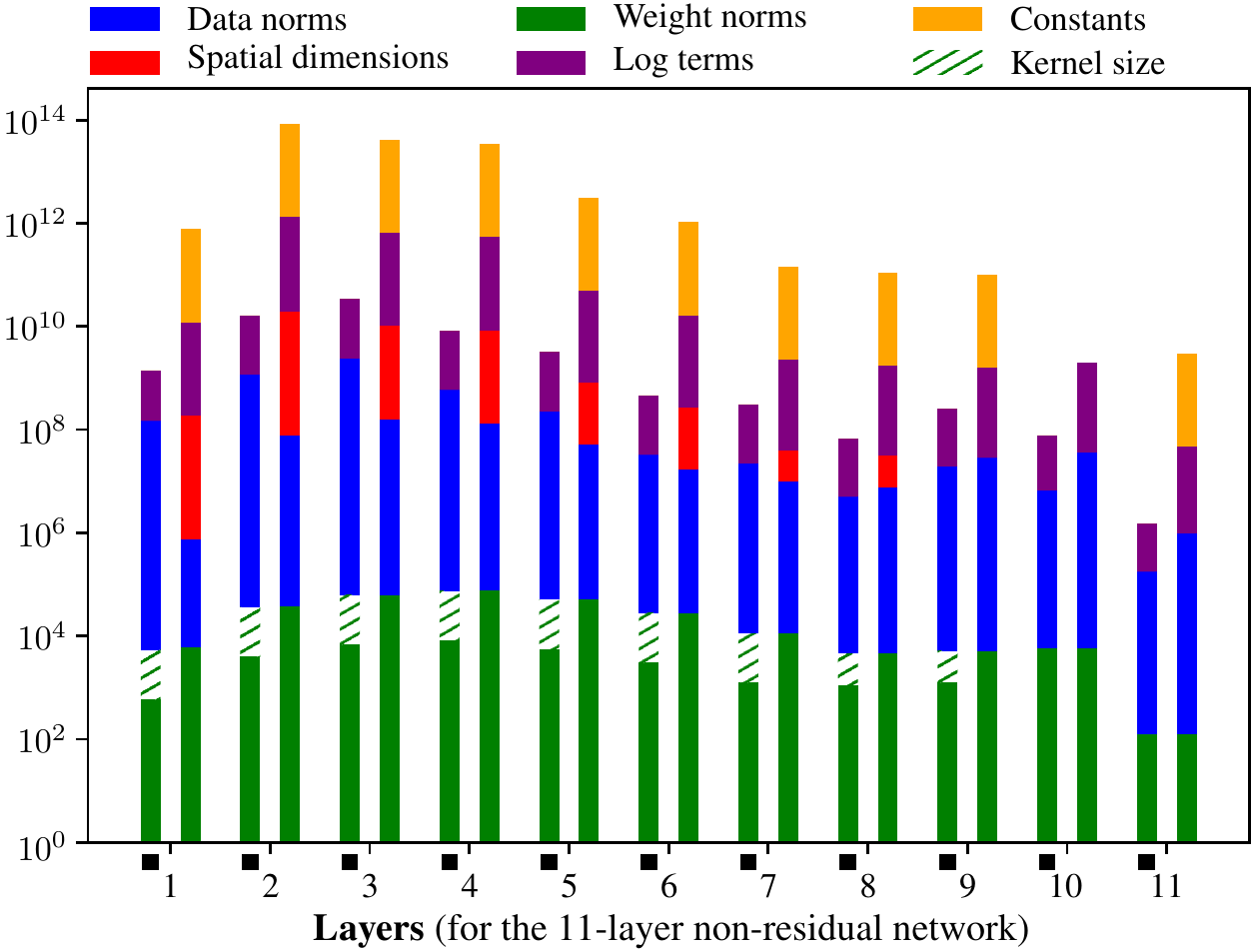}
    \vspace{-0.8cm}
\end{wrapfigure}
As a last comparison, we illustrate the magnitudes of all factors appearing in the bounds of Eqs.~\eqref{eq:ledent_singlelayer} and \eqref{led_comp:theorem:single_layer_simple:eq_norms} and of the spatial dimensionality of the output, see \ref{enum:lipschitz} in the enumeration above. In both bounds, we discard the denominator $\epsilon^2$ and, in \cref{eq:ledent_singlelayer}, we replace the factor $1/\epsilon$ in the logarithm by $n$, just as it enters the Rademacher complexity bounds. 

As can be seen from the figure to the right (with our single-layer bound marked by {\scriptsize $\blacksquare$}), our improvement in the quadratic terms is due to data norms. To be specific, one needs to compare the data norms in our case, to the combination of data norms and spatial dimensions in the bound of Ledent et al. \cite{Ledent21a}. We also improve in the logarithmic terms and constants.

\begin{figure*}
    \begin{center}
    \includegraphics[width=1.\textwidth]{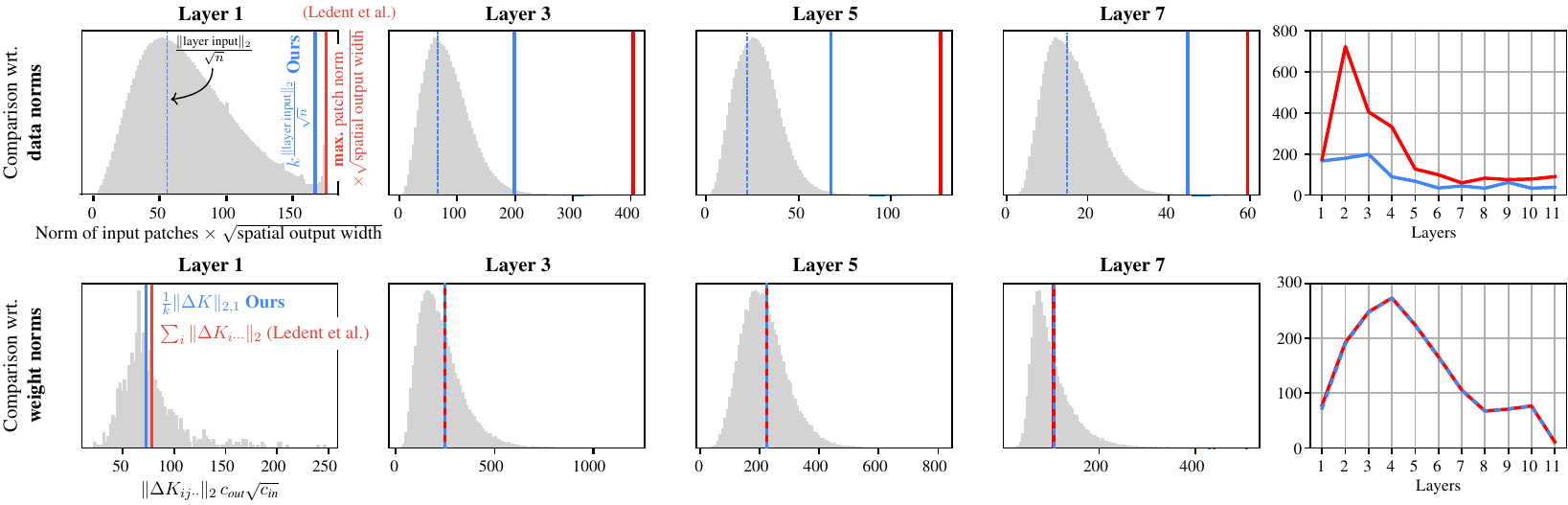}
    \end{center}
    \caption{Comparison of factors (\wrt \emph{data norms} and \emph{weight norms}) in the single-layer covering number bound of Ledent et al. \cite[Proposition 6]{Ledent21a} and our result from \cref{led_comp:theorem:single_layer_simple:eq_norms}. 
    Shown are detailed results for four exemplary layers (from the 11-layer convolutional network described in \cref{appendix:subsection:numerical_comparison_details}), as well as a summary plot across all layers (rightmost).
    The first row presents histograms of patch norms; the second row presents histograms of norms of $k\times k$ slices of tensors $\Delta K$, i.e. the difference $\Delta K$ between a weight tensor and its initialization.
    \label{fig:comparison_ledent_norms}}
\end{figure*}


\subsubsection*{Comparison for two-layer networks}
As discussed in \ref{item:class_dependency}, the $l_\infty$ covering approach in \cite{Ledent21a} allows for a favorable treatment of the last (classification) layer. If the contribution of this last layer to the respective Rademacher complexity bounds is substantial, then the bound in \cite{Ledent21a} is superior. We evaluate this effect on two-layer networks, where it is most pronounced.

The comparison considers networks of the following architecture. The first layer is convolutional and parametrized by a tensor $K\in \mathbb R^{c\times 3 \times k \times k}$. Here $c$ denotes the number of filters (channels of the output) and $k$ the kernel size, which is chosen equal to the stride and the spatial dimensionality of the input, \eg, 32 for images from CIFAR100. Consequently, the spatial dimensionality of the output is 1. This convolutional layer is followed by an activation function with Lipschitz constant 1 (\eg, ReLU) and a linear map $W\in \mathbb R^{\kappa\times c}$, with $\kappa$ denoting the number of classes.

Since the quantities and norms appearing in the respective bounds differ, we make the following simplifications, which are motivated by corresponding inequalities and verified empirically.

\begin{enumerate}[label=(\arabic*)]
    \item $\norm{K}_{2,1} \approx k \sum_{i=1}^c \norm{K_{i\cdot\cdot\cdot}}_2$\enspace,
    \item $\norm{W^\top}_{2,1} \approx \sqrt{c} \norm{W}_2$\enspace,
    \item $\max_{j\le \kappa} \norm{W_{j\cdot}}_2 \approx \Lip(W)\enspace,$
    \item $\frac{\norm{X}_2}{\sqrt n} \approx\max_{i\le n} \norm{x_i}_2 = \text{maximal norm of convolutional patches from the data}$ \enspace,
    \item $H_{n-1} \approx \log n$ \enspace.
\end{enumerate}

Furthermore, just as the single-layer bound in \cite{Ledent21a} depends on the maximal norm of a patch of the data, ours actually depends only on the maximal norm of particular slices of the data, which we here denote as $|X|_s$. In the special case of the stride being equal to the spatial dimensionality, these slices are over the channels at fixed local position (see \cref{rem:strides} and the last chain of inequalities in the proof of \cref{theorem:single_layer}). Thus, for the contribution of the first layer, we can use 

\begin{enumerate}[label=(\arabic*), start=6]
    \item $|X|_s \approx \frac{\norm{X}}{k}$ \enspace.
\end{enumerate}

Last, we empirically evaluate a maximum operator which appears in the quantity $\mathcal R$ in the two layer bound (Theorem 2) in \cite{Ledent21a}, i.e.,
\begin{enumerate}[label=(\arabic*), start=7]
    \item $\frac{1}{\max_{i\le n} \norm{\phi_K(x_i)}_2} \le \frac{\max_{j\le \kappa}\norm{W_{j\cdot}}_2}{\gamma}$ \enspace.
\end{enumerate}

With these simplifications, \textbf{our} bound (\ref{theorem:main_residual_network_covering_rademacher:eq_norms}) becomes
\begin{equation}
    \frac{48}{\gamma} \frac{\log n}{\sqrt n} \sqrt{\log(6 ck^2)}
    \left[
        \left(
            \frac{\Lip(W) \norm{K{-}K^{(0)}}_{2,1}}{k}
        \right)^{\frac{2}{3}}
        \hspace{-1ex}{+}
        \left(
            \Lip(\phi_K) \norm{(W{-}W^{(0)})^\top}_{2,1}
        \right)^{\frac{2}{3}}
    \right]^{\frac{3}{2}}
    \label{eq:twolayer:ours}
\end{equation}
and the bound from Theorem 2 in \cite{Ledent21a} becomes 
\begin{align}
    &\frac{768}{\gamma} \frac{\log n}{\sqrt n} \sqrt{n^2 \mathcal D}
    \left[
        \left(
            \frac{\Lip(W) \norm{K{-}K^{(0)}}_{2,1}}{k}
        \right)^{\frac{2}{3}}
        \hspace{-2ex}+
        \left(
            \frac
                {\Lip(\phi_K) \norm{(W{-}W^{(0)})^\top}_{2,1}}
                {\sqrt{\kappa}}
        \right)^{\frac{2}{3}}
    \right]^{\frac{3}{2}}
    \label{eq:twolayer:ledent}
    \\
    \nonumber
    &\text{with } 
    \mathcal D = 
    \max\left(
        \frac{\norm{K-K^{(0)}}_{2,1}}{k} \frac{\Lip(W)}{\Lip(\phi_K)}c\enspace,
        \frac{\norm{X}_2}{\sqrt n} \norm{W-W^{(0)}}_2 \frac{\Lip(K)}{\gamma} \kappa
    \right) \enspace.
\end{align}
As expected, ignoring constants and log terms, the bound from \cite{Ledent21a} is better by a factor of $\sqrt{\kappa}$ (square root of number of classes) in the summand corresponding to the last layer.

\textbf{Empirically}, we evaluate the bounds for networks of varying width $c \in \set{32, 1024, 8192}$ trained on CIFAR100. Here, we compute exact values and do not use the simplifications (1) - (7). As the models do not fit the training data, we use a margin parameter of $\gamma = 1$ for simplicity. Overall, the models performed rather poorly (as expected) with testing accuracies of 21\%, 29\%, 31\% and training accuracies 32\%, 99.8\%, 99.9\%.

\cref{table:twolayer_standard} lists the computed values of the bounds and how they distribute over the respective factors (weight \& data norms, logarithmic term, numerical constant, sample size dependency). Results are presented on a logarithmic scale with base 10.  Overall, we observe the following effects:

\begin{enumerate}
    \item Relatively, the contribution of the second layer in \cite{Ledent21a} is improved by a factor of 10. This is expected, as 10 is the square root of the number of classes in the CIFAR100 dataset.
    \item The wider the network, the more dominant the term corresponding to the first layer becomes. At width 32, the factor from weight and data norms of \cite{Ledent21a} is clearly superior. This is due to the improved class dependency. However, for a width of 1024, this effect is already negligible.
    \item For the wider networks ($c\in \set{1024, 8192}$), we improve over \cite{Ledent21a} by a factor of approximately $10^{1.5}\approx 30$. Ignoring numerical constants, we improve by a factor $10^{0.3} \approx 2$ , which is due to an improvement in the logarithmic term.
    \item For fixed width, the bounds and factors do not vary over the random initializations. For the models with widths 1024 and 8192, the standard deviation of the base 10 logarithms are $<0.005$, which corresponds to a geometric standard deviation of less than a factor $1.01$, \ie, 1\%.
\end{enumerate}

Last, we consider a network whose first layer has kernel size 3 and stride 1. Reduction of the spatial dimensionality is achieved by a subsequent max pooling layer of window size $32\times 32$. This setting favors \cite{Ledent21a} as this work can better account for the pooling layer. In our simplified bound of \cref{eq:twolayer:ours}, the factor $1/k$ in the first summand disappears because of the unit stride; their result improves due to the now smaller convolutional patches.
In this setting the contribution of the first layer's weight and data norms is clearly larger in our bound. Yet, our bound is still smaller, but only due to numerical constants.

\afterpage{%
    \clearpage
    \thispagestyle{empty}
    \begin{landscape}
        \centering 
    \savebox\Tbox{
    \begin{tabular}{l|cc|cc|cc}
        & \multicolumn{2}{c|}{Width  32} & \multicolumn{2}{c|}{Width  1024} & \multicolumn{2}{c}{Width  8192} \\
        & \hyperref[eq:twolayer:ours]{\textcolor{black}{\textbf{Ours}}}    & \hyperref[eq:twolayer:ledent]{\textcolor{black}{\textbf{Ledent et al.}}} &  \hyperref[eq:twolayer:ours]{\textcolor{black}{\textbf{Ours}}}  &  \hyperref[eq:twolayer:ledent]{\textcolor{black}{\textbf{Ledent et al.}}} & \hyperref[eq:twolayer:ours]{\textcolor{black}{\textbf{Ours}}}  & \hyperref[eq:twolayer:ledent]{\textcolor{black}{\textbf{Ledent et al.}}}    \\ \hline
        Bound& $\phantom{-} 5.669 \pm 0.009$& $\phantom{-} 6.704 \pm 0.009$& $\phantom{-} 6.538 \pm 0.001$& $\phantom{-} 7.973 \pm 0.001$& $\phantom{-} 7.111 \pm 0.001$& $\phantom{-} 8.657 \pm 0.001$\\ 
        Weight \ data norms& $\phantom{-} 4.760 \pm 0.009$& $\phantom{-} 4.294 \pm 0.009$& $\phantom{-} 5.575 \pm 0.001$& $\phantom{-} 5.544 \pm 0.001$& $\phantom{-} 6.121 \pm 0.001$& $\phantom{-} 6.208 \pm 0.001$\\ 
        $\rightarrow$ 1st layer& $\phantom{-} 4.255 \pm 0.012$& $\phantom{-} 4.202 \pm 0.009$& $\phantom{-} 5.356 \pm 0.001$& $\phantom{-} 5.507 \pm 0.001$& $\phantom{-} 6.036 \pm 0.001$& $\phantom{-} 6.195 \pm 0.001$\\ 
        $\rightarrow$ 2nd layer& $\phantom{-} 4.358 \pm 0.008$& $\phantom{-} 2.976 \pm 0.011$& $\phantom{-} 4.757 \pm 0.001$& $\phantom{-} 3.661 \pm 0.002$& $\phantom{-} 4.749 \pm 0.001$& $\phantom{-} 3.674 \pm 0.002$\\ 
        Logarithmic term& $\phantom{-} 0.543 \pm 0.000$& $\phantom{-} 0.839 \pm 0.000$& $\phantom{-} 0.597 \pm 0.000$& $\phantom{-} 0.858 \pm 0.000$& $\phantom{-} 0.624 \pm 0.000$& $\phantom{-} 0.879 \pm 0.000$\\ 
        Numerical constants& $\phantom{-} 1.681 \pm 0.000$& $\phantom{-} 2.885 \pm 0.000$& $\phantom{-} 1.681 \pm 0.000$& $\phantom{-} 2.885 \pm 0.000$& $\phantom{-} 1.681 \pm 0.000$& $\phantom{-} 2.885 \pm 0.000$\\ 
        Sample size dependency& $ -1.315 \pm 0.000$& $ -1.315 \pm 0.000$& $ -1.315 \pm 0.000$& $ -1.315 \pm 0.000$& $ -1.315 \pm 0.000$& $ -1.315 \pm 0.000$
    \end{tabular}}
    \parbox{\wd\Tbox}{
        \captionof{table}{Numerical comparison of bounds (\hyperref[eq:twolayer:ours]{Ours} and \hyperref[eq:twolayer:ledent]{Ledent et al.}) and relevant factors computed for two-layer networks trained on CIFAR100 data (5 networks, randomly initialized, per width). 
        Reported  are the mean $\pm$ standard deviation of the base 10 logarithms, \ie, the logarithms of the geometric mean and geometric standard deviation. The top table corresponds to an architecture without pooling layers and stride $32$;  the bottom table corresponds to an architecture with a pooling layer of window size $32\times 32$.
        \label{table:twolayer_standard}}
        }      
    \usebox\Tbox
    \vspace{0.5cm}
    \parbox{\wd\Tbox}{
    \begin{tabular}{ccl|cc|}
    &&& \multicolumn{2}{c|}{Width  1024 (pooling)} \\
    &&& \hyperref[eq:twolayer:ours]{\textcolor{black}{\textbf{Ours}}}    & \hyperref[eq:twolayer:ledent]{\textcolor{black}{\textbf{Ledent et al.}}} \\ \cline{3-5}
    \phantom{$\phantom{-} 7.850 \pm 0.001$}&\phantom{$\phantom{-} 7.850 \pm 0.001$}&Bound& $\phantom{-} 7.009 \pm 0.002$& $\phantom{-} 7.235 \pm 0.001$\\ 
   &&Weight \ data norms& $\phantom{-} 6.010 \pm 0.002$& $\phantom{-} 4.803 \pm 0.001$\\ 
   &&$\rightarrow$ 1st layer& $\phantom{-} 5.823 \pm 0.002$& $\phantom{-} 4.118 \pm 0.003$\\ 
   &&$\rightarrow$ 2nd layer& $\phantom{-} 5.408 \pm 0.002$& $\phantom{-} 4.523 \pm 0.001$\\ 
   &&Logarithmic term& $\phantom{-} 0.544 \pm 0.000$& $\phantom{-} 0.862 \pm 0.000$\\ 
   &&Numerical constants& $\phantom{-} 1.681 \pm 0.000$& $\phantom{-} 2.885 \pm 0.000$\\ 
   &&Sample size dependency& $ -1.315 \pm 0.000$& $ -1.315 \pm 0.000$\\ 
    \end{tabular}
    }
    \end{landscape}
    \clearpage
}
\clearpage

\label{appendix:section:comparison}

\section{Additional Experiments}
\label{appendix:section:additionalexperiments}
\subsection{Excess capacity in non-residual networks}

In addition to the experiments presented in \cref{section:empirical_evaluation} of the main manuscript, we performed the same excess capacity experiments on a non-residual convolutional network.

\textbf{Architecture.}
Essentially, we rely on the same 11-layer convolutional network with ReLU activations as described in \cref{appendix:subsection:numerical_comparison_details}, only that we substitute each convolutional layer with stride 2 by a convolutional layer with stride 1 followed by a max-pooling layer with kernel size 3 and stride 2. This is done so that we can enforce the constraints on the capacity-driving quantities via the approach described in \cref{appendix:subsection:projection_method}.
Consistent with our ResNet18 experiments, the linear classifier is fixed with weights set to the vertices of a $(\text{\#classes}-1)$ unit simplex in the output space of the network and kernel sizes of the convolutional layers are not larger than the width of their input.

\textbf{Datasets \& Training.}
Experiments are performed on the CIFAR10 and CIFAR100 benchmark datasets \cite{Krizhevsky09a}. We minimize the cross-entropy loss using SGD with momentum (0.9) for 200 epochs with batch size 256 and decay the initial learning rate (of 3e-2) with a cosine annealing scheduler after each epoch. \emph{No data augmentation is used}. For projecting onto the constraint sets, we perform one alternating projection step every 10th SGD update. After the final SGD update, we additionally do 15 alternating projection steps to ensure that the trained model is within the capacity-constrained class.

\textbf{Results.}
We observe similar phenomena as for the residual (ResNet18) network studied in
\cref{section:empirical_evaluation}. When comparing models trained with and without constraint, we
see a substantial amount of excess capacity, and this excess capacity increases with task difficulty. In fact, compared to our results with the residual network architecture,
this effect is even more pronounced as the capacity-driving quantities in the unconstrained setting
are surprisingly large. For instance, the median Lipschitz constant of the model trained on CIFAR100
is 11.53 (cf. \cref{table:app_numericalcapacity}), compared to 2.17 for the ResNet18 results in
\cref{table:numericalcapacity}. 
Notably, the capacity-driving quantities can be drastically reduced without a loss of testing accuracy and the constraints can be chosen equally across datasets. This is similar to \cref{section:empirical_evaluation} where constraints are not precisely equal, but within a small range. We also observe another manifestation of task difficulty: tightening both constraints beyond the identified operating point leads to a more rapid deterioration of the testing error as task difficulty increases (\cref{fig:app_detailedresults}, middle).

Different to \cref{section:empirical_evaluation}, the constrained models  (almost) fit the training data. However, under slightly stronger
constraints, we can still find models with testing accuracy comparable (but slightly worse) to the
unconstrained setting, but with noticeably less generalization gap (\cref{fig:app_detailedresults},
bottom). Again, this is primarily due to leaving the zero-training-error regime. We suspect that the
constraints could be much stronger, but enforcing the constraints appears to more heavily influence
optimization for networks without skip connections.
In this context, it is also worth pointing out that the constraints are quite strong for the non-residual network (proportionally much stronger than for the ResNet18 model in \cref{section:empirical_evaluation}). 
During training, we projected after every 10th SGD step, which was actually not enough to enforce the constraints throughout the whole training procedure. Only towards the end of training, when the learning rate is already small, do the constraints become satisfied. Increasing the projection frequency might thus allow for even stronger constraints.

\begin{table}[h!]
    \caption{Assessment of the capacity-driving quantities for the \textbf{non-residual} 11-layer convolutional network of this section. We list the \emph{median} over the Lipschitz constants (\textbf{Lip.}) and the ($2,1$) group norm distances (\textbf{Dist.}) across all layers. \textbf{Err.} denotes the training/testing error, \textbf{Capacity} denotes the measures (\ref{theorem:main_residual_network_covering_rademacher:eq_norms}, \ref{theorem:main_residual_network_covering_rademacher:eq_params})
    from \cref{paper:theorem:rademacher_complexity} (adapted to the non-residual setting) and \textbf{Gap} the empirical generalization gap. The top part lists results in the unconstrained regime (see \raisebox{-.48ex}{\FilledDiamondshape}  in \cref{fig:detailedresults}), the bottom part lists results at the operating point of the \underline{most restrictive} constraint combination where the testing error is on a par with the unconstrained case. 
    \textbf{Mar.} denotes the margin parameter $\gamma$ used for computing the capacity measures, which we choose such that the unconstrained and constrained models have the same ramp loss value.
    \label{table:app_numericalcapacity}}
    \begin{small}
    \begin{center}
        \setlength{\tabcolsep}{8pt}
    \begin{tabular}{rccccccc}
        \toprule
                & \textbf{Lip.} 
                & \textbf{Dist.} 
                & \textbf{Mar.}
                & \textbf{Err. (Tst)} 
                & \textbf{Err. (Trn)}
                & \textbf{Capacity} (\ref{theorem:main_residual_network_covering_rademacher:eq_norms}, \ref{theorem:main_residual_network_covering_rademacher:eq_params})
                & \textbf{Gap} \\
        \midrule
        CIFAR10         
            & $4.66$ 
            & $370.0$  
            & $16.4$
            & $0.17$
            & $0.00$
            & \textcolor{uniblue}{1.2$\cdot$10$^{12}$} / \textcolor{unired}{8.1$\cdot$10$^2$}  & $0.17$ 
            \\ 
        CIFAR100 
            & $11.53$ 
            & $854.0$ 
            & $52.1$
            & $0.47$
            & $0.00$
            & \textcolor{uniblue}{1.0$\cdot$10$^{16}$} / \textcolor{unired}{9.2$\cdot$10$^2$} & $0.47$
            \\
        \midrule
        \midrule
        CIFAR10 
            & $\underline{1.80}$ 
            & $\underline{200.0}$  
            & $10.0$
            & $0.17$ 
            & $0.00$
            & \textcolor{uniblue}{6.6$\cdot$10$^8$} / \textcolor{unired}{6.9$\cdot$10$^2$} & $0.17$
            \\
        CIFAR100 
            & $\underline{1.80}$ 
            & $\underline{200.0}$  
            & $10.0$
            & $0.47$
            & $0.03$
            & \textcolor{uniblue}{6.6$\cdot$10$^8$} / \textcolor{unired}{6.9$\cdot$10$^2$}  
            & $0.43$ 
            \\
        \bottomrule
    \end{tabular}
    \end{center}
\end{small}
\vspace{-0.1cm}
\end{table}

\begin{figure*}[t!]
    \begin{center}
        \includegraphics[scale=0.48]{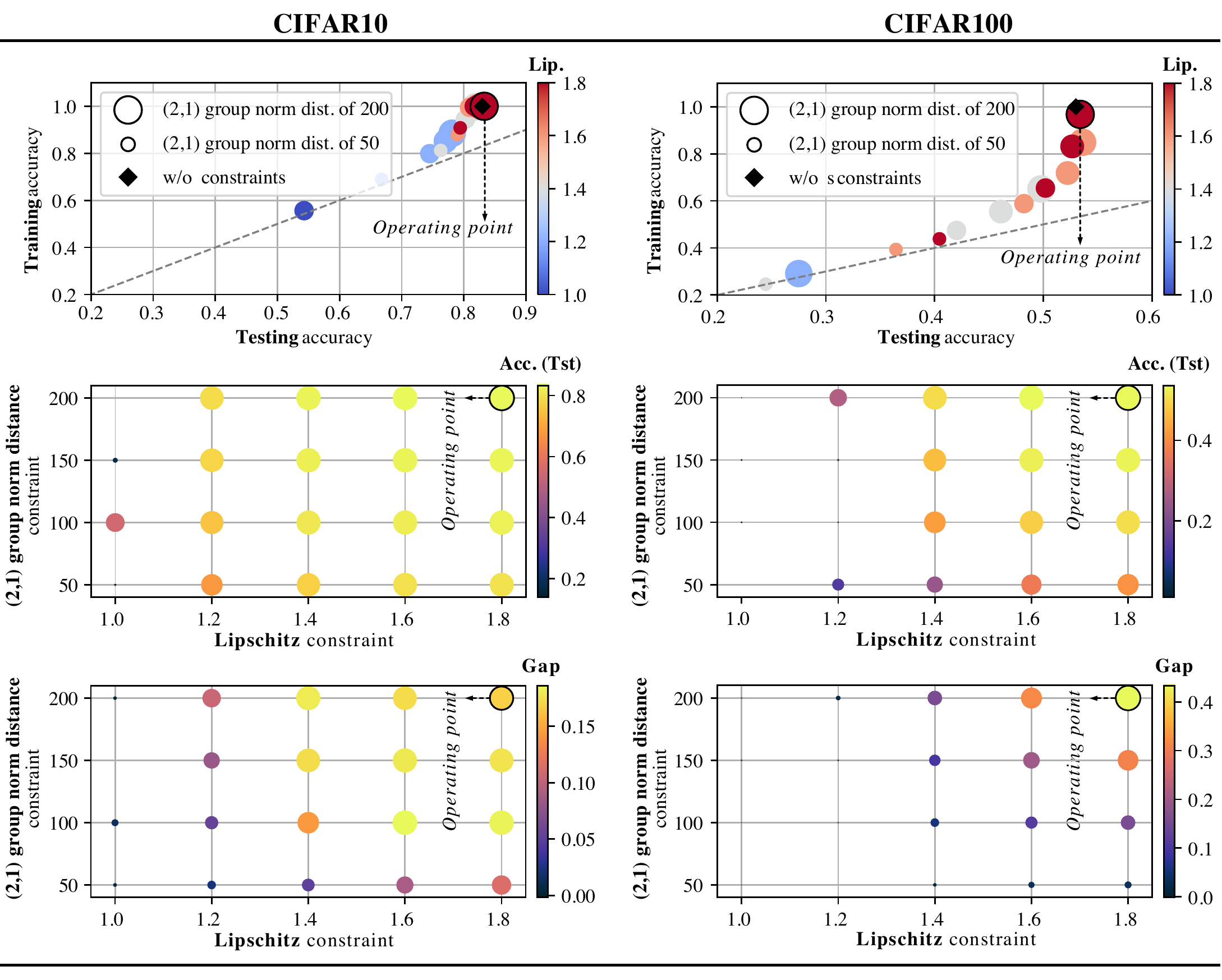}
    \end{center}
    \caption{Fine-grained analysis of training/testing accuracy in relation to the Lipschitz constraint and the ($2,1$) group norm distance to initialization constraint for the 11-layer \textbf{non-residual} convolutional network of this section. We see that testing accuracy can be retained (relative to \raisebox{-.48ex}{\FilledDiamondshape}) for a range of fairly restrictive constraints (\emph{top row}), compared to the unconstrained regime (cf. \textbf{Lip./Dist.} columns in the top part of \cref{table:app_numericalcapacity}). However, this range noticeably narrows with increasing task difficulty (\emph{middle row}). \emph{Best-viewed in color.}
    \label{fig:app_detailedresults}}
\end{figure*}

\subsection{Projection method}
\label{appendix:subsection:projection_method}
A key aspect of the experiments in \cref{section:empirical_evaluation} is to obtain, for each pair of Lipschitz constant and ($2,1$) group norm constraints, a model with testing accuracy as high as possible. The quality of such a model depends, to a large extent, on the way the constraints are enforced. This section specifies the projection method used for the experiments and provides additional background information. 

As mentioned in \cref{section:empirical_evaluation}, we utilize \emph{orthogonal projections}.
Given $x_0 \in \mathbb R^d$ and a nonempty closed convex set $A\subset \mathbb R^d$, the orthogonal projection of $x_0$ onto $A$ is defined as the unique
\begin{equation}
    x_{\text{orth}} = \argmin_{x \in A} \norm{x-x_0}_{2}
    \enspace.
    \label{eqn:appendix:orthoprojection}
\end{equation}

Orthogonal projections have several beneficial properties. First, if $f\colon\mathbb R^d\to \mathbb
R$ is a strictly convex function, then, for appropriately chosen step sizes, projected gradient
descent (\ie, gradient descent with a subsequent orthogonal projection onto $A$ after each step)
converges to the minimizer $\argmin_{x \in A} f(x)$~\cite{Bertsekas99a}. Second, for a tuple
$(\mathcal C_1, \dots \mathcal C_N)$ of closed convex sets $\mathcal C_j\subset \mathbb R^d$ with
orthogonal projections $P_{\mathcal C_j}$, alternating orthogonal projections, i.e., the sequence
$x_{i+1} = P_{\mathcal C_N} \circ \dots \circ P_{\mathcal C_1} (x_i)$ converges \cite{Bauschke96a}
to a point in the intersection $\mathcal C_1 \cap \dots \cap \mathcal C_N$ (if it
is non-empty).
Notably, there are variants of alternating orthogonal projections, e.g., Dykstra's algorithm \cite{Dykstra83a}, which converge to the orthogonal projection $P_{\bigcap_i C_i}$ onto the intersection $\bigcap_i C_i$.
However, there is a key disadvantage of orthogonal projections. Being defined by the optimization problem $x_{\text{orth}} = \argmin_{x \in A} \norm{x-x_0}_{2}$ of \cref{eqn:appendix:orthoprojection}, they often can only be computed numerically and might require a large compute budget.

In \cref{section:empirical_evaluation}, the convex set is $\mathcal C = \set{K \in \mathbb{R}^{c_\text{\it in} \times c_\text{\it out} \times k_h \times k_w}:~ \lVert K-K^0 \rVert_{2,1}\le b,~\Lip(\phi_K) \leq s }$. The orthogonal projection onto $\mathcal C$ is unknown, but the alternating orthogonal projection onto the sets 
\begin{equation}
    \begin{split}
        \mathcal{C}_1 & = \{K \in \mathbb R^{c_\text{\it in} \times c_\text{\it out} \times h \times w}: \lVert K-K^0 \rVert_{2,1} \leq b\}\,,\\
        \mathcal{C}_2 & = \{K \in \mathbb R^{c_\text{\it in} \times c_\text{\it out} \times h \times w}: \Lip(\phi_K) \leq s\}\,,\\
        \mathcal{C}_3 & = \{K \in \mathbb R^{c_\text{\it in} \times c_\text{\it out} \times h \times w}: K_{ijkl} = 0 \text{ for } k>k_h, j>k_w\}
        \enspace, 
    \end{split}
\end{equation}
still defines a projection onto $\mathcal C\subset \mathbb R^{c_\text{\it in} \times c_\text{\it out} \times k_h \times k_w} $ considered as subset of $\mathbb R^{c_\text{\it in} \times c_\text{\it out} \times h \times w}$. Importantly, all three orthogonal projections are known. The projection onto $\mathcal C_1$ is due to \cite{Liu09a}.
The projection onto $\mathcal C_2$ requires a singular value decomposition of $M_K$, \ie, the ${hw c_\text{\it in} \times hw c_\text{\it out}}$ matrix corresponding to the linear map $\phi_K$. As this matrix can be quite large, this is infeasible in practice. However, \cite{Sedghi19a} show that for strides 1\footnote{Extensions to strides >1 are not straightforward but seem possible.}, due to the particular structure of convolutions, it suffices to compute the singular value decomposition of $hw$ matrices of size  $c_\text{\it in} \times c_\text{\it out}$. Still, the computation of the projection onto $\mathcal C_2$ is the bottleneck of the training procedure in \cref{section:empirical_evaluation}. The orthogonal projection onto $\mathcal C_3$, which is a plane, is realized by setting the corresponding coordinates to zero.

Another approach is to use \emph{radial projections}.
Given $x_0 \in \mathbb R^d$ and a norm $\norm{\cdot}$, the {radial projection} of $x_0$ onto the $\norm{\cdot}$-ball $B(r,y,\norm{\cdot})$ of radius $r$ centered at $y$ , is defined as
\begin{equation}
    x_{\text{rad}} = x_0 - \left(1 - \frac{r}{\norm{x_0-y}}\right) (x_0-y)\,  \mathbbm 1_{\norm{x_0-y}>r}
    \enspace.
\end{equation}
Such a projection is called radial, as it translates the point $x_0$ in radial direction \wrt the ball $B(r,y,\norm{\cdot})$ such that it lands on the boundary (if it is not already in $B(r,y,\norm{\cdot})$).

Notably, $\mathcal C_1$ and $\mathcal C_2$ are both balls, one with respect to the ($2,1$) group norm, the other with respect to the spectral norm of the matrix $M_K$ associated to $K$. Importantly, the spectral norm can be easily estimated by the power method for convolutional layers \cite{Gouk21b,Li19a}, so radial projections have far less computational overhead. However, alternating radial projections are not guaranteed\footnote{empirically, we still observed convergence} to converge to a point in the intersection $\mathcal C$. Furthermore, by definition, we have $\norm{x-x_{\text{rad}}}_2 \ge \norm{x-x_{\text{orth}}}_2$, so we expect radial projections to yield inferior results (\wrt the constraint strengths that can be enforced).

We evaluated three different approaches to obtain models with constrained capacity: (1) training with a variant of projected gradient descent, where we perform one alternating orthogonal projection step after every 15th SGD update;  (2) performing one alternating radial projection step after every SGD update\footnote{The increased projection frequency is possible because of the reduced computational overhead of radial projections compared to orthogonal projections.}; (3) orthogonal projection onto $\mathcal C$ of an already trained unconstrained model, using 100 iterations of Dykstra's algorithm.
Our findings are summarized in \cref{fig:projections}. We see that, alternating orthogonal projections during training allow for the strongest constraints, without a drop in the testing accuracy. This is expected, because 
they divert the weights less from the training trajectory than radial projections. By the same logic it is obvious that projecting only at the \emph{end} of training is not feasible, as the weights of the trained network are already too far away from the constraint set.
We conclude that alternating orthogonal projections allow for the best estimate of excess capacity.

\begin{figure*}[h!]
    \begin{center}
        \includegraphics[width=\textwidth]{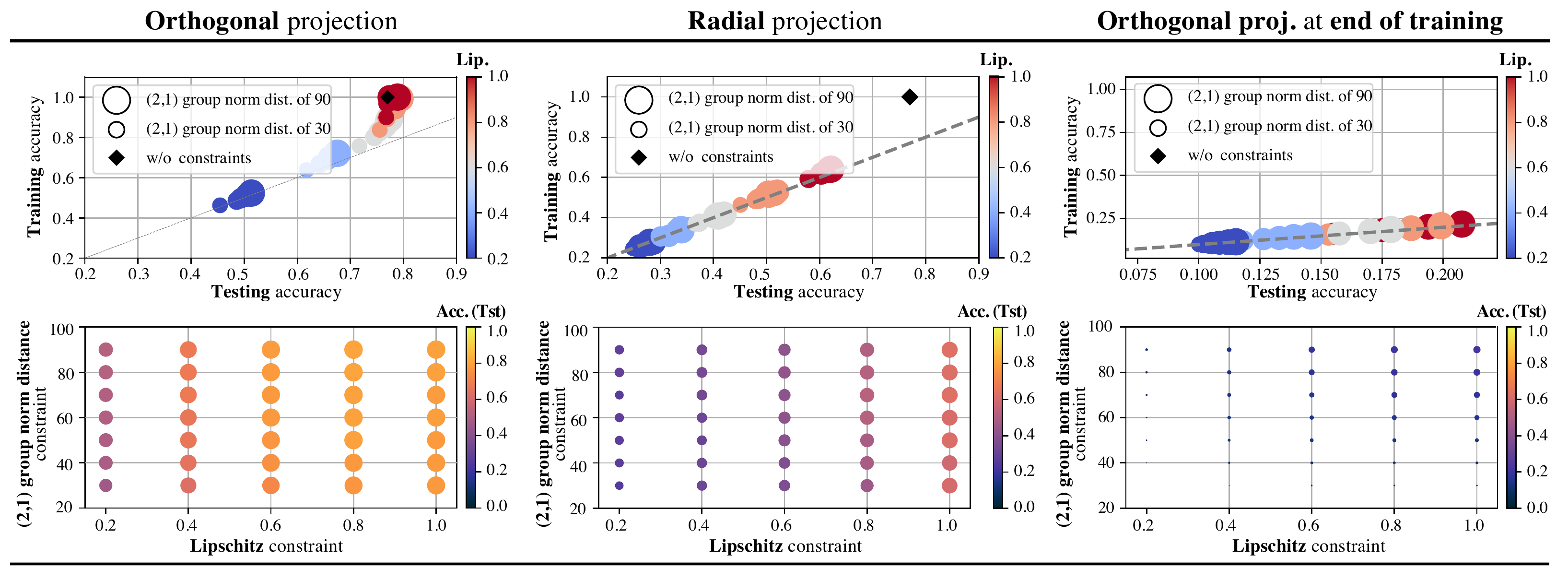}
    \end{center}
    \vspace{-0.2cm}
    \caption{Comparison of different \emph{projection techniques} for ResNet18 models trained on CIFAR10. \emph{Best-viewed in color.}
    \label{fig:projections}}
\end{figure*}

\vspace{-0.2cm}

\subsection{Comparison between constrained and unconstrained models beyond testing error}

So far, we have analyzed to which extent the weights of neural networks can be constrained without a loss of testing accuracy. In particular, we have identified the maximal constraint strength (\ie, the operating point) such that the testing error of the constrained models is on a par with the one of unconstrained models. However, this does not imply that constrained models and unconstrained ones can be used interchangeably, as they might differ in other aspects. In this section, we will study how pronounced such differences are with respect to (i) biases to particular classes, (ii) susceptibility to adversarial attacks, and (iii) compressibility in terms of the number of weights (via weight pruning).

For the evaluation, we use \textbf{25 unconstrained} and \textbf{25 constrained} models trained on CIFAR100, with the same architecture and optimization hyperparameters as listed in the main text, i.e., \cref{section:empirical_evaluation}. As constraint strength, we choose a layer-wise Lipschitz constant of 0.8 and a distance constraint of 70.

\subsubsection{Biases to particular classes}
For each of the 50 models, we counted how often each class is predicted on the testing data, which consists of 100 images per class. If there are no biases to particular classes, the counts should be distributed around this value with preferably small spread. \cref{fig:class_freq} visualizes the results. We immediately see, that for the unconstrained models, the predictions per class are more uniformly distributed with the average class close to 100 and small standard deviations. In contrast, for the constrained models, the standard deviations are much larger. Most striking is the peak at class index 21, indicating that the constrained models are indeed biased to this particular class (chimpanzee). In fact, only one of the 25 models predicted this class less than 100 times. Notably, the unconstrained models are also biased towards this class, as the error region of (mean $\pm$ standard deviation) does not contain 100.
Overall, there are more favored/disfavored classes for the unconstrained models (32 \emph{vs.} only 5 for the constrained models), but for the constrained models, the biases are more pronounced.

\begin{figure}
    \includegraphics[width=\textwidth]{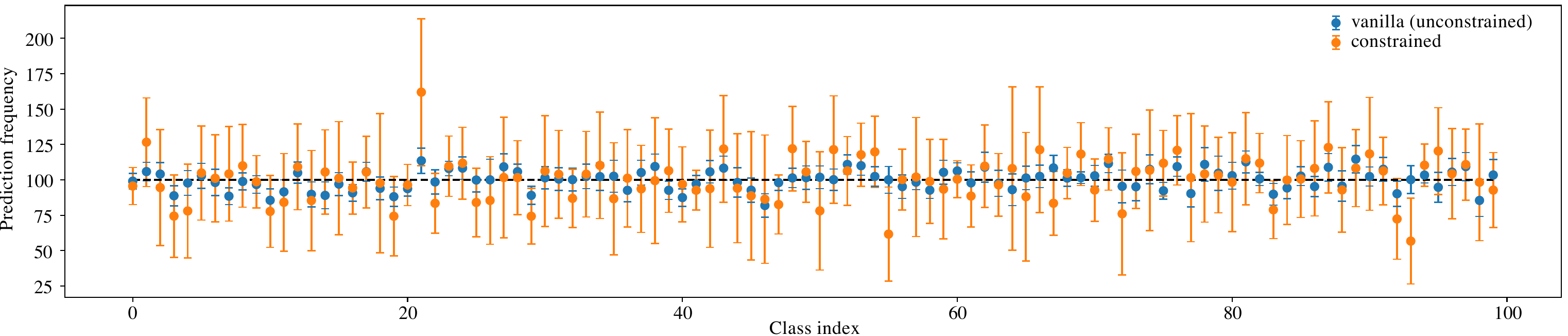}
    \caption{Prediction frequencies per class index on the testing portion of CIFAR100, averaged over the predictions of 25 constrained and 25 unconstrained (vanilla) ResNet18 models. \emph{Best-viewed in color.} \label{fig:class_freq}}
\end{figure}

\subsubsection{Susceptibility to adversarial attacks}
We tested several adversarial attacks (FGSM, FGM, L2PGD, LinfPGD, L2DeepFool,
L2AdditiveGaussianNoise, L2AdditiveUniformNoise, L2ContrastReduction, GaussianBlur) using the
\texttt{foolbox} \cite{rauber2017foolboxnative} Python package. To compare constrained \emph{vs.}
unconstrained models, we extract 1024 images from the testing data, which are correctly classified
by all 50 models (25 constrained models, 25 unconstrained models) on which the attacks are
evaluated. The fraction of correctly classified images for increasing attack strengths is visualized
in \cref{fig:adv_attacks}. As can be seen from the figure, constrained models are less susceptible
to the gradient-based attacks FGSM, FGM, L2PGD, and LinfPGD. For L2DeepFool, contrast reduction (L2ContrastReductionAttack)
and
Gaussian blur (GaussianBlurAttack), constrained and unconstrained models are equally affected. To our surprise, the
constrained models more vulnerable to additive Gaussian and uniform noise.

\begin{figure}
    \includegraphics[]{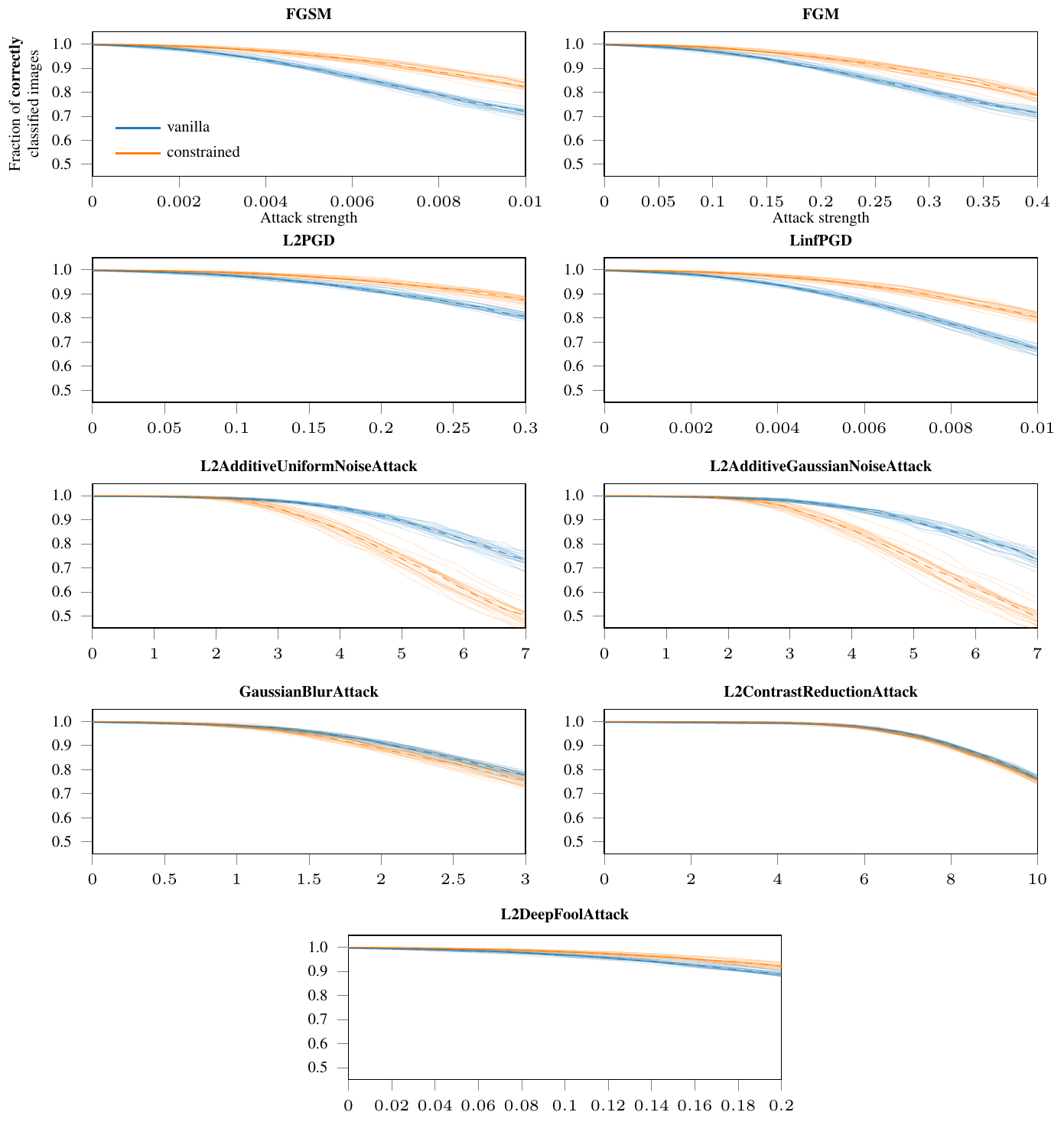}
    \caption{Results of running several adversarial attacks against constrained and unconstrained ResNet18 models on CIFAR100 across varying attack strengths ($x$-axis). Each solid line represents one trained model, the dashed lines represent the medians. The $y$-axis shows the fraction of correctly classified images that remain correct as attack strength increases. \emph{Best-viewed in color.}
    \label{fig:adv_attacks}}
\end{figure}

\subsubsection{Compressibility via weight pruning}
We measure compressibility for global (unstructured) $l_1$ weight pruning. This simple pruning technique identifies a predefined fraction of a model's parameters (\ie, elements of the weight tensors/matrices) and sets them to zero. \emph{No subsequent fine-tuning steps were performed.}
\begin{wrapfigure}{r}{0.45\textwidth}
    \vskip-0.2cm
    \includegraphics[width=0.45\textwidth]{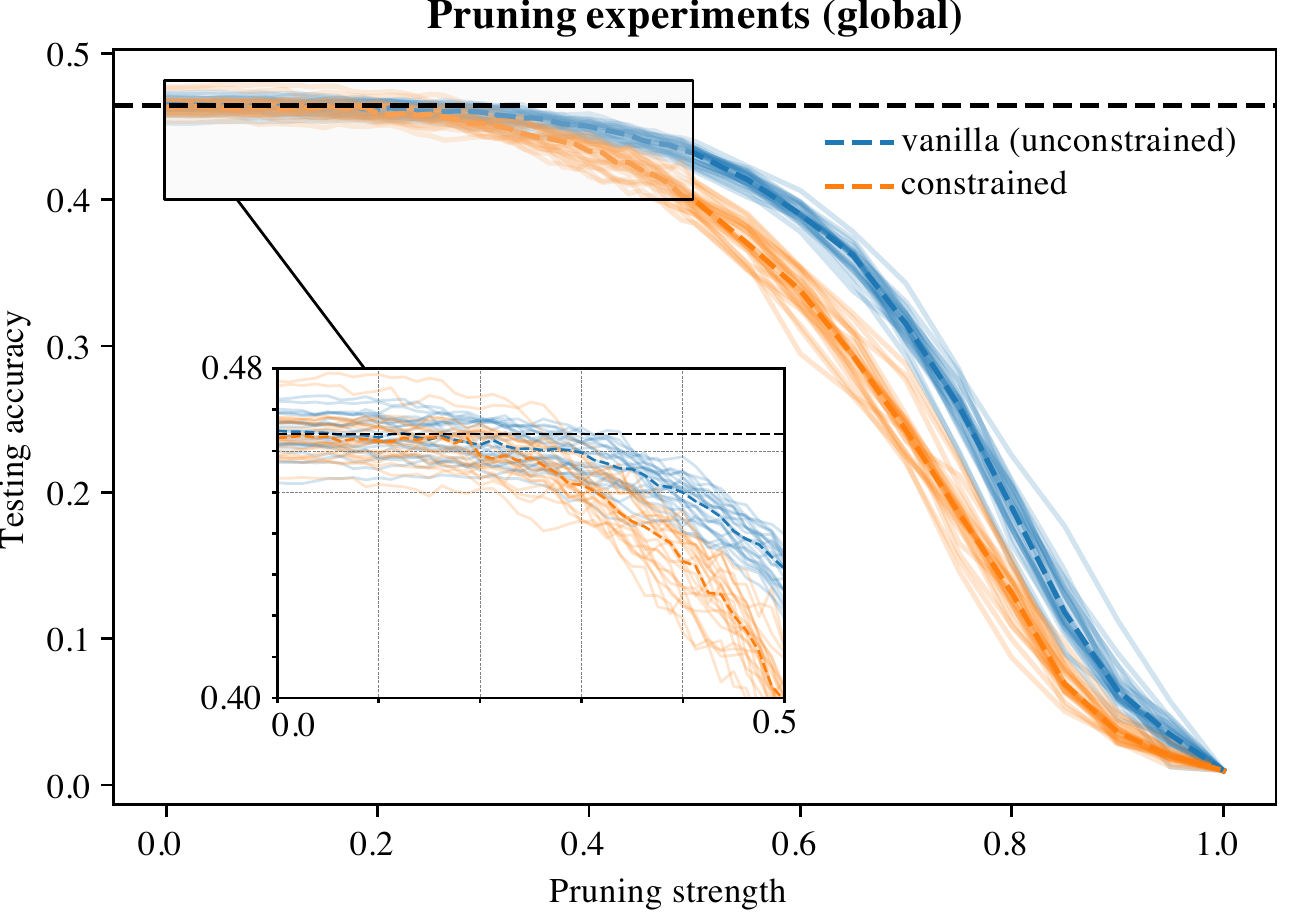}
    \caption{Results of global (unstructured) $l_1$ weight pruning on 25 constrained and 25 unconstrained ResNet18 models. Each solid line represents one trained model. \emph{Best-viewed in color.}\label{fig:pruning}}
    \vskip-0.15cm
  \end{wrapfigure}
As illustrated in \cref{fig:pruning}, unconstrained models are less affected by this pruning technique than constrained models. Yet, constrained models can still be pruned, and, in the range of pruning strengths 0 to 0.15, the median testing accuracy drops only marginally for both model types. Only at larger pruning strengths do differences between the model types become visible. If we consider a median testing accuracy of 46\%, resp. 45\%, to be acceptable, then this threshold allows for pruning 20\%, resp. 30\%, of the weights of constrained models and 30\%, resp 40\%, of the unconstrained models. Of course, the pruned models might not satisfy the constraints anymore. In particular, the distance constraint gets violated, as with increasing constraint strength, the distance to initialization converges to the norm of the initialization. Typically, the latter is already larger than the distance constraint (of 70).
Thus, when combining norm constraints with weight pruning, constraining the distance with respect to the zero weight appears to be more sensible. This can be done, as the distance constraint in \cref{paper:theorem:rademacher_complexity} is not required to be with respect to the initialization; it can be chosen relative to any reference weight as long as the reference weight does not depend on the training data.

\subsection{Hardware resources}
\label{appendix:subsection:hardwareresources}

All experiments were run on an Ubuntu Linux 20.04.4 LTS system with 128 GB of main memory, an Intel\textregistered Core™ i9-10980XE processor and two 
NVIDIA GeForce RTX 3090 graphics cards (24 GB memory, CUDA 11.4, driver version 470.129.06). All models are implemented in Pytorch (v1.10).

\clearpage
\section{Proofs}
In the following sections, we present proofs for the theoretical results listed in the manuscript as well as additional supplementary results.

\label{appendix:section:proofs}
\subsection{Preliminaries}
In terms of notation, we consider spaces $\mathcal F$ of functions $f\colon (\mathcal X, \norm{\cdot}_{\mathcal X}) \to (\mathcal Y, \norm{\cdot}_{\mathcal Y})$ between normed spaces. We write
 $$
 \Lip(f) = \sup_{x_1, x_2 \in \mathcal X} \frac{\norm{f(x_1)- f(x_2)}_{\mathcal Y}}{\norm{x_1-x_2}_{\mathcal X}}$$ 
 and 
 $$\Lip(\mathcal F) = \sup_{f\in \mathcal F} \Lip(f)$$
for the Lipschitz constant of $f$ and the supremal Lipschitz constant of $\mathcal{F}$, respectively. For the remainder of the section, all function spaces $\mathcal F$ will have bounded Lipschitz constants $\Lip(\mathcal F) < \infty$. Such function spaces are vector spaces, where addition and scalar multiplication are defined pointwise via the vector space structure on $\mathcal Y$, i.e.,
$(f+g)\colon x\mapsto f(x) + g(x)$ and $(\alpha f)\colon x \mapsto \alpha f(x)$.

We equip $\mathcal F$ with a \emph{data-dependent norm}, defined below.

\begin{definition}
    Let $\mathcal F$ be a space of functions $f: (\mathcal X, \norm{\cdot}_{\mathcal X}) \to (\mathcal Y, \norm{\cdot}_{\mathcal Y})$ between normed spaces.
    The \emph{data-dependent norm}, denoted as $\norm{\cdot}_X$, on 
    $\mathcal F$ restricted to $X = \tuple{x_1\dots,x_n}\in \mathcal X^n$, i.e., $\mathcal F|_X$, is defined as
    \begin{equation}
        \norm{f}_X
        =
        \sqrt{ \sum_{i=1}^n \norm{f(x_i)}_{\mathcal Y}^2} 
    \enspace.
    \end{equation}
    \label{appendix:def:data_dependent_norm}
\end{definition}

\begin{remark}
    This norm is a seminorm on $\mathcal F$ and a norm on $\mathcal F|_X = \{f|_{\set{x_1, \dots, x_n}}:~ f\in \mathcal F\}$.
    If $f \in \mathcal F$ has norm $\norm{f}_X = 0$, then it holds that $\forall i: f(x_i) = 0$.
    Thus, $f$ is the zero element in $\mathcal F|_X$, but not necessarily the zero element in $\mathcal F$, as there might exist $v \in \mathcal X\setminus \set{x_1,\dots, x_n}$ with $f(v) \neq 0$.
\end{remark}
Two fundamental properties concerning compositions of functions are worth pointing out:
\begin{align}
    \label{eq:empirical_norm_properties}
    \norm{f\circ g}_X = \norm{f}_{g(X)}
    \qquad \text{and} \qquad
    \norm{f\circ g - f\circ h}_X \le \Lip(f) \norm{g-h}_{X}
    \enspace.
\end{align}

Further, we recall the definition of \emph{covering numbers}.
\begin{definition}
    Let $(\mathcal H, \norm{\cdot})$ be a normed space, $S\subset \mathcal H$ and $\epsilon>0$.
    We call any subset $U\subset S$ an internal $\epsilon$-cover of $S$ if for every $s \in S$ there exists $u\in U$ such that $\norm{s-u}\le \epsilon$. The internal covering number $\mathcal N^{\text{int}}(\mathcal H, \epsilon, \norm{\cdot})$ is the cardinality of the smallest internal $\epsilon$-cover of $S$, i.e., 
    \begin{equation}
        \mathcal N^{\text{int}}(S, \epsilon, \norm{\cdot})
        = \min\left(\set{|U|:~ \text{U is an internal $\epsilon$-cover of $S$}}\right)
        \enspace
    \end{equation}

    Dropping the requirement $U \subset S$, we analogously define external $\epsilon$-covers $U\subset \mathcal H$ and external covering numbers $\mathcal N^{\text{ext}}(S, \epsilon, \norm{\cdot})$. 
\end{definition}

In the manuscript, if not stated otherwise, covers will \emph{always be internal} and covering numbers will be denoted as $\mathcal N = \mathcal N^{\text{int}}$.

Internal and external covering numbers are related via the following chain of inequalities:
\begin{equation}
    \mathcal N^{\text{ext}}(S, \epsilon, \norm{\cdot}) 
    \le
    \mathcal N^{\text{int}}(S, \epsilon, \norm{\cdot})
    \le
    \mathcal N^{\text{ext}}(S, \epsilon/2, \norm{\cdot})
    \enspace.
    \label{eq:covers_int_ext}
\end{equation}
The first inequality follows directly from the definition of covering numbers; the second one follows from the triangle inequality.
Furthermore, for any subset $T\subset S$, it holds that 
\begin{equation}
    \mathcal N^{\text{ext}}(T, \epsilon, \norm{\cdot}) 
    \le
    \mathcal N^{\text{ext}}(S, \epsilon, \norm{\cdot})
    \enspace.
    \label{eq:covers:subset}
\end{equation}
Notably, this is not true for internal coverings. For example, the unit ball in $\mathbb R^d$ defines an internal cover of itself, whereas an annulus cannot be covered internally with only one ball of radius $1$.

\subsection{Single-layer covering number bounds}
\label{appendix:subsection:singlelayerbounds}

This section contains the covering number bounds for single convolutional layers.
For simplicity, we will first present the special case of the single-layer covering number bound for \textbf{1D convolutions} with one channel, stride 1, odd kernel size and input size preserving (zero) padding.
The proof of the general case then follows along the same line of arguments, but is more tedious, due to the additional notation and subindices.

Let $X=(x_1,\dots, x_n) \in \mathcal X^n$ with $x_i\in \mathcal X = \mathbb R^h$. Further, let $K \in \mathbb R^k$, with $k$ odd, be a convolutional kernel and $\phi_K: \mathbb R^h \to \mathbb R^h$ the corresponding convolutional map, which is defined coordinate-wise as 
\begin{equation}
    \label{definition:1dconv}
    [\phi_K (x)]_{i}
    = 
    \sum_{\alpha=-\frac{k-1}{2}}^{\frac{k-1}{2}}
    K_{\alpha} \, x_{i+\alpha} \mathbbm{1}_{i+\alpha \in [1,h]}
    \enspace.
\end{equation}
For the norm of the data $X$, we write
\begin{equation}
\norm{X} = \sqrt{\sum_{i=1}^n \norm{x_i}^2}\enspace.
\end{equation}
Our covering number bounds hinge on the seminal Maurey sparsification lemma. We state one variant, see \cite[Lemma A.6]{Bartlett17a}.

\begin{lemma}[Maurey sparsification lemma]
    Fix a Hilbert space $\mathcal H$ with norm $\norm{\cdot}$. Let $U \in \mathcal H$ be given with representation $U = \sum_{i=1}^d \alpha_i V_i$ where $V_i \in \mathcal H$, $\alpha \in \mathbb R^d_{\ge 0} \setminus \set{0}$ and $\sum_i |\alpha_i| \le 1$. Then, for any positive integer $m$, there exists a choice of non-negative integers $(m_1, \dots, m_d)$ with $\sum_i m_i = m$, such that 
    \begin{equation}
        \norm{
            U - \frac{1}{m} \sum_{i=1}^d m_i V_i
        }^2
        \le \frac{1}{m}  \max_{i=1,\dots,d} \norm{V_i}^2
        \enspace.
    \end{equation}
    \label{appendix:maurey_lemma}
\end{lemma}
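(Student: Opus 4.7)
The plan is to prove this via Maurey's classical probabilistic argument: reinterpret the expansion $U = \sum_i \alpha_i V_i$ as the expectation of a random element of $\mathcal{H}$, show that an iid empirical average of $m$ such draws approximates $U$ in Hilbert norm well enough \emph{in expectation}, and then extract a deterministic realization via the probabilistic method. The integer counts $m_i$ will simply record how often each $V_i$ is drawn.

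First I would reduce to the genuine probability-vector case: if $\sum_i \alpha_i < 1$, I append an auxiliary atom $V_0 := 0 \in \mathcal{H}$ with weight $\alpha_0 := 1 - \sum_i \alpha_i \geq 0$, which leaves $U = \sum_{i=0}^d \alpha_i V_i$ unchanged but makes $(\alpha_0, \ldots, \alpha_d)$ a distribution on $\{V_0, V_1, \ldots, V_d\}$. Let $Z_1, \ldots, Z_m$ be iid random elements with this distribution, so that $\mathbb{E}[Z_j] = U$. Using the Hilbert-space identity for independent centered variables, the empirical mean $\bar Z := \frac{1}{m}\sum_{j=1}^m Z_j$ satisfies
\[
\mathbb{E}\,\norm{\bar Z - U}^2 = \frac{1}{m}\,\mathbb{E}\,\norm{Z_1 - U}^2 \leq \frac{1}{m}\,\mathbb{E}\,\norm{Z_1}^2 \leq \frac{1}{m}\,\max_{i=1,\ldots,d}\norm{V_i}^2,
\]
where the first inequality is ``variance $\leq$ second moment'' (using $U = \mathbb{E}[Z_1]$) and the last step uses $\mathbb{E}\,\norm{Z_1}^2 = \sum_{i=0}^d \alpha_i \norm{V_i}^2 \leq \max_{i \geq 0}\norm{V_i}^2$ together with $V_0 = 0$.

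The probabilistic method then supplies a realization $(z_1, \ldots, z_m)$ of $(Z_1, \ldots, Z_m)$ for which the same bound holds deterministically. Defining $m_i$ as the number of indices $j$ with $z_j = V_i$ yields non-negative integers with $\bar Z = \frac{1}{m}\sum_{i=0}^d m_i V_i = \frac{1}{m}\sum_{i=1}^d m_i V_i$ because $V_0 = 0$. The only subtle point I foresee is reconciling this with the stated equality $\sum_{i=1}^d m_i = m$: the ``missing'' mass is precisely the count $m_0$ of zero draws, which is naturally treated as padding that affects neither the reconstruction nor the norm bound. This minor bookkeeping is really the entire difficulty of the argument; the concentration estimate itself is immediate in the Hilbert setting.
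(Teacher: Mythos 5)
Your argument is the classical Maurey empirical-method proof and is, in substance, the same argument behind the result the paper relies on here: the paper itself gives no proof of \cref{appendix:maurey_lemma} but cites \cite[Lemma A.6]{Bartlett17a}, whose proof likewise samples atoms with probabilities $\alpha_i$ (adjoining a zero atom for the missing mass) and bounds the second moment. Your steps — $\mathbb E\norm{\bar Z - U}^2 = \tfrac1m\mathbb E\norm{Z_1-U}^2 \le \tfrac1m \mathbb E\norm{Z_1}^2 \le \tfrac1m\max_i\norm{V_i}^2$, then the probabilistic method — are all correct.

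The one point you flag, the constraint $\sum_{i=1}^d m_i = m$, deserves more care than ``padding''. Your construction only yields $\sum_{i=1}^d m_i = m - m_0 \le m$, and this cannot be repaired to exact equality in general: take $d=1$, $U=\alpha_1 V_1$ with $0<\alpha_1<1$; then the equality constraint forces $m_1=m$, so the error is $(1-\alpha_1)^2\norm{V_1}^2$, which exceeds $\norm{V_1}^2/m$ once $m$ is large. So the lemma as literally stated (equality, atoms indexed $1,\dots,d$ only) fails in this corner case whenever $\sum_i\alpha_i<1$; the honest conclusion of your proof is either ``$\sum_{i=1}^d m_i \le m$'' or ``equality after adjoining $V_0=0$ to the atom set'', and the same imprecision is present in the cited source. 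This weaker form is exactly what the downstream use can absorb: in the proof of \cref{theorem:single_layer} the cover becomes the set of points $\tfrac1m\sum_i m_i V_i$ with $\sum_{i=1}^{2W} m_i \le m$, of cardinality $\binom{m+2W}{2W}$ instead of $\binom{m+2W-1}{2W-1}$, which changes the final bounds only inside logarithms ($2W \mapsto 2W+1$, resp.\ $2W-1 \mapsto 2W$). So: correct core argument, same route as the reference, but you should state the conclusion with $\sum_i m_i \le m$ (or explicitly include the zero atom) rather than claim the equality version follows from your sampling scheme.
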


\begin{theorem}[Single-layer covering number bound -- {Simple 1D variant}]
    \label{theorem:single_layer_simple}
    Let $b>0$, $k$ odd and let $\mathcal{F} = \{\phi_K|~ K\in\mathbb{R}^k, \norm{K}_1\le b\}$ be the set of 1D convolutions determined by kernels $K \in \mathbb R^k$ with $\norm{K}_1\le b$.
    For any $X=(x_1,\dots, x_n) \in \mathcal X^n = \mathbb R^{n \times h}$ and $\epsilon>0$, the covering number $\mathcal{N}(\mathcal{F}, \epsilon, \norm{\cdot}_X)$ satisfies
    \begin{equation}
        \label{theorem:single_layer_simple:eq_norms}
        \log \mathcal{N}(\mathcal{F}, \epsilon, \norm{\cdot}_X) 
        \le
        \ceil{\frac{\norm{X}^2 b^2}{\epsilon^2}} \log(2k)
    \end{equation}
    and
    \begin{equation}
            \label{theorem:single_layer_simple:eq_params}
            \log\mathcal N(\mathcal F, \epsilon, \norm{\cdot}_X )
            \le
            (2 k-1) \log\left(1+
            \ceil{
                \frac{\norm{X}^2 b^2}{\epsilon^2}
            }
            \right)
            \enspace.
    \end{equation}
\end{theorem}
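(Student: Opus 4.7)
My plan is to apply Maurey's sparsification lemma (Lemma~\ref{appendix:maurey_lemma}) to express each $\phi_K \in \mathcal{F}$ as a nonnegative combination (with coefficients summing to at most one) of a fixed finite set of $d = 2k$ generators, then count the resulting tuples. Because $K \mapsto \phi_K$ is linear, it will be enough to work at the level of kernels and invoke Maurey's lemma in the Hilbert space $(\mathbb{R}^h)^n$ equipped with the Euclidean norm underlying $\|\cdot\|_X$.

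First I would identify the generators: decompose each kernel coordinate as $K_j = K_j^+ - K_j^-$ with $K_j^\pm \ge 0$, and set $V_{j,\pm} = \phi_{\pm b\, e_j}$ for $j=1,\dots,k$. Then
\begin{equation*}
\phi_K = \sum_{j=1}^k \tfrac{K_j^+}{b}\, V_{j,+} + \tfrac{K_j^-}{b}\, V_{j,-},
\end{equation*}
with nonnegative coefficients summing to $\|K\|_1/b \le 1$, exactly the hypothesis of Lemma~\ref{appendix:maurey_lemma}. Applying the lemma would produce, for any positive integer $m$, nonnegative integers $(m_{j,\pm})$ with $\sum m_{j,\pm} = m$ such that
\begin{equation*}
\Bigl\| \phi_K - \tfrac{1}{m}\sum_{j,\pm} m_{j,\pm} V_{j,\pm}\Bigr\|_X^2 \;\le\; \tfrac{1}{m}\,\max_{j,\pm}\|V_{j,\pm}\|_X^2.
\end{equation*}

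Next I would bound the right-hand side and verify that the approximants form an internal cover. From \eqref{definition:1dconv}, each $V_{j,\pm}$ is a shift-and-scale by $b$ with zero padding, so $\|V_{j,\pm}(x)\|^2 \le b^2\|x\|^2$ for every $x$, and hence $\max_{j,\pm}\|V_{j,\pm}\|_X^2 \le b^2\|X\|^2$. Choosing $m = \lceil \|X\|^2 b^2/\epsilon^2\rceil$ then yields an $\epsilon$-approximation in $\|\cdot\|_X$. Moreover $\tfrac{1}{m}\sum m_{j,\pm} V_{j,\pm} = \phi_{K^*}$ with $K^* = \tfrac{1}{m}\sum m_{j,\pm}(\pm b\, e_j)$, and the triangle inequality gives $\|K^*\|_1 \le \tfrac{b}{m}\sum m_{j,\pm} = b$, so each approximant still lies in $\mathcal{F}$ and the approximants form a genuine internal $\epsilon$-cover.

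Finally I would count tuples: the number of $2k$-tuples of nonnegative integers summing to $m$ equals $\binom{m+2k-1}{2k-1}$, and I would bound this binomial coefficient in two ways. (i) It is at most $(2k)^m$, since every length-$m$ ordered sequence over the $2k$ labels $\{V_{1,\pm},\dots,V_{k,\pm}\}$ maps onto a multiset; taking logarithms gives $\log\mathcal N \le m\log(2k)$, hence \eqref{theorem:single_layer_simple:eq_norms} after substituting $m = \lceil\|X\|^2 b^2/\epsilon^2\rceil$. (ii) It is at most $(m+1)^{2k-1}$ by stars-and-bars (freely choose $2k-1$ coordinates in $\{0,\dots,m\}$, the last is determined), yielding $\log\mathcal N \le (2k-1)\log(1+m)$ and hence \eqref{theorem:single_layer_simple:eq_params}. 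I expect the only genuinely delicate point to be checking that the Maurey averages stay inside $\mathcal F$ (so that the cover is internal and no factor-of-two external-to-internal cost appears), which is handled by the triangle-inequality check above; everything else is standard bookkeeping. The general case of \cref{paper:theorem:single_layer} will follow by the same recipe, with $\pm b\,e_j$ replaced by appropriately scaled generators of the $(2,1)$-norm ball and the generator count becoming $2W = 2\,c_\text{out} c_\text{in} k_h k_w$.
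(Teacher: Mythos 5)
Your proposal is correct and follows essentially the same route as the paper's proof: Maurey's sparsification lemma applied in $(\mathcal F|_X,\norm{\cdot}_X)$ to the $2k$ signed generators $\pm b\,M^{(\alpha)}$ (your $\phi_{\pm b e_j}$), the bound $\max_i\norm{V_i}_X^2\le b^2\norm{X}^2$, the choice $m=\ceil{\norm{X}^2b^2/\epsilon^2}$, and the two binomial-coefficient estimates $\binom{m+2k-1}{2k-1}\le(2k)^m$ and $\le(1+m)^{2k-1}$. Your explicit positive/negative splitting of the coefficients and the triangle-inequality check that the Maurey averages stay in $\mathcal F$ (so the cover is internal) just make explicit what the paper leaves implicit.
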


\begin{proof}
    \label{proof:single_simple}
    We rewrite the coordinate-wise definition of the 1D convolutional 
    operation from \cref{definition:1dconv} with unit stride as 
    \begin{align*}
        [\phi_K (x)]_{i} = 
        \sum_{j=1}^h 
        \underbrace{
        \left(
            \sum_{\alpha=-\frac{k-1}{2}}^{\frac{k-1}{2}}
            \mathbbm{1}_{j = \alpha + i}  \,K_\alpha 
        \right)}_{= M_{ij}} x_j
        =
        \sum_{j=1}^h 
        M_{ij} x_j
        \enspace.
    \end{align*}
    Thus, convolution is a linear map parametrized by a matrix 
    $M\in \mathbb R^{h\times h}$ with entries 
    $$M_{ij} = \sum_{\alpha=-\frac{k-1}{2}}^{\frac{k-1}{2}}
    \mathbbm{1}_{j = \alpha + i}  \,K_\alpha\enspace.$$
    In particular, we can write $M= \sum_{\alpha=-(k-1)/2}^{(k-1)/2} K_\alpha M^{(\alpha)}$ with $M^{(\alpha)}_{ij} = \mathbbm{1}_{j-i=\alpha}$ and 
    note that for every $x\in \mathbb R^h$ and every $\alpha$, we have
    $$\norm{M^{(\alpha)}x}\le \norm{x}\enspace.$$

    For example, if $h=5$ and $k=3$, we have 
    \begin{gather*}
        M = \begin{pmatrix}
            K_0 & K_1 &0 & 0 & 0 \\
            K_{-1} & K_0 & K_1 &  0  & 0\\
            0& K_{-1} & K_{0} & K_{1} &0 \\
            0 & 0 &  K_{-1} & K_{0} & K_{1} \\
            0& 0 & 0 & K_{-1} & K_{0}
            \end{pmatrix}%
            ,
        \\
        M^{(-1)} = 
        \begin{pmatrix}
            0 & 0 & 0 & 0 & 0 \\
            1 & 0 & 0 & 0 & 0\\
            0 & 1 & 0 & 0 & 0 \\
            0 & 0 & 1 & 0 & 0 \\
            0 & 0 & 0 & 1 & 0
        \end{pmatrix}%
        ,~~
        M^{(0)} = 
        \begin{pmatrix}
            1 & 0 & 0 & 0 & 0 \\
            0 & 1 & 0 & 0 & 0\\
            0 & 0 & 1 & 0 & 0 \\
            0 & 0 & 0 & 1 & 0 \\
            0 & 0 & 0 & 0 & 1
        \end{pmatrix}%
        ,~~
        M^{(1)} = 
        \begin{pmatrix}
            0 & 1 & 0 & 0 & 0 \\
            0 & 0 & 1 & 0 & 0\\
            0 & 0 & 0 & 1 & 0 \\
            0 & 0 & 0 & 0 & 1 \\
            0 & 0 & 0 & 0 & 0
        \end{pmatrix} 
        \enspace.
    \end{gather*}
    In summary, we have  
    $$\phi_K(x) 
    =\sum_{\alpha=-(k-1)/2}^{(k-1)/2} {K_\alpha} M^{(\alpha)} x
    = \sum_{\alpha=-(k-1)/2}^{(k-1)/2} \frac{K_\alpha}{b} b M^{(\alpha)} x
    \enspace.$$
    By assumption, we have $\sum_{\alpha} \left|\frac{K_\alpha}{b}\right| = \frac{\norm{K}_1}{b}\le1$ and so we can instantiate Maurey's sparsification lemma (\cref{appendix:maurey_lemma}) on the Hilbert space $(\mathcal{F}|_X, \norm{\cdot}_X)$ for $\set{V_1, \dots V_{2d} } = \set{x\mapsto \pm b M^{(\alpha)}x |~\alpha=1,\dots, d} \subset \mathcal F$. 
    As a consequence, for any convolutional kernel $K \in \mathbb R^k$ and any $m\in \mathbb N$, there exist $(m_1, \dots, m_{2d})$ with $\sum_{i=1}^{2d} m_i = m$ such that 
    \begin{equation*}
        \norm{
            \phi_K - \frac{1}{m} \sum_{i=1}^{2d} m_i V_i
        }_X^2
        \le \frac{1}{m}   \max_{i=1\dots,2d} \norm{V_i}_X^2
        \enspace.
    \end{equation*}
    Thus, for fixed $\epsilon>0$, if we choose $m\in \mathbb N$ such that 
    $\frac{1}{m}  \max_{i} \norm{V_i}_X^2 \le \epsilon^2$, then 
    the solutions (in $m_i$), of $\sum_{i=1}^{2d} m_i = m$, 
    define an $\epsilon$-cover 
    \begin{equation*}
        \set{\frac{1}{m} \sum_{i=1}^{2d} m_i V_i|~m_i \in \mathbb N_{\ge 0},~ \sum_{i=1}^{2d} m_i = m} \subset \mathcal F
        \enspace.
    \end{equation*}
    As the number of non-negative $2d$-tuples that add up to $m$ is equal to\footnote{This number equals the number of possibilities to separate $m$ objects by $2d-1$ delimiters. This corresponds to choosing $2d-1$ elements (position of delimiters) from a set of $m+2d-1$ elements (objects + delimiters).}
    $$N(m,d) = \binom{m + 2d -1}{2d-1}\enspace,$$
    this means that
    $\mathcal F$ has an $\epsilon$-cover of cardinality at most 
    $N(m,d)$, and so
    $\mathcal N(\mathcal F, \epsilon, \norm{\cdot}_X ) \le N(m,d)$.

    Since for all $i \in \{1,\dots, 2d\}$, the norms $\norm{V_i}_X^2$ satisfy
    \begin{equation*}
        \norm{V_i}_X^2 
        = \norm{b M^{(\alpha_i)}}_X^2
        = b^2 \sum_{j=1}^n \norm{M^{(\alpha_i)} x_j}^2
        \le  b^2 \sum_{j=1}^n \norm{x_j}^2
        = b^2 \norm{X}^2
        \enspace,
    \end{equation*}
    we can choose $\mathbb N \ni m = \ceil{\frac{b^2 \norm{X}^2}{\epsilon^2} }$.

    The theorem then follows from two particular bounds on $N(m,d)$, 
    see \cref{theorem:binomial_coefficient}.
    These are 
    $$\binom{m + 2d -1}{2d-1} \le (2d)^m\enspace,$$ 
    which implies  \cref{theorem:single_layer_simple:eq_norms},
    and 
    $$\binom{m + 2d -1}{2d-1} \le (1+m)^{2d-1}
    \enspace,$$
    which implies  \cref{theorem:single_layer_simple:eq_params}.
\end{proof}

\begin{remark}
    The definition of the convolution operation in \cref{definition:1dconv} corresponds to convolutional layers with zero-padding, such that the dimensionality $h$ of the data remains unchanged (\ie, input-size preserving). The covering bound equally holds for other types of padding, corresponding to other matrices $M^{(\alpha)}$, as long as 
    $$\norm{M^{(\alpha)}x} \le x$$ 
    for all $x$. In particular, it holds for convolutional layers with circular padding, where the matrices $M^{(\alpha)}$ become \emph{permutation matrices}.
\end{remark}

Next, we study the general case of 2D multi-channel convolutions with strides. 
To that end, let $\phi_K$ be the map determined by a weight tensor $K \in \mathbb R^{c_\textit{out}\times c_\textit{in} \times k_h \times k_w}$, where
$c_\textit{out}, c_\textit{in}$ denote the number of output and input channels, resp., and $(k_h, k_w)$ is the spatial extension of the kernel.
For input images $x \in \mathbb R^{c_\textit{in} \times h \times  w}$, 
convolution $\phi_{K,(s_h,s_w)}: \mathbb R^{c_\textit{in} \times h \times w} \to \mathbb R^{c_\textit{in}\times \ceil{h/s_h}\times \ceil{w/s_w}}$ with strides $(s_h,s_w)$ is defined coordinate-wise as
\begin{align}
    [\phi_{K,(s_h,s_w)} (x)]_{\sigma \mu \nu} = 
    \sum_{r=1}^{c_\textit{in}}
    \sum_{i=\lfloor-\frac{k_h-1}{2}\rfloor}^{\lfloor\frac{k_h-1}{2}\rfloor}
    \sum_{j=\lfloor-\frac{k_w-1}{2}\rfloor}^{\lfloor\frac{k_w-1}{2}\rfloor} 
    & K_{\sigma r i j} \, x_{r, 1+s_h(\mu-1)+i,1+s_w(\nu-1)+j} \label{eqn:appendix:cond1ddef}\\[-5pt]
    & \qquad\qquad \cdot \mathbbm{1}_{1+s_h(\mu-1)+i \in [1,h]}\mathbbm{1}_{1+s_w(\nu-1)+j \in [1,w]} \nonumber
\end{align}
\vskip2ex
\begin{theorem}[Single-layer covering number bound -- General case]
    \label{theorem:single_layer}
    Let $b>0$. Define the class of $(s_h,s_w)$-strided 2D convolutions parametrized by tensors $K \in \mathbb R^{c_\textit{out}\times c_\textit{in} \times k_h \times k_w}$ with $W = {c_\textit{out} c_\textit{in} k_h k_w}$ parameters and ($2,1$) group norm $\norm{K}_{2,1}\le b$ as
    $$\mathcal{F} = \{\phi_{K,(s_h,s_w)}~|~ K \in \mathbb R^{c_\textit{out} \times c_\textit{in} \times k_h \times k_w}, \norm{K}_{2,1}\le b\}\enspace.$$
    Then, for any 
    $X=(x_1,\dots, x_n) \in \mathbb R^{n \times c_\textit{in}\times h \times w}$ and $\epsilon>0$, the covering number $\mathcal{N}(\mathcal{F}, \epsilon, \norm{\cdot}_X)$ satisfies
    \begin{equation}
        \label{theorem:single_layer:eq_norms}
        \log \mathcal{N}(\mathcal{F}, \epsilon, \norm{\cdot}_X) 
        \le
        \ceil{\frac{\norm{X}^2 b^2}{\epsilon^2}} \log(2W)
    \end{equation}
    and
    \begin{equation}
        \label{theorem:single_layer:eq_params}
        \log\mathcal N(\mathcal F, \epsilon, \norm{\cdot}_X )
        \le
        (2 W-1) \log\left(1+
        \ceil{
            \frac{\norm{X}^2 b^2}{\epsilon^2}
        }
        \right)
        \enspace.
    \end{equation}
\end{theorem}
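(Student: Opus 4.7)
The proof follows the template of the 1D simple case (\cref{theorem:single_layer_simple}) and of \cite[Lemma 3.2]{Bartlett17a} for fully-connected layers, adapted to exploit the tensorial structure of convolutions. The central tool is Maurey's sparsification lemma (\cref{appendix:maurey_lemma}), applied inside the Hilbert space $(\mathcal F|_X,\,\norm{\cdot}_X)$. The plan has three parts: decompose $\phi_K$ over a finite basis of size $2W$, verify that Maurey's convex-combination hypothesis is met under the $(2,1)$ group norm constraint, and count the resulting cover by bounding a binomial coefficient in two ways.

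Generalize $\phi_K = \sum_\alpha K_\alpha M^{(\alpha)}$ of the 1D proof to the tensorial setting by writing $\phi_K = \sum_{\sigma,r,k,l} K_{\sigma r k l}\,\phi_{E^{(\sigma r k l)}}$, where $E^{(\sigma r k l)} \in \mathbb R^{c_\textit{out}\times c_\textit{in}\times k_h\times k_w}$ is the indicator tensor with a single $1$ at index $(\sigma,r,k,l)$. The elementary map $\phi_{E^{(\sigma r k l)}}$ reads, at each spatial output position $(\mu,\nu)$, at most one coordinate of input channel $r$ of $x$; a coordinatewise expansion of \cref{eqn:appendix:cond1ddef} then yields $\norm{\phi_{E^{(\sigma r k l)}}}_X^{\,2} \le \norm{X_{(r)}}^2 \le \norm{X}^2$, where $X_{(r)}$ denotes the $r$-th input-channel slice of $X$. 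This estimate is insensitive to stride $(s_h,s_w)$ and to padding, which only decide \emph{which} coordinates contribute. Setting $V_{\sigma r k l}^{\pm} := \pm b\,\phi_{E^{(\sigma r k l)}}$ yields a finite basis $\mathcal V$ of cardinality $2W$ inside $\mathcal F|_X$ with $\max_i \norm{V_i}_X \le b\,\norm{X}$.

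Next, show that every $\phi_K$ with $\norm{K}_{2,1}\le b$ lies in the convex hull of $\mathcal V$, i.e.\ admits a representation $\phi_K = \sum_i \alpha_i V_i$ with $\alpha\in\mathbb R^{2W}_{\ge 0}$ and $\sum_i \alpha_i \le 1$. Granted this, Maurey's lemma, applied with $m = \ceil{\norm{X}^2 b^2/\epsilon^2}$, produces an $\epsilon$-cover of $\mathcal F$ in $\norm{\cdot}_X$ consisting of averages $\tfrac{1}{m}\sum_i m_i V_i$ indexed by nonnegative integer tuples $(m_1,\dots,m_{2W})$ with $\sum_i m_i = m$. The number of such tuples is the binomial coefficient $\binom{m+2W-1}{2W-1}$, and the two stated bounds follow from the two standard estimates $\binom{m+2W-1}{2W-1} \le (2W)^m$ and $\binom{m+2W-1}{2W-1} \le (1+m)^{2W-1}$ (as in the 1D proof, via \cref{theorem:binomial_coefficient}), giving \cref{theorem:single_layer:eq_norms} and \cref{theorem:single_layer:eq_params}, respectively.

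The delicate step is verifying the convex-combination hypothesis under the $(2,1)$ \emph{group} norm rather than an $\ell_1$ norm on the tensor entries: the naive choice $\alpha_{\sigma r k l}^{\pm} = K_{\sigma r k l}^{\pm}/b$ requires $\norm{K}_1 \le b$, whereas the assumption only provides $\norm{K}_1 \le \sqrt{c_\textit{in}}\,\norm{K}_{2,1} \le \sqrt{c_\textit{in}}\,b$, which by itself would inflate $m$ by a factor of $c_\textit{in}$. Closing this gap without spoiling the $\norm{X}^2 b^2/\epsilon^2$ scaling is the main obstacle; it is achieved by pairing the coarser coefficient-sum bound with the finer per-basis estimate $\norm{\phi_{E^{(\sigma r k l)}}}_X \le \norm{X_{(r)}}$ and the identity $\sum_r \norm{X_{(r)}}^2 = \norm{X}^2$ via Cauchy--Schwarz over the input-channel index, exactly in the spirit of how \cite[Lemma 3.2]{Bartlett17a} handles the matrix $(2,1)$ group norm. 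Once this reconciliation is in place, the counting step and the binomial inequalities of \cref{theorem:binomial_coefficient} become routine.
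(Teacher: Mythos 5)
Your overall architecture matches the paper's proof exactly: decompose the convolution over the $2W$ signed indicator tensors, invoke Maurey's sparsification lemma (\cref{appendix:maurey_lemma}) in $(\mathcal F|_X,\norm{\cdot}_X)$ with $m=\ceil{\norm{X}^2b^2/\epsilon^2}$, count the cover by the binomial coefficient $\binom{m+2W-1}{2W-1}$, and finish with the two estimates of \cref{theorem:binomial_coefficient}. The per-basis norm estimate $\norm{\phi_{E^{(\sigma r k l)}}}_X\le\norm{X_{(r)}}$ and the insensitivity to stride and padding are also right and mirror the paper.

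However, the decisive step is not actually carried out, and as set up it fails. You fix the basis $\mathcal V=\{\pm b\,\phi_{E^{(\sigma r k l)}}\}$ and then need $\phi_K=\sum_i\alpha_iV_i$ with $\sum_i|\alpha_i|\le 1$; with this basis that forces $\norm{K}_1\le b$, which the $(2,1)$ constraint does not give, as you yourself note. Maurey's lemma is applied to \emph{one} fixed family $\{V_i\}$ and its error is controlled by $\max_i\norm{V_i}_X^2$, so "pairing the coarser coefficient-sum bound with the finer per-basis estimate via Cauchy--Schwarz" is not a licensed move for that basis -- there is no version of the lemma that lets you trade slack in $\norm{V_i}_X$ against an $\ell_1$ coefficient sum exceeding $1$ after the fact. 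The repair (and what the paper, following \cite[Lemma 3.2]{Bartlett17a}, actually does) is to rescale the basis per input channel \emph{before} invoking Maurey: take $V^{\pm}_{\sigma r k l}=\pm\frac{b\,\norm{X}}{\norm{X_{\cdot r\cdot\cdot}}}\,\phi_{E^{(\sigma r k l)}}$, so that the coefficients become $K_{\sigma r k l}\,\norm{X_{\cdot r\cdot\cdot}}/(b\,\norm{X})$, whose $\ell_1$ sum is at most $\norm{K}_{2,1}/b\le 1$ by Cauchy--Schwarz over the channel index $r$ together with $\sum_r\norm{X_{\cdot r\cdot\cdot}}^2=\norm{X}^2$, while each rescaled element still satisfies $\norm{V^{\pm}_{\sigma r k l}}_X\le b\,\norm{X}$ by your per-channel estimate. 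With that redefinition your argument goes through verbatim and coincides with the paper's proof; without it, the convex-combination hypothesis is simply not met.
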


\vspace{0.2cm}
\begin{remark}
    Recall that convolution with kernel size 1 and input size 1 is a linear map on the input channels, determined by the matrix $M_K = K_{\cdot\cdot11}$. In this situation, the convolutional layer reduces to a fully-connected layer and our first bound reduces to \cite[Lemma 3.2]{Bartlett17a}.
\end{remark}
\begin{proof}
    The proof is quite similar to the one of the special case in \cref{theorem:single_layer_simple}. Recall, that the convolution operation is defined coordinate-wise in \cref{eqn:appendix:cond1ddef}.    
    Using identities of the form $\mathbbm{1}_{t \in \set{1,\dots, n}} = \sum_{i=1}^n \mathbbm{1}_{i = t}$, we write
    \vskip0.5ex
    \begin{align*}
        & [\phi_{K,(s_h,s_w)} (x)]_{\sigma \mu \nu} = \\ 
        & \sum_{\alpha=1}^{c_\textit{in}}
        \sum_{\beta=1}^{h}
        \sum_{\gamma=1}^{w}
        x_{\alpha \beta \gamma}\!
            \left(\!
            \sum_{p=1}^{c_\textit{out}}\!
            \sum_{r=1}^{c_\textit{in}}\!
            \sum_{i=\lfloor-\frac{k_h-1}{2}\rfloor}^{\lfloor\frac{k_h-1}{2}\rfloor}
            \sum_{j=\lfloor-\frac{k_w-1}{2}\rfloor}^{\lfloor\frac{k_w-1}{2}\rfloor}\!\!\!\!\!
            K_{p r i j} 
            \underbrace{
                \mathbbm{1}_{p=\sigma}
                \mathbbm{1}_{\alpha=r}
                \mathbbm{1}_{\beta = 1+s_h(\mu-1)+i}
                \mathbbm{1}_{\gamma= 1+s_w(\nu-1)+j}
            }_{=[M^{(p,r,i,j)}]^{\alpha \beta \gamma}_{\sigma \mu \nu}}\!
            \right)
    \end{align*}
    and condense this into 
    \begin{align*}
        [\phi_{K,(s_h,s_w)} (x)]_{\sigma \mu \nu} & =  
        \sum_{\alpha=1}^{c_\textit{in}}
        \sum_{\beta=1}^{h}
        \sum_{\gamma=1}^{w}
        x_{\alpha \beta \gamma}
        \underbrace{
        \left(
            \sum_{p=1}^{c_\textit{out}}
            \sum_{r=1}^{c_\textit{in}}
            \sum_{i=\lfloor-\frac{k_h-1}{2}\rfloor}^{\lfloor\frac{k_h-1}{2}\rfloor}
            \sum_{j=\lfloor-\frac{k_w-1}{2}\rfloor}^{\lfloor\frac{k_w-1}{2}\rfloor}
            K_{p r i j}
            [M^{(p,r,i,j)}]^{\alpha \beta \gamma}_{\sigma \mu \nu}
        \right)
        }_{=M^{\alpha \beta \gamma}_{\sigma \mu \nu}}
        \enspace
        \\
        &= 
        \sum_{\alpha=1}^{c_\textit{in}}
        \sum_{\beta=1}^{h}
        \sum_{\gamma=1}^{w}
        x_{\alpha \beta \gamma}
        M^{\alpha \beta \gamma}_{\sigma \mu \nu}
        \enspace.
    \end{align*}

    Thus, the convolution is a multilinear map $\mathbb R^{c_\textit{in} \times h \times w} \to \mathbb R^{c_\textit{in}\times \ceil{h/s_h}\times \ceil{w/s_w}}$ parametrized by 
    \begin{align*}
    M & = \sum_{p=1}^{c_\textit{out}}
            \sum_{r=1}^{c_\textit{in}}
            \sum_{i=\lfloor-\frac{k_h-1}{2}\rfloor}^{\lfloor\frac{k_h-1}{2}\rfloor}
            \sum_{j=\lfloor-\frac{k_w-1}{2}\rfloor}^{\lfloor\frac{k_w-1}{2}\rfloor}
            K_{p r i j}
            M^{(p,r,i,j)}\\
            & =
            \sum_{p=1}^{c_\textit{out}}
            \sum_{r=1}^{c_\textit{in}}
            \sum_{i=\lfloor-\frac{k_h-1}{2}\rfloor}^{\lfloor\frac{k_h-1}{2}\rfloor}
            \sum_{j=\lfloor-\frac{k_w-1}{2}\rfloor}^{\lfloor\frac{k_w-1}{2}\rfloor}
            \left(
                K_{prij}
                \frac{\norm{X_{\cdot r\cdot\cdot}}}{\norm{X} b}
            \right)
            \left(
                \frac{\norm{X} b}{\norm{X_{\cdot r\cdot\cdot}}}
                M^{(p,r,i,j)}
            \right)
    \enspace.
    \end{align*}
    Since 
    \begin{align*}
        \norm{K_{p r i j}
                \frac{\norm{X_{\cdot r\cdot\cdot}}}{\norm{X} b}}_1
        &=
        \frac{1}{\norm{X} b}
        \sum_{p=1}^{c_\textit{out}}
        \sum_{i=\lfloor-\frac{k_h-1}{2}\rfloor}^{\lfloor\frac{k_h-1}{2}\rfloor}
        \sum_{j=\lfloor-\frac{k_w-1}{2}\rfloor}^{\lfloor\frac{k_w-1}{2}\rfloor}
        \left(
            \sum_{r=1}^{c_\textit{in}} |K_{p r i j}| \norm{X_{\cdot r\cdot\cdot}}
        \right)
        \\
        &\le 
        \frac{1}{\norm{X} b}
        \sum_{p=1}^{c_\textit{out}}
        \sum_{i=\lfloor-\frac{k_h-1}{2}\rfloor}^{\lfloor\frac{k_h-1}{2}\rfloor}
        \sum_{j=\lfloor-\frac{k_w-1}{2}\rfloor}^{\lfloor\frac{k_w-1}{2}\rfloor}
        \left(
            \sum_{r=1}^{c_\textit{in}} |K_{p r i j}|^2
        \right)^{1/2}
        \underbrace{
            \left(
                \sum_{r=1}^{c_\textit{in}} \norm{X_{\cdot r\cdot\cdot}}^2
            \right)^{1/2}
        }_{= \norm{X}}
        \\
        &=
        \frac{1}{b}
        \sum_{p=1}^{c_\textit{out}}
        \sum_{i=\lfloor-\frac{k_h-1}{2}\rfloor}^{\lfloor\frac{k_h-1}{2}\rfloor}
        \sum_{j=\lfloor-\frac{k_w-1}{2}\rfloor}^{\lfloor\frac{k_w-1}{2}\rfloor}
        \left(
            \sum_{r=1}^{c_\textit{in}} |K_{p r i j}|^2
        \right)^{1/2}
        \\
        &=
        \frac{ \norm{K}_{2,1}} b \stackrel{\text{(by assumption)}}{\le} 1
        \enspace,
    \end{align*}
    we can instantiate Maurey's sparsification lemma (\cref{appendix:maurey_lemma}) on the Hilbert space
    $(\mathcal{F}|_X, \norm{\cdot}_X)$ for 
    \begin{align*}
        \set{V_1, \dots, V_{2W}} =
        \Bigg\{ \pm\frac{\norm{X} b}{\norm{X_{\cdot r\cdot\cdot}}}
        M^{(p,r,i,j)} ~\Bigg|~ 
        & p \in \set{1,\dots,c_\textit{out}},
        r \in \set{1\dots,c_\textit{in}},
        \\
        & i \in \left\{ \Bigl\lfloor-\frac{k_h-1}{2}\Bigr\rfloor,\dots, \Bigl\lfloor\frac{k_h-1}{2}\Bigr\rfloor\right\},\\
        & j \in \left\{ \Bigl\lfloor- \frac{k_w-1}{2}\Bigr\rfloor \dots,\Bigl\lfloor\frac{k_w-1}{2}\Bigr\rfloor\right\}
        \Bigg\}\enspace.
    \end{align*}
    
    As a consequence, for any convolutional kernel $K \in \mathbb R^{c_\textit{out} \times c_\textit{in} \times k_h \times k_w}$ and any $m\in \mathbb N$, there exist $(m_1, \dots, m_{2W})$ with $\sum_{i=1}^{2W} m_i = m$ such that 
    \begin{equation*}
        \norm{
            \phi_{K,(s_h,s_w))} - \frac{1}{m} \sum_{i=1}^{2W} m_i V_i
        }_X^2
        \le \frac{1}{m}   \max_{i} \norm{V_i}_X^2
        \enspace.
    \end{equation*}

    Thus, for fixed $\epsilon>0$, if we choose $m\in \mathbb N$ such that 
    $\frac{1}{m}  \max_{i} \norm{V_i}_X^2 \le \epsilon^2$, then 
    the solutions in $m_i$ of $\sum_{i=1}^{2W} m_i = m$, 
    define an $\epsilon$-cover 
    \begin{equation}
        \label{theorem:single_layer:eq_cover}
        \set{\frac{1}{m} \sum_{i=1}^{2W} m_i V_i|~m_i \in \mathbb N_{\ge 0},~ \sum_{i=1}^{2W} m_i = m} \subset \mathcal F
        \enspace.
    \end{equation}
    As the number of non-negative $2W$-tuples which add up to $m$, denoted as $N(m,W)$, is equal to
    $$N(m,W)= \binom{m + 2W -1}{2W-1}\enspace,$$
    this means that $\mathcal F$ has an $\epsilon$-cover of cardinality at most 
    $N(m,W)$; thus,   
    $\mathcal N(\mathcal F, \epsilon, \norm{\cdot}_X ) \le N(m,W)$.

    In order to compute the norms $\norm{V_i}_X^2$, we use that
    for all $(p,r,i,j) \in [c_\textit{out}]\times[c_\textit{in}]\times[k_h] \times [k_w]$ and for all $x\in \mathbb R^{c_\textit{in}\times h \times w}$, it holds that
    \begin{align*}
        \norm{M^{(p,r,i,j)} x}^2
        &=
        \sum_{\sigma,\mu,\nu}
        [M^{(p,r,i,j)}x]_{\sigma \mu \nu}^2
        \\
        &=
        \sum_{\sigma,\mu,\nu}
        \left(
            \sum_{\alpha=1}^{c_\textit{in}}
            \sum_{\beta=1}^{h}
            \sum_{\gamma=1}^{w}
            [M^{(p,r,i,j)}]^{\alpha \beta \gamma}_{\sigma \mu \nu}
            x_{\alpha \beta \gamma}
        \right)^2     
        \\
        &=
        \sum_{\sigma,\mu,\nu}
        \left(
            \sum_{\alpha=1}^{c_\textit{in}}
            \sum_{\beta=1}^{h}
            \sum_{\gamma=1}^{w}
            \mathbbm{1}_{p=\sigma}
            \mathbbm{1}_{\alpha=r}
            \mathbbm{1}_{\beta = 1+s_h(\mu-1)+i}
            \mathbbm{1}_{\gamma= 1+s_w(\nu-1)+j}
            x_{\alpha \beta \gamma}
        \right)^2
        \\
        &=
        \sum_{\sigma,\mu,\nu}
        \left(
            \mathbbm{1}_{p=\sigma}
            x_{r, 1+s_h(\mu-1)+i, 1+s_w(\nu-1)+j}
        \right)^2  
        \\
        &=
        \sum_{\mu=1}^{\ceil{h/s_h}}
        \sum_{\nu=1}^{\ceil{w/s_w}}
        \left(
            x_{r, 1+s_h(\mu-1)+i, 1+s_w(\nu-1)+j}
        \right)^2  
        \\
        &=
        \sum_{\beta=1}^{h}
        \sum_{\gamma=1}^{w}
        \left(
            x_{r\beta\gamma}
        \right)^2  
        \mathbbm{1}_{\beta  \equiv (1+i)\!\!\!\! \mod s_h}
        \mathbbm{1}_{\gamma \equiv (1+j)\!\!\!\! \mod s_w}
        \\
        &\le
        \sum_{\beta=1}^{h}
        \sum_{\gamma=1}^{w}
        \left(
            x_{r\beta\gamma}
        \right)^2
        = \norm{x_{r\cdot\cdot}}^2
        \enspace.
    \end{align*}
    Thus, for any $t\in \set{1,\dots,2W}$,
    \begin{align*}
        \norm{V_t}_X^2 
        = \norm{\pm\frac{\norm{X} b}{\norm{X_{\cdot r\cdot\cdot}}}
        M^{(p_t,r_t,i_t,j_t)}}_X^2 
        & = 
        \frac{\norm{X}^2 b^2}{\norm{X_{\cdot r\cdot\cdot}}^2}
        \sum_{k=1}^n \norm{M^{(p_t,r_t,i_t,j_t)} x_k}^2 \\
        & \le
        \frac{\norm{X}^2 b^2}{\norm{X_{\cdot r\cdot\cdot}}^2}
        \sum_{k=1}^n \norm{X_{kr\cdot\cdot}}^2
        =
        \norm{X}^2 b^2
        \enspace,
    \end{align*}
    and we can choose $\mathbb N \ni m = \ceil{\frac{b^2 \norm{X}^2}{\epsilon^2} }$ to get an $\epsilon$-cover of $\mathcal F$ via \cref{theorem:single_layer:eq_cover}.

    The theorem then follows from two particular bounds on $N(m,W) = \binom{m + 2W -1}{2W-1}$.
    These are
    $$\binom{m + 2W -1}{2d-1} \le (2W)^m\enspace,$$
    which implies  \cref{theorem:single_layer:eq_norms}, and 
    $$\binom{m + 2W -1}{2W-1} \le (1+m)^{2W-1}\enspace,$$ 
    which implies  \cref{theorem:single_layer:eq_params}; see~\cref{theorem:binomial_coefficient} for details.
\end{proof}
\begin{remark}
    \label{rem:strides}
    In the proof, we bound 
    $$
    \sum_{\beta=1}^{h}
    \sum_{\gamma=1}^{w}
    \left(
        x_{r\beta\gamma}
    \right)^2  
    \mathbbm{1}_{\beta  \equiv (1+i)\!\!\!\! \mod s_h}
    \mathbbm{1}_{\gamma \equiv (1+j)\!\!\!\! \mod s_w}
    \le
    \sum_{\beta=1}^{h}
    \sum_{\gamma=1}^{w}
    \left(
        x_{r\beta\gamma}
    \right)^2\enspace.
    $$
    Under additional assumptions on the data $X$ this result might be improved as, on average, one expects 
    $$
    \sum_{\beta=1}^{h}
    \sum_{\gamma=1}^{w}
    \left(
        x_{r\beta\gamma}
    \right)^2  
    \mathbbm{1}_{\beta  \equiv (1+i)\!\!\!\! \mod s_h}
    \mathbbm{1}_{\gamma \equiv (1+j)\!\!\!\! \mod s_w}
    \le
    \frac{1}{s_h s_w}
    \sum_{\beta=1}^{h}
    \sum_{\gamma=1}^{w}
    \left(
        x_{r\beta\gamma}
    \right)^2\enspace,
    $$ 
    which would reduce the $\frac{\norm{X}^2 b^2}{\epsilon^2}$ terms in the bounds by the factor $1/(s_h s_w)$.
\end{remark}

\subsection{Whole-network covering number bounds (general form)}
\label{appendix:subsection:whole_network_general}
In order to prove covering number bounds for residual networks, we utilize the following basic observation: a residual network is a composition of residual blocks and each residual block corresponds to addition of two (compositions of) functions on the same input (one of them is typically the identity function). Thus, if we know the covering numbers of compositions and additions, we can derive whole-network covering number bounds in an inductive way. 

\emph{Importantly, the derived covering bounds hold for a broad class of network architectures, including the special cases of non-residual and residual networks.}

\subsubsection{Covering number bounds for compositions and summations}
Given normed spaces  $(\mathcal X_i, \norm{\cdot}_{\mathcal X_i})$
and function spaces $\mathcal F_i$ and $\mathcal G_i$ of functions $\mathcal X_i \to \mathcal X_{i+1}$,
we present covering number bounds for the following derived function spaces:
\begin{align}
    \Comp(\mathcal F_1, \dots, \mathcal F_L) 
    &=
    \set{ f_L\circ \dots \circ f_1|~ f_i\in \mathcal F_i}
    \\
    \Sum(\mathcal F_i, \mathcal G_i)
    &=
    \set{ f_i + g_i|~ f_i\in \mathcal F_i, g_i \in \mathcal G_i} 
\end{align}

\vskip1ex
\begin{lemma}[Compositions]
    \label{theorem:compositions}
    For $i \in \{1,2,3\}$, let $(\mathcal X_i, \norm{\cdot}_{\mathcal X_i})$ be normed spaces and let $\mathcal F_i$ be classes of functions $\mathcal X_i\to \mathcal X_{i+1}$ with $\Lip(\mathcal F_i) < \infty$.
    Then, for any $\epsilon_1, \epsilon_2 >0$ and any
    $X = \tuple{x_1, \dots, x_n}\in \mathcal X_1^n$, the covering number of the class $\Comp(\mathcal F_1, \mathcal F_2)$ is bounded by
    \begin{equation}
        \label{theorem:compositions:eq1}
        \mathcal N\left(\Comp(\mathcal F_1, \mathcal F_2), \Lip(\mathcal F_2) \epsilon_1 + \epsilon_2, \norm{\cdot}_X\right)
        \le
        \mathcal N(\mathcal F_1, \epsilon_1, \norm{\cdot}_{X})\!
        \left(
            \sup_{f\in \mathcal F_1} \mathcal N\left(\mathcal F_2, \epsilon_2, \norm{\cdot}_{f(X)}\right)\!\!
        \right)
    \end{equation}
    If $\mathcal F_2 = \set{f_2}$ is a singleton, then
    \begin{equation}
        \mathcal N(\Comp(\mathcal F_1, \mathcal F_2), \Lip(\mathcal F_2) \epsilon_1, \norm{\cdot}_X)
        \le
        \mathcal N(\mathcal F_1, \epsilon_1, \norm{\cdot}_{X})
        \enspace.
    \end{equation}
\end{lemma}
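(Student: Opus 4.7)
The plan is to construct an explicit cover of $\Comp(\mathcal F_1, \mathcal F_2)$ by composing elements drawn from small covers of the factor spaces, and then verify the radius via the triangle inequality together with the two identities from \cref{eq:empirical_norm_properties}. Concretely, I would first fix a minimal internal $\epsilon_1$-cover $\mathcal U_1 \subset \mathcal F_1$ of $\mathcal F_1$ with respect to $\norm{\cdot}_X$, so that $|\mathcal U_1| = \mathcal N(\mathcal F_1,\epsilon_1,\norm{\cdot}_X)$. For each $u \in \mathcal U_1$, which itself is an element of $\mathcal F_1$, I would then fix a minimal internal $\epsilon_2$-cover $\mathcal U_2^u \subset \mathcal F_2$ of $\mathcal F_2$ with respect to the data-dependent norm $\norm{\cdot}_{u(X)}$ induced on $\mathcal F_2|_{u(X)}$, so that $|\mathcal U_2^u| \le \sup_{f\in\mathcal F_1}\mathcal N(\mathcal F_2,\epsilon_2,\norm{\cdot}_{f(X)})$.

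The candidate cover would be $\mathcal V = \set{v\circ u : u\in\mathcal U_1,\, v\in\mathcal U_2^u} \subset \Comp(\mathcal F_1,\mathcal F_2)$, whose cardinality is manifestly at most the claimed product. It remains to verify that $\mathcal V$ is an internal $(\Lip(\mathcal F_2)\epsilon_1 + \epsilon_2)$-cover. Given any $f_2\circ f_1 \in \Comp(\mathcal F_1,\mathcal F_2)$, pick $u\in\mathcal U_1$ with $\norm{f_1-u}_X\le \epsilon_1$ and then $v\in\mathcal U_2^u$ with $\norm{f_2-v}_{u(X)}\le\epsilon_2$. Splitting via the triangle inequality gives
\begin{equation}
\norm{f_2\circ f_1 - v\circ u}_X
\le \norm{f_2\circ f_1 - f_2\circ u}_X + \norm{f_2\circ u - v\circ u}_X,
\end{equation}
and the two identities in \cref{eq:empirical_norm_properties} bound these terms by $\Lip(f_2)\norm{f_1-u}_X \le \Lip(\mathcal F_2)\epsilon_1$ and $\norm{f_2-v}_{u(X)} \le \epsilon_2$, respectively. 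Summing yields the desired radius. The singleton case $\mathcal F_2 = \set{f_2}$ is essentially the same argument with the second step trivialised: one can take $\mathcal U_2^u = \set{f_2}$ and $\epsilon_2 = 0$, whence $\mathcal V = \set{f_2\circ u : u \in \mathcal U_1}$ and the radius reduces to $\Lip(\mathcal F_2)\epsilon_1$.

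I do not expect any genuine obstacle here; the result is a standard two-level covering argument and every ingredient is already in place. The only point that needs mild care is the bookkeeping around \emph{internal} covers: because we select $u$ from $\mathcal U_1\subset\mathcal F_1$, the composition $v\circ u$ does lie in $\Comp(\mathcal F_1,\mathcal F_2)$, so no transition between internal and external covering numbers is required. A secondary subtlety is that the inner cover radius is measured with respect to $\norm{\cdot}_{u(X)}$ (not $\norm{\cdot}_{f_1(X)}$), which is precisely why the bound features a supremum over $f\in\mathcal F_1$ on the right-hand side.
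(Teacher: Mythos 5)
Your proof is correct and follows essentially the same two-level covering construction as the paper: an internal $\epsilon_1$-cover of $\mathcal F_1$, a per-element internal $\epsilon_2$-cover of $\mathcal F_2$ with respect to $\norm{\cdot}_{u(X)}$, and the same triangle-inequality split using the properties in \cref{eq:empirical_norm_properties}, with the internality of the outer cover justifying the supremum over $f\in\mathcal F_1$. The singleton case is handled in the same way as in the paper, so there is nothing to add.
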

\begin{remark}
    There is an analogous result which holds for external covering numbers, i.e.,
    \begin{equation}
        \begin{split}
        \mathcal N^{\text{ext}}\left(\Comp(\mathcal F_1, \mathcal F_2), \Lip(\mathcal F_2) \epsilon_1 + \epsilon_2, \norm{\cdot}_X\right)
        & \\
        &  \hspace{-2cm}\le \mathcal N^{\text{ext}}(\mathcal F_1, \epsilon_1, \norm{\cdot}_{X})
        \left(
            \sup_{f:\mathcal X_1 \to \mathcal X_2} \mathcal N^{\text{ext}}\left(\mathcal F_2, \epsilon_2, \norm{\cdot}_{f(X)}\right)
        \right)
        \enspace.
        \end{split}
    \end{equation}
    Notably, in this case, the supremum is taken over all $f\colon\mathcal X_1 \to \mathcal X_2$. However, this form is unusable for deriving the whole-network covering number bounds in \cref{appendix:subsection:multi_layer_specific} as we want to handle the supremum via an assumption on the Lipschitz constant of the layer.
\end{remark}
\begin{proof}
    Fix $\epsilon_1, \epsilon_2>0$. 
    Let~$\mathcal U_{\mathcal F_1}\subset \mathcal F_1$ be a minimal $\epsilon_1$-cover of $(\mathcal F_1, \norm{\cdot}_X)$, i.e.,
    $\card(\mathcal U_{\mathcal F_1}) = \mathcal N(\mathcal F_1, \epsilon_1, \norm{\cdot}_X)$. 
    For any covering element $v\in \mathcal U_{\mathcal F_1}$, let $\mathcal U_{\mathcal F_2}(v) \subset \mathcal F_2$ be a minimal $\epsilon_2$-cover of $(\mathcal F_2, \norm{\cdot}_{v(X)})$, i.e., 
    $\card(\mathcal U_{\mathcal F_2}(v)) = \mathcal N(\mathcal F_2, \epsilon_2, \norm{\cdot}_{v(X)})\enspace.$

    Denote $c_2 = \Lip(\mathcal F_2)$.
    We will show that
    \begin{equation*}
        \mathcal U_{\Comp(\mathcal F_1,\mathcal F_2)}
        =
        \set{ w^{v} \circ v | ~ v\in \mathcal U_{\mathcal F_1},~ w^{v} \in \mathcal U_{\mathcal F_2}(v)} \subset \Comp(\mathcal F_1, \mathcal F_2)
    \end{equation*}
    defines an $(\epsilon_1 c_2 + \epsilon_2)$-cover of $(\Comp(\mathcal F_1, \mathcal F_2), \norm{\cdot}_X)$,
    i.e., for any $f_1 \in \mathcal F_1$ and any $f_2 \in \mathcal F_2$, there exist $v\in \mathcal U_{\mathcal F_1}$ and $w^{v}\in \mathcal U_{\mathcal F_2}(v)$ such that
    \begin{equation*}
        \norm{
            f_2 \circ f_1 - w^{v} \circ v
        }_X
        \le
        c_2 \epsilon_1 + \epsilon_2
        \enspace.
    \end{equation*}

    Indeed, since $ \mathcal U_{\mathcal F_1}$ is an $\epsilon_1$-cover of $(\mathcal F_1, \norm{\cdot}_X)$, we can choose $v\in \mathcal U_{\mathcal F_1}$ such that $\norm{f_1 - v}_X \le \epsilon_1$,
    and, since
    $\mathcal U_{\mathcal F_2}(v)$ is an $\epsilon_2$-cover of $(\mathcal F_2, \norm{\cdot}_{v(X)})$, we can choose $w^{v} \in \mathcal U_{\mathcal F_2}(v)$ such that $\norm{f_2 - w^{v}}_{v(X)} \le \epsilon_2$.
    Thus, 
    \begin{align*}
        \norm{
            f_2 \circ f_1 - w^{v} \circ v
        }_X
        &=
        \norm{
            (f_2 \circ f_1 - f_2 \circ v)
            +
            (f_2 \circ v - w^{v} \circ v)
        }_X
        \\
        &\le
        \norm{f_2 \circ f_1 - f_2 \circ v}_X
            +
        \|f_2 \circ v - w^{v} \circ v\|_X
        \\
        & 
        \leq
        \Lip(f_2) \norm{f_1 - v}_X
            +
        \|f_2- w^{v}\|_{v(X)}
        \\
        &\le
        c_2 \epsilon_1 + \epsilon_2
        \enspace,
    \end{align*}
    where the second inequality follows from Eq.~\labelcref{eq:empirical_norm_properties}. Therefore,
    \begin{align*}
        \mathcal N(\Comp(\mathcal F_1, \mathcal F_2), c_2 \epsilon_1 + \epsilon_2, \norm{\cdot}_X)
        &\le
        \card\left(\mathcal U_{\Comp(\mathcal F_1,\mathcal F_2)}\right) \\
        & =
        \card\left(\set{ w^{v} \circ v | ~ v\in \mathcal U_{\mathcal F_1},~ w^{v} \in \mathcal U_{\mathcal F_2}(v)}\right)
        \\
        &=
        \sum_{v \in \mathcal U_{\mathcal F_1}} \card\left(\mathcal U_{\mathcal F_2}(v)\right)
        \\
        &\le
        \left(\sup_{v \in \mathcal U_{\mathcal F_1}} \card\left(\mathcal U_{\mathcal F_2}(v)\right) \right)
        \left(\sum_{v \in \mathcal U_{\mathcal F_1}} 1 \right)
        \\
        &=
        \left(\sup_{v \in \mathcal U_{\mathcal F_1}} \card\left(\mathcal U_{\mathcal F_2}(v)\right) \right)
        \card(\mathcal U_{\mathcal F_1})
        \\
        &\stackrel{(\star)}{\le}
        \left(
            \sup_{f\in \mathcal F_1}\card\left(\mathcal U_{\mathcal F_2}(f)\right)
        \right)
        \card\left(\mathcal U_{\mathcal F_1}\right)
        \\
        &\le
        \left(
            \sup_{f\in \mathcal F_1} \mathcal N\left(\mathcal F_2, \epsilon_2, \norm{\cdot}_{f(X)}\right)
        \right)
        \mathcal N\left(\mathcal F_1, \epsilon_1, \norm{\cdot}_{X}\right)
        \enspace.
    \end{align*} 
    For $(\star)$, we used that $\mathcal U_{\mathcal F_1} \subset \mathcal F_1$ is an internal cover.
    
    The special case of $\mathcal F_2=\set{f_2}$ being a singleton is obvious, as we can choose $\mathcal U_{\mathcal F_2}(v) = \set{f_2}$ for every $v \in \mathcal U_{\mathcal F_1}$.
    Then, for every $f_2 \in \mathcal F_2$ and every $w^v \in \mathcal U_{\mathcal F_2}(v)$, it holds that $\card (\mathcal U_{\mathcal F_2}(v)) = 1$  and $\norm{f_2-w_j^v}_{v(X)}= 0$.
\end{proof}

\vskip1ex
\begin{lemma}[Summations]
    \label{theorem:summations}
    Let $(\mathcal X, \norm{\cdot}_{\mathcal X})$ and $(\mathcal Y, \norm{\cdot}_{\mathcal Y})$ be normed spaces and let $\mathcal F, \mathcal G$ be classes of functions $\mathcal{X}\to\mathcal{Y}$.
    Then, for each $\epsilon_{\mathcal F}, \epsilon_{\mathcal G} >0$ and each $X = \tuple{x_i, \dots, x_n}\in\mathcal X^n$, the covering number of the class $\Sum(\mathcal F, \mathcal G)$ is bounded by
    \begin{equation}
        \mathcal N(\Sum(\mathcal F, \mathcal G), \epsilon_{\mathcal F} + \epsilon_{\mathcal G}, \norm{\cdot}_X)
        \le
        \mathcal N(\mathcal F, \epsilon_{\mathcal F}, \norm{\cdot}_{X})
        \mathcal N(\mathcal G, \epsilon_{\mathcal G}, \norm{\cdot}_{X})
        \enspace.
    \end{equation}
    If $\mathcal G = \set{g}$ is a singleton, then
    \begin{equation}
        \mathcal N(\Sum(\mathcal F, \mathcal G), \epsilon_{\mathcal F}, \norm{\cdot}_X)
        =
        \mathcal N(\mathcal F, \epsilon_{\mathcal F}, \norm{\cdot}_{X})
        \enspace.
    \end{equation}
\end{lemma}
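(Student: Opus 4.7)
The plan is to mirror the proof structure of \cref{theorem:compositions} but exploit that the sum operation respects the data-dependent seminorm via the triangle inequality, so no Lipschitz factor appears. The argument proceeds by building the candidate cover of $\Sum(\mathcal F, \mathcal G)$ as the pointwise sums of elements taken from minimal covers of $\mathcal F$ and $\mathcal G$.

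First I would fix $\epsilon_{\mathcal F}, \epsilon_{\mathcal G}>0$ and select minimal internal covers $\mathcal U_{\mathcal F}\subset \mathcal F$ and $\mathcal U_{\mathcal G}\subset \mathcal G$ of cardinalities $\mathcal N(\mathcal F, \epsilon_{\mathcal F}, \norm{\cdot}_X)$ and $\mathcal N(\mathcal G, \epsilon_{\mathcal G}, \norm{\cdot}_X)$, respectively. Then I would form the candidate cover $\mathcal U_{\Sum} = \{u+w : u \in \mathcal U_{\mathcal F}, w \in \mathcal U_{\mathcal G}\}\subset \Sum(\mathcal F, \mathcal G)$. Note that this set lies inside $\Sum(\mathcal F, \mathcal G)$ by construction, which is exactly what is needed for it to qualify as an internal cover.

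Next, for any $f+g \in \Sum(\mathcal F, \mathcal G)$, pick $u \in \mathcal U_{\mathcal F}$ with $\norm{f-u}_X \le \epsilon_{\mathcal F}$ and $w \in \mathcal U_{\mathcal G}$ with $\norm{g-w}_X \le \epsilon_{\mathcal G}$. The triangle inequality for the seminorm $\norm{\cdot}_X$ from \cref{appendix:def:data_dependent_norm} yields
\[
\norm{(f+g)-(u+w)}_X \le \norm{f-u}_X + \norm{g-w}_X \le \epsilon_{\mathcal F} + \epsilon_{\mathcal G},
\]
so $\mathcal U_{\Sum}$ is an $(\epsilon_{\mathcal F}+\epsilon_{\mathcal G})$-cover of $(\Sum(\mathcal F, \mathcal G), \norm{\cdot}_X)$, and bounding its cardinality by $|\mathcal U_{\mathcal F}|\cdot|\mathcal U_{\mathcal G}|$ gives the product bound.

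For the singleton case $\mathcal G = \{g\}$, I would observe that the translation map $\Phi\colon\mathcal F \to \Sum(\mathcal F, \{g\})$, $f \mapsto f+g$, is a bijection and an isometry with respect to the data-dependent seminorm, since $\norm{(f_1+g)-(f_2+g)}_X = \norm{f_1-f_2}_X$. Consequently, $\Phi$ maps internal $\epsilon_{\mathcal F}$-covers of $\mathcal F$ bijectively to internal $\epsilon_{\mathcal F}$-covers of $\Sum(\mathcal F, \{g\})$ of identical cardinality, giving the claimed equality in both directions. There is no serious obstacle in this proof: the only point requiring attention is keeping the cover \emph{internal}, which is automatic because summing elements of $\mathcal U_{\mathcal F}\subset \mathcal F$ and $\mathcal U_{\mathcal G}\subset \mathcal G$ lands directly in $\Sum(\mathcal F, \mathcal G)$ by definition.
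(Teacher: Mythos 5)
Your proof is correct and follows essentially the same route as the paper: take minimal internal covers of $\mathcal F$ and $\mathcal G$, form pointwise sums, and apply the triangle inequality for $\norm{\cdot}_X$ to get the product bound. Your translation-isometry argument for the singleton case is a slight refinement, since it cleanly yields both directions of the stated equality, whereas the paper only exhibits the cover $\set{v+g}$ giving the upper bound.
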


\begin{proof}
    Fix $\epsilon_{\mathcal F}, \epsilon_{\mathcal G}>0$. Let $\mathcal U_{\mathcal F} \subset \mathcal F$ be a minimal $\epsilon_{\mathcal F}$-cover of $(\mathcal F, \norm{\cdot}_X)$ and let 
    $\mathcal U_{\mathcal G}\subset \mathcal G$ be a minimal $\epsilon_{\mathcal G}$-cover of $(\mathcal G, \norm{\cdot}_X)$, i.e.,
    $\mathcal N(\mathcal F, \epsilon_{\mathcal F}, \norm{\cdot}_X) = \card(\mathcal U_{\mathcal F})$ and $\mathcal N(\mathcal G, \epsilon_{\mathcal G}, \norm{\cdot}_X) = \card(\mathcal U_{\mathcal G})$.

    We will show that
    \begin{equation*}
        \mathcal U_{\Sum(\mathcal F, \mathcal G)}
        =
        \set{ v + w | ~ v\in \mathcal U_{\mathcal F},~ w \in U_{\mathcal G}} \subset \Sum(\mathcal F, \mathcal G)
    \end{equation*}
    defines an $(\epsilon_{\mathcal F} + \epsilon_{\mathcal G})$-cover of $(\Sum(\mathcal F, \mathcal G), \norm{\cdot}_X)$,
    i.e., for every $f \in \mathcal F$ and every $g \in \mathcal G$, there exist $v \in \mathcal U_{\mathcal F}$ and $w \in \mathcal U_{\mathcal G}$ such that
    \begin{equation*}
        \norm{
            (f + g) - (v +w)
        }_X
        \le
        \epsilon_{\mathcal F} + \epsilon_{\mathcal G}
        \enspace.
    \end{equation*}
    Indeed, since $\mathcal U_{\mathcal F}$ is an $\epsilon_{\mathcal F}$-cover of $(\mathcal F, \norm{\cdot}_X)$, we can choose $v \in \mathcal U_{\mathcal F}$ such that $\norm{f - v}_X \le \epsilon_{\mathcal F}$
    and since
    $\mathcal U_{\mathcal G}$ is an $\epsilon_{\mathcal G}$-cover of $(\mathcal G, \norm{\cdot}_{X})$, we can choose $w \in \mathcal U_{\mathcal G}$ such that $\norm{g - w}_{X} \le \epsilon_{\mathcal G}$.
    Then, 
    \begin{align*}
        \norm{
            (f + g) - (v +w)
        }_X
        &=
        \norm{
            (f - v) + (g - w)
        }_X
        \\
        &\le
        \norm{f - v}_X
        +
        \norm{g - w}_X
        \\
        &\le  
        \epsilon_{\mathcal F} + \epsilon_{\mathcal G}
        \enspace.
    \end{align*}
    Therefore, we have 
    \begin{align*}
        \mathcal N(\Sum(\mathcal F, \mathcal G), \epsilon_{\mathcal F} + \epsilon_{\mathcal G}, \norm{\cdot}_X)
        &\le
        \card\left(\mathcal U_{\Sum(\mathcal F, \mathcal G)}\right) \\
        & =
        \card\left(\set{ v + w | ~ v\in \mathcal U_{\mathcal F},~ w \in U_{\mathcal G}}\right)
        \\
        &\le 
        \card( \mathcal U_{\mathcal F})
        \card( \mathcal U_{\mathcal G})
        \\
        &=
        \mathcal N(\mathcal F, \epsilon_{\mathcal F}, \norm{\cdot}_{X})
        \mathcal N(\mathcal G, \epsilon_{\mathcal G}, \norm{\cdot}_{X})
        \enspace.
    \end{align*}    
    \vskip1ex
    The special case of $\mathcal G = \set{g}$ being a singleton is obvious, as we can choose $\mathcal U_{\mathcal G} = \set{g}$. Then $\mathcal U_{\Sum(\mathcal F, \mathcal G)} = \set{ v + g | ~ v\in \mathcal U_{\mathcal F}}$ is a cover of $\Sum(\mathcal F, \mathcal G)$ with cardinality $\card(\mathcal U_{\Sum(\mathcal F, \mathcal G)}) = \card(\mathcal U_{\mathcal F})$ and radius $\epsilon_{\mathcal F}$.
\end{proof}

Now that we know how to bound the covering numbers of compositions and summations, we can iteratively derive covering number bounds for all function classes obtained from these two operations.

\subsubsection{General strategy for bounding the covering numbers of complex classes}
Let $\mathcal F$ be a function class whose covering number is \emph{unknown} to us. If $\mathcal F$ can be built iteratively by compositions and summations of function classes with known covering number (bounds), then we can derive covering number bounds for $\mathcal F$ via the following strategy. In a first step, we identify the structure of $\mathcal F$, i.e., how it is built from compositions and summations. 
In a second step, starting with $\mathcal F$, we iteratively replace each function class by its simpler building blocks and the covering number of $\mathcal F$ by the respective bound.

To be more specific, we know by \cref{theorem:compositions} and \cref{theorem:summations} that for $\mathcal F = \Comp(\mathcal F_{a}, \mathcal F_{b})$, it holds that
\begin{equation*}
    \mathcal N(\mathcal F, \Lip({\mathcal F_{b}}) \epsilon_{\mathcal F_{a}} + \epsilon_{\mathcal F_{b}}, \norm{\cdot}_X)
    \le
    \mathcal N(\mathcal F_{a}, \epsilon_{\mathcal F_{a}}, \norm{\cdot}_{X})
    \left(
        \sup_{f\in \mathcal F_{a}} \mathcal N(\mathcal F_{b}, \epsilon_{\mathcal F_{b}}, \norm{\cdot}_{f(X)})
    \right)
    \enspace.
\end{equation*}
and for $\mathcal F = \Sum(\mathcal F_{a}, \mathcal F_{b})$, it holds that 
\begin{equation*}
    \mathcal N(\mathcal F, \epsilon_{\mathcal F_{a}} + \epsilon_{\mathcal F_{b}}, \norm{\cdot}_X)
    \le
    \mathcal N(\mathcal F_{a}, \epsilon_{\mathcal F_{a}}, \norm{\cdot}_{X})
    \mathcal N(\mathcal F_{b}, \epsilon_{\mathcal F_{b}}, \norm{\cdot}_{X})
    \enspace.
\end{equation*}

Now, if (for $x=a$ or $x=b$) some class $\mathcal F_{x}$ is of the form $\mathcal F_{x} = \Comp(\mathcal F_{xa}, \mathcal F_{xb})$ or 
$\mathcal F_{x} = \Sum(\mathcal F_{xa}, \mathcal F_{xb})$, we bound the right-hand side of the equations above by the same argument.
We repeat this procedure until the right-hand side contains only terms of known covering number bounds. For an illustration of this stepwise process, see 
\cref{appendix:figure_strategy}.

\begin{figure}[]
    \begin{center}
        \includegraphics[width=.8 \textwidth]{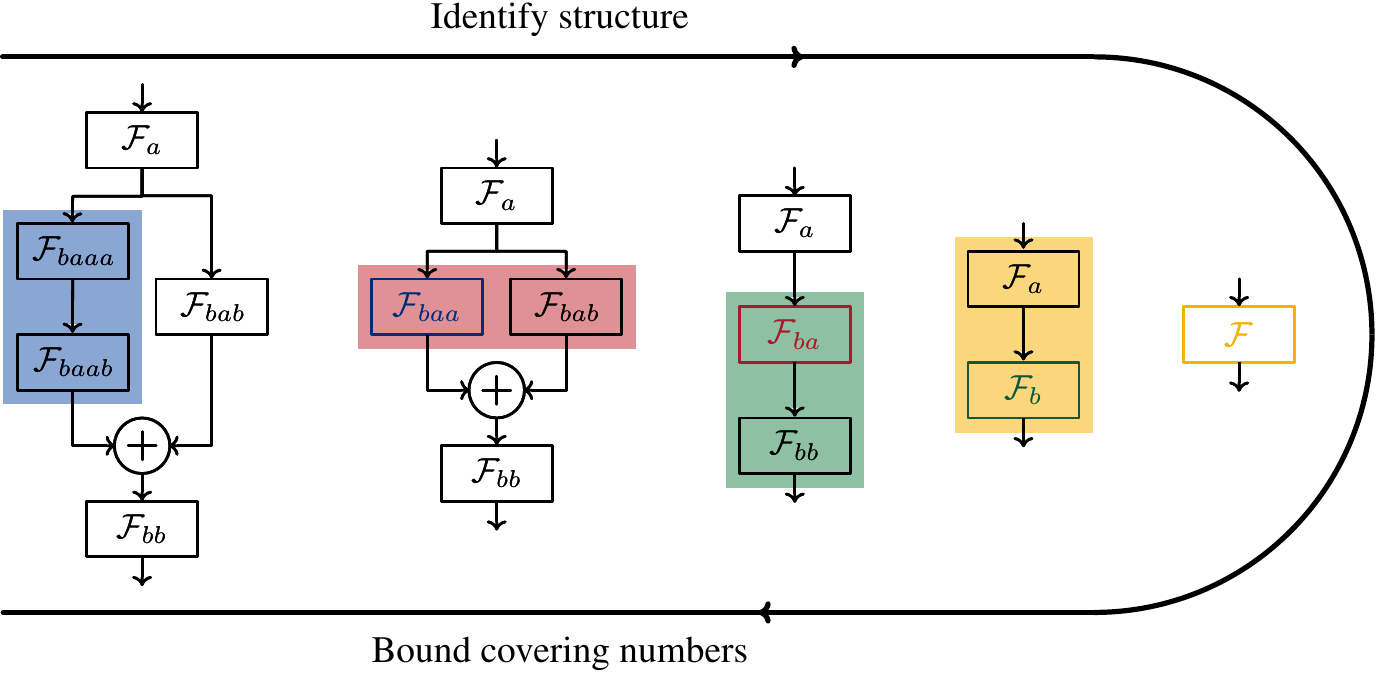}
    \end{center}
    \caption{Schematic illustration of how to obtain whole-network covering number bounds by first identifying a way to write the network via summations and compositions, and then iteratively applying the respective inequalities.
    The function classes are systematically denoted by words with characters $a$ and $b$. Starting with $\mathcal F$ at the very right, we always add a character when replacing a function class by its building blocks.
    \label{appendix:figure_strategy}}
\end{figure}

\subsubsection{Examples}
\begin{example}[Multi-composition]
    \label{theorem:composition_induction}
    Let $(\mathcal X_1, \norm{\cdot}_{\mathcal X_1}),\ldots,(\mathcal X_L, \norm{\cdot}_{\mathcal X_L})$ be normed spaces. Let $\mathcal F_i$ be classes of functions $\mathcal X_i\to \mathcal X_{i+1}$ with bounded Lipschitz constants, i.e., $\Lip(\mathcal F_i)< \infty$.
    Denote $\compto{\mathcal F_i} = \Comp(\mathcal F_{1}, \dots, \mathcal F_{i-1})$.
    Then, for any $\epsilon_i>0$ and any finite $X = \tuple{x_1, \dots, x_n}\in \mathcal X_1^n$, the covering number of the class $\mathcal F = \Comp(\mathcal F_1, \dots, \mathcal F_L)$ is bounded by
    \begin{equation}
        \label{theorem:composition_induction:eq1}
        \mathcal{N}\!\left(
            \mathcal F,
            \sum_{i=1}^L \left(\prod_{l=i+1}^{L}\! \Lip(\mathcal F_l)\right)
            \epsilon_i {\mathbbm{1}_{\card(\mathcal F_i)>1}},
            \norm{\cdot}_X
        \right)
        \le
        \prod_{i=1}^L
        \sup_{\substack{\psi_i \in \compto{\mathcal F_i}}}
        \mathcal N\left(\mathcal F_i, \epsilon_i, \norm{\cdot}_{\psi_i(X)}\right)^{\mathbbm{1}_{\card(\mathcal F_i)>1}}
        \enspace.
    \end{equation}
\end{example}

\begin{proof}
We have
    \begin{align*}
    & \mathcal N \left(
        \mathcal F,
        \sum_{i=1}^L \left(\prod_{l=i+1}^{L} \Lip(\mathcal F_l)\right)
        \epsilon_i {\mathbbm{1}_{\card(\mathcal F_i)>1}},
        \norm{\cdot}_X
    \right) \\
    & = \mathcal N
        \Biggl(
        \Comp(
            \compto{\mathcal F_{L}},
            \mathcal F_L                
        ),\\
        & \hspace{1.11cm} \Lip(\mathcal F_L)
        \left( 
            \sum_{i=1}^{L-1} \left(\prod_{l=i+1}^{L-1} \Lip(\mathcal F_l)\right)
            \epsilon_i {\mathbbm{1}_{\card(\mathcal F_i)>1}}
        \right)
        + 
        \epsilon_L {\mathbbm{1}_{\card(\mathcal F_L)>1}},
        \norm{\cdot}_X\Biggr)\\
        & \le
        \mathcal N
        \Biggl(
            \compto{\mathcal F_{L}}, \sum_{i=1}^{L-1} \left(\prod_{l=i+1}^{L-1} \Lip(\mathcal F_l)\right)
                \epsilon_i {\mathbbm{1}_{\card(\mathcal F_i)>1}},
            \norm{\cdot}_X
        \Biggr)
        \\
        & \hspace{1.11cm} 
        \cdot\left(
            \sup_{\psi_L \in \compto{\mathcal F_{L}} } 
            \mathcal N(
                \mathcal F_{L},
                \epsilon_L,
                \norm{\cdot}_{\psi_L(X)}
        )
        \right)^{\!\mathbbm{1}_{\card(\mathcal F_L)>1}}
        \\
        & \le 
        \dots 
        \\
        & \le
        \prod_{i=1}^L
        \sup_{\substack{\psi_i \in \compto{\mathcal F_i}}}
        \mathcal N\left(\mathcal F_i, \epsilon_i, \norm{\cdot}_{\psi_i(X)}\right)^{\mathbbm{1}_{\card(\mathcal F_i)>1}}
        \enspace.
\end{align*}
Here, we used \cref{theorem:compositions} with {$\epsilon_{\compto{\mathcal F_k}}= \sum_{i=1}^{k-1} \left(\prod_{l=i+1}^{k-1} \Lip(\mathcal F_l)\right)
\epsilon_i {\mathbbm{1}_{\card(\mathcal F_i)>1}}$}
\end{proof}

\begin{remark}
We want to point out that \cref{theorem:composition_induction} implies the whole-network covering bound as in \cite[Lemma A.7]{Bartlett17a}.
To see this, let $\mathcal X_i = \mathbb R^{d_i}$,
$X_i = (x_{i_1},\dots, x_{i_n}) \in \mathbb R^{d_i \times n}$ and let $\sigma_i$ be fixed $\rho_i$-Lipschitz functions.
Further, let $\mathcal A_i$ be sets of matrices $A\in \mathbb R^{d_{i+1}\times d_i}$.
Then, the maps
\begin{align*}
    \psi_i: (\mathcal F_i = \set{ \sigma_i \circ A |~ A \in \mathcal A_i}, \norm{\cdot}_{X_i})
    &\to
    (\mathbb R^{d_i\times n}, \norm{\cdot}_{l_2})
    \\
    \sigma_i \circ A &\mapsto \sigma_i(AX_i)
\end{align*}
define isometries, because
\begin{equation}
    \begin{split}
    \norm{\psi_i(\sigma_i \circ A)}_{l_2}^2 
    & = \norm{\sigma_i (AX_i)}_{l_2}^2 \\
    & = \sum_{k=1}^n \norm{\sigma_i(Ax_{i_k})}^2\\
    & = \sum_{k=1}^n \norm{(\sigma_i \circ A) (x_{i_k})}^2\\
    & = \norm{\sigma_i\circ A}_{X_i}^2
    \enspace.
    \end{split}
\end{equation}
Consequently,
\begin{align*}
    \mathcal N (\mathcal F_i, \rho_i \epsilon_i, \norm{\cdot}_{X_i}) 
    &=
    \mathcal N(\psi_i(\mathcal F_i), \rho_i \epsilon_i, \norm{\cdot}_{l_2}) \\
    & =
    \mathcal N(\set{\sigma_i(AX_i)|~A\in \mathcal A_i}, \rho_i \epsilon_i, \norm{\cdot}_{l_2})
    \\
    &\le
    \mathcal N(\set{AX_i|~A\in \mathcal A_i}, \epsilon_i, \norm{\cdot}_{l_2})
    \enspace,
\end{align*}

    which are the factors on the right hand side of \cite[Lemma A.7]{Bartlett17a}.
\end{remark}

\vskip1ex
\begin{example}[Addition block]
    \label{theorem:addition_block}
    Let $\mathcal F$ be the function class of addition blocks, i.e.,
    \begin{equation*}
        \mathcal F = \Sum (\mathcal G, \mathcal H)
        \enspace,
    \end{equation*}
    where $\mathcal G = \Comp(\mathcal G_1, \dots, \mathcal G_{L_{\mathcal G}})$ and $\mathcal H = \Comp(\mathcal H_1, \dots, \mathcal H_{L_{\mathcal H}})$.
    For brevity, we write 
    \begin{align*}
        \compto{\mathcal G_{i}} & = \Comp(\mathcal G_{1}, \dots, \mathcal G_{i-1})\enspace,\\
        \compto{\mathcal H_{i}} & = \Comp(\mathcal H_{1}, \dots, \mathcal H_{i-1})\enspace.
    \end{align*}

    The covering number of a block $\mathcal F = \Sum (\mathcal G, \mathcal H)$ is bounded by 
    \begin{equation}
        \begin{split}
        \label{theorem:addition_block:eq1}
        & \mathcal N
        \Biggl(\!
            \mathcal F,
            \sum_{i=1}^{L_{\mathcal G}}
            \left(\prod_{l=i+1}^{L_{\mathcal G}}\!\! \Lip(\mathcal G_l) \right)
            \!\epsilon_{\mathcal G_{i}} {\mathbbm{1}_{\card(\mathcal G_i)>1}}
            ~+ \\
            &\hspace{1.02cm}
            \sum_{i=1}^{L_{\mathcal H}} 
            \left(\prod_{l=i+1}^{L_{\mathcal H}}\!\! \Lip(\mathcal H_l) \right)
            \!\epsilon_{\mathcal H_{i}} {\mathbbm{1}_{\card(\mathcal H_i)>1}},
            \norm{\cdot}_{X}\!
        \Biggr) {\leq} \left(
            \prod_{i=1}^{{L_{\mathcal G}}}
            \sup_{\substack{\psi \in \compto{\mathcal G_{i}}}}
            \!\!\mathcal N\left(
                \mathcal G_{i},
                \epsilon_{\mathcal G_{i}},
                \norm{\cdot}_{\psi(X)}
            \right)
        \!\!\right)^{\!\!\mathbbm{1}_{\card(\mathcal G_i)>1}} \\
        & 
        \hspace{7.66cm}\left(
            \prod_{i=1}^{{L_{\mathcal H}}}
            \!\sup_{\substack{\psi \in \compto{\mathcal H_{i}}}}
            \!\!\!\mathcal N\!\left(
                \mathcal H_{i},
                \epsilon_{\mathcal H_{i}},
                \norm{\cdot}_{\psi(X)}
            \right)
        \!\!\right)^{\!\!\mathbbm{1}_{\card(\mathcal H_i)>1}}
        \end{split}
    \end{equation}
\end{example}
\begin{proof}
    From \cref{theorem:summations}, we know that
    \begin{equation*}
        \mathcal N(
            \mathcal F,
            \epsilon_{\mathcal G} + \epsilon_{\mathcal H},
            \norm{\cdot}_{X}
        )
        \le
        \mathcal N(
            \mathcal G,
            \epsilon_{\mathcal G},
            \norm{\cdot}_{X}
        )
        \,
        \mathcal N(
            \mathcal H,
            \epsilon_{\mathcal H},
            \norm{\cdot}_{X}
        )
    \end{equation*}
    holds for every $\epsilon_{\mathcal G}>0$ and $\epsilon_{\mathcal H}>0$.
    Choosing 
    $$\epsilon_{\mathcal G}= \sum_{i=1}^{L_{\mathcal G}}
    \left(\prod_{l=i+1}^{L_{\mathcal G}} \Lip(\mathcal G_l) \right)
    \epsilon_{\mathcal G_{i}} {\mathbbm{1}_{\card(\mathcal G_i)>1}} $$ 
    and 
    $$\epsilon_{\mathcal H} =  \sum_{i=1}^{L_{\mathcal H}}
    \left(\prod_{l=i+1}^{L_{\mathcal H}} \Lip(\mathcal H_l) \right)
    \epsilon_{\mathcal H_{i}} {\mathbbm{1}_{\card(\mathcal H_i)>1}}\enspace,$$ 
    and bounding each factor on the right-hand side via \cref{theorem:composition_induction} yields \cref{theorem:addition_block:eq1}.
\end{proof}

\begin{example}[Residual network]
    \label{theorem:residual_networks}
    In the setting of \cref{theorem:composition_induction}, let the function classes $\mathcal F_i$ be residual blocks
    \begin{equation*}
        \mathcal F_i = \Sum (\mathcal G_i, \mathcal H_i)
        \enspace,
    \end{equation*}
    where 
    \begin{align*}
        \mathcal G_{i} = \Comp(\mathcal G_{i1}, \dots, \mathcal G_{iL_{\mathcal G_i}}) \quad \text{and} \quad \mathcal H_{i} &= \Comp(\mathcal H_{i1}, \dots, \mathcal H_{iL_{\mathcal H_i}})
        \enspace.
    \end{align*}
    Assume, that $\Lip({\mathcal G}_{ij}), \Lip(\mathcal H_{ij}) < \infty$ and that input data $X$ is given.
    For brevity, we write 
    \begin{align*}
        \compto{\mathcal G_{ij}} & = \Comp(\mathcal G_{i1}, \dots, \mathcal G_{i,{j-1}}) \\
        \compto{\mathcal H_{ij}} & = \Comp(\mathcal H_{i1}, \dots, \mathcal H_{i,{j-1}}) \\
        \compto{\mathcal F_{i}}  & = \Comp(\mathcal F_{1}, \dots, \mathcal F_{i-1})\enspace.
    \end{align*}
    
    The covering number of the residual network, $\mathcal F = \Comp(\mathcal F_1,\dots, \mathcal F_L)$, is bounded by
    \begin{equation}
        \begin{split}
        \label{theorem:residual_networks:eq1}
        & \mathcal N\left(
            \mathcal F,
            \epsilon_{\mathcal F},
            \norm{\cdot}_X
        \right) \le\\
        &~~~~ 
        \left(
        \prod_{i=1}^L
        \prod_{j=1}^{L_i}
                \sup_{\substack{\psi_{ij} \in \\ \Comp( \compto{\mathcal F_{i}}, \compto{\mathcal G_{ij}})}}
                \mathcal N\left(
                    \mathcal G_{ij},
                    \epsilon_{\mathcal G_{ij}},
                    \norm{\cdot}_{\psi_{ij} (X)}
                \right)^{\mathbbm{1}_{\card(\mathcal G_{ij})>1}}
        \right)\\
        &~~~~
        \left(
        \prod_{i=1}^L
        \prod_{j=1}^{L_i}
                \sup_{\substack{\psi_{ij} \in \\ \Comp( \compto{\mathcal F_{i}}, \compto{\mathcal H_{ij}})}}
                \mathcal N\left(
                    \mathcal H_{ij},
                    \epsilon_{\mathcal H_{ij}},
                    \norm{\cdot}_{\psi_{ij} (X)}
                \right)^{\mathbbm{1}_{\card(\mathcal H_{ij})>1}}
        \right)\,,
        \end{split}
    \end{equation}
    where
    \begin{align*}
        \epsilon_{\mathcal F} = 
        \sum_{i=1}^L
           \left( \prod_{l=i+1}^L \Lip(\mathcal F_i)\right)\!
           \epsilon_{\mathcal F_i}
    \end{align*}
    with
    \begin{align*}
        \epsilon_{\mathcal F_i} =   
                \sum_{j=1}^{L_{\mathcal G_i}}
                \left(\! \prod_{k=j+1}^{L_{\mathcal G_i}} \Lip(\mathcal G_{ik})\right)
                \epsilon_{\mathcal G_{ij}} {\mathbbm{1}_{\card(\mathcal G_{ij})>1}}
            + 
                \sum_{j=1}^{L_{\mathcal H_i}}\!
                \left( \prod_{k=j+1}^{L_{\mathcal H_i}} \Lip(\mathcal H_{ik})\!\right)
                \epsilon_{\mathcal H_{ij}} {\mathbbm{1}_{\card(\mathcal H_{ij})>1}}
            \enspace.
    \end{align*}
\end{example}
\begin{proof}
    Assuming $\card(\mathcal F_{i})>1$, we apply \cref{theorem:composition_induction} to $\mathcal F = \Comp(\mathcal F_1,\dots, \mathcal F_L)$ to obtain
    \begin{equation*}
        \mathcal N(
            \mathcal F,            
            \sum_{i=1}^L
            \left( \prod_{l=i+1}^L \Lip(\mathcal F_i)\right)
            \epsilon_{\mathcal F_i} ,
            \norm{\cdot}_X
        )
        \le
        \prod_{i=1}^L
        \sup_{\substack{\psi_i \in \compto{\mathcal F_i}}}
        \mathcal N(\mathcal F_i, \epsilon_{\mathcal F_i}, \norm{\cdot}_{\psi_i(X)})
        \enspace.
    \end{equation*}
    Bounding the covering number of each block $\mathcal F_i$ via \cref{theorem:addition_block}
    yields \cref{theorem:residual_networks:eq1}.
\end{proof}

\subsubsection{Covering number bounds for concatenations}

The general approach to bounding covering numbers of function classes, obtained from linking simple function classes via summations and compositions, can be easily extended. As an example, we can incorporate \emph{concatenations}, as typically used in DenseNets \citep{Huang17a}, via the following lemma.

\vskip2ex
\begin{lemma}[Concatenations]
    \label{appendix:lemma:concatenation}
    Let $(\mathcal X, \norm{\cdot}_{\mathcal X})$ be a normed space and let  $(\mathcal Y, \norm{\cdot}_{\mathcal Y}) = (\mathbb R^{d_{\mathcal Y}}, \norm{\cdot}_{l_2})$ and $(\mathcal Z, \norm{\cdot}_{\mathcal Z}) = (\mathbb R^{d_{\mathcal Z}}, \norm{\cdot}_{l_2})$.
    Let $\mathcal F, \mathcal G$ be classes of functions $\mathcal X\to \mathcal Y$, resp.  $\mathcal X\to \mathcal Z$.
    Define the function class $\Cat(\mathcal F, \mathcal G)$ of concatenations
    $\mathcal X \to \mathcal Y \times \mathcal Z$
    as 
    \begin{equation}
        \Cat(\mathcal F, \mathcal G) 
        =
        \set{(f,g): x\mapsto (f(x),g(x))~|~ f\in \mathcal F,~ g\in \mathcal G}
        \enspace.
    \end{equation}
    If we equip $\mathcal Y \times \mathcal Z = \mathbb R^{d_{\mathcal Y} d_{\mathcal Z}}$ with the $l_2$ norm, then
    \begin{equation}
        \mathcal N(\Cat(\mathcal F, \mathcal G),
        \sqrt{\epsilon_{\mathcal F}^2 + \epsilon_{\mathcal G}^2},
        \norm{\cdot}_X)
        \le
        \mathcal N(\mathcal F, \epsilon_{\mathcal F}, \norm{\cdot}_{X})
        \mathcal N(\mathcal G, \epsilon_{\mathcal G}, \norm{\cdot}_{X})
        \enspace.
    \end{equation}
\end{lemma}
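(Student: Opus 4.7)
The plan is to mirror the structure of the proof of Lemma \ref{theorem:summations} but exploit the Pythagorean structure that the $l_2$ norm on $\mathcal Y \times \mathcal Z = \mathbb R^{d_{\mathcal Y}+d_{\mathcal Z}}$ provides. First, I would fix $\epsilon_{\mathcal F}, \epsilon_{\mathcal G} > 0$ and choose minimal internal covers $\mathcal U_{\mathcal F} \subset \mathcal F$ and $\mathcal U_{\mathcal G} \subset \mathcal G$ of $(\mathcal F, \norm{\cdot}_X)$ and $(\mathcal G, \norm{\cdot}_X)$ with radii $\epsilon_{\mathcal F}$ and $\epsilon_{\mathcal G}$, so that their cardinalities equal the covering numbers $\mathcal N(\mathcal F, \epsilon_{\mathcal F}, \norm{\cdot}_X)$ and $\mathcal N(\mathcal G, \epsilon_{\mathcal G}, \norm{\cdot}_X)$, respectively.

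Next, I would propose the candidate cover
$$\mathcal U_{\Cat(\mathcal F, \mathcal G)} = \set{(v,w) : v \in \mathcal U_{\mathcal F},\, w \in \mathcal U_{\mathcal G}} \subset \Cat(\mathcal F, \mathcal G),$$
whose cardinality is at most $\card(\mathcal U_{\mathcal F})\,\card(\mathcal U_{\mathcal G})$. I then need to verify that $\mathcal U_{\Cat(\mathcal F, \mathcal G)}$ is a $\sqrt{\epsilon_{\mathcal F}^2 + \epsilon_{\mathcal G}^2}$-cover with respect to $\norm{\cdot}_X$. Given any $(f,g) \in \Cat(\mathcal F, \mathcal G)$, pick $v \in \mathcal U_{\mathcal F}$ and $w \in \mathcal U_{\mathcal G}$ such that $\norm{f-v}_X \le \epsilon_{\mathcal F}$ and $\norm{g-w}_X \le \epsilon_{\mathcal G}$.

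The key (and only nontrivial) computation is the radius bound, which uses Definition \ref{appendix:def:data_dependent_norm} together with the fact that the $l_2$ norm on $\mathcal Y \times \mathcal Z$ satisfies $\norm{(y,z)}_{l_2}^2 = \norm{y}_{\mathcal Y}^2 + \norm{z}_{\mathcal Z}^2$:
\begin{equation*}
\norm{(f,g) - (v,w)}_X^2
= \sum_{i=1}^n \norm{\bigl(f(x_i){-}v(x_i),\, g(x_i){-}w(x_i)\bigr)}_{l_2}^2
= \norm{f-v}_X^2 + \norm{g-w}_X^2
\le \epsilon_{\mathcal F}^2 + \epsilon_{\mathcal G}^2.
\end{equation*}
Taking square roots yields the claim.

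I do not anticipate any real obstacle: the argument is essentially bookkeeping, with the only substantive observation being that the product-space $l_2$ norm decouples coordinate-wise, which is precisely what converts the triangle-inequality bound $\epsilon_{\mathcal F} + \epsilon_{\mathcal G}$ of Lemma \ref{theorem:summations} into the sharper Pythagorean bound $\sqrt{\epsilon_{\mathcal F}^2 + \epsilon_{\mathcal G}^2}$. This is the main qualitative distinction between concatenation and summation, and it is the reason the lemma is worth stating separately.
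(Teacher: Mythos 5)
Your proposal is correct and follows essentially the same route as the paper's proof: the same product cover $\set{(v,w)}$, the same coordinate-wise Pythagorean computation of $\norm{(f,g)-(v,w)}_X^2 = \norm{f-v}_X^2 + \norm{g-w}_X^2$, and the same cardinality bound. No gaps; your remark that the $l_2$ product structure is what sharpens the radius from $\epsilon_{\mathcal F}+\epsilon_{\mathcal G}$ to $\sqrt{\epsilon_{\mathcal F}^2+\epsilon_{\mathcal G}^2}$ matches the paper's argument exactly.
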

\begin{proof}
    Fix $\epsilon_{\mathcal F}, \epsilon_{\mathcal G}>0$. Let $\mathcal U_{\mathcal F} \subset \mathcal F$ be a minimal $\epsilon_{\mathcal F}$-cover of $(\mathcal F, \norm{\cdot}_X)$ and let 
    $\mathcal U_{\mathcal G}\subset \mathcal G$ be a minimal $\epsilon_{\mathcal G}$-cover of $(\mathcal G, \norm{\cdot}_X)$, i.e.,
    $\mathcal N(\mathcal F, \epsilon_{\mathcal F}, \norm{\cdot}_X) = \card(\mathcal U_{\mathcal F})$ and $\mathcal N(\mathcal G, \epsilon_{\mathcal G}, \norm{\cdot}_X) = \card(\mathcal U_{\mathcal G})$.

    We will show that
    \begin{equation*}
        \mathcal U_{\Cat(\mathcal F, \mathcal G)}
        :=
        \set{ (v, w) | ~ v\in \mathcal U_{\mathcal F},~ w \in U_{\mathcal G}} \subset \Cat(\mathcal F_1, \mathcal F_2)
    \end{equation*}
    defines an $\sqrt{\epsilon_{\mathcal F}^2 + \epsilon_{\mathcal G}^2}$-cover of $(\Cat(\mathcal F, \mathcal G), \norm{\cdot}_{l_2})$,
    i.e., for every $f \in \mathcal F$ and every $g \in \mathcal G$, there exist $v \in \mathcal U_{\mathcal F}$ and $w \in \mathcal U_{\mathcal G}$ such that
    \begin{equation*}
        \norm{
            (f,g) - (v,w)
        }_X
        \le
        \sqrt{\epsilon_{\mathcal F}^2 + \epsilon_{\mathcal G}^2}
        \enspace.
    \end{equation*}
    Indeed, since $\mathcal U_{\mathcal F}$ is an $\epsilon_{\mathcal F}$-cover of $(\mathcal F, \norm{\cdot}_X)$, we can choose $v$ such that $\norm{f - v}_X \le \epsilon_{\mathcal F}$
    and since
    $\mathcal U_{\mathcal G}$ is an $\epsilon_{\mathcal G}$-cover of $(\mathcal G, \norm{\cdot}_{X})$, we can choose $w$ such that $\norm{g - w}_{X} \le \epsilon_{\mathcal G}$.
    Then
    \begin{align*}
        \norm{
            (f,g) - (v,w)
        }_X^2
        &=
        \sum_{i=1}^n
        \norm{(f, g)(x_i) -(v, w)(x_i)}_{l_2}^2
        \\
        &=
        \sum_{i=1}^n
        \norm{(f-v, g-w)(x_i)}_{l_2}^2
        \\
        &=
        \sum_{i=1}^n
        \left(
        \norm{(f -v)(x_i)}_{l_2}^2 + \norm{(g - w)(x_i))}_{l_2}^2
        \right) \tag{Pythagorean thm.}
        \\
        &=
        \norm{f-v}_X^2 + \norm{g-w}_X^2
        \\
        &\le  
        \epsilon_{\mathcal F}^2 + \epsilon_{\mathcal G}^2
        \enspace,
    \end{align*}
    Therefore,
    \begin{align*}
        \mathcal N(\Cat(\mathcal F, \mathcal G),
        \sqrt{\epsilon_{\mathcal F}^2 + \epsilon_{\mathcal G}^2},
        \norm{\cdot}_X)
        &\le
        \card(\mathcal U_{\Cat(\mathcal F, \mathcal G)}
        )\\
        & =
        \card\left(\set{ (v, w)| ~ v\in \mathcal U_{\mathcal F},~ w \in U_{\mathcal G}}\right)
        \\
        &= 
        \card( \mathcal U_{\mathcal F})
        \card( \mathcal U_{\mathcal G})
        \\
        &=
        \mathcal N(\mathcal F, \epsilon_{\mathcal F}, \norm{\cdot}_{X})
        \mathcal N(\mathcal G, \epsilon_{\mathcal G}, \norm{\cdot}_{X})
        \enspace.
    \end{align*}
\end{proof}

\subsection{Whole-network covering number bounds (convolutional \& fully-connected)}
\label{appendix:subsection:multi_layer_specific}

In order to compute covering number bounds for specific residual network architectures, we need to specify the function classes $\mathcal G_i$ and $\mathcal H_i$. 
We will present exemplary proofs for a simple residual network with fixed shortcuts (\cref{theorem:residual_network_covering_ceiling}, which corresponds to \cref{paper:theorem:residual_network_covering} in the main text) and the ResNet18 architecture \citep{He16a} without batch normalization, see \cref{theorem:resnet18}.

\subsubsection{Bounds for residual networks}

\begin{theorem}[Covering numbers for residual networks]
    \label{theorem:residual_network_covering_ceiling}
    For $i=1,\dots,L$ let $j=1,\dots, L_i$, $s_{ij}>0$ and $b_{ij}>0$.
    Further, let $\mathcal F$ be the class of residual networks of the form 
    \begin{equation}
        f = \sigma_L \circ f_L \circ \dots \circ \sigma_1 \circ f_1 \enspace,
    \end{equation}
    with $\sigma_i$ fixed $\rho_i$-Lipschitz functions satisfying $\sigma_i(0)=0$, and $f_i$ residual blocks with fixed shortcuts $g_i$, i.e.,
    \begin{equation}
        f_i: g_i  + (\sigma_{iL_i} \circ f_{iL_i} \circ \dots \circ \sigma_{i1} \circ f_{i1}) \enspace,
    \end{equation}
    where $\sigma_{ij}$ are fixed $\rho_{ij}$-Lipschitz functions with $\sigma_{ij}(0)=0$ and $g_i$ is Lipschitz with $g_i(0)=0$.
    
    The fully-connected or convolutional layers $f_{ij} \in \layer_{ij}$ are parametrized by matrices $A_{ij}$ or weight tensors $K_{ij}$, respectively. They satisfy Lipschitz constant constraints $s_{ij}$ and ($2,1$) group norm distance constraints $b_{ij}$ \wrt reference weights $M_{ij}$.
    That is, for convolutions
    \begin{equation*}
        \layer_{ij}
        =
        \set{\phi_{K_{ij}}~|~ \Lip(\phi_{K_{ij}}) \le s_{ij},~ \norm{K_{ij} - M_{ij}}_{2,1}\le b_{ij}}
    \end{equation*}
    and for fully-connected layers
    \begin{equation*}
        \layer_{ij}
        =
        \set{\phi: x \mapsto A_{ij}x~|~ \Lip(\phi) \le s_{ij},~ \norm{A_{ij}^\top - M_{ij}^\top}_{2,1}\le b_{ij}}
        \enspace.
    \end{equation*}

    Upon letting $W_{ij}$ denote the number of parameters of each layer and defining
    \begin{align*}
        & C_{ij}
        =  C_{ij}(X) 
        =
        2\,\frac{\norm{X}}{\sqrt{n}} 
        \left(
            \prod_{\substack{l=1}}^{L}
                s_l \rho_l
        \right)
        \frac
        {\prod_{\substack{k=1}}^{L_i} \rho_{ik} s_{ik}}     
        {s_i} 
        \frac{b_{ij}}{s_{ij}}
        \enspace, \\
        & \bar L = \sum_{i=1}^L L_i\enspace, \quad 
        W = \max_{ij} W_{ij}\enspace, \quad
        s_i = \Lip(g_i)+\prod_{j=1}^{L_i} \rho_{ij} s_{ij}
        \enspace,
    \end{align*}
   it holds that 
    \begin{equation}
        \label{theorem:ceil_residual_net:eq_norms}
        \log \mathcal N(
            \mathcal F,
            \epsilon,
            \norm{\cdot}_{X}
        )
        \le
        \log(2W)
        \left(
            \sum_{i=1}^L
            \sum_{j=1}^{L_i}
            \ceil{
                C_{ij}^{2/3}
            }
        \right)^3  
        \ceil{\frac {{n}} {\epsilon^2}}
        \enspace,
    \end{equation}
    and
    \begin{equation}
        \label{theorem:ceil_residual_net:eq_params}
        \log \mathcal N(\mathcal F, \epsilon, \norm{\cdot}_X)
        \le
        \sum_{i=1}^L
        \sum_{j=1}^{L_i}
        2 W_{ij}
        \log\left(
            1 +
            \ceil{
                \bar L^2 C_{ij}^2
            }
            \ceil{\frac{n}{\epsilon^2}}
        \right)  \enspace.
    \end{equation}
\end{theorem}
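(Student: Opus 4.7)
The plan is to combine the general residual-network covering number bound (\cref{theorem:residual_networks}) with our single-layer bound (\cref{paper:theorem:single_layer}), and then optimize over the per-layer covering radii $\epsilon_{ij}$. First, since each shortcut $g_i$ is a singleton, \cref{theorem:summations} reduces covering the block $\Sum(\{g_i\},\mathcal H_i)$ to covering the non-shortcut branch $\mathcal H_i$. Because each fixed nonlinearity is a singleton contributing no cover, \cref{theorem:composition_induction} applied inside the blocks and then across the blocks yields
\[
\log\mathcal N(\mathcal F,\textstyle\sum_{ij}\beta_{ij}\epsilon_{ij},\|\cdot\|_X)\le\sum_{ij}\log\mathcal N(\layer_{ij},\epsilon_{ij},\|\cdot\|_{\psi_{ij}(X)}),
\]
where $\beta_{ij}=\rho_{ij}\bigl(\prod_{k>j}\rho_{ik}s_{ik}\bigr)\,\rho_i\prod_{l>i}s_l\rho_l$ is the Lipschitz constant of the sub-network following $h_{ij}$, and $\psi_{ij}$ is the corresponding (constrained) prefix.

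Next, translation invariance under the shift $K\mapsto K-M_{ij}$ converts $\layer_{ij}$ into a $(2,1)$-norm ball of radius $b_{ij}$, so \cref{paper:theorem:single_layer} applies directly. Because each $\sigma(0)=0$ and $g_i(0)=0$, the prefix satisfies $\|\psi_{ij}(X)\|\le\Lambda_{ij}\|X\|$ with $\Lambda_{ij}=\prod_{l<i}s_l\rho_l\prod_{k<j}\rho_{ik}s_{ik}$. A telescoping of the Lipschitz products then gives the key identity
\[
\beta_{ij}\,\Lambda_{ij}\|X\|\,b_{ij}=\tfrac{\sqrt n}{2}\,C_{ij}.
\]
Writing $\tau_{ij}:=\beta_{ij}\epsilon_{ij}$, each summand becomes either $\ceil{nC_{ij}^2/(4\tau_{ij}^2)}\log(2W)$ or $2W_{ij}\log\bigl(1+\ceil{nC_{ij}^2/(4\tau_{ij}^2)}\bigr)$, and the total covering radius equals $\sum_{ij}\tau_{ij}$.

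For bound \eqref{theorem:ceil_residual_net:eq_norms}, I set $\tau_{ij}$ proportional to $\alpha_{ij}:=\ceil{C_{ij}^{2/3}}$ with $\sum_{ij}\tau_{ij}=\epsilon/2$, the factor $1/2$ absorbing the external-to-internal covering conversion (via \cref{eq:covers_int_ext}) implicit in passing from the $(2,1)$-norm ball covers of \cref{paper:theorem:single_layer} to covers of the strictly smaller Lipschitz-constrained subclass $\layer_{ij}$. The bound $\alpha_{ij}\ge C_{ij}^{2/3}$ gives $C_{ij}^2/\alpha_{ij}^2\le\alpha_{ij}$, and since $\alpha_{ij}(\sum_{kl}\alpha_{kl})^2\in\mathbb N$ the inequality $\ceil{mq}\le m\ceil{q}$ (for $m\in\mathbb N$) collapses the sum to $(\sum_{ij}\alpha_{ij})^3\ceil{n/\epsilon^2}\log(2W)$; the allocation of $\tau_{ij}$ is in fact the Hölder-optimal one for the underlying minimization of $\sum_{ij}C_{ij}^2/\tau_{ij}^2$ subject to $\sum\tau_{ij}$ fixed. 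For bound \eqref{theorem:ceil_residual_net:eq_params}, the uniform choice $\tau_{ij}=\epsilon/(2\bar L)$ makes $nC_{ij}^2/(4\tau_{ij}^2)=\bar L^2 C_{ij}^2\cdot n/\epsilon^2$, so each layer contributes at most $2W_{ij}\log\bigl(1+\ceil{\bar L^2 C_{ij}^2}\ceil{n/\epsilon^2}\bigr)$ after another application of $\ceil{mq}\le m\ceil{q}$.

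The main obstacle is the constant bookkeeping: identifying exactly where the factor $2$ in $C_{ij}$ originates (the external-to-internal covering overhead, since the Lipschitz constraint cuts out a strict subclass of the $(2,1)$-norm ball, making the single-layer cover from \cref{paper:theorem:single_layer} only an \emph{external} cover of $\layer_{ij}$) and distributing the ceiling functions across products of integers without losing the clean $\ceil{n/\epsilon^2}$ factor. Everything else is routine telescoping of Lipschitz products through the nested indices of blocks and layers.
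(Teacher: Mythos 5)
Your proposal is correct and follows essentially the same route as the paper's proof: the composition/summation covering lemmas reduce the network to per-layer covers, translation invariance removes the reference weights $M_{ij}$, the internal-to-external conversion (which you correctly identify as the origin of the factor $2$ in $C_{ij}$) lets the $(2,1)$-ball bound of \cref{paper:theorem:single_layer} apply to the strictly smaller Lipschitz-constrained class $\layer_{ij}$, and the radii are then allocated and the ceilings redistributed via $\ceil{xy}\le\ceil{x}\ceil{y}$ exactly as in \cref{theorem:residual_network_covering_ceiling}. The only deviations are cosmetic — you put the factor $\tfrac12$ into the total budget $\sum_{ij}\tau_{ij}=\epsilon/2$ rather than halving each per-layer radius, and you allocate $\tau_{ij}\propto\ceil{C_{ij}^{2/3}}$ instead of the paper's $b_{ij}^{2/3}/s_{ij}^{2/3}$ — both choices lead to the identical stated bounds.
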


\begin{proof}
    As we consider residual networks with fixed shortcuts, the covering number bound from \cref{theorem:residual_networks} simplifies to 
    \begin{align*}
        & \mathcal N\left(
            \mathcal F,
        \sum_{i=1}^L
            \left( \prod_{l=i+1}^L \Lip(\mathcal F_l)\right)\!
                \sum_{j=1}^{L_{\mathcal H_i}}
                \left(\! \prod_{k=j+1}^{L_{\mathcal H_i}} \Lip(\mathcal H_{ik})\right)
                \epsilon_{\mathcal H_{ij}},
            \norm{\cdot}_X
        \right) \\
        & ~~~~~\le
        \prod_{i=1}^L
        \prod_{j=1}^{L_i}
                \sup_{\substack{\psi_{ij} \in \\ \Comp( \compto{\mathcal F_{i}}, \compto{\mathcal H_{ij}})}}
                \mathcal N\left(
                    \mathcal H_{ij},
                    \epsilon_{\mathcal H_{ij}},
                    \norm{\cdot}_{\psi_{ij} (X)}
                \right)
        \enspace.
    \end{align*}
    In our setting,  $\mathcal F_i = \set{\sigma_i \circ f_i}$ with
    $\Lip(\mathcal F_i) \le \rho_i s_i$, $\mathcal H_{ij}= \Comp(\layer_{ij},\set{\sigma_{ij}})$ with $\Lip(\mathcal H_{ij}) \le \rho_{ij} s_{ij}$. Further, $\epsilon_{\mathcal H_{ij}} = \rho_{ij} \epsilon_{ij}$ and $L_{\mathcal H_{ij}}= L_{ij}$.
    As covering numbers decrease with the radius, it follows that 
    \begin{align*}
        & \mathcal N\left(
            \mathcal F,
        \sum_{i=1}^L
            \left( \prod_{l=i+1}^L s_l \rho_l \right)\!
                \sum_{j=1}^{L_{i}}
                \left(\! \prod_{k=j+1}^{L_{i}} s_{ik} \rho_{ik}\right)
                \rho_{ij}
                \epsilon_{ij},
            \norm{\cdot}_X
        \right) \\
        & ~~~~~\le
        \prod_{i=1}^L
        \prod_{j=1}^{L_i}
                \sup_{\substack{\psi_{ij} \in \\ \Comp( \compto{\mathcal F_{i}}, \compto{\mathcal \layer_{ij}})}}
                \mathcal N\left(
                    \mathcal \layer_{ij},
                    \epsilon_{ij},
                    \norm{\cdot}_{\psi_{ij} (X)}
                \right)
        \enspace.
    \end{align*}

    Now, for each $ij$ referring to convolutional layers, we have 
    \begin{align*}
        &\mathcal N\left(
                \layer_{ij},
                \epsilon_{ij},
                \norm{\cdot}_{\psi_{ij}(X)}
            \right)
            \\
        & ~~~~~= 
        \mathcal N\left(
            \set{\phi_{K_{ij}}~|~ \Lip(\phi_{K_{ij}}) \le s_{ij},~ \norm{K_{ij} - M_{ij}}_{2,1}\le b_{ij}},
                \epsilon_{ij},
                \norm{\cdot}_{\psi_{ij}(X)}
            \right)
        \\
        & ~~~~~=
        \mathcal N\left(
            \set{\phi_{K_{ij}} -\phi_{M_{ij}} ~|~ \Lip(\phi_{K_{ij}}) \le s_{ij},~ \norm{K_{ij} - M_{ij}}_{2,1}\le b_{ij}},
                \epsilon_{ij},
                \norm{\cdot}_{\psi_{ij}(X)}
            \right)
        \\
        & ~~~~~=
        \mathcal N\left(
            \set{\phi_{K_{ij} -M_{ij}} ~|~ \Lip(\phi_{K_{ij}}) \le s_{ij},~ \norm{K_{ij} - M_{ij}}_{2,1}\le b_{ij}},
                \epsilon_{ij},
                \norm{\cdot}_{\psi_{ij}(X)}
            \right)
            \enspace.
    \end{align*}

    In this chain of equalities, we used the translation invariance of covering numbers, i.e., \cref{theorem:summations} with one summand being the singleton $\set{-\phi_{M_{ij}}}$, and the linearity of $\phi$ in the weights to accommodate the distance to initialization. An analogous inequality holds for fully-connected layers.

    \cref{theorem:single_layer} provides bounds for the covering number of the superset 
    $$\set{\phi_{K_{ij} -M_{ij}} ~\big |~ \norm{K_{ij} - M_{ij}}_{2,1}\le b_{ij}}\enspace.$$ 
    Hence, to proceed, we need to transition to external covering numbers, which requires halving the radius $\epsilon$. This yields
    \begin{align}
        & \mathcal N\left(
                \layer_{ij},
                \epsilon_{ij},
                \norm{\cdot}_{\psi_{ij}(X)}
            \right)
            \nonumber
        \\
        & ~~~~~\stackrel{\phantom{\text{Eq.}~\eqref{eq:covers_int_ext}}}\le 
        \mathcal N\left(
            \set{\phi_{K_{ij} -M_{ij}} ~|~ \Lip(\phi_{K_{ij}}) \le s_{ij},~ \norm{K_{ij} - M_{ij}}_{2,1}\le b_{ij}},
                \epsilon_{ij},
                \norm{\cdot}_{\psi_{ij}(X)}
            \right)
            \nonumber
        \\
        & ~~~~~\stackrel{\text{Eq.}~\eqref{eq:covers_int_ext}}\le 
        \mathcal N^{\text{ext}}\left(
            \set{\phi_{K_{ij} -M_{ij}} ~|~ \Lip(\phi_{K_{ij}}) \le s_{ij},~ \norm{K_{ij} - M_{ij}}_{2,1}\le b_{ij}},
                \frac {\epsilon_{ij}}{2},
                \norm{\cdot}_{\psi_{ij}(X)}
            \right)
            \nonumber
        \\
        &~~~~~\stackrel{\text{Eq.}~\eqref{eq:covers:subset}}\le 
        \mathcal N^{\text{ext}}\left(
            \set{\phi_{K_{ij} -M_{ij}} ~|~ \norm{K_{ij} - M_{ij}}_{2,1}\le b_{ij}},
            \frac {\epsilon_{ij}}{2},
                \norm{\cdot}_{\psi_{ij}(X)}
            \right)
            \nonumber
        \\
        &~~~~~\stackrel{\text{Eq.}~\eqref{eq:covers_int_ext}}\le 
        \mathcal N\left(
            \set{\phi_{K_{ij} -M_{ij}} ~|~ \norm{K_{ij} - M_{ij}}_{2,1}\le b_{ij}},
            \frac {\epsilon_{ij}}{2},
                \norm{\cdot}_{\psi_{ij}(X)}
            \right)
        \enspace.
        \label{eq:to_external_coverings}
    \end{align} 

    Thus, by \cref{theorem:single_layer:eq_norms}, it holds that
    \begin{align*}
        &\log \mathcal N\left(
            \mathcal F,
        \sum_{i=1}^L
            \left( \prod_{l=i+1}^L s_l \rho_l\right)\!
                \sum_{j=1}^{L_i}
                \left(\! \prod_{k=j+1}^{L_i} s_{ik} \rho_{ik}\right)
                \rho_{ij}
                \epsilon_{{ij}},
            \norm{\cdot}_X
        \right) 
        \\
        &~~~~~\le
        \sum_{i=1}^L
        \sum_{j=1}^{L_i}
                \sup_{\substack{\psi_{ij} \in \\ \Comp( \compto{\mathcal F_{i}}, \compto{\mathcal \layer_{ij}})}}
                \log(2W_{ij}) 
                \ceil{\frac{4\norm{\psi_{ij}(X)}^2 b_{ij}^2}{\epsilon_{ij}^2}}
        \\
        &~~~~~\le
        \log(2W)
        \sum_{i=1}^L
        \sum_{j=1}^{L_i}
        \ceil{
            4\norm{X}^2
            \left(\prod_{l=1}^{i-1} s_l \rho_l\right)^2
            \left(\prod_{k=1}^{j-1} s_{lk}\rho_{lk}\right)^2
            \frac{b_{ij}^2} {\epsilon_{ij}^2}
        }
        \enspace.
    \end{align*}
    Notably, the second inequality requires the assumption that all $g_i, f_{ij}, \sigma_i$ and $\sigma_{ij}$ map zero to zero.
    The next step is to choose radii $\epsilon_{ij}$ so that the right-hand side becomes small under the condition that 
    \begin{equation}
        \label{theorem:residual_network_covering_ceiling:eq_eps}
        \sum_{i=1}^L
            \left( \prod_{l=i+1}^L s_l \rho_l\right)\!
                \sum_{j=1}^{L_i}
                \left(\! \prod_{k=j+1}^{L_i} s_{ik} \rho_{ik}\right)
                \rho_{ij}
                \epsilon_{{ij}}
        = \epsilon
        \enspace.
    \end{equation}
    We choose
    \begin{equation}
        \epsilon_{{ij}}
        = 
        \frac{\epsilon}
        {\left( \prod_{l=i+1}^L s_l \rho_l\right)\!
        \left(\! \prod_{k=j+1}^{L_i} s_{ik} \rho_{ik}\right)
        \rho_{ij}}
        \frac{\alpha_{ij}}
        {\sum_{lk} \alpha_{lk}}
        \enspace,
        \qquad
        \alpha_{ij} = \frac{b_{ij}^{2/3}}{s_{ij}^{2/3}}
        \enspace,
    \end{equation}
    which would be optimal for the analogous optimization problem without ceiling functions.
    Then, 
    \begin{align*}
        & \log \mathcal N\left(
            \mathcal F,
            \epsilon,
            \norm{\cdot}_X
        \right) \\
        &~~~~~\le
        \log(2W)
        \sum_{i=1}^L
        \sum_{j=1}^{L_i}
        \ceil{4
            \left(\prod_{l=1}^{L} s_l \rho_l \right)^2
            \frac
                {\left(\prod_{k=1}^{L_i} s_{ik}\rho_{ik}\right)^2}
                {s_i^2}
            \frac{b_{ij}^2} {s_{ij}^2 \alpha_{ij}^2}
            \frac{\norm{X}^2}{\epsilon^2}
        \left(
            \sum_{l=1}^L \sum_{k=1}^{L_l}
            \alpha_{lk}
        \right)^2
        }
        \\
        & ~~~~~\le
        \log(2W)
        \sum_{i=1}^L
        \sum_{j=1}^{L_i}
        \ceil{4
            \left(\prod_{l=1}^{L} s_l \rho_l\right)^2
            \frac
                {\left(\prod_{k=1}^{L_i} s_{ik}\rho_{ik}\right)^2}
                {s_i^2}
            \frac{b_{ij}^{2/3}} {s_{ij}^{2/3}}
            \frac{\norm{X}^2}{n}
            \frac{n}{\epsilon^2}
        \left(
            \sum_{l=1}^L \sum_{k=1}^{L_l}
            \frac{b_{lk}^{2/3}} {s_{lk}^{2/3}}
        \right)^2
        }
        \\
        & ~~~~~\le
        \log(2W)
        \left(
            \sum_{i=1}^L
            \sum_{j=1}^{L_i}
            \ceil{
                \left(2\,
                    \frac{\norm{X}}{\sqrt n}
                    \left(\prod_{l=1}^{L} s_l \rho_l\right)
                    \frac
                        {\left(\prod_{k=1}^{L_i} s_{ik}\rho_{ik}\right)}
                        {s_i}
                    \frac{b_{ij}} {s_{ij}}
                \right)^{2/3}
            }
        \right)^3  
        \ceil{\frac {{n}} {\epsilon^2}}
        \\
        & ~~~~~=
        \log(2W)
        \left(
            \sum_{i=1}^L
            \sum_{j=1}^{L_i}
            \ceil{
                C_{ij}^{2/3}
            }
        \right)^3  
        \ceil{\frac {{n}} {\epsilon^2}}
        \enspace,
    \end{align*}
    which establishes the \emph{first} covering number bound, i.e., \cref{theorem:ceil_residual_net:eq_norms}, from \cref{theorem:residual_network_covering_ceiling}.

    Similarly, \cref{theorem:single_layer:eq_params} implies
    \begin{align*}
        &\log \mathcal N\left(
            \mathcal F,
        \sum_{i=1}^L
            \left( \prod_{l=i+1}^L s_l \rho_l \right)\!
                \sum_{j=1}^{L_i}
                \left(\! \prod_{k=j+1}^{L_i} s_{ik} \rho_{ik}\right)
                \rho_{ij}
                \epsilon_{ij}, 
            \norm{\cdot}_X
        \right) 
        \\
        &~~~~~\le
        \sum_{i=1}^L
        \sum_{j=1}^{L_i}
        2 W_{ij}
        \log\left(
            1 + 
            \ceil{4
                \norm{X}^2
                \left(\prod_{l=1}^{i-1} s_l \rho_l\right)^2
                \left(\prod_{k=1}^{j-1} s_{ik}\rho_{ik}\right)^2
                \frac{b_{ij}^2} {\epsilon_{ij}^2}
            }
        \right)        
        \enspace.
    \end{align*}
    Again, we need to choose the $\epsilon_{ij}$ such that \cref{theorem:residual_network_covering_ceiling:eq_eps} holds.
    We choose
    \begin{equation}
        \epsilon_{ij}
        = 
        \frac{\epsilon}
        {\left( \prod_{l=i+1}^L s_l \rho_l\right)\!
        \left(\! \prod_{k=j+1}^{L_i} s_{ik} \rho_{ik}\right)
        \rho_{ij}}
        \frac{\alpha_{ij}}
        {\sum_{lk} \alpha_{lk}}
        \quad \text{with} \quad
        \alpha_{ij} = 1
        \enspace.
    \end{equation}
    This simple choice yields the optimal solution for the problem of minimizing
    $$\sum_{ij} \log\left( \frac
    {4 \norm{X}^2
    \left(\prod_{l=1}^{i-1} s_l \rho_l \right)^2
    \left(\prod_{k=1}^{j-1} \rho_{ik}\right)^2 b_{ij}^2}
    {\epsilon_{ij}^2}\right)\enspace.$$
    Hence, we expect it to be a good choice if the $W_{ij}$ are roughly equal and $\epsilon$ is small.
    Overall, we get
    \begin{align*}
        \log \mathcal N\left(
            \mathcal F,
            \epsilon,
            \norm{\cdot}_X
        \right) 
        &\le
        \sum_{i=1}^L
        \sum_{j=1}^{L_i}
        2 W_{ij}
        \log\left(
            1 + 
            \ceil{4
                \bar L^2
                \left(\prod_{l=1}^{L} s_l \rho_l\right)^2
            \frac
                {\left(\prod_{k=1}^{L_i} s_{ik}\rho_{ik}\right)^2}
                {s_i^2}
            \frac{b_{ij}^2} {s_{ij}^2}
            \frac{\norm{X}^2}{\epsilon^2}
            }
        \right)
        \\
        & \le
        \sum_{i=1}^L
        \sum_{j=1}^{L_i}
        2 W_{ij}
        \log\left(
            1 +
            \ceil{
                \bar L^2    C_{ij}^2
            }
            \ceil{\frac{n}{\epsilon^2}}
        \right)    
        \enspace,
    \end{align*}
    which establishes the \emph{second} covering number bound, i.e., \cref{theorem:ceil_residual_net:eq_params}, from \cref{theorem:residual_network_covering_ceiling}.
\end{proof}

\begin{corollary}[Covering numbers for non-residual networks]
    \label{theorem:covering_feedforwad}
    For $i \in \{1,\dots, L\}$, let $\layer_i$ be a function class 
    with Lipschitz constraint $s_i$ and ($2,1$) group norm distance constraint $b_i$ with respect to a reference weight $M_i$.
    In particular, if $\layer_i$ is convolutional, then
    \begin{equation*}
        \layer_{i}
        =
        \set{\phi_K~|~ \Lip(\phi_K) \le s_{i},~ \norm{K_{i} - M_{i}}_{2,1}\le b_{i}}
    \end{equation*}
    and if $\layer_i$ is fully-connected, then
    \begin{equation*}
        \layer_{i}
        =
        \set{\phi: x \mapsto A_{i}x~|~ \Lip(\phi) \le s_{i},~ \norm{A_{i} - M_{i}}_{2,1}\le b_{i}}
        \enspace.
    \end{equation*}
    We write $W_{i}$ for the number of parameters of each layer, i.e., the number of elements of each $K_{i}$, resp. $A_{i}$.
    Further, let $\mathcal F = \set{\sigma_L \circ f_L \circ \dots \circ \sigma_1 \circ f_1~|~ f_i \in \layer_i}$,
    where the maps $\sigma_i$ are $\rho_i$-Lipschitz with $\sigma_i(0)=0$, and define 
    \begin{equation}
        C_i = C_i(X)
        = 2\,\frac{\norm{X}}{\sqrt{n}} \left(\prod_{l=1}^{L} \rho_l s_l \right) \frac{b_i}{s_i}\enspace, 
        \qquad
        W = \max_i W_i
        \enspace.
    \end{equation}
    Then, for every input data $X = \tuple{x_1,\dots, x_n}$ and every $\epsilon>0$, it holds that
    \begin{equation}
        \label{theorem:fully_connected:norms}
        \log \mathcal N\left(
            \mathcal F,
            \epsilon,
            \norm{\cdot}_{X}
        \right)
        \le
        \log(2W)
        \left(
            \sum_{i=1}^{{L}}
            \ceil{C_i^{2/3}}
        \right)^3
        \ceil{\frac{n}{\epsilon^2}}
    \end{equation}
    and
    \begin{equation}
        \label{theorem:fully_connected:params}
        \log \mathcal N(\mathcal F, \epsilon, \norm{\cdot}_X)
        \le
        \sum_{i=1}^{L}
        2 W_i \log\left(1 +
            \ceil{C_i^2} \ceil{\frac{n}{\epsilon^2}}
        \right)
        \enspace.
    \end{equation}
\end{corollary}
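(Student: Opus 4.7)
The plan is to obtain this corollary as a direct specialization of \cref{theorem:residual_network_covering_ceiling}, by viewing the non-residual network $\sigma_L\circ f_L\circ\dots\circ\sigma_1\circ f_1$ as a residual network with $L$ residual blocks, each containing a single inner layer and a zero shortcut. The identifications are: $L_i = 1$ (inner layers per block), $g_i = 0$ (so $\Lip(g_i) = 0$), $\sigma_{i1} = \mathrm{id}$ (so $\rho_{i1} = 1$), $h_{i1} = f_i$, $s_{i1} = s_i$, $b_{i1} = b_i$, $W_{i1} = W_i$, and $M_{i1} = M_i$. With these choices the block Lipschitz collapses to $s_i^{\text{block}} = \Lip(g_i) + \rho_{i1} s_{i1} = s_i$, the aggregate $\bar L = \sum_i L_i$ equals $L$, and $C_{i1}$ from the theorem reduces precisely to the $C_i$ in the corollary. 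The norm-based bound \eqref{theorem:fully_connected:norms} then follows verbatim from \eqref{theorem:ceil_residual_net:eq_norms}.

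For the parameter-counting bound \eqref{theorem:fully_connected:params}, invoking \eqref{theorem:ceil_residual_net:eq_params} with $\bar L = L$ and $L_i = 1$ yields a bound with $\lceil L^2 C_i^2\rceil \lceil n/\epsilon^2\rceil$ inside the logarithm; to match the stated form one redoes the final radius-allocation step directly rather than via the $\alpha_{ij}=1$ choice used in the theorem's proof. Concretely, I would apply \cref{theorem:composition_induction} to $\mathcal F$ with $\mathcal F_i = \{\sigma_i\circ f_i: f_i\in\layer_i\}$, giving the constraint $\sum_i (\prod_{l>i}\rho_l s_l)\rho_i \delta_i = \epsilon$ on the per-layer radii $\delta_i$. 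I would then bound each $\log\mathcal N(\layer_i, \delta_i,\|\cdot\|_{\psi_i(X)})$ via the translation-invariance trick (subtracting $\phi_{M_i}$), transition to external covers (\cref{eq:covers_int_ext}) to drop the Lipschitz constraint, and apply \cref{theorem:single_layer:eq_params} together with the standard propagation $\|\psi_i(X)\|^2 \le \|X\|^2\prod_{l<i}(\rho_l s_l)^2$ (using $\sigma_l(0)=0$).

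The main obstacle is the allocation of the radii: one chooses $\delta_i$ so that the argument inside each layer's logarithm becomes a clean multiple of $C_i^2 \cdot n/\epsilon^2$. Writing $\delta_i = \epsilon\gamma_i /((\prod_{l>i}\rho_l s_l)\rho_i)$ with $\sum_i\gamma_i = 1$, a short calculation shows that the squared ratio entering the log equals $C_i^2 \cdot n/(\gamma_i\epsilon)^2$; distributing $\gamma_i$ evenly (or by any convenient rule) recovers the stated ceiling-product form $\lceil C_i^2\rceil\lceil n/\epsilon^2\rceil$ after using $\lceil ab\rceil \le \lceil a\rceil\lceil b\rceil$ and absorbing the residual $1/\gamma_i^2$ factor into the two ceilings. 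The rest is bookkeeping: collecting the per-layer logarithms into the outer sum and noting that the transition to external covers is what accounts for the factor of $2$ already baked into the definition of $C_i$.
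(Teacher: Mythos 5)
Your reduction to \cref{theorem:residual_network_covering_ceiling} is, in substance, the paper's own route: the paper collapses the whole chain into a \emph{single} long residual block with zero shortcut, whereas you use $L$ one-layer blocks with zero shortcuts and identity inner activations. Both identifications are legitimate (the zero map satisfies $g_i(0)=0$, $\Lip(g_i)=0$) and give the same quantities, $\bar L=L$ and $C_{i1}=C_i$, so the norm-based bound \eqref{theorem:fully_connected:norms} does follow verbatim from \eqref{theorem:ceil_residual_net:eq_norms}; your remark that the factor $2$ in $C_i$ stems from the passage to external covers via \cref{eq:covers_int_ext} is also correct.

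For the parameter-counting bound, however, there is a genuine gap. You correctly note that \eqref{theorem:ceil_residual_net:eq_params} specializes to $\sum_{i}2W_i\log\bigl(1+\ceil{L^2C_i^2}\ceil{n/\epsilon^2}\bigr)$, but your proposed repair of the radius allocation cannot remove the $L^2$. Writing the per-layer radii as $\delta_i=\epsilon\gamma_i/\bigl((\prod_{l>i}\rho_l s_l)\rho_i\bigr)$ with $\sum_i\gamma_i=1$, the quantity entering the $i$-th logarithm is $C_i^2\,n/(\gamma_i^2\epsilon^2)$. Since every $\gamma_i\le 1$, the factor $1/\gamma_i^2\ge 1$ only \emph{inflates} the bound, and for the even split $\gamma_i=1/L$ it equals exactly $L^2$; it cannot be ``absorbed into the two ceilings'' — the inequality $\ceil{ab}\le\ceil{a}\ceil{b}$ points the wrong way, and $\ceil{C_i^2/\gamma_i^2}\le\ceil{C_i^2}$ is simply false. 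More generally, because the $\gamma_i$ sum to one, at least one layer must carry a factor $\ge L^2$, so no allocation yields $\ceil{C_i^2}\ceil{n/\epsilon^2}$ simultaneously for all $i$ by this argument. The honest conclusion of your (and the paper's) specialization is the version with $\ceil{L^2C_i^2}$, which is also the form the paper itself uses downstream (compare \eqref{theorem:nonresidual_network_covering_rademacher:eq_params}); the printed \eqref{theorem:fully_connected:params} without the $L^2$ is not delivered by the paper's one-line proof either, so you should state the $\ceil{L^2C_i^2}$ bound rather than attempt the unsound absorption step.
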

\begin{proof}
    Follows directly from \cref{theorem:residual_network_covering_ceiling}, as the network can be considered as a single (long) residual block, whose shortcut $g:x \mapsto 0$ is the zero map.
\end{proof}

\begin{remark}
    \label{remark:complex_resnets}
    Similarly, we can derive covering number bounds for networks, where each block $\mathcal F_i$ is a sum of  $w_i$ parametrized maps, i.e., $\mathcal F_i = \Sum(\mathcal G_{i1}, \dots, \mathcal G_{iw_i})$, with $\mathcal G_{ij} = \Comp(\layer_{ij1}, \sigma_{ij1}, \dots, \layer_{ijL_{ij}}, \sigma_{ijL_{ij}})$. In this setting, 
    the whole-network covering number is bounded by
    \begin{equation}
        \log \mathcal N(
            \mathcal F,
            \epsilon,
            \norm{\cdot}_{X}
        )
        \le 
        \log(2W)
            \left(
            \sum_{i=1}^{{L}}
            \sum_{j=1}^{{w_i}}
            \sum_{k=1}^{L_{ij}}
            \ceil{C_{ijk}^{2/3}}
            \right)^3
            \left\lceil\frac{n}{\epsilon^2}\right\rceil
    \end{equation}
    and
    \begin{equation}
    \log \mathcal N(
        \mathcal F,
        \epsilon,
        \norm{\cdot}_{X}
    )
    \le 
    \sum_{i=1}^{{L}}
    \sum_{j=1}^{{w_i}}
    \sum_{k=1}^{L_{ij}}
        2 W_{ijk}
        \log\left(
            1 +
            \ceil{\bar L^2C_{ijk}^2}
            \left\lceil\frac{n}{\epsilon^2}\right\rceil
        \right)
    \end{equation}
    for 
    \begin{equation}
        C_{ijk} = 4 \frac{\norm{X}}{\sqrt{n}} 
        \left(
            \prod_{\substack{l=1}}^{L}
                s_l \rho_l
        \right)
        \frac
        {\prod_{\substack{m=1}}^{L_{ij}} \rho_{ijm} s_{ijm}}     
        {s_i} 
        \frac{b_{ijk}}{s_{ijk}}
        ,
        \qquad
        \bar L = \sum_{i=1}^L \sum_{j=1}^{w_i} L_{ij}
        \enspace,
    \end{equation}
    where $s_i=\sum_{j=1}^{w_i} 
    \left(\prod_{k=1}^{L_{ij}}s_{ijk}\rho_{ijk}\right)$, $\rho_{ijk} = \Lip(\sigma_{ijk})$ and $s_{ijk}, b_{ijk}$ are constraints on the layers $\layer_{ijk}$.
\end{remark}

\subsubsection{Application to specific architectures}

\begin{example}[ResNet18]
    \label{theorem:resnet18}
    We derive covering number bounds for the ResNet18 architecture \citep{He16a} \emph{without} batch normalization, illustrated in \cref{theorem:resnet18:fig1}. We can think of the ResNet18 as a composition of 10 residual blocks, the first and last one having the zero map as shortcut and five blocks having identity shortcuts. The remaining 3 blocks have downsampling shortcuts of the form $\sigma \circ \psi$, where $\psi$ is a 1x1 convolution and $\rho$ is the ReLU activation function. These blocks are handled by \cref{remark:complex_resnets}.
    Furthermore, all nonlinearities are $1$-Lipschitz and map zero to zero.

    \vskip1ex
    \begin{figure}[h!]
        \begin{center}
        \includegraphics[scale=0.59, angle=90]{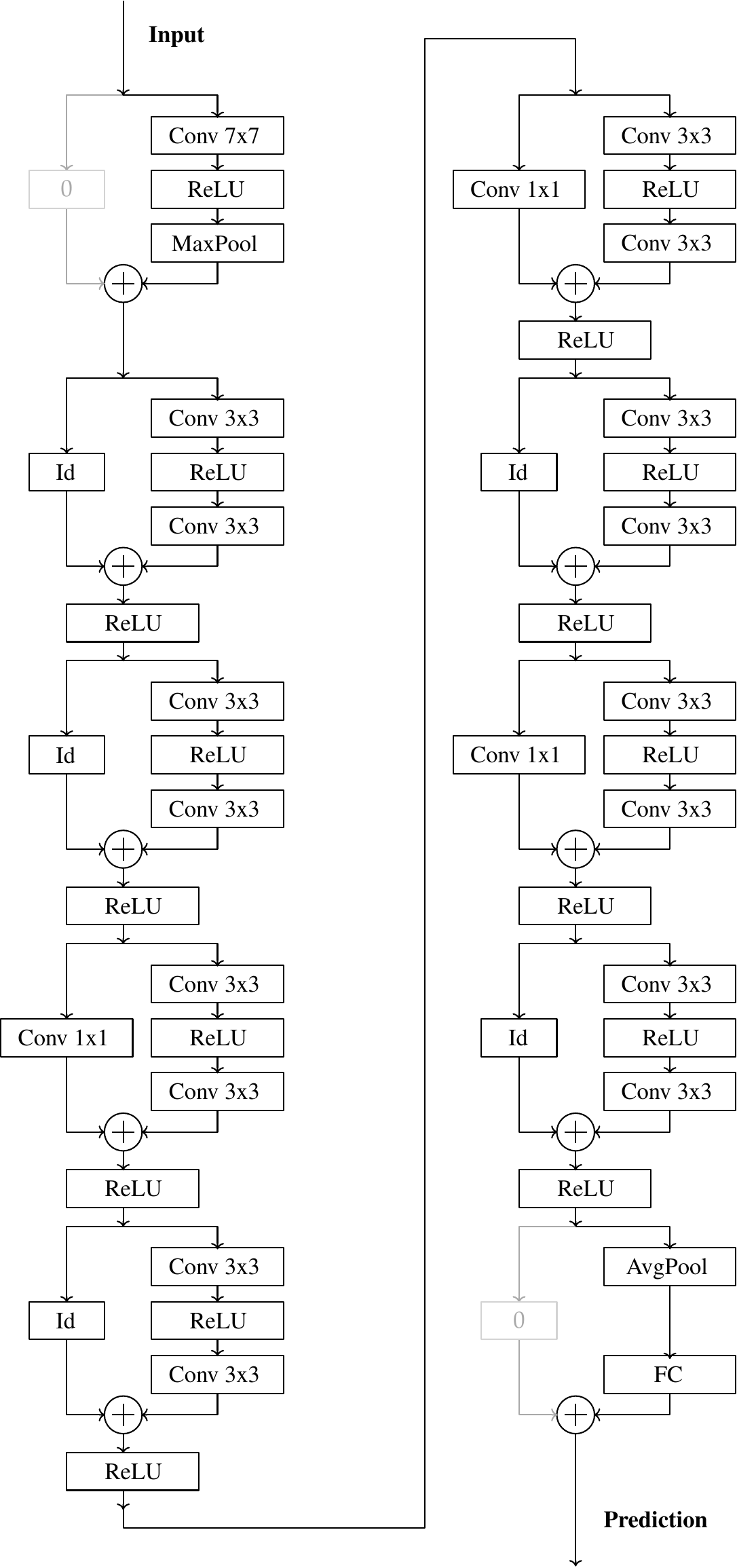}
        \end{center}
        \caption{ResNet18 \citep{He16a} architecture \emph{without} batch normalization.}
        \label{theorem:resnet18:fig1}
    \end{figure}
    
    For any data $X = \tuple{x_1, \dots x_n}$ and any $\epsilon>0$, the covering number of the function class  $\mathcal F$ corresponding to the ResNet18 architecture without batch normalization, with no bias parameters and  with distance and Lipschitz constrained layers, is approximately (ignoring ceiling functions) bounded by 
    \begin{alignat*}{2}
        \log \mathcal N ( \mathcal F, \epsilon, \norm{\cdot}_X) 
        \lesssim
        & & &
        4\,\frac{\norm{X}^2}{\epsilon}
        \log(2W) \\
        & & & 
        s_1^2 \left(\prod_{i\in \set{2,3,5,7,9}} (1 + s_{i1}s_{i2})\right)^{\!2} 
        \left(\prod_{i\in \set{4,6,8}} (s_{i,\text{down}} + s_{i1}s_{i2})\right)^{\!2}
        s_{10}^2 \\
        &\Biggl[&& \frac{b_{1}^{2/3}}{s_{1}^{2/3}}
        + \frac{b_{10}^{2/3}}{s_{10}^{2/3}}
        + \sum_{i\in \set{2,3,5,7,9}} 
                \frac{1}{(1 + s_{i1} s_{i2})^{2/3}}
                \left(
                \frac{b_{i1}^{2/3}}{s_{i1}^{2/3}} 
                +
                \frac{b_{i2}^{2/3}}{s_{i2}^{2/3}} 
                \right) \\
        & &&+ \sum_{i\in \set{4,6,8}} 
        {\frac{1}{(s_{i,\text{down}} + s_{i1} s_{i2})^{2/3}}}
            \left(
                    \frac{b_{i,\text{down}}^{2/3}}{s_{i,\text{down}}^{2/3}}
                    +
                    \frac{ b_{i1}^{2/3}}{s_{i1}^{2/3}} 
                    +
                    \frac{ b_{i2}^{2/3}}{s_{i2}^{2/3}} 
            \right)
        \Biggl]^3 
        \enspace.
    \end{alignat*}

    Here $s_{i1}$, resp $s_{i2}$, denotes the Lipschitz constraint on the first, resp. second, layer in the $i$-th residual block and $s_{i,{\text{down}}}$ the constraint on the downsampling layer (1x1 convolution). The (2,1)-distance constraints are denoted by $b_{i1}, b_{i2}$ and $b_{i,\text{down}}$.

\end{example}

\subsection{Rademacher complexity \& Generalization bounds}
\label{appendix:subsection:generalization_bounds} 
The empirical Rademacher complexity can be upper bounded via Dudley's entropy integral. In the following, we restate a variant of this standard result as it appears in Bartlett et al.~\cite{Bartlett17a}.
\vskip2ex

\begin{theorem}[Dudley entropy integral, cf.~{\cite[Lemma A.5]{Bartlett17a}}]
Let $\mathcal{F}$ be a class of functions mapping to $[0,1]$ containing the zero function. Then
$$
\rade{X}{\mathcal F} \leq \inf\limits_{0<t\le\sqrt{n}} 
\left( 
    \frac{4t}{\sqrt{n}} + \frac{12}{n}\int_{t}^{\sqrt{n}}
    \sqrt{
    \log \mathcal{N}(\mathcal{F},\epsilon,\|\cdot\|_X)
    }\,
    \mathrm{d}\epsilon
\right)\enspace.
$$
\end{theorem}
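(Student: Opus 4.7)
The plan is to prove Dudley's bound by the classical \emph{dyadic chaining} argument. Fix any $t \in (0, \sqrt{n}]$. Since every $f \in \mathcal{F}$ takes values in $[0,1]$, we have $\|f\|_X \le \sqrt{n}$, so $\{0\}$ is trivially a $\sqrt{n}$-cover; more generally, for $k = 0, 1, 2, \ldots$, set $\epsilon_k = 2^{-k}\sqrt{n}$ and let $V_k \subset \mathcal{F}$ be a minimal $\epsilon_k$-cover with $|V_k| = \mathcal{N}(\mathcal{F}, \epsilon_k, \|\cdot\|_X)$, taking $V_0 = \{0\}$. Let $K$ be the smallest integer with $\epsilon_K \le t$ (so $\epsilon_K \in (t/2, t]$). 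For each $f \in \mathcal{F}$, let $\pi_k(f) \in V_k$ denote a nearest element, giving $\|f - \pi_k(f)\|_X \le \epsilon_k$.

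The core manipulation is to insert the telescoping identity
$$
f \;=\; \pi_0(f) \,+\, \sum_{k=1}^K \bigl(\pi_k(f) - \pi_{k-1}(f)\bigr) \,+\, \bigl(f - \pi_K(f)\bigr)
$$
inside the supremum defining $\rade{X}{\mathcal{F}}$ and distribute the supremum level-by-level. The term with $\pi_0(f) = 0$ vanishes. The residual term is controlled by Cauchy--Schwarz applied to $\sigma \in \{\pm 1\}^n$:
$$
\sup_f \frac{1}{n}\sum_i \sigma_i (f - \pi_K(f))(x_i)
\;\le\; \frac{\sup_f \|f-\pi_K(f)\|_X}{\sqrt{n}}
\;\le\; \frac{\epsilon_K}{\sqrt{n}}
\;\le\; \frac{t}{\sqrt{n}}.
$$
For each $k \ge 1$, the increment set $\{\pi_k(f)-\pi_{k-1}(f) : f\in\mathcal{F}\}$ has cardinality at most $|V_k|\cdot|V_{k-1}| \le |V_k|^2$, and each increment has $\|\cdot\|_X$-norm at most $\epsilon_k + \epsilon_{k-1} = 3\epsilon_k$ by the triangle inequality. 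Massart's finite-class lemma then gives
$$
\mathbb{E}_\sigma \sup_f \frac{1}{n}\sum_i \sigma_i (\pi_k(f)-\pi_{k-1}(f))(x_i)
\;\le\; \frac{3\epsilon_k \sqrt{2\log|V_k|^2}}{n}
\;\le\; \frac{6\epsilon_k\sqrt{\log \mathcal{N}(\mathcal{F},\epsilon_k,\|\cdot\|_X)}}{n}.
$$

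Summing these contributions over $k = 1, \ldots, K$ yields a bound of the form $\tfrac{t}{\sqrt{n}} + \tfrac{6}{n}\sum_k \epsilon_k\sqrt{\log\mathcal{N}(\mathcal{F},\epsilon_k,\|\cdot\|_X)}$. I would then convert the dyadic sum into an integral using monotonicity of $\mathcal{N}$: on each interval $[\epsilon_{k+1},\epsilon_k]$, $\sqrt{\log \mathcal{N}(\epsilon)} \ge \sqrt{\log\mathcal{N}(\epsilon_k)}$, and since $\epsilon_k - \epsilon_{k+1} = \epsilon_k/2$, we obtain
$$
\epsilon_k \sqrt{\log\mathcal{N}(\epsilon_k)} \;\le\; 2 \int_{\epsilon_{k+1}}^{\epsilon_k} \sqrt{\log\mathcal{N}(\epsilon)}\,\mathrm{d}\epsilon,
$$
so the summed bound is at most $\tfrac{12}{n}\int_{\epsilon_{K+1}}^{\sqrt{n}}\sqrt{\log\mathcal{N}(\mathcal{F},\epsilon,\|\cdot\|_X)}\,\mathrm{d}\epsilon$. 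Finally, I would absorb the sub-$t$ portion $\int_{\epsilon_{K+1}}^{t}$ of this integral into a $t/\sqrt{n}$ term (using that $\epsilon_{K+1} > t/4$ and bounding $\sqrt{\log\mathcal{N}}$ by its value at the left endpoint via a crude dominating bound), which collectively promotes the $t/\sqrt{n}$ contribution to $4t/\sqrt{n}$. Taking the infimum over $t \in (0,\sqrt{n}]$ delivers the stated bound.

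The main obstacle is not conceptual but bookkeeping: the chaining/Massart/monotonicity pipeline is entirely standard, but extracting exactly the constants $4$ and $12$ requires care in how the dyadic mesh is aligned with the free parameter $t$, and in how the residual term, the boundary interval $[\epsilon_{K+1},t]$, and the factor $2$ gained in the sum-to-integral step are combined. Anywhere one is loose — e.g., picking $\epsilon_K = t$ exactly versus allowing $\epsilon_K \in (t/2,t]$, or using $|V_k|^2$ instead of $|V_k|\cdot|V_{k-1}|$ — will inflate these constants, and the statement is essentially the tightest one achievable with this elementary strategy.
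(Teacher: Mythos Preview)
The paper does not actually prove this theorem; it is restated verbatim from \cite[Lemma~A.5]{Bartlett17a} and invoked as a black box in the subsequent Rademacher complexity derivations. So there is no ``paper's own proof'' to compare against.

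That said, your proposal is the standard proof of this statement and is the same argument Bartlett et al.\ give in the cited reference: dyadic chaining with scales $\epsilon_k = 2^{-k}\sqrt{n}$, Massart's finite-class lemma on the increments, and a sum-to-integral conversion via monotonicity of $\epsilon \mapsto \mathcal{N}(\mathcal{F},\epsilon,\|\cdot\|_X)$. The structure is correct and complete at the level of ideas. The only soft spot is the final paragraph, where you wave at absorbing $\int_{\epsilon_{K+1}}^{t}\sqrt{\log\mathcal{N}}\,\mathrm{d}\epsilon$ into the $t/\sqrt{n}$ term to land on the constant~$4$; this step does work, but ``a crude dominating bound'' is not quite an argument. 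In the reference proof the alignment is handled by choosing $K$ so that $\epsilon_{K} \le t$ and simply extending the integral down to $\epsilon_{K+1} \ge t/4$, then noting the integrand over $[\epsilon_{K+1},t]$ contributes at most what the $k=K$ term of the sum already accounted for --- no separate absorption is needed if you track the sum carefully. If you want the exact constants $4$ and $12$, it is worth writing that boundary case out explicitly rather than gesturing at it.
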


We will compute Dudley's entropy integral for the covering number bounds from \cref{appendix:subsection:multi_layer_specific}.

\vskip2ex
\begin{theorem}[Empirical Rademacher complexity for residual networks]
    \label{theorem:residual_network_covering_rademacher}
    For $i=1,\dots,L$ let $j=1,\dots, L_i$, $s_{ij}>0$ and $b_{ij}>0$.
    Further, let $\mathcal F$ be the class of residual networks of the form 
    \begin{equation}
        f = \sigma_L \circ f_L \circ \dots \circ \sigma_1 \circ f_1 \enspace,
    \end{equation}
    with $\sigma_i$ fixed $\rho_i$-Lipschitz functions satisfying $\sigma_i(0)=0$, and with $f_i$ residual blocks with identity shortcuts, i.e.,
    \begin{equation}
        f_i: \Id  + (\sigma_{iL_i} \circ f_{iL_i} \circ \dots \circ \sigma_{i1} \circ f_{i1}) \enspace,
    \end{equation}
    where $\sigma_{ij}$ are fixed $\rho_{ij}$-Lipschitz functions with $\sigma_{ij}(0)=0$ and $f_{ij}$ are convolutional or fully-connected layers. 
    The layers $f_{ij}$ satisfy Lipschitz constraints $\Lip(f_{ij}) \le s_{ij}$ and the corresponding weight tensors $K_{ij}$, respectively weight matrices $A_{ij}$, satisfy distance constraints 
    $$\norm{K_{ij} - K_{ij}^{(0)}}_{2,1}\le b_{ij}\enspace, \quad\text{respectively}\quad \norm{A_{ij} - A_{ij}^{(0)}}_{2,1} \le b_{ij}\enspace,$$
    with respect to reference weights $K_{ij}^{(0)}$, respectively $A_{ij}^{(0)}$.
    
    Upon letting $W_{ij}$ denote the number of parameters of each layer and defining
    \begin{align*}
        & \tilde C_{ij}(X) = 
        \tilde C_{ij}=
        \frac{4}{\gamma}
        \frac{\norm{X}}{\sqrt{n}} 
        \left(
            \prod_{\substack{l=1}}^{L}
                \Lip(\mathcal F_l) \rho_l
        \right)
        \frac
        {\prod_{\substack{k=1}}^{L_i} \rho_{ik} s_{ik}}     
        {\Lip(\mathcal F_i)} 
        \frac{b_{ij}}{s_{ij}}
        \enspace,\\
        & \bar L = \sum_{i=1}^L L_i
        \enspace,
        \qquad
        W = \max_{ij} W_{ij}
        \enspace,
    \end{align*}
    the empirical Rademacher complexity of the function class $\mathcal F_\gamma$, with margin parameter $\gamma>0$, satisfies
    \begin{equation}
        \label{theorem:residual_network_covering_rademacher:eq_norms}
        \rade{X}{\mathcal F_\gamma}
        \le
        \frac{4}{n}
        +
        \frac{12 H_{n-1}}{\sqrt{n}}
        \sqrt{\log(2W)}
        \left(
            \sum_{i=1}^L
            \sum_{i=j}^{L_i}
            \ceil{\tilde C_{ij}^{2/3}}
        \right)^{3/2}
    \end{equation}
    and
\begin{equation}
    \label{theorem:residual_network_covering_rademacher:eq_params}
    \rade{X}{\mathcal F}
        \leq
        \frac{12}{\sqrt n}
        \sqrt{
            \sum_{i=1}^L
            \sum_{j=1}^{L_i}
            2W_{ij}
            \left(
            \log\left(1+\ceil{\bar L ^2 \tilde C_{ij}^2}\right) + 
            \zeta\left(\frac 3 2,1\right)^{1/3}
            \zeta\left(
                \frac 3 2,
                1+ {1}/{\ceil{\bar L ^2 \tilde C_{ij}^2}}
            \right)^{2/3}
            \right)
        }
\end{equation}
Here, $H_{n-1}= \sum_{m=1}^{n-1} \frac{1}{m}$ denotes the $(n-1)$-th harmonic number and $\zeta(s,q) = \sum_{n=0}^\infty \frac{1}{(q+n)^s}$ the Hurwitz zeta function.
\end{theorem}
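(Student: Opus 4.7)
The plan is to combine Dudley's entropy integral (stated just above) with the covering bounds of \cref{theorem:residual_network_covering_ceiling}, applied to the loss class $\mathcal{F}_\gamma$ rather than $\mathcal{F}$ itself. The bridge is that $f\mapsto \ell_\gamma(-\mathcal{M}(f(\cdot),\cdot))$ is $(2/\gamma)$-Lipschitz with respect to $\|\cdot\|_X$: the margin operator $\mathcal{M}(\cdot,y)$ is $2$-Lipschitz in the Euclidean norm and the ramp loss $\ell_\gamma$ is $(1/\gamma)$-Lipschitz, so their composition picks up a factor $2/\gamma$. Hence $\mathcal{N}(\mathcal{F}_\gamma,\epsilon,\|\cdot\|_X)\le \mathcal{N}(\mathcal{F},\gamma\epsilon/2,\|\cdot\|_X)$, and rerunning the radius-splitting optimization in the proof of \cref{theorem:residual_network_covering_ceiling} with this rescaled radius is equivalent to substituting $\tilde{C}_{ij}=2C_{ij}/\gamma$ for $C_{ij}$ throughout. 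I then plug this bound into Dudley's integral.

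For the norm-driven bound \eqref{theorem:residual_network_covering_rademacher:eq_norms}, every $\epsilon$-dependence inside $\sqrt{\log\mathcal{N}(\mathcal{F}_\gamma,\epsilon,\|\cdot\|_X)}$ is concentrated in the single factor $\sqrt{\ceil{n/\epsilon^2}}$, so the factor $\sqrt{\log(2W)}\bigl(\sum_{ij}\ceil{\tilde{C}_{ij}^{2/3}}\bigr)^{3/2}$ can be pulled outside the integral. Partitioning the integration interval into subintervals $[\sqrt{n/m},\sqrt{n/(m-1)})$ on which $\ceil{n/\epsilon^2}$ equals the integer $m$ turns $\int_t^{\sqrt{n}}\sqrt{\ceil{n/\epsilon^2}}\,d\epsilon$ into the sum $\sqrt{n}\sum_{m}\sqrt{m}\,(1/\sqrt{m-1}-1/\sqrt{m})$, which diverges as $t\to 0$; this forces $t>0$, and taking $t=1/\sqrt{n}$ caps $m$ at $n^2$ and produces the leading $4/n$ term from $4t/\sqrt{n}$. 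The convexity estimate $\sqrt{m/(m-1)}-1\le 1/(2(m-1))$ (from $\sqrt{1+x}\le 1+x/2$) collapses the truncated sum into $\sqrt{n}\cdot H_{n^2-1}/2$, which matches $\sqrt{n}\,H_{n-1}$ up to an absolute constant that is absorbed into the numerical prefactor $12$.

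For the parameter-driven bound \eqref{theorem:residual_network_covering_rademacher:eq_params} the analogous partitioned sum converges as $\epsilon\to 0$, so I can take $t\to 0^+$ in Dudley's integral and no leading $4/n$ term appears. The $\epsilon$-dependence is now entangled with the $\tilde{C}_{ij}$ inside the logarithm, but the statement asks for a \emph{single} sum under the outer square root. To preserve this form I would apply Cauchy--Schwarz directly to the Dudley integral via $\int_0^{\sqrt{n}}\sqrt{g(\epsilon)}\,d\epsilon\le\sqrt{\sqrt{n}\int_0^{\sqrt{n}}g(\epsilon)\,d\epsilon}$, with $g(\epsilon)=\sum_{ij}2W_{ij}\log(1+A_{ij}\ceil{n/\epsilon^2})$ and $A_{ij}=\ceil{\bar{L}^2\tilde{C}_{ij}^2}$; swapping sum and integral reduces the task to evaluating, for each $A$,
\begin{equation*}
J(A)\;:=\;\int_0^{\sqrt{n}}\log\bigl(1+A\ceil{n/\epsilon^2}\bigr)\,d\epsilon.
\end{equation*}
The same ceiling-partition rewrites $J(A)/\sqrt{n}$ as $\sum_{m\ge 2}\log(1+Am)(1/\sqrt{m-1}-1/\sqrt{m})$; splitting $\log(1+Am)=\log(1+A)+\log\bigl((m+1/A)/(1+1/A)\bigr)$ and bounding the tail via Hölder's inequality with exponents $(3,3/2)$ against the Hurwitz-zeta tails $\sum(m-1)^{-3/2}$ and $\sum(m+1/A)^{-3/2}$ identifies the remainder with $\psi(A)=\zeta(3/2,1)^{1/3}\zeta(3/2,1+1/A)^{2/3}$; the uniform bound $\psi<2.7$ follows from $\zeta(3/2,q)\le\zeta(3/2)\approx 2.612$ for $q\ge 1$.

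The hard part will be the exact matching of the constants $H_{n-1}$ and $\psi$. A crude continuous upper bound $\ceil{n/\epsilon^2}\le 2n/\epsilon^2$ together with a naive $\sqrt{a+b}\le\sqrt{a}+\sqrt{b}$ decoupling would give asymptotically equivalent but quantitatively looser $\log n$ and $O(1)$ factors, and, crucially, would destroy the single-sum-under-square-root form of \eqref{theorem:residual_network_covering_rademacher:eq_params}. Keeping the sum inside the outer square root forces the Cauchy--Schwarz step and the delicate Hölder-based identification of $\psi$ with Hurwitz-zeta values; once those two analytic pieces are in place, the rest is bookkeeping of multiplicative constants, including the $2/\gamma$ factor from the loss composition and the absolute constant absorbed into $12H_{n-1}$.
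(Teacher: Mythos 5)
Your overall strategy is the same as the paper's: reduce to $\mathcal F$ via the $2/\gamma$-Lipschitz composition of ramp loss and margin operator (so $C_{ij}\mapsto\tilde C_{ij}$), plug the covering bounds of \cref{theorem:residual_network_covering_ceiling} into Dudley's entropy integral, use Jensen/Cauchy--Schwarz for the parameter bound, and evaluate the remaining one-dimensional integrals over the ceiling partition. One step for \eqref{theorem:residual_network_covering_rademacher:eq_norms} is only a presentation issue: your truncated sum gives $\tfrac{\sqrt n}{2}H_{n^2-1}$, and you propose to ``absorb'' the mismatch with $H_{n-1}$ into the prefactor $12$ --- but that prefactor is fixed by Dudley's lemma and the theorem asserts the exact constant, so you must actually check $\tfrac12 H_{n^2-1}\le H_{n-1}$ (which does hold) or, as the paper does, first bound $\sqrt{\lceil n/\epsilon^2\rceil}\le\lceil\sqrt n/\epsilon\rceil$ so that the rescaled integral $\int_{1/n}^{1}\lceil 1/s\rceil\,\mathrm ds$ evaluates \emph{exactly} to $H_{n-1}$.

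The genuine gap is in your identification of the Hurwitz-zeta term for \eqref{theorem:residual_network_covering_rademacher:eq_params}. After splitting off $\log(1+A)$, the remainder is $\sum_{m\ge2}\bigl(1/\sqrt{m-1}-1/\sqrt m\bigr)\log\tfrac{m+1/A}{1+1/A}$, and you claim H\"older with exponents $(3,3/2)$ against $\sum(m-1)^{-3/2}$ and $\sum(m+1/A)^{-3/2}$ yields $\psi(A)=\zeta(3/2,1)^{1/3}\zeta(3/2,1+1/A)^{2/3}$. For H\"older to output exactly those two zeta factors you need to factor the summand as $c_m d_m$ with $c_m\le(m-1)^{-1/2}$ and $d_m\le(m+1/A)^{-1}$; since $1/\sqrt{m-1}-1/\sqrt m\approx\tfrac12(m-1)^{-3/2}$ while the logarithmic factor is unbounded in $m$, no such factorization exists (e.g.\ at $m=100$, $A=1$ the summand is about $2.0\cdot10^{-3}$, exceeding $(m-1)^{-1/2}(m+1/A)^{-1}\approx1.0\cdot10^{-3}$). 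The missing ingredient is a summation by parts: the remainder equals $\sum_{m\ge1}m^{-1/2}\log\bigl(1+\tfrac{1}{m+1/A}\bigr)$, and only after converting the growing logarithms into these bounded increments and using $\log(1+x)\le x$ does H\"older with exponents $(3,3/2)$ give exactly $\psi(A)$. This Abel-summation-plus-Mercator step is precisely what the paper's \cref{theorem:integral_params} supplies; without it, your route to the $\zeta(3/2,1)^{1/3}\zeta(3/2,1+1/A)^{2/3}$ term is unjustified, even though the final inequality is correct.
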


\vskip1ex
\begin{remark}
    The harmonic number satisfies
    $H_{n-1} < \log(n) + \gamma \approx \log(n) + 0.58$. 
    The function 
    $\psi: x \mapsto 
    \zeta\left(\frac 3 2,1\right)^{1/3}
            \zeta\left(
                \frac 3 2,
                1+ {1}/{x}
            \right)^{2/3}
    $
    is monotonically increasing with $\psi(0)=0$ and upper bounded by $\zeta(\frac{3}{2})\approx 2.62$. So, for large $\tilde C = \max_{ij} \tilde C_{ij}$, the second summand is negligible and \cref{theorem:residual_network_covering_rademacher:eq_params} scales as $\sqrt{\bar W \log (\bar L^2 \tilde C^2)}$. Here, $\bar W = \sum_{ij}
    W_{ij}$ denotes the number of network parameters.
\end{remark}

\begin{proof}
    Both inequalities follow from a combination of Dudley's entropy integral 
    with a covering number bound from \cref{theorem:residual_network_covering_ceiling}.

    Since $\ell_\gamma(-\mathcal M (\cdot,\cdot))$ is  a fixed $2/\gamma$-Lipschitz function, the covering number of $\mathcal F_\gamma$ can be bounded as in \cref{theorem:covering_feedforwad} with
    \begin{equation}
        C_{ij}(X) = \tilde C_{ij}(X)
        = 
        \frac{4}{\gamma} \frac{\norm{X}}{\sqrt{n}} \left(\prod_{l=1}^{L} \rho_l s_l \right) \frac{b_i}{s_i}
        \enspace.
    \end{equation}

    To prove \cref{theorem:residual_network_covering_rademacher:eq_norms}, we insert \cref{theorem:ceil_residual_net:eq_norms} into Dudley's entropy integral, which yields
    \begin{align*}
        \rade{X}{\mathcal F}
        & \le
        \inf\limits_{0\le t\le \sqrt{n}}
        \left( 
            \frac{4t}{\sqrt{n}} 
            + 
            \frac{12}{n}
                \sqrt{\log(2W)}
                \left(
                    \sum_{i=1}^L
                    \sum_{j=1}^{L_i}
                    \ceil{
                        \tilde C_{ij}^{2/3}
                    }
                \right)^{3/2}
            \int_{t}^{\sqrt{n}}
            \ceil{\frac{\sqrt{n}}{\epsilon}}
            \,
            \mathrm{d}\epsilon
        \right)
        \\
        &=
        \inf\limits_{0\le t\le \sqrt{n}}
        \left( 
            \frac{4t}{\sqrt{n}} 
            + 
            \frac{12}{\sqrt n}
            \sqrt{\log(2W)}
                \left(
                    \sum_{i=1}^L
                    \sum_{j=1}^{L_i}
                    \ceil{
                        \tilde C_{ij}^{2/3}
                    }
                \right)^{3/2}
            \int_{t/\sqrt{n}}^{1}
            \ceil{\frac{1}{s}}
            \,
            \mathrm{d}s
        \right)
        \\
        &=
        \inf\limits_{0\le t\le 1}
        \left( 
            4t
            + 
            \frac{12}{\sqrt n}
            \sqrt{\log(2W)}
                \left(
                    \sum_{i=1}^L
                    \sum_{j=1}^{L_i}
                    \ceil{
                        \tilde C_{ij}^{2/3}
                    }
                \right)^{3/2}
            \int_{t}^{1}
            \ceil{\frac{1}{s}}
            \,
            \mathrm{d}s
        \right)
        \enspace.
    \end{align*}
    The value of the integral is a harmonic number if $1/t \in \mathbb N$, as then
    \begin{align*}
        \int_t^1 \ceil{\frac{1}{s}} \, \mathrm{d}s
        & = 
        \sum_{m=1}^{1/t-1}
        \int_{1/(m+1)}^{1/m} 
        \ceil{\frac{1}{s}} \, \mathrm{d}s
        = \sum_{m=1}^{1/t-1} \left(\frac 1 m - \frac{1}{m+1}\right) (m+1)
        = \sum_{m=1}^{1/t-1} \frac 1 m
        = H_{1/t-1}
        \enspace.
    \end{align*}
    Choosing $t = 1/n$, establishes the inequality in \cref{theorem:residual_network_covering_rademacher:eq_norms}.
    \vspace{0.5cm}

    To prove \cref{theorem:residual_network_covering_rademacher:eq_params}, we first observe that, by Jensen's inequality, it holds that 
    \begin{align*}
        \frac{1}{\sqrt n}\int_{0}^{\sqrt{n}}
        \sqrt{
        \log \mathcal{N}(\mathcal{F},\epsilon, \norm{\cdot}_X)
        }\,
        \mathrm{d}\epsilon
        &\le
        \sqrt{
            \frac{1}{\sqrt n}\int_{0}^{\sqrt{n}}
            \log \mathcal{N}(\mathcal{F},\epsilon, \norm{\cdot}_X) \,
            \mathrm{d}\epsilon
        }
        \\
        &=
        \frac{1}{\sqrt[4]{n}}
        \sqrt{
            \int_{0}^{\sqrt{n}}
            \log \mathcal{N}(\mathcal{F},\epsilon, \norm{\cdot}_X) \,
            \mathrm{d}\epsilon
        }
    \end{align*}
    and thus 
    \begin{equation*}
        \rade{X}{\mathcal F}
        \leq
        \frac{12}{n}
        {n^{1/4}}
        \sqrt{
            \int_{0}^{\sqrt{n}}
            \log \mathcal{N}(\mathcal{F},\epsilon, \norm{\cdot}_X)
            \,
            \mathrm{d}\epsilon
        }
        \enspace.
    \end{equation*}

    Then, recalling \cref{theorem:ceil_residual_net:eq_params}, i.e.,  
    \begin{equation*}
        \log \mathcal N(\mathcal F_\gamma, \epsilon, \norm{\cdot}_X)
        \le
        \sum_{ij}
                2 W_{ij}
                \log\left(
                    1 +
                    \ceil{ \bar L^2 \tilde C_{ij}^2}
                    \ceil{\frac{n}{\epsilon^2}}
                \right)
    \end{equation*}
    yields
    \begin{align*}
        &\int_{0}^{\sqrt{n}}
        \log \mathcal{N}(\mathcal{F},\epsilon, \norm{\cdot}_X)
        \,
        \mathrm{d}\epsilon \\
        & ~~~~~\le
        \sum_{ij}
        2 W_{ij}
        \int_{0}^{\sqrt{n}}
        \log\left(
            1 +
            \ceil{ \bar L^2 \tilde C_{ij}^2}
            \ceil{\frac{n}{\epsilon^2}}
        \right)
        \,
        \mathrm{d}\epsilon
        \\
        & ~~~~~=
        \sqrt{n}
        \sum_{ij}
        2 W_{ij}
        \int_{0}^{1}
        \log\left(
            1 +
            \ceil{ \bar L^2 \tilde C_{ij}^2}
            \ceil{\frac{1}{s^2}}
        \right)
        \,
        \mathrm{d}s
        \\
        & 
        \stackrel{\text{\cref{theorem:integral_params}}}{\le}
        \sqrt{n}
        \sum_{ij}
        2 W_{ij}
        \left(
            \log\left(1+\ceil{\bar L ^2 \tilde C_{ij}^2}\right) + 
            \zeta\left(\frac 3 2,1\right)^{1/3}
            \zeta\left(
                \frac 3 2,
                1+ {1}/{\ceil{\bar L ^2 \tilde C_{ij}^2}}
            \right)^{2/3}
        \right)
        \enspace.
    \end{align*}
    The last inequality follows from \cref{theorem:integral_params} (proof deferred to \cref{appendix:sec:calculations}). Overall, this implies 
    \begin{equation*}
        \rade{X}{\mathcal F}
        \leq
        \frac{12}{\sqrt n}
        \sqrt{
            2 \sum_{i=1}^L
            \sum_{j=1}^{L_i}
            W_{ij}
            \left(
            \log\left(1+\ceil{\bar L ^2 \tilde C_{ij}^2}\right) + 
            \zeta\left(\frac 3 2,1\right)^{1/3}
            \zeta\left(
                \frac 3 2,
                1+ {1}/{\ceil{\bar L ^2 \tilde C_{ij}^2}}
            \right)^{2/3}
            \right)
        }
    \end{equation*}
    which establishes the inequality in \cref{theorem:residual_network_covering_rademacher:eq_params}.
\end{proof}

\vskip2ex
\begin{corollary}
    Let $\gamma>0$ and let $\tilde C_i = 2 C_i /\gamma$. For non-residual networks as specified in \cref{theorem:covering_feedforwad}, the empirical Rademacher complexity  of $\mathcal F_\gamma$ satisfies
    \begin{equation}
        \label{theorem:nonresidual_network_covering_rademacher:eq_norms}
        \rade{X}{\mathcal F_\gamma}
        \le
        \frac{4}{n}
        +
        \frac{12 H_{n-1}}{\sqrt{n}}
        \sqrt{\log(2W)}
        \left(
            \sum_{i=1}^L
            \ceil{\tilde C_{i}^{2/3}}
        \right)^{3/2}
    \end{equation}
    and
\begin{equation}
    \label{theorem:nonresidual_network_covering_rademacher:eq_params}
    \rade{X}{\mathcal F}
        \leq
        \frac{12}{\sqrt n}
        \sqrt{
            \sum_{i=1}^L
            2 W_{i}
            \left(
            \log\left(1+\ceil{L ^2 \tilde C_{i}^2}\right) + 
            \zeta\left(\frac 3 2,1\right)^{1/3}\!\!
            \zeta\left(
                \frac 3 2,
                1+ {1}/{\ceil{L ^2 \tilde C_{i}^2}}
            \right)^{2/3}
            \right)
        }
    \enspace.
\end{equation}
\end{corollary}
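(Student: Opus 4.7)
The plan is to mirror the proof of \cref{theorem:residual_network_covering_rademacher} but start from the non-residual covering bounds in \cref{theorem:covering_feedforwad} and combine them with Dudley's entropy integral. A notational simplification is available: the setting of the corollary is precisely the special case of \cref{theorem:residual_network_covering_rademacher} obtained by taking $L$ residual blocks, each with shortcut $g_i \equiv 0$ and a single internal layer ($L_i = 1$). Under this identification, the double-indexed $\tilde C_{ij}$ collapse to $\tilde C_i$, $\bar L = \sum_i L_i$ collapses to $L$, and $s_i$ in the denominator of $C_{ij}$ cancels the corresponding factor in the numerator. Thus one could just quote the residual theorem; however, to be concrete and self-contained, I would redo the Dudley step using the cleaner bounds \eqref{theorem:fully_connected:norms} and \eqref{theorem:fully_connected:params}.

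First I would observe that the elements of $\mathcal F_\gamma$ are obtained from $f \in \mathcal F$ by post-composing with the fixed $(2/\gamma)$-Lipschitz map $(x,y) \mapsto \ell_\gamma(-\mathcal M(\cdot,y))$. By \cref{theorem:compositions} (with $\mathcal F_2$ a singleton), covering numbers of $\mathcal F_\gamma$ are bounded by those of $\mathcal F$ at radius rescaled by $\gamma/2$. Plugging this into \eqref{theorem:fully_connected:norms}, \eqref{theorem:fully_connected:params} and absorbing the $2/\gamma$ into the data/weight norm factor replaces $C_i$ by $\tilde C_i = 2C_i/\gamma$, giving, for every $\epsilon > 0$,
\begin{equation*}
\log \mathcal N(\mathcal F_\gamma, \epsilon, \norm{\cdot}_X)
\le \log(2W)\left(\sum_{i=1}^L \ceil{\tilde C_i^{2/3}}\right)^{\!3}\!\ceil{\tfrac{n}{\epsilon^2}},
\end{equation*}
and the analogous parameter-counting version with the $\log(1 + \ceil{L^2\tilde C_i^2}\ceil{n/\epsilon^2})$ summands.

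Next I would feed these into Dudley's entropy integral. For \eqref{theorem:nonresidual_network_covering_rademacher:eq_norms}, pulling the constant prefactor out of the integral yields $\frac{12}{\sqrt n}\sqrt{\log(2W)}\bigl(\sum_i \ceil{\tilde C_i^{2/3}}\bigr)^{3/2}\int_{t}^{\sqrt n}\sqrt{\ceil{n/\epsilon^2}}\,\mathrm{d}\epsilon$. Substituting $s = \epsilon/\sqrt n$ turns this into $\int_{t/\sqrt n}^{1}\ceil{1/s}\,\mathrm{d}s$, and picking $t = 1/\sqrt n$ (so $t/\sqrt n = 1/n$) collapses the piecewise constant integrand to $\sum_{m=1}^{n-1} 1/m = H_{n-1}$, producing the additive $4/n$ term together with the claimed multiplicative factor. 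For \eqref{theorem:nonresidual_network_covering_rademacher:eq_params}, I would instead use Jensen's inequality to move the square root outside the integral, so that $\rade{X}{\mathcal F_\gamma} \le \frac{12}{\sqrt n \cdot n^{1/4}}\sqrt{\int_0^{\sqrt n}\log\mathcal N(\mathcal F_\gamma,\epsilon,\norm{\cdot}_X)\,\mathrm{d}\epsilon}$, then invoke the technical integral estimate \cref{theorem:integral_params} on each summand $\int_0^1 \log(1 + \ceil{L^2\tilde C_i^2}\ceil{1/s^2})\,\mathrm{d}s$ after the substitution $s = \epsilon/\sqrt n$.

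None of the steps should pose a real obstacle: the single-index bookkeeping is strictly simpler than in \cref{theorem:residual_network_covering_rademacher}, the covering bounds are already supplied by \cref{theorem:covering_feedforwad}, and the integral identities have already been worked out in that proof. The one place where care is needed is the treatment of the ceiling functions inside Dudley's integral, since replacing $\ceil{1/s}$ by $1/s$ would give a divergent integral at $s = 0$; this is precisely why the cutoff $t = 1/\sqrt n$ is chosen and why the harmonic number, rather than a logarithm, appears in the final bound.
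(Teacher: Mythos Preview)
Your proposal is correct and matches the paper's approach: the paper gives no explicit proof for this corollary, leaving it as the obvious specialization of \cref{theorem:residual_network_covering_rademacher} (equivalently, redoing the Dudley computation from the covering bounds in \cref{theorem:covering_feedforwad}), which is exactly what you do. The only cosmetic difference is that the paper, when deriving \cref{theorem:covering_feedforwad}, specializes via a \emph{single} residual block with $L_1=L$ layers and zero shortcut rather than $L$ blocks each with $L_i=1$; both identifications yield the same $\tilde C_i$ and $\bar L = L$.
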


For the sake of completeness, we state the generalization bounds that result from the Rademacher complexity bounds for networks with a priori constrained weights.

\vskip1ex
\begin{theorem}
    \label{theorem:generalization_feedforwad}
    For $i=1,\dots,L$ let $j=1,\dots, L_i$, $s_{ij}>0$ and $b_{ij}>0$.
    Let $\mathcal F$ be the class of residual networks of the form 
    \begin{equation}
        f = \sigma_L \circ f_L \circ \dots \circ \sigma_1 \circ f_1 \enspace,
    \end{equation}
    with $\sigma_i$ fixed $\rho_i$-Lipschitz functions satisfying $\sigma_i(0)=0$, and with $f_i$ residual blocks with identity shortcuts, i.e.,
    \begin{equation}
        f_i: \Id  + (\sigma_{iL_i} \circ f_{iL_i} \circ \dots \circ \sigma_{i1} \circ f_{i1}) \enspace,
    \end{equation}
    where $\sigma_{ij}$ are fixed $\rho_{ij}$-Lipschitz functions with $\sigma_{ij}(0)=0$ and $f_{ij}$ are convolutional or fully-connected layers whose weight tensors $K_{ij}$, resp. weight matrices $A_{ij}$, satisfy the distance constraints 
    $$\norm{K_{ij} - K_{ij}^{(0)}}_{2,1}\le b_{ij}\quad\text{and}\quad \norm{A_{ij} - A_{ij}^{(0)}}_{2,1} \le b_{ij}\enspace,$$
    with respect to reference weights $K_{ij}^{(0)}$, resp. $A_{ij}^{(0)}$, and the Lipschitz constraints $\Lip(f_{ij}) \le s_{ij}$.
    
    Let $W_{ij}$ denote the number of parameters of each layer and define
    \begin{align*}
        & \tilde C_{ij}(X) = 
        \tilde C_{ij}=
        \frac{4}{\gamma}
        \frac{\norm{X}}{\sqrt{n}} 
        \left(
            \prod_{\substack{l=1}}^{L}
                \Lip(\mathcal F_l) \rho_l
        \right)
        \frac
        {\prod_{\substack{k=1}}^{L_i} \rho_{ik} s_{ik}}     
        {\Lip(\mathcal F_i)} 
        \frac{b_{ij}}{s_{ij}}
        \enspace,\\
        & \bar L = \sum_{i=1}^L L_i
        \enspace,
        \qquad
        W = \max_{ij} W_{ij}
        \enspace.
    \end{align*}
    Then, for fixed margin parameter $\gamma>0$, every network $f \in \mathcal F$ satisfies
    \begin{equation}
        \mathbb P[\argmax_{i=1,\dots, k} f(x)_i \neq y]
        \le
        \hat {\mathcal R}_\gamma (f)
        + \frac{8}{n}
        +
        24 
        \sqrt{\log(2W)}
        \left(
            \sum_{i=1}^L
            \sum_{i=j}^{L_i}
            \ceil{\tilde C_{ij}^{2/3}}
        \right)^{\!\!3/2}
        \!\frac{H_{n-1}}{\sqrt{n}} + 3 \sqrt{\frac{\log(\frac 2 \delta)}{2n}}
    \end{equation}
    and
    \begin{equation}
        \begin{split}
        & \mathbb P[\argmax_{i=1,\dots, k} f(x)_i \neq y] \\
    &~~~~~\le  
        \hat{\mathcal R}_\gamma (f) \\
        & ~~~~~~~~~~+ \frac{24}{\sqrt{n}}
        \sqrt{
                2 
                \sum_{i=1}^L
                \sum_{i=j}^{L_i}
                W_{ij} 
                \left( 
                    \log \left(1 + {\ceil{\bar L^2 \tilde C_{ij}^2}}\right)
                    +
                    \zeta\left(\frac 3 2,1\right)^{1/3}
                    \zeta\left(
                        \frac 3 2,
                        1+ {1}/{\ceil{\bar L ^2 \tilde C_{ij}^2}}
                    \right)^{2/3}
                \right) 
        }\\ 
        & ~~~~~~~~~~+ 3 \sqrt{\frac{\log\left(\frac 2 \delta\right)}{2n}}
        \end{split}
    \end{equation}
    with probability of at least $1-\delta$ over an i.i.d. draw $((x_1,y_1), \dots, (x_n, y_n))$.
\end{theorem}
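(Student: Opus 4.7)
The plan is to treat this statement as a direct corollary obtained by combining the margin-based generalization guarantee in \cref{thm:Bartlett17a_Lemma31} with the two empirical Rademacher complexity bounds already established in \cref{theorem:residual_network_covering_rademacher} (which is the appendix-internal restatement of \cref{paper:theorem:rademacher_complexity}). No new covering or combinatorial arguments are needed; the heavy lifting has already been done in \cref{appendix:subsection:multi_layer_specific,appendix:subsection:singlelayerbounds}.

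Concretely, I would proceed as follows. First, I would observe that the hypothesis class $\mathcal F$ in the statement is precisely the same class to which \cref{theorem:residual_network_covering_rademacher} applies (identity shortcuts, Lipschitz and $(2,1)$-group-norm distance constraints per layer, with the quantities $\tilde C_{ij}$, $\bar L$ and $W$ defined identically). Then I would apply \cref{thm:Bartlett17a_Lemma31} to any $f\in\mathcal F$, yielding, with probability at least $1-\delta$ over $S\sim\mathcal D^n$,
\begin{equation*}
\mathbb P\bigl[\argmax_{i} f(x)_i\neq y\bigr]
\le \hat R_\gamma(f) + 2\,\rade{S}{\mathcal F_\gamma} + 3\sqrt{\tfrac{\log(2/\delta)}{2n}}\enspace.
\end{equation*}
The first generalization bound then follows by substituting
\eqref{theorem:residual_network_covering_rademacher:eq_norms} for $\rade{S}{\mathcal F_\gamma}$, which doubles the constants $\tfrac 4 n$ and $12$ into $\tfrac 8 n$ and $24$, respectively. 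The second generalization bound follows analogously by substituting \eqref{theorem:residual_network_covering_rademacher:eq_params}, where the factor $2$ in front of the Rademacher complexity simply turns the leading constant $12$ into $24$, and the inner quantity (including the Hurwitz-zeta term $\psi$) is carried over unchanged.

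There is essentially no technical obstacle here, since \cref{thm:Bartlett17a_Lemma31} is stated in exactly the form needed and the Rademacher complexity bounds are already specialized to the same $\mathcal F_\gamma$ with the same $\tilde C_{ij}$. The only minor point to note is that \cref{thm:Bartlett17a_Lemma31} requires $\mathcal F_\gamma$ to consist of $[0,1]$-valued functions, which is automatic because $\ell_\gamma$ takes values in $[0,1]$ by \cref{eqn:def:ramploss}; hence no additional boundedness assumption has to be verified. The resulting inequalities hold for every fixed choice of constraint strengths $(s_{ij},b_{ij})$; as noted in \cref{subsection:resnets_rade}, a subsequent union-bound argument over a grid of constraint strengths (in the spirit of \cite[Lemma A.9]{Bartlett17a}) would remove the a priori dependence on $(s_{ij},b_{ij})$, but this is outside the scope of the statement to be proved.
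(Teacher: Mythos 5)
Your proposal is correct and matches the paper's own proof exactly: the paper likewise invokes \cref{thm:Bartlett17a_Lemma31} and then bounds $\rade{S}{\mathcal F_\gamma}$ via \cref{theorem:residual_network_covering_rademacher}, with the factor $2$ in front of the Rademacher term producing the constants $\tfrac{8}{n}$ and $24$. Your remark on the $[0,1]$-boundedness of $\ell_\gamma$ and the optional union bound over constraint strengths is consistent with the paper's discussion and requires no further work.
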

\begin{proof}
    Recall \cref{thm:Bartlett17a_Lemma31}, i.e.,
    \begin{equation}
        \mathbb P[\argmax_{i=1,\dots, k} f(x)_i \neq y]
        \le
        \hat {\mathcal R}_\gamma (f)
        + 2 \rade{S}{\mathcal F_\gamma} + 3 \sqrt{\frac{\log(\frac 2 \delta)}{2n}}
        \enspace,
    \end{equation}
    where 
    \begin{equation}
        \mathcal F_\gamma = \set{(x,y) \mapsto \ell_\gamma(-\mathcal M (f(x),y)) : f\in \mathcal F}
        \enspace.
    \end{equation}
    Bounding the empirical Rademacher complexity $\rade{S}{\mathcal F}$ via \cref{theorem:residual_network_covering_rademacher} proves the theorem. 
\end{proof}

\begin{remark}
    By a union bound argument over the constraint sets and the margin parameter, the generalization bound above can be transformed to a generalization bound which depends directly on the norms of the network weights and the Lipschitz constants instead of a priori constraints, see for example \cite[Lemma A.9]{Bartlett17a}.
    Furthermore, Lipschitz augmentation \cite{Wei19a} allows to replace the product of Lipschitz constants by empirical equivalents, i.e., norms of activations and norms of Jacobians.
\end{remark}

\subsection{Calculations}
\label{appendix:sec:calculations}
This section contains postponed calculations.

\begin{lemma}[used in \cref{theorem:single_layer_simple} and \cref{theorem:single_layer}]
    \label{theorem:binomial_coefficient}
    For any $n\in \mathbb N$, it holds that
    \begin{equation}
        \binom{n+k}{k} 
        \le
        \min\left((k+1)^n,(n+1)^k\right)
    \end{equation} 
\end{lemma}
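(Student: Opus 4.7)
The plan is to prove only one of the two inequalities, say $\binom{n+k}{k} \le (k+1)^n$, since the other then follows by the symmetry $\binom{n+k}{k} = \binom{n+k}{n}$, which swaps the roles of $n$ and $k$.

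To establish $\binom{n+k}{k} \le (k+1)^n$, I would first rewrite the binomial coefficient as a product:
\begin{equation*}
\binom{n+k}{k} = \frac{(n+k)!}{n!\,k!} = \prod_{i=1}^{n} \frac{k+i}{i}.
\end{equation*}
Then I would bound each factor individually. For every $i \in \{1,\dots,n\}$, we have $\frac{k+i}{i} = 1 + \frac{k}{i} \le 1 + k$, since $i \ge 1$. Taking the product over $i = 1,\dots,n$ yields $\binom{n+k}{k} \le (k+1)^n$.

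Applying the same argument after interchanging $n$ and $k$ (using $\binom{n+k}{k} = \binom{n+k}{n} = \prod_{i=1}^{k}\frac{n+i}{i}$) gives $\binom{n+k}{k} \le (n+1)^k$. Taking the minimum of the two bounds yields the desired inequality. There is no real obstacle here; the entire argument is a one-line termwise estimate combined with the symmetry of the binomial coefficient.
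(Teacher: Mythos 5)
Your proof is correct and matches the paper's argument: both write the binomial coefficient as $\prod_{i=1}^{n}\frac{k+i}{i}$ and bound each factor by $k+1$, the only cosmetic difference being that the paper spells out the second inequality directly while you obtain it via the symmetry $\binom{n+k}{k}=\binom{n+k}{n}$.
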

\begin{proof}
    To prove the first inequality, note that
    \begin{equation*}
        \binom{n+k}{k} 
        =
        \frac{(n+k)!}{k!\,n!}
        =
        \frac{(k+1) \cdots (n+k)}{1 \cdots n}
        = 
        \prod_{j=1}^{n} \frac{k+j}{j}
        \le 
        \prod_{j=n}^{k} (k+1)
        =
        (k+1)^n
        \enspace.
    \end{equation*}
    Similarly,
    \begin{equation*}
        \binom{n+k}{k} 
        =
        \frac{(n+k)!}{k!\,n!}
        =
        \frac{(n+1) \cdots (n+k)}{1 \cdots k}
        = 
        \prod_{i=1}^{k} \frac{n+i}{i}
        \le 
        \prod_{i=1}^{k} (n+1)
        =
        (n+1)^k
        \enspace.
    \end{equation*}    
\end{proof}

\begin{lemma}[used in \cref{theorem:residual_network_covering_rademacher}]
    \label{theorem:integral_params}
    For any $\alpha>0$, it holds that
    \begin{equation}
    \int_{0}^{1}
    \log\left(
        1 +
        \alpha
        \ceil{\frac{1}{s^2}}
    \right)
    \,
    \mathrm{d}s
    \le 
    \log(1+\alpha)
    +
    \zeta\left(\frac 3 2,1\right)^{1/3} \zeta\left(\frac 3 2,\frac{1+\alpha}{\alpha}\right)^{2/3}
    \end{equation}
\end{lemma}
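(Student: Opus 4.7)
The plan is to decompose the integral by summing over the level sets of the ceiling function, peel off a clean $\log(1+\alpha)$ term via an algebraic identity, and control the remainder by Hölder's inequality with exponents matching the $1/3,\,2/3$ split on the right-hand side.

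First, I would introduce $\beta=(1+\alpha)/\alpha$, so that $\alpha\beta=1+\alpha$. The identity $1+\alpha m=(1+\alpha)\bigl(1+(m-1)/\beta\bigr)$ holds for every integer $m\ge 1$. Since $\ceil{1/s^2}\ge 1$ on $(0,1]$, applying it with $m=\ceil{1/s^2}$ and integrating gives
\begin{equation*}
\int_0^1 \log\!\bigl(1+\alpha\ceil{1/s^2}\bigr)\,\mathrm{d}s = \log(1+\alpha) + I,\quad \text{where}\quad I := \int_0^1 \log\!\left(1+\frac{\ceil{1/s^2}-1}{\beta}\right)\mathrm{d}s.
\end{equation*}
Thus it remains to show $I \le \zeta(3/2,1)^{1/3}\zeta(3/2,\beta)^{2/3}$.

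Next, I would rewrite $I$ as a series by partitioning $(0,1]$ into the level sets $\bigl[1/\sqrt{m+1},1/\sqrt m\bigr)$, on which $\ceil{1/s^2}$ is constantly $m+1$ (for $m\ge 1$), yielding
\begin{equation*}
I = \sum_{m=1}^{\infty}\log(1+m/\beta)\,\Bigl(\tfrac{1}{\sqrt m}-\tfrac{1}{\sqrt{m+1}}\Bigr).
\end{equation*}
Expanding $\log(1+m/\beta)=\sum_{k=1}^{m}\log\!\bigl(1+1/(\beta+k-1)\bigr)$, swapping the order of summation, and using the telescoping identity $\sum_{m\ge k}(1/\sqrt m-1/\sqrt{m+1})=1/\sqrt k$, I obtain after re-indexing $n=k-1$
\begin{equation*}
I = \sum_{n=0}^{\infty}\frac{\log\!\bigl(1+1/(\beta+n)\bigr)}{\sqrt{n+1}}.
\end{equation*}

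To finish, I would apply $\log(1+x)\le x$ termwise and then Hölder's inequality with conjugate exponents $p=3$, $q=3/2$ (so $1/p+1/q=1$):
\begin{equation*}
I \le \sum_{n=0}^{\infty}\frac{1}{(\beta+n)\sqrt{n+1}} \le \Bigl(\sum_{n=0}^{\infty}\tfrac{1}{(n+1)^{3/2}}\Bigr)^{1/3}\Bigl(\sum_{n=0}^{\infty}\tfrac{1}{(\beta+n)^{3/2}}\Bigr)^{2/3} = \zeta(3/2,1)^{1/3}\,\zeta(3/2,\beta)^{2/3}.
\end{equation*}
The only genuinely creative step is spotting this exponent pair: cubing $1/\sqrt{n+1}$ produces the summand of $\zeta(3/2,1)$, while the $3/2$-power of $1/(\beta+n)$ produces the summand of $\zeta(3/2,\beta)$, so $(p,q)=(3,3/2)$ is forced by the shape of the target bound. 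The remaining manipulations (the $\log$-to-$\log$ identity and the Abel rearrangement) are routine, so I do not anticipate a substantive obstacle beyond carefully justifying the interchange of summation.
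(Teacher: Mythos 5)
Your proof is correct and follows essentially the same route as the paper's: both decompose the integral over the level sets of $\ceil{1/s^2}$, rearrange it into the same series $\log(1+\alpha)+\sum_{m\ge 1} m^{-1/2}\log\bigl(1+\tfrac{1}{1/\alpha+m}\bigr)$ (your index $n=m-1$, $\beta=(1+\alpha)/\alpha$), and conclude with H\"older's inequality with exponents $(3,\tfrac32)$, yielding exactly $\zeta(3/2,1)^{1/3}\zeta(3/2,(1+\alpha)/\alpha)^{2/3}$. The only cosmetic differences are that you extract $\log(1+\alpha)$ algebraically and bound the logarithm by $\log(1+x)\le x$, whereas the paper obtains that term as the boundary term of an Abel-type summation by parts and bounds the logarithm by the first term of its Mercator series; both variants produce identical intermediate quantities, so your argument is a valid (and slightly cleaner) rendering of the paper's proof.
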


\begin{proof}
    The function
    \begin{equation*}
        \mathbbm{1}_{[0,1]}(s)
        \log\left(
            1 +
            \alpha
            \ceil{\frac{1}{s^2}}
        \right)
        =
        \sum_{m=1}^{\infty}
        \mathbbm{1}_{[\frac {1}{\sqrt{m+1}}, \frac{1}{\sqrt{m}}]}(s)
        \log\left(
            1 +\alpha (m+1)
        \right)
    \end{equation*}
    is piecewise constant and so its integral is defined as
    \begin{equation*}
        \int_{0}^{1}
        \log\left(
            1 +
            \alpha
            \ceil{\frac{1}{s^2}}
        \right)
        \,
        \mathrm{d}s
        =
        \lim_{M\to \infty}
        \sum_{m=1}^{M}
        \log(1+\alpha(m+1)) 
        \left(\frac {1}{\sqrt{m}} - \frac{1}{\sqrt{m+1}}\right)
        \enspace.
    \end{equation*}
    For any $M\in \mathbb N$, the partial sums are
    \begin{align*}
        &
        \sum_{m=1}^{M}
        (\frac {1}{\sqrt{m}} - \frac{1}{\sqrt{m+1}})
        \log\left(
            1 +\alpha (m+1)
        \right)
        \\
        =&
        \sum_{m=1}^{M}
        \frac {1}{\sqrt{m}}
        \log\left(
            1 +\alpha (m+1)
        \right)
        -
        \sum_{m=1}^{M}
        \frac{1}{\sqrt{m+1}}
        \log\left(
            1 +\alpha (m+1)
        \right)
        \\
        =&
        \sum_{m=1}^{M}
        \frac {1}{\sqrt{m}}
        \log\left(
            1 +\alpha (m+1)
        \right)
        -
        \sum_{m=2}^{M+1}
        \frac{1}{\sqrt{m}}
        \log\left(
            1 +\alpha m
        \right)
        \\
        =&
        \sum_{m=1}^{M}
        \frac {1}{\sqrt{m}}
        \log\left(
            1 +\alpha (m+1)
        \right)
        -
        \sum_{m=1}^{M}
        \frac{1}{\sqrt{m}}
        \log\left(
            1 +\alpha m
        \right)
        +
        \log\left(
            1 + \alpha
        \right)
        -
        \frac{\log\left(
            1 +\alpha (M+1)
        \right)}
        {\sqrt{M+1}}
        \\
        =&
        \sum_{m=1}^{M}
        \frac {1}{\sqrt{m}}
        \log\left(
            \frac
            {1 +\alpha (m+1)}
            {1 + \alpha m}
        \right)
        +
        \log\left(
            1 + \alpha
        \right)
        -
        \frac{\log\left(
            1 +\alpha (M+1)
        \right)}
        {\sqrt{M+1}}
        \\
        =&
        \sum_{m=1}^{M}
        \frac {1}{\sqrt{m}}
        \log\left(
            1 + 
            \frac
            {1}
            {1/\alpha + m}
        \right)
        +
        \log\left(
            1 + \alpha
        \right)
        -
        \frac{\log\left(
            1 +\alpha (M+1)
        \right)}{\sqrt{M+1}}        
        \enspace.
    \end{align*}
    Since $\lim_{M\to \infty} \frac{\log(1+\alpha M)}{\sqrt{M}} = 0$ for every $\alpha>0$, we conclude
    \begin{equation}
        \int_{0}^{1}
        \log\left(
            1 +
            \alpha
            \ceil{\frac{1}{s^2}}
        \right)
        \,
        \mathrm{d}s
        =
        \sum_{m=1}^{\infty}
        \frac {1}{\sqrt{m}}
        \log\left(
            1 + 
            \frac
            {1}
            {1/\alpha + m}
        \right)
        +
        \log\left(
            1 + \alpha
        \right)
        \enspace.
    \end{equation}
    Since 
    $\frac
    {1}
    {1/\alpha + m}
    \in (0,1)
    $
    for any $\alpha>0$ and $m\in \mathbb N$, the logarithm is given by the Mercator series
    \begin{equation}
    \log\left(
        1 + 
        \frac
        {1}
        {1/\alpha + m}
    \right)
    = 
    \sum_{k=1}^\infty
    \frac{(-1)^{k+1}}{k}
    \left(
        \frac{1}{1/\alpha+m}
    \right)^k
    \enspace.
    \end{equation}
    Inserting this into the series from above and exchanging the order of summation, we get
    \begin{align*}
        \sum_{m=1}^{\infty}
        \frac {1}{\sqrt{m}}
        \log\left(
            1 + 
            \frac
            {1}
            {1/\alpha + m}
        \right)
        &=
        \sum_{k=1}^{\infty}
        \frac{(-1)^{k+1}}{k}
        \sum_{m=1}^{\infty}
        \frac {1}{\sqrt{m}}
        \frac{1}{(1/\alpha+m)^k}
        \enspace.
    \end{align*}
    This is an alternating convergent series in $k$, so its first summand
    \begin{equation}
        \begin{split}
        \sum_{m=1}^{\infty}
        \frac {1}{\sqrt{m}}
            \frac{1}{1/\alpha+m}
        & \le
        \left(
            \sum_{m=1}^{\infty}\frac{1}{m^{3/2}}
        \right)^{1/3}
        \left(
            \sum_{m=1}^{\infty}
            \frac{1}{(1/\alpha+m)^{3/2}}
        \right)^{2/3} \\
        & =
        \zeta(3/2,1)^{1/3}
        \zeta(3/2,1+1/\alpha)^{2/3}
        \end{split}
    \end{equation}
    (using H\"older inequality) already provides an upper bound, i.e.,
    \begin{equation*}
        \int_{0}^{1}
    \log\left(
        1 +
        \alpha
        \ceil{\frac{1}{s^2}}
    \right)
    \,
    \mathrm{d}s
    \le
    \log(1+\alpha) + 
    \zeta(3/2,1)^{1/3}
    \zeta(3/2,1+1/\alpha)^{2/3}
    \enspace.
    \end{equation*}
\end{proof}

\end{appendices}

\end{document}